\renewcommand{\Re}{{\rm Re}}
\renewcommand{\Im}{{\rm Im }}
\newcommand{\C}{{\sf Col}}
\newcommand{\N}{{\sf Null}}
\renewcommand{\vec}[1]{\boldsymbol{\mathbf{#1}}}
\newcommand{\LQ}{{\sc{LQ}}}
\newcommand{\der}[2]{\frac{\partial {#2}}{\partial {#1}}}
\newcommand{\flatder}[2]{\frac{\partial}{\partial {#1}}  {#2} }
\newcommand{\transpose}[1]{\left( #1\right)^T }
\newcommand{\RNum}[1]{\uppercase\expandafter{\romannumeral #1\relax}}
\newcommand{\Null}{{\sf Null}}
\newcommand{\Col}{{\sf Col}}
\theoremstyle:=definition,remark,plain\do{%
        \expandafter\g@addto@macro\csname th@\theoremstyle\endcsname{%
            \addtolength\thm@preskip\parskip
            }%
        }
\newtheorem{theorem}{Theorem}[section]
\newtheorem{lemma}[theorem]{Lemma}
\newtheorem{proposition}{Proposition}[section]
\newtheorem{fact}{Fact}[section]
\theoremstyle{definition}
\newtheorem{definition}{Definition}[section]
\newtheorem{assumption}{Assumption}
\newtheorem{property}{Property}
\title{Gradient descent GAN optimization is locally stable}
\author{
  Vaishnavh Nagarajan \\
  Computer Science Department\\
  Carnegie-Mellon University\\
  Pittsburgh, PA 15213 \\
  \texttt{vaishnavh@cs.cmu.edu}\\
\And
  J. Zico Kolter\\
  Computer Science Department\\
  Carnegie-Mellon University\\
  Pittsburgh, PA 15213 \\
  \texttt{zkolter@cs.cmu.edu}
}
\begin{document}
\sloppy
% \nipsfinalcopy is no longer used

\maketitle

\begin{abstract}
  Despite the growing prominence of generative adversarial
  networks (GANs), optimization in GANs is still a poorly understood topic.  In this paper, we analyze
  the ``gradient descent'' form of GAN optimization i.e., the natural
  setting where we simultaneously take small gradient steps in both generator
  and discriminator parameters.  We show that even though GAN optimization
  does \emph{not} 
  correspond to a convex-concave game (even for simple parameterizations), under
  proper conditions, equilibrium points of this  optimization 
  procedure are still \emph{locally asymptotically stable} for the traditional
  GAN formulation. On the other hand, we show that the recently proposed
  Wasserstein GAN   can have non-convergent limit cycles near equilibrium.
  Motivated by this stability analysis, we 
  propose an additional regularization term for gradient descent GAN updates,
  which \emph{is} able to guarantee local stability for both the WGAN and the
  traditional GAN, and also shows practical promise in speeding up
  convergence and addressing mode collapse. 
\end{abstract}

\section{Introduction}

Since their introduction a few years ago, Generative Adversarial Networks (GANs)
\citep{goodfellow2014generative}  have gained prominence as one of the most
widely used methods for training deep generative models.  GANs have been
successfully deployed for tasks  such as photo super-resolution, object
generation, video prediction, language modeling, vocal synthesis, and
semi-supervised learning, amongst many others
\citep{ledig2016photorealistic,wu2016learning,mathieu2015deep,nguyen2016plugnplay,denton2015deep,im2016generating}.

At the core of the GAN methodology is the idea of jointly training two networks: a
generator network, meant to produce samples from some distribution (that ideally
will mimic examples from the data distribution), and a discriminator network, which
attempts to differentiate between samples from the data distribution and the ones
produced by the generator.  This problem is typically written as a min-max
optimization problem of the following form:
\begin{align}
  \min_G \max_D \;\; \left(  \mathbb{E}_{x\sim p_{\mathrm{\rm data}}}[\log D(x)] +
  \mathbb{E}_{z\sim p_{\mathrm{latent}}}[\log (1 - D(G(z))] \right).
\end{align}
For the purposes of this paper, we will shortly consider a more general
form of the optimization problem, which also includes the recent Wasserstein GAN
(WGAN) \citep{arjovsky2017wasserstein} formulation. 

Despite their prominence, the actual task of optimizing GANs remains a
challenging problem, both from a theoretical and a practical standpoint.
Although the original GAN paper included some analysis on the convergence
properties of the approach \citep{goodfellow2014generative}, it
assumed that updates occurred in pure 
function space, allowed arbitrarily powerful generator and discriminator
networks, and modeled the resulting optimization objective as a
convex-concave game, therefore yielding well-defined global convergence
properties.  Furthermore, this analysis assumed that the discriminator network
is fully optimized between 
generator updates, an assumption that does not mirror the practice of GAN
optimization.  Indeed, in practice, there exist a
number of well-documented failure modes for GANs such as mode collapse or vanishing
gradient problems.

\subparagraph{Our contributions.} In this paper, we consider the ``gradient descent'' formulation of GAN
optimization, the setting where both the generator and the discriminator are
updated simultaneously via simple (stochastic) gradient updates; that is, there
are no inner and outer optimization loops, and neither the generator nor the
discriminator are assumed to be optimized to convergence.  Despite the fact
that, as we show, this does \emph{not} correspond to a convex-concave
optimization problem (even for simple linear generator and discriminator
representations), we show that:

\begin{center}
\parbox{0.95\linewidth}{
{ Under suitable conditions on the representational powers of the discriminator and the generator, the resulting GAN dynamical
system \emph{is} locally exponentially stable}.
}
\end{center}

That is, for some region around an equilibrium
point of the updates, the gradient updates will converge to this equilibrium point at an exponential rate.
Interestingly, 
%these results require a strictly concave loss function, which is
 our conditions can be satisfied by the traditional GAN but \emph{not} by the WGAN, and we indeed
show that WGANs can have non-convergent limit cycles in the gradient descent case.

Our theoretical analysis also suggests a natural method for regularizing GAN
updates by adding an additional regularization term on the norm of the
discriminator gradient.  We show that the addition of this term leads to 
locally exponentially stable equilibria for all classes of GANs, including WGANs.  The
additional penalty is highly related to (but also 
notably different from) recent proposals for practical GAN optimization,
such as the unrolled GAN \citep{metz2016unrolled} and the improved Wasserstein GAN
training \citep{gulrajani2017improved}.  In practice, the approach is simple to
implement, and preliminary 
experiments show that it helps avert mode collapse and leads to
faster convergence.

\section{Background and related work}

\paragraph{GAN optimization and theory.}  Although the theoretical analysis of
GANs has been far outpaced by their 
practical application, there have been some notable results in recent years, in
addition to the aforementioned work in the original GAN paper.
For the most part, this work is entirely complementary to our own, and studies a
very different set of questions.
\citet{arjovsky2016towards} provide important insights into \emph{instability} that
arises when the supports of the generated distribution and the true distribution
are disjoint.  In contrast, in this paper we delve into an equally important
question of whether the updates are stable even \emph{when} the generator
is in fact very close to the true distribution (and we answer in the
affirmative).   \citet{arora2017generalization}, on the other 
hand, explore questions relating to the sample complexity and expressivity of
the GAN architecture and their relation to the existence of an equilibrium
point. However, it is still unknown as to whether, given that an equilibrium
exists, the GAN update procedure will converge locally.

From a more practical standpoint, there have been a number of papers that address
the topic of optimization in GANs.  Several methods have been proposed that
introduce new objectives or architectures for improving the (practical and
theoretical) stability of GAN optimization
\citep{arjovsky2017wasserstein,poole2016improved}.  A wide variety of
optimization heuristics and architectures have also been proposed 
to address challenges such as mode collapse
\citep{salimans2016improved,metz2016unrolled,che2016mode,radford2015unsupervised}.
Our own proposed regularization term falls under this same category, and
hopefully provides some context for understanding some of these methods.
Specifically, our regularization term (motivated by stability analysis) captures
a degree of ``foresight'' of the generator in the optimization procedure, similar to
the unrolled GANs procedure \citep{metz2016unrolled}. Indeed, we show that
our gradient penalty is closely related to $1$-unrolled GANs, but also provides
more flexibility in leveraging this foresight.   Finally, gradient-based
regularization has been explored for GANs, with one of the most recent works
being that of \citet{gulrajani2017improved}, though their penalty is on the 
discriminator rather than the generator as in our case. 
% we highlight theconnections to this past work in greater detail below.

Finally, %in recent weeks 
there are several works that have simultaneously addressed
similar issues as this paper.  Of particular similarity to the methodology we
propose here are 
the works by \citet{roth2017stabilizing} and \citet{mescheder2017numerics}.  The
first of these two present a stabilizing regularizer that is based on a gradient norm, where the gradient is calculated with respect 
to the datapoints.
Our regularizer on the other hand is based on the norm of a gradient calculated with respect to the parameters. Our approach has some strong similarities with that of the second work noted above;
however, the
authors there do not establish or disprove stability, and instead note the
presence of zero eigenvalues (which we will treat in some depth) as a motivation
for their alternative optimization method.  Thus, we feel the works as a
whole are quite complementary, and signify the growing interest in GAN
optimization issues.

\paragraph{Stochastic approximation algorithms and analysis of nonlinear
  systems.}
The technical tools we use to analyze the GAN optimization dynamics in this
paper come from the fields of stochastic approximation algorithm and the
analysis of nonlinear differential equations -- notably the ``ODE method'' for
analyzing convergence properties of dynamical systems \citep{borkar2000ode, kushner2003stochastic}.
Consider a general stochastic process
driven by the updates 
%\begin{equation}
  $\vec{\theta}_{t+1} =\vec{\theta}_{t} + \alpha_t (h(\vec{\theta}_t) + \epsilon_t)$
%\end{equation}
for vector $\vec{\theta}_t \in \mathbb{R}^n$, step size $\alpha_t > 0$, function $h :
\mathbb{R}^n \rightarrow \mathbb{R}^n $ and  a martingale difference
sequence $\epsilon_t$.\footnote{Stochastic
gradient descent  on an objective $f(\theta)$ can be expressed in this
framework as $h(\vec{\theta}) = \nabla_{\vec{\theta}}  f(\vec{\theta})$.} Under fairly general
conditions, namely: 1) bounded second moments of $\epsilon_t$, 2) Lipschitz
continuity of $h$, and 3) summable but not square-summable step sizes, the
stochastic approximation algorithm converges to an equilibrium point of the
(deterministic) ordinary differential equation $\dot{\vec{\theta}}(t) =
h(\vec{\theta}(t))$. 

Thus, to understand stability of the stochastic approximation algorithm, it
suffices to understand the stability and convergence of the deterministic
differential equation.  Though such analysis is typically used to show global
asymptotic convergence of the stochastic approximation algorithm to an
equilibrium point (assuming the related ODE also is globally
asymptotically stable), it can also be used to analyze the \emph{local}
asymptotic stability properties of the stochastic approximation algorithm around 
equilibrium points.\footnote{Note that the local analysis does \emph{not} show that
  the stochastic approximation algorithm will necessarily converge to an
  equilibrium point, but still provides a valuable characterization of how the
  algorithm will behave around these points.}  This is the technique we follow
throughout this entire work, though for brevity we will focus entirely on the
analysis of the continuous time ordinary differential equation, and appeal to
these standard results to imply similar properties regarding the discrete
updates.

Given the above consideration, our focus will be on proving
stability of the dynamical system around equilbrium points, i.e. points
$\vec{\theta}^\star$ for which $h(\vec{\theta}^\star) = 0$.\footnote{Note that
  this is a slightly different usage of the term equilibrium as typically used
  in the GAN literature, where it refers to a Nash equilibrium of the min max
  optimization problem.  These two definitions (assuming we mean just a local
  Nash equilibrium) are equivalent for the ODE corresponding to the min-max
  game, but we use the dynamical systems meaning throughout this paper, that is,
any point where the gradient update is zero}.  Specifically, we appeal to the
well known \emph{linearization theorem} \citep[Sec 4.3]{khalil1996noninear}, which states that if the
Jacobian of the dynamical system 
  $\vec{J}  = \left . {\partial h(\theta)}/{\partial \theta} \right |_{\theta = \theta^\star}$
evaluated at an equilibrium point is {Hurwitz} (has all strictly negative eigenvalues, $\Re(\lambda_i(\vec{J}
)) < 0, \; \forall i=1,\dots,n$), then the ODE will converge to $\theta^\star$
for some non-empty region around $\theta^\star$, at an exponential rate.  This means that the system is 
locally asymptotically stable, or more precisely, locally exponentially stable (see Definition~\ref{def:stability} in Appendix~\ref{app:lyapunov}).

Thus, an important contribution of this paper is a proof of this seemingly simple fact:
under some conditions, \emph{the Jacobian of the
dynamical system given by the GAN update is a Hurwitz matrix at an
equilibrium} (or, if there are zero-eigenvalues, if they correspond to a
subspace of equilibria, the system is still asymptotically stable).  While this is a
trivial property to show for
convex-concave games,  the fact that the GAN is \emph{not} convex-concave  leads
to a 
%and there can be multiple equilibria leads to 
substantially more challenging analysis.

In addition to this, we provide an analysis that is based on Lyapunov's stability theorem (described in Appendix~\ref{app:lyapunov}).  The crux of the idea is that to prove convergence it is sufficient to identify a non-negative ``energy'' function for the linearized system which always decreases with time (specifically, the energy function will be a distance from the equilibrium, or from the subspace of equilibria).  Most importantly, this analysis provides insights into the dynamics that lead to GAN convergence.

\section{GAN optimization dynamics}

This section comprises the main results of this paper, showing that under proper
conditions the gradient descent updates for GANs (that is, updating both the
generator and discriminator locally and simultaneously), is locally exponentially stable
around ``good'' equilibrium points (where ``good'' will be defined shortly).
This requires that the GAN loss be strictly concave, which is not the case
for WGANs, and we indeed show that the updates for WGANs can cycle indefinitely.
This leads us to propose a simple regularization term that \emph{is} able to
guarantee exponential stability for \emph{any} concave GAN loss, including the WGAN,
rather than requiring strict concavity.

\subsection{The generalized GAN setting}
For the remainder of the paper, we consider a slightly more general formulation
of the GAN optimization problem than the one presented earlier, given by the following min/max problem:
\begin{equation}
\label{eq:generic_gan}
  \min_G \max_D \;\; V(G,D) = \left(  \mathbb{E}_{x\sim p_{\mathrm{\rm data}}}[f(D(x))] +
  \mathbb{E}_{z\sim p_{\mathrm{latent}}}[f(-D(G(z)))] \right)
\end{equation}
where $G: \mathcal{Z} \rightarrow \mathcal{X}$ is the generator network, which maps
from the latent space $\mathcal{Z}$ to the input space $\mathcal{X}$; $D :
\mathcal{X} \rightarrow \mathbb{R}$ is the discriminator network, which maps
from the input space to a classification of the example as real or synthetic;
and $f : \mathbb{R}\rightarrow \mathbb{R}$ is a concave function.  We can
recover the traditional GAN formulation \citep{goodfellow2014generative} by
taking $f$ to be the (negated) 
logistic loss $f(x) = -\log (1+\exp(-x))$; note that this convention slightly differs from the
standard formulation in that in this case the discriminator outputs the real-valued
``logits'' and the loss function would implicitly scale this to a probability. 
We can recover the Wasserstein GAN by simply taking $f(x) = x$. 

Assuming the generator and discriminator networks to be parameterized by some
set of parameters, $\vec{\theta}_D$ and $\vec{\theta}_G$ respectively, we analyze the simple
stochastic gradient descent approach to solving this optimization problem. That is,
we take simultaneous gradient steps in both $\vec{\theta_D}$ and
$\vec{\theta_G}$, which in our ``ODE method'' analysis leads to the following
differential equation:
\begin{align}
\label{eq:undamped_updates}
    \vec{\dot{\theta}_D}  = \nabla_{ \vec{\theta_D}} V(\vec{\theta_G}, \vec{\theta_D}), \;\;  \vec{\dot{\theta}_G}  := \nabla_{ \vec{\theta_G}} V(\vec{\theta_G}, \vec{\theta_D}).
\end{align}

\paragraph{A note on alternative updates.}
Rather than updating both the generator and discriminator according to the
min-max problem above,  \citet{goodfellow2014generative} also proposed a
modified update for just the generator that minimizes a different objective, 
  $V'(G,D) =  -\mathbb{E}_{z\sim p_{\mathrm{latent}}}[f(D(G(z)))]$
(the negative sign is pulled out from inside $f$).  In fact, all the
analyses we consider in this paper apply equally to this case (or any convex
combination of both updates), as the ODE of the update equations have the same
Jacobians at equilibrium.

\subsection{Why is proving stability hard for GANs?}

Before presenting our main results, we first highlight why understanding the local 
stability of GANs is non-trivial, even when the generator and discriminator have
simple forms.  As stated above, GAN optimization consists of a min-max game, and
gradient descent algorithms will converge if the game is convex-concave -- the
objective must be convex in the term being minimized and concave in the term being
maximized.  Indeed, this was a crucial assumption in the convergence proof in the
original GAN paper.  However, for virtually any
parameterization of the real GAN generator and discriminator, even if both
representations are \emph{linear}, the GAN objective will not be a convex-concave
game:

\begin{proposition}
The GAN objective in Equation~\ref{eq:generic_gan} can be a concave-concave objective i.e., concave
with respect to both the discriminator and generator parameters, 
%is not a convex-concave
%objective 
%with respect to the discriminator and the generator parameters. In
%fact, the objective can be concave with respect to the generator parameters 
for
a large part of the discriminator space, including regions arbitrarily close to
the equilibrium.
\end{proposition}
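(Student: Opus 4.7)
The plan is to exhibit a simple explicit parameterization for which the Hessian of $V$ with respect to $\vec{\theta}_G$ (and, separately, $\vec{\theta}_D$) is negative semi-definite, and to show that this holds on a region of discriminator space that includes points arbitrarily close to an equilibrium. The natural choice: a linear generator $G(z) = W_G z$ and a linear discriminator $D(x) = w_D^\top x$, so that $\vec{\theta}_G = \mathrm{vec}(W_G)$ and $\vec{\theta}_D = w_D$. This is essentially the simplest non-trivial GAN.

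First I would observe that $f(D(x))$ is concave in $\vec{\theta}_D$ whenever $D$ is linear in $\vec{\theta}_D$ and $f$ is concave, because the composition of a concave function with an affine map is concave; the same holds for $f(-D(G(z)))$ since $-D(G(z))$ is affine in $w_D$. Taking expectations preserves concavity, so $V$ is concave in $\vec{\theta}_D$ everywhere. This is the easy direction.

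The substantive step is concavity in $\vec{\theta}_G$. The key observation is that when $D$ is linear in its input \emph{and} $G$ is linear in its parameters, the composition $D(G(z)) = w_D^\top W_G z$ is itself linear in $\vec{\theta}_G$ for each fixed $w_D, z$. Therefore $f(-D(G(z)))$ is again a concave-affine composition in $\vec{\theta}_G$. Since the other term in $V$ does not depend on $\vec{\theta}_G$, the whole objective is concave in $\vec{\theta}_G$. Computing the Hessian makes this explicit:
\begin{equation*}
\nabla^2_{\vec{\theta}_G} V \;=\; \mathbb{E}_{z}\!\left[ f''(-w_D^\top W_G z)\,\bigl(\nabla_{\vec{\theta}_G}(w_D^\top W_G z)\bigr)\bigl(\nabla_{\vec{\theta}_G}(w_D^\top W_G z)\bigr)^\top \right] \preceq 0,
\end{equation*}
because $f'' \le 0$ (the second-order term involving $\nabla^2_{\vec{\theta}_G} D(G(z))$ vanishes by linearity). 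This bound holds for every value of $w_D$, hence on the entire discriminator parameter space, and in particular in every neighborhood of any equilibrium $(W_G^\star, w_D^\star)$ — including the canonical traditional-GAN equilibrium where $w_D^\star = 0$ so that the discriminator outputs the indifferent logit. Combining the two directions gives the claim that $V$ is concave-concave on a large region containing points arbitrarily close to equilibrium.

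The main obstacle is really just conceptual rather than technical: the argument itself is a one-line application of the fact that concave composed with affine is concave. The mild subtlety worth flagging is that at $w_D = 0$ the Hessian in $\vec{\theta}_G$ degenerates to the zero matrix, so strict concavity fails exactly at the equilibrium; but negative semi-definiteness still holds, which is all the proposition requires, and strict concavity along the relevant directions resumes as soon as $w_D$ is perturbed away from zero. No other subtlety appears, since the concavity structure is preserved by expectations and by summing the two terms of $V$.
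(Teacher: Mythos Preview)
Your proposal is correct and takes essentially the same approach as the paper: exhibit a linear discriminator and linear generator, then observe that $V$ is a concave function $f$ composed with an affine map in each parameter set separately, hence concave in each. The paper uses the one-dimensional version $D(x)=\theta_D x+\theta_D'$, $G(z)=\theta_G z+\theta_G'$ while you use the bias-free multidimensional version $D(x)=w_D^\top x$, $G(z)=W_G z$, but the argument and the displayed Hessian are identical in spirit; the paper's appendix then extends the observation to polynomial parameterizations and to the $f''\equiv 0$ case, which is where the qualifier ``a large part of the discriminator space'' actually bites.
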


To see why, consider a simple GAN over 1 dimensional data
and latent space with linear generator and discriminator, i.e. $D(x) = \theta_D
x + \theta_D'$ and $G(z) = \theta_Gz + \theta_G'$.  Then the GAN objective is:
\begin{align*}
  V(G,D) = \mathbb{E}_{x\sim p_{\mathrm{\rm data}}}[f(\theta_D x + \theta_D')] +
  \mathbb{E}_{z\sim p_{\mathrm{latent}}}[f(-\theta_D (\theta_G z + \theta_G') - \theta_D')].
\end{align*}
Because $f$ is concave, by inspection we can see that $V$ is concave in
$\theta_D$ and $\theta_D'$; but it is \emph{also} concave (not convex) in
$\theta_G$ and $\theta_G'$, for the same reason.  Thus, the optimization
involves \emph{concave} minimization, which in general is a difficult problem. 
To prove that this is not a peculiarity of the above linear discriminator
system, in Appendix~\ref{app:convex-concave}, we show similar
observations for a more general parametrization, and also for the case where 
%  in a more general case where the discriminator and the generator
%are polynomial functions, and even for the case where
 $f''(x) = 0$ (which
 happens in the case of WGANs).

Thus, a major question remains as to whether or not GAN optimization is stable
at all (most concave maximization is not).
Indeed, there are several well-known properties of GAN optimization that may
make it seem as though gradient descent optimization may \emph{not} work in
theory.  For instance, it is well-known that at the optimal location $p_g =
p_{\mathrm{\rm data}}$, the optimal discriminator will output zero on all examples,
which in turn means that \emph{any} generator distribution will be optimal for
this generator.  This would seem to imply that the system can not be stable
around such an equilibrium.

However, as we will show, gradient descent GAN optimization 
\emph{is} locally asymptotically stable, even for natural parameterizations of
generator-discriminator pairs (which still make up concave-concave optimization
problems). 
Furthermore, at equilibrium, although the
zero-discriminator property means that the generator is not stable
``independently'', the joint dynamical 
system of generator and discriminator \emph{is} locally asymptotically stable
around certain equilibrium points.

\subsection{Local stability of general GAN systems}
\label{sec:general-stability}
This section contains our first technical result, establishing that GANs are locally stable under proper local conditions.  Although the
proofs are 
deferred to the appendix, the elements that we do emphasize here are the
conditions that we identified for local stability to hold.  Indeed, because the
proof rests on these 
conditions (some of which are fairly strong), we want to highlight them as much
as possible, as they themselves also convey valuable intuition as to what is
required for GAN convergence.

To formalize our conditions, we denote the support of
a distribution with probability density function (p.d.f) $p$ by ${\rm supp}(p)$ and the p.d.f of the generator $\vec{\theta_G}$ by $p_{\vec{\theta_G}}$. Let $B_{\epsilon}(\cdot)$ denote the
Euclidean $L_2$-ball of radius of $\epsilon$. Let $\lambda_{\max}(\cdot)$ and $\lambda_{\min}^{(+)}(\cdot)$ denote the largest and the
smallest non-zero eigenvalues of a non-zero positive semidefinite matrix. 
Let $\Col(\cdot)$ and $\Null(\cdot)$  denote the column space and null space of a matrix respectively.  Finally, we define two key matrices that will be integral to our analyses:
\begin{align*}
\vec{K}_{DD} \triangleq & \left. \mathbb{E}_{p_{\rm data}} [\nabla_{\vec{\theta_D}}
    D_{\vec{\theta_D}}(x) \nabla_{\vec{\theta_D}}^T D_{\vec{\theta_D}}(x)] \right\vert_{\vec{\theta^\star_D}} , \; \;
    \vec{K}_{DG} \triangleq & \left. \int_{\mathcal{X}} \nabla_{\vec{\theta_D}} D_{\vec{\theta_D}}(x)  \nabla^T_{\vec{\theta_G}} p_{\theta_G}(x)  dx\right\vert_{( \vec{\theta^\star_D}, \vec{\theta^\star_G})} 
\end{align*}

Here, the matrices are evaluated at an equilibrium point $(\vec{\theta_D^\star}, \vec{\theta_G^\star})$ which we will characterize shortly.  The significance of these terms is that, as we will see, $\vec{K}_{DD}$ is proportional to
the Hessian of the GAN objective with respect to the discriminator parameters at equilibrium, and $\vec{K}_{DG}$ is proportional to
the off-diagonal term in this Hessian, corresponding to the discriminator and
generator parameters. % (ignoring $f$).  
These matrices also occur in similar positions in the Jacobian of the system at equilibrium.

We now discuss conditions under which we can guarantee exponential stability. All our conditions are imposed on both $(\vec{\theta_D^\star}, \vec{\theta_G^\star})$ and all equilibria in a small neighborhood around it, though we do not state this explicitly in every assumption.  First, we define the ``good'' equilibria we care about as those that
% In particular, we assume that $(\vec{\theta_D^\star}, \vec{\theta_G^\star})$  
%and all equilibria in a small neighborhood around it
 correspond to a generator which matches the true
distribution and a discriminator that is identically zero on the support of this
distribution.  As described next, implicitly, this 
also assumes that the discriminator and generator representations are powerful
enough to guarantee  
that there are no ``bad'' equilibria in a local neighborhood of this
equilibrium.

\begin{assumption}
\label{as:global-gen} 
%For any equilibrium point 
%$(\vec{\theta^\star_D}, \vec{\theta^\star_G})$, 
%we assume that
% the generator is equal to the true underlying distribution: 
$p_{\vec{\theta^{\star}_G}} = p_{\rm data}$ and 
%the
%discriminator is identically zero,
%constant for some $c \in \mathbb{R}$, 
$D_{\vec{\theta^\star_D}}(x) = 0$, $\forall \;
x \in {\rm supp}(p_{\rm data})$.  
\end{assumption}

The assumption that the generator matches the true distribution is a rather
strong assumption, as it limits us to the 
``realizable'' case, where the generator is capable of creating the underlying
data distribution. Furthermore, this means	the discriminator is (locally)
powerful enough that for any other generator distribution it is not at
equilibrium (i.e., discriminator updates are non-zero).  Since we do not
typically expect this to be the case, we also 
provide an alternative non-realizable assumption below that is also sufficient for our
results i.e., the system is still stable.  %, however at the cost of a restriction that the discriminator be linear in its parameters. Nevertheless, 
 In both 
 the realizable and non-realizable cases the
requirement of an all-zero discriminator remains. This implicitly requires even 
the generator representation be (locally) rich enough so that when the discriminator is not identically zero, the generator is not at equilibrium (i.e., generator updates are non-zero).
Finally, note that these conditions do not disallow bad equilibria outside of this neighborhood, which may potentially even be unstable.  

\textbf{Assumption} \textbf{~\ref{as:global-gen}.} (\textbf{Non-realizable})
%\begin{assumption}
%\label{as:nonrealizable}
The  discriminator is  \emph{linear} in its
parameters $\vec{\theta_D}$ and furthermore, for any equilibrium point
 $(\vec{\theta^\star_D}, \vec{\theta^\star_G})$,
 %we assume that
% constant  for some $c \in \mathbb{R}$, 
$D_{\vec{\theta^\star_D}}(x) = 0$, 
$\forall \;
x \in {\rm supp}(p_{\rm data}) \cup {\rm supp}(p_{\vec{\theta^\star_G}})$.
%\end{assumption}

This alternative assumption is largely a weakening of Assumption~\ref{as:global-gen}, as the condition
on the discriminator remains, but there is no requirement that the generator give
rise to the true distribution.  However, the requirement that the discriminator
be linear in the parameters (\emph{not} in its input), is an additional 
restriction that seems unavoidable in this case for technical reasons.  Further,
note that the fact that $D_{\vec{\theta_D^\star}}(x) = 0$ and that the generator/discriminator are
both at equilibrium, still means that although it may be that
$p_{\vec{\theta^\star_G}} \neq p_{\mathrm{data}}$, these distributions are
(locally) indistinguishable as far as the discriminator is concerned.  Indeed, this
is a nice characterization of ``good'' equilibria, that the discriminator cannot
differentiate between the real and generated samples.

Our goal next is to identify strong curvature conditions that can be imposed on the objective $V$ (or a function related to the objective), though only locally { at equilibrium}. First, we will require that the objective is strongly concave in the discriminator parameter space at equilibrium (note that it is concave by default).  However, on the other hand, we cannot ask the objective to be strongly convex in the generator parameter space as we saw that the objective is not convex-concave even in the nicest scenario, even arbitrarily close to equilbrium. Instead, we  identify another convex function, namely {\em the magnitude of the update on the equilibrium discriminator} i.e., $ \| \left. \nabla_{\vec{\theta_D}}  V(\vec{\theta}_D, \vec{\theta}_G)  \right\vert_{\vec{\theta}_D=\vec{\theta}_D^\star}\|^2$, and require that to be strongly convex in the generator space at equilibrium.  Since these strong curvature assumptions will allow only systems with a locally unique equilibrium, we will state them in a relaxed form that accommodates a local subspace of equilibria. Furthermore, we will state these assumptions in two parts, first as a condition on $f$, second as a condition on the parameter space. 
 
First, the condition on $f$ is straightforward, 
 %The next assumption is straightforward, 
 making it necessary that 
 the loss $f$ be
 concave at $0$; as we will show, when this condition is not met, 
 % This is necessary for the objective to be strongly concave in the discriminator space. 
 % As we will show, for non-strictly concave losses, 
  there need
 not be local asymptotic convergence.
\begin{assumption}
\label{as:loss}
  The function $f$ satisfies $f''(0) < 0$, and $f'(0) \neq 0$
\end{assumption}

Next, to state conditions on the parameter space while also allowing systems with multiple equilibria locally,
 %while also allowing systems with multiple equilibria in the 
%To state our assumption, 
we first define the following property for a function, say $g$, at a specific point in its domain: along any direction either the second derivative of $g$ must be non-zero or {\em all} derivatives must be zero. For example,  at the origin,
$g(x,y) = x^2 + x^2 y^2$ is flat along $y$, and along any other direction at an angle $\alpha \neq 0$ with the $y$ axis, the
second derivative is $2 \sin^2 \alpha$.  For the GAN system, we will require this property, formalized in Property~\ref{prop:convex}, for  two convex functions whose Hessians are proportional to
  $\vec{K}_{DD}$ and $\vec{K}_{DG}^T \vec{K}_{DG}$.  We provide 
 more intuition for these functions below.

\begin{property}
\label{prop:convex} 
$g: \Theta \to \mathbb{R}$ satisfies Property~\ref{prop:convex} at  $\vec{\theta^\star} \in \Theta$  if for  any $\vec{\theta} \in \Null(\left. \nabla^2_{\vec{\theta}} g(\vec{\theta}) \right\vert_{\vec{\theta}^\star}  )$, the function is locally constant along $\vec{\theta}$ at $\vec{\theta^\star}$ i.e., $\exists \epsilon > 0$ such that for all $\epsilon' \in (-\epsilon, \epsilon)$, $g(\vec{\theta^\star}) = g(\vec{\theta^\star} + \epsilon' \vec{\theta})$.
\end{property}

\begin{assumption}
\label{as:convexity}
At an equilibrium $(\vec{\theta^\star_D}, \vec{\theta^\star_G})$, the functions
$\mathbb{E}_{p_{\rm data}}[D^2_{\vec{\theta_D}} (x)]$  and $\left. \left\| \mathbb{E}_{p_{\rm data}}[ \nabla_{\vec{\theta_D}} D_{\vec{\theta_D}} (x)  ]   -  \mathbb{E}_{p_{\vec{\theta_G}}}[ \nabla_{\vec{\theta_D}} D_{\vec{\theta_D}} (x)  ]   \right\|^2 \right\vert_{\vec{\theta_D} = \vec{\theta_D^\star}}$ must satisfy Property~\ref{prop:convex} in the discriminator and generator space respectively.
\end{assumption}

Here is an intuitive explanation of what these two non-negative functions represent and how they relate to the objective. 
The first function is a function of $\vec{\theta_D}$ which measures  how far
 $\vec{\theta_D}$ is from an all-zero state, and the second is a function of $\vec{\theta_G}$ which measures
 how far 
$\vec{\theta_G}$ is from the true distribution; at equilibrium these
functions are zero. We will see later that  given $f''(0) < 0$, the curvature of the first function at
$\vec{\theta^\star_D}$ is representative of the curvature of $V(\vec{\theta_D},
\vec{\theta_G^\star})$ in the discriminator space; similarly, given $f'(0) \neq 0$ the curvature of the second function at $\vec{\theta_G^\star}$
is representative of the curvature of  {\em the magnitude of the discriminator
  update on $\vec{\theta}_D^\star$} in the generator space.  
  The intuition behind why this particular relation holds is that, when $\vec{\theta_G}$ moves away from
the true distribution, while the second function in Assumption~\ref{as:convexity} increases, $\vec{\theta_D^\star}$ also becomes more suboptimal for that generator; as a result,  the magnitude of
update on $\vec{\theta_D^\star}$ increases too.  Note that we show in Lemma~\ref{lem:eqspace}, that the Hessian of the two functions in Assumption~\ref{as:convexity}  in the discriminator and the generator space respectively, are proportional to $\vec{K}_{DD}$ and $\vec{K}_{DG}^T \vec{K}_{DG}$. 
%
%Mathematically, all these quantities are also related to $\vec{K}_{DD}$ and $\vec{K}_{DG}$ respectively: 
%\begin{align*}
% \left. \nabla^2_{\vec{\theta_D}} \mathbb{E}_{p_{\rm data}}[D^2_{\vec{\theta_D}} (x)] \right\vert_{\vec{\theta_D^\star}} &  = 
%\frac{2}{f''(0)}\left. \nabla^2_{\vec{\theta_D}} V(\vec{\theta_D}, \vec{\theta_G^\star}) \right\vert_{\vec{\theta_D^\star}}= 2 \vec{K}_{DD}
%\end{align*} 
%\begin{align*}
% &\left. \nabla^2 _{\vec{\theta_G}}   \left\| \mathbb{E}_{p_{\rm data}}[ \nabla_{\vec{\theta_D}} D_{\vec{\theta_D}} (x)  ]   -  \mathbb{E}_{p_{\vec{\theta_G}}}[ \nabla_{\vec{\theta_D}} D_{\vec{\theta_D}} (x)  ]   \right \|^2 \right\vert_{(\vec{\theta_D^\star}, \vec{\theta_G^\star})}   \\ &= \frac{1}{f'(0)^2} \left. \nabla^2_{\vec{\theta_G}} \left( \| \left. \nabla_{\vec{\theta_D}}  V(\vec{\theta_D}, \vec{\theta_G^\star}) \right\vert_{\vec{\theta_D^\star}} \|^2 \right)  \right\vert_{\vec{\theta_G^\star}}  = 2\vec{K}_{DG}^T \vec{K}_{DG} 
%\end{align*}

The above relations involving the two functions and the GAN objective, together with Assumption~\ref{as:convexity}, basically allow us to consider systems with reasonable strong curvature properties, while also allowing many equilibria in a local neighborhood in a specific sense. In particular, if the curvature of the first function is flat along a direction $\vec{u}$
 (which also means that $\vec{K}_{DD}\vec{u} = 0$)  we can
 perturb $\vec{\theta_D^\star}$ slightly along $\vec{u}$ and still have an
 `equilibrium discriminator' as defined in Assumption~\ref{as:global-gen} i.e.,
 $\forall x \in {\rm supp}(p_{\vec{\theta^\star_G}}) $, $D_{\vec{\theta_D}}(x) =
 0$. Similarly, for any direction $\vec{v}$ along which
the curvature of the second function is flat (i.e., $\vec{K}_{DG} \vec{v} = 0$), we can perturb
$\vec{\theta_G^\star}$ slightly along that direction such that $\vec{\theta_G}$
remains an `equilibrium generator' as defined in Assumption~\ref{as:global-gen}
i.e., ${p_{\theta_G}} = {p_{\rm data}}$. We prove this formally in  Lemma~\ref{lem:eqspace}. Perturbations along any other directions do not yield equilibria because then, either $\vec{\theta}_D$ is no longer in an all-zero state or $\vec{\theta}_G$ does not match the true distribution. Thus, we consider a setup where the rank deficiencies of  $\vec{K}_{DD}$, $\vec{K}_{DG}^T\vec{K}_{DG}$ if any, correspond to equivalent equilibria 
%Thus, through Assumption~\ref{as:convexity}, we allow a setup where, when  $\vec{K}_{DD}, \vec{K}_{DG}^T\vec{K}_{DG} \succ 0$, $(\vec{\theta}_D^\star,\vec{\theta}_G^\star)$ is a unique equilibrium locally; but when these matrices are rank-deficient, there may be other equivalent equilibria in the neighborhood --
 (which typically exist for neural networks, though in practice they may not correspond to `linear' perturbations as modeled here).

%with multiple equilibria in the local neighborhood of $(\vec{\theta}_D^\star,\vec{\theta}_G^\star)$ Note that when 

%However, we allow some of
%the directions to be constant instead of strongly convex because in a more
%realistic case we would expect there to be multiple configurations of the
%discriminator and generator to correspond to equilibrium. 

Our final assumption is on the supports of the true and generated distributions: we require that all the generators in a sufficiently
small neighborhood of the equilibrium have distributions with the same support
as the true distribution. Following this, we briefly discuss a relaxation of this assumption.
\begin{assumption}
\label{as:same-support}
$\exists \epsilon_G > 0$ such that $\forall \vec{\theta_G} \in B_{\epsilon_G}(\vec{\theta^\star_G})$, ${\rm
  supp}(p_{\vec{\theta_G}}) = {\rm supp}(p_{\rm data}) $.
\end{assumption}

This 
may typically hold if the support covers the whole space $\mathcal{X}$; but when the true distribution has support in some smaller disjoint parts of the space $\mathcal{X}$, nearby generators may correspond to slightly displaced versions of this distribution with a different support.  For the latter scenario, we show in Appendix~\ref{app:realizable-relaxed} that local exponential stability holds under a  certain smoothness condition on the discriminator. Specifically, we  require that $D_{\vec{\theta}_D^\star}(\cdot)$ be zero not only on the support of $\vec{\theta}_G^\star$ but also on the support of small perturbations of $\vec{\theta}_G^\star$ as otherwise the generator will not be at equilibrium. (Additionally, we also require this property from the discriminators that lie within a small perturbation of $\vec{\theta_D^\star}$ in the null space of $\vec{K}_{DD}$ so that they correspond to equilibrium discriminators.) We note that while this relaxed assumption accounts for a larger class of examples, it is still strong in that it also restricts us from certain simple systems. Due to space constraints, we state and discuss the implications of this assumption in greater detail in Appendix~\ref{app:realizable-relaxed}.

%Besides this, we need to be careful in extending the relaxation of the strong curvature assumption from Assumption~\ref{as:convexity} to this scenario. In particular,  for any perturbation of  $\vec{\theta}_D^\star$ in the null space of $\vec{K}_{DD}$ (if it exists), 
%recall that the discriminator is zero on the support of  $p_{\vec{\theta}_G^\star}$; but 
%we need to additionally ensure that even the perturbed discriminator is zero on the support of any perturbation of ${\vec{\theta}_G^\star}$; only then would  the perturbed discriminator be equivalent to an equilibrium discriminator.  

%We can replace this assumption with a slightly more realistic smoothness condition on the
% discriminator, which is sufficient for our results as we prove.   The motivation is that Perhaps a fairer requirement from the system would be to hope that the union of the supports of the generator and the generators in its neighborhood does not cover too large a space, and furthermore, the equilibrium discriminator is zero in the union of all these supports -- a property that is likely to be satisfied if we restrict ourselves to smooth discriminators. 
%Additionally, to accommodate for the scenario where we have multiple equilibria in a local neighborhood around equilibrium, we also need to impose 

%We mathematically state this assumption as follows: 

 We now state our main result.

%Under these assumptions, we have the following theorem.
\begin{restatable}{theorem}{generalstability}
\label{thm:general-stability}
The dynamical system defined by the GAN objective in
Equation~\ref{eq:generic_gan} and the updates in
Equation~\ref{eq:undamped_updates} is locally exponentially stable with respect to 
an equilibrium point  
$(\vec{\theta^\star_D},\vec{\theta^\star_G})$ when the Assumptions~\ref{as:global-gen},~\ref{as:loss},~\ref{as:convexity},~\ref{as:same-support} hold for $(\vec{\theta^\star_D},\vec{\theta^\star_G})$ and other equilibria in a small neighborhood around it. Furthermore,  the rate of convergence is governed only by the eigenvalues 
 $\lambda$ of the Jacobian $\vec{J}$ of the system at equilibrium with a strict negative real part upper bounded as:  
\begin{itemize}
\itemsep-0.2em 
\item  If $\Im(\lambda) = 0$, then $\Re(\lambda)\leq 
 \frac{2 f''(0) f'^2(0) \lambda_{\min}^{(+)}(\vec{K}_{DD})  \lambda_{\min}^{(+)}(\vec{K}_{DG}^T \vec{K}_{DG})}
 {4f''^2(0)\lambda_{\min}^{(+)}(\vec{K}_{DD}) \lambda_{\max}(\vec{K_{DD}}) +  f'(0)^2\lambda_{\min}^{(+)}(\vec{K}_{DG}^T\vec{K}_{DG})} $
\item If $\Im(\lambda) \neq 0$, then $\Re(\lambda) \leq f''(0)
  \lambda_{\min}^{(+)}(\vec{K}_{DD}) $
\end{itemize}
\end{restatable}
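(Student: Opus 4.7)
The plan is to apply the linearization theorem: compute the Jacobian $\vec{J}$ of the ODE in Equation~\ref{eq:undamped_updates} at the equilibrium, show that every nonzero eigenvalue of $\vec{J}$ has strictly negative real part satisfying the stated bounds, and then argue that the zero eigenvalues correspond only to directions tangent to a local submanifold of equivalent equilibria (so that local asymptotic convergence to the equilibrium set still follows, filled in by the Lyapunov analysis sketched in Appendix~\ref{app:lyapunov}).

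The first step is to evaluate the blocks of $\vec{J}$ using Assumptions~\ref{as:global-gen} and~\ref{as:same-support}. Differentiating $V$ twice, expressing the generator expectation as an integral against $p_{\vec{\theta_G}}$, and substituting $D_{\vec{\theta}_D^\star}(x)=0$ on the shared support, the terms linear in $f'(0)$ cancel between the real and generated expectations, and the identity $\int \nabla^2_{\vec{\theta_G}} p_{\vec{\theta_G}}(x)\,dx = \nabla^2_{\vec{\theta_G}}\!\int p_{\vec{\theta_G}}(x)\,dx = 0$ kills the lower-right block. One obtains the structured Jacobian
\[
\vec{J} \;=\; \begin{pmatrix} 2f''(0)\,\vec{K}_{DD} & -f'(0)\,\vec{K}_{DG} \\ f'(0)\,\vec{K}_{DG}^T & \vec{0} \end{pmatrix}.
\]

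The second step is the eigenvalue analysis. For an eigenpair $\lambda\ne 0$ with eigenvector $(u,v)$, the lower block gives $v = (f'(0)/\lambda)\vec{K}_{DG}^T u$; substituting into the upper block and taking an inner product with $u$ reduces everything to the scalar quadratic
\[
\lambda^2 \;-\; 2f''(0)\,a\,\lambda \;+\; f'(0)^2\,b \;=\; 0, \qquad a := u^T\vec{K}_{DD}u\ge 0,\quad b := \|\vec{K}_{DG}^T u\|^2 \ge 0.
\]
Because $f''(0)<0$, every root satisfies $\Re(\lambda)\le 0$, and a case split on the sign of the discriminant produces the theorem's two bounds: complex roots give $\Re(\lambda)=f''(0)a \le f''(0)\,\lambda_{\min}^{(+)}(\vec{K}_{DD})$, while real roots, after rationalizing $\lambda = f''(0)a + \sqrt{f''(0)^2 a^2 - f'(0)^2 b}$ and using $\lambda_{\min}^{(+)}(\vec{K}_{DD})\le a\le \lambda_{\max}(\vec{K}_{DD})$ and $b\ge \lambda_{\min}^{(+)}(\vec{K}_{DG}^T\vec{K}_{DG})$, give the first displayed bound. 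For $\lambda = 0$, the same system forces $\vec{K}_{DD}u = 0$, $\vec{K}_{DG}^T u = 0$, and $\vec{K}_{DG}v = 0$, and Assumption~\ref{as:convexity} together with the lemma identifying the equilibrium subspace matches these null directions to tangent vectors of a local manifold of equivalent equilibria.

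The main obstacle is the vanishing block $\vec{J}_{GG}=\vec{0}$: no standard ``negative definite diagonal'' trick applies, and all generator stability must be transmitted through the off-diagonal coupling $\vec{K}_{DG}$ in conjunction with the strict concavity $f''(0)<0$ of the discriminator loss. In addition, the zero eigenvalues of $\vec{J}$ cannot simply be ignored — they must be shown to correspond exactly to the local equilibrium subspace rather than to genuinely neutral or unstable modes, which is precisely the role played by the flatness condition in Property~\ref{prop:convex}/Assumption~\ref{as:convexity} and by supplementing the linearization argument with the Lyapunov-based contraction on the orthogonal complement of the equilibrium subspace.
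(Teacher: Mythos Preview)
Your overall strategy coincides with the paper's: compute the Jacobian, exploit its block form, and identify the zero eigenvalues with the tangent space of a local submanifold of equivalent equilibria via Assumption~\ref{as:convexity}. The gap is in your derivation of the eigenvalue bounds from the scalar quadratic $\lambda^2-2f''(0)\,a\,\lambda+f'(0)^2 b=0$. The quantities $a=u^*\vec{K}_{DD}u/\|u\|^2$ and $b=u^*\vec{K}_{DG}\vec{K}_{DG}^Tu/\|u\|^2$ are Rayleigh quotients at the discriminator component $u$ of the eigenvector, and your asserted lower bounds $a\ge\lambda_{\min}^{(+)}(\vec{K}_{DD})$ and $b\ge\lambda_{\min}^{(+)}(\vec{K}_{DG}^T\vec{K}_{DG})$ hold only if $u$ is orthogonal to $\Null(\vec{K}_{DD})$ and to $\Null(\vec{K}_{DG}^T)$ respectively. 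The first orthogonality can indeed be recovered from the inclusion $\Null(\vec{K}_{DD})\subseteq\Null(\vec{K}_{DG}^T)$, which the paper derives in Lemma~\ref{lem:projection} from Lemma~\ref{lem:eqspace}, but you never state or use it. The second orthogonality fails in general: $\vec{K}_{DG}\vec{K}_{DG}^T$ lives on the discriminator side and is rank-deficient whenever $\dim\vec{\theta}_D>\operatorname{rank}\vec{K}_{DG}$, and nothing forces $u\in\Col(\vec{K}_{DG})$. When $u\in\Col(\vec{K}_{DD})\cap\Null(\vec{K}_{DG}^T)$ one has $b=0$ and the eigenvalue is $\lambda=2f''(0)a$, a real eigenvalue of the $DD$ block alone; such $\lambda$ do satisfy the theorem's bound, but not through your rationalized formula.

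The paper avoids ever bounding $b$ from below. It first projects onto the orthogonal complement of the equilibrium subspace (Lemma~\ref{lem:projection}), making the reduced $\vec{K}_{DD}$ strictly positive definite and the reduced $\vec{K}_{DG}$ full \emph{column} rank. Lemma~\ref{lem:undamped-bound} then treats $\bigl[\begin{smallmatrix}-\vec{Q}&\vec{P}\\-\vec{P}^T&0\end{smallmatrix}\bigr]$ not via a scalar quadratic but by splitting the eigenvector equation into real and imaginary parts, obtaining $\Re(\lambda)=-\vec{a}_1^T\vec{Q}\vec{a}_1-\vec{a}_2^T\vec{Q}\vec{a}_2$ directly, and for real $\lambda$ squaring the first block equation to control $\vec{b}_1^T\vec{P}^T\vec{P}\vec{b}_1$ --- a \emph{generator}-side quadratic form in which $\vec{P}^T\vec{P}$ genuinely is positive definite --- in terms of $\vec{a}_1$. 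This is what produces the precise displayed bound. Finally, Theorem~\ref{thm:multiple-equilibria} transports exponential stability of the projected linear system back to the full nonlinear one; your reference to a Lyapunov argument on the orthogonal complement is the right intuition, but this theorem is the actual mechanism.
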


%\begin{proof} (Sketch)
  The vast majority of our proofs are deferred to the appendix, but we briefly
  describe the intuition here. It is straightforward to show that the Jacobian $\vec{J}$ of the
  system at equilibrium can be written as: 
\[
\vec{J}= 
\begin{bmatrix}
\vec{J}_{DD}& \vec{J}_{DG} \\
-\vec{J}_{DG}^T & \vec{J}_{GG} \\
\end{bmatrix} = 
\begin{bmatrix}
2f''(0) \vec{K}_{DD} &f'(0)
\vec{K}_{DG} \\ 
-f'(0) \vec{K}_{DG} ^T & 0 \\
\end{bmatrix} 
\]
Recall that we wish to show this is Hurwitz.  First note that $\vec{J}_{DD}$ (the
Hessian of the objective with respect to the discriminator) is negative
semi-definite if and only if $f''(0) < 0$.
% i.e., we want the objective to be concave in the discriminator space.  
Next, a crucial observation is that $\vec{J}_{GG}=0$ i.e, the Hessian term w.r.t. the generator vanishes because for the all-zero discriminator, 
all generators result in the same objective value. Fortunately, 
this means {\em at equilibrium} we do not have non-convexity in $\vec{\theta}_G$
precluding local stability. Then, we make use of the crucial
Lemma~\ref{lem:undamped-bound} we prove in the appendix, showing that any matrix
of  
the form $\begin{bmatrix} -\vec{Q} & \vec{P}; & -\vec{P}^T & 0\end{bmatrix}$ is Hurwitz provided
that $-\vec{Q}$ is strictly negative definite and $\vec{P}$ has full column rank.

 However, this property holds only when $\vec{K}_{DD}$ is positive definite and $\vec{K}_{DG}$ is full column rank. Now, if  $\vec{K}_{DD}$ or $\vec{K}_{DG}$ do not have this property,
%However, this is not necessarily the case according to Assumption~\ref{as:convexity}. 
recall that the rank deficiency is due to a subspace of equilibria around $(\vec{\theta^\star_D},\vec{\theta^\star_G})$. Consequently, we can analyze the stability of the system projected to an subspace orthogonal to these equilibria (Theorem~\ref{thm:multiple-equilibria}).  Additionally, we also prove stability using Lyapunov's stability (Theorem~\ref{thm:lyapunov}) by showing that the squared $L_2$ distance to the subspace of equilibria always either decreases or only instantaneously remains constant.

\subparagraph{Additional results.} In order to illustrate our assumptions in Theorem~\ref{thm:general-stability}, in Appendix~\ref{app:lqgan} we consider a simple GAN that learns a multi-dimensional Gaussian using a quadratic discriminator and a linear generator.  In a similar set up, in Appendix~\ref{app:wgan-unstable}, we consider the case where $f(x) = x$ i.e., the Wasserstein GAN
and so $f''(x) = 0$, and we show that  the system can perennially cycle around an equilibrium point without converging.  A simple
two-dimensional example is visualized in Section \ref{sec:results}. Thus,  {\em gradient descent WGAN optimization is not necessarily asymptotically stable.}

%%%%%%%%%%%%%%%%%%%%%%%%%%%%%%%%%

\subsection{Stabilizing optimization via gradient-based regularization}
Motivated by the considerations above, in this section we propose a
regularization penalty for the generator update, which uses a term based upon the
gradient of the discriminator.  Crucially, the regularization term does
\emph{not} change the parameter values at the equilibrium point, and at the same time
enhances the local stability of the optimization procedure, both in theory and
practice. 
Although these update equations do require that we differentiate with respect to
a function of another gradient term, such ``double backprop'' terms (see e.g.,
\cite{drucker1992improving}) are easily computed by modern automatic
differentiation tools.   Specifically, we propose the regularized update
\begin{equation}
  \begin{split}
    \vec{\theta_G} & := \vec{\theta_G} - \alpha \nabla_{\vec{\theta_G}}  \left({V}(D_{\vec{\theta_D}}, G_{\vec{\theta_G}}) + \eta \|\nabla_{\vec{\theta_D}}{V}(D_{\vec{\theta_D}}, G_{\vec{\theta_G}})\|^2 \right) \label{eq:damped_updates}
  \end{split}
\end{equation}

\subparagraph{Local Stability}  The intuition of this regularizer is perhaps
most easily understood by considering how it changes the Jacobian at equilibrium
(though there are other means of motivating the update as well, discussed
further in  Appendix~\ref{app:intuition}).  In the Jacobian of the new
update, although there are now non-antisymmetric diagonal blocks, the block diagonal
terms are now negative definite: 

\begin{align*}
\begin{bmatrix}
\vec{J}_{DD} & \vec{J}_{DG} \\
-\vec{J}_{DG}^T(\vec{I} +2 \eta \vec{J}_{DD}) &  - 2\eta \vec{J}_{DG}^T \vec{J}_{DG}
\end{bmatrix}
\end{align*}

As we show below in Theorem~\ref{thm:regularized} (proved in Appendix~\ref{app:damped-updates}), as long
as we choose $\eta$ small enough so that $I + 2 \eta \vec{J}_{DD} \succeq 0$, this
guarantees the updates are locally asymptotically stable for any concave $f$. 
In addition to stability properties, this regularization term also addresses a well known
failure state in GANs called {\em mode collapse}, by lending more ``foresight'' to the generator. 
The way our updates provide this foresight is very similar to the unrolled updates proposed in \cite{metz2016unrolled}, although, our
regularization is much simpler and provides more flexibility to leverage the foresight. In practice, we see that our method can be as powerful as the more complex and slower 10-unrolled GANs. We discuss this and other intuitive ways of motivating our regularizer in Appendix~\ref{app:damped-updates}.
 \begin{restatable}{theorem}{regularized}
\label{thm:regularized}
%\begin{theorem}
The dynamical system defined by the GAN objective in Equation~\ref{eq:generic_gan} and the updates in 
 Equation~\ref{eq:damped_updates}, 
is locally exponentially stable at the equilibrium, under the same conditions as in Theorem~\ref{thm:general-stability}, if $\eta <
\frac{1}{2\lambda_{\max}(-\vec{J}_{DD})} $. 
Further, under appropriate conditions similar to these, the WGAN system is locally exponentially stable at the equilibrium for any $\eta$. The rate of convergence for the WGAN is governed only by the eigenvalues $\lambda$ of the Jacobian at equilibrium with a strict negative real part upper bounded as:
%For any eigenvalue $\lambda$ of the Jacobian at equilibrium,
\begin{itemize}
\item If $\Im(\lambda) = 0$, then $\Re(\lambda) \leq  - \frac{2 f'^2(0) \eta \lambda_{\min}^{(+)}(\vec{K}_{DG}^T \vec{K}_{DG})}
{4  f'^2(0)\eta^2 \lambda_{\max}(\vec{K}_{DG}^T \vec{K}_{DG}) +  1} $
\item If $\Im(\lambda) \neq 0$, then $\Re(\lambda) \leq -  \eta f'^2(0) {\lambda_{\min}^{(+)}(\vec{K}_{DG}^T \vec{K}_{DG})} $
\end{itemize}
%\end{theorem}
\end{restatable}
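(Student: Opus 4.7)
My plan is to reduce the claim to verifying that the Jacobian of the regularized dynamics at equilibrium is Hurwitz, following the strategy of Theorem~\ref{thm:general-stability}. Computing this Jacobian is straightforward: at equilibrium $\nabla_{\vec{\theta_D}}V = 0$, so when differentiating $\eta\|\nabla_{\vec{\theta_D}}V\|^2$ any term still carrying a factor of $\nabla_{\vec{\theta_D}}V$ vanishes, leaving only products of second derivatives of $V$. This yields exactly the bottom-right block $-2\eta\vec{J}_{DG}^T\vec{J}_{DG}$ and modified bottom-left block $-\vec{J}_{DG}^T(I+2\eta\vec{J}_{DD})$ displayed in the excerpt. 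The equilibria themselves are unchanged, so Assumptions~\ref{as:global-gen}--\ref{as:same-support} carry over verbatim.

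For the eigenvalue analysis I would take a right eigenvector $(u,v)$ with eigenvalue $\lambda$, multiply the second block row by $v^*$, and use $v^*\vec{J}_{DG}^T = u^*(\bar\lambda I - \vec{J}_{DD})$ (obtained by conjugating the first block row) to express everything in terms of $u$, yielding the single identity
\begin{equation*}
\Re(\lambda)(\|u\|^2+\|v\|^2) \;=\; u^*\vec{J}_{DD}u\,(1+2\eta\Re(\lambda)) \;-\; 2\eta|\lambda|^2\|u\|^2.
\end{equation*}
In the concave case $\vec{J}_{DD} = 2f''(0)\vec{K}_{DD}\preceq 0$, so if $\Re(\lambda)\ge 0$ the right-hand side is non-positive (using $\eta > 0$ so that $1 + 2\eta\Re(\lambda) > 0$); then both sides vanish, forcing $\vec{J}_{DD}u = 0$ and either $u = 0$ or $\lambda = 0$. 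On the orthogonal complement of the equilibrium subspace---where Property~\ref{prop:convex} and Lemma~\ref{lem:eqspace} make $\vec{K}_{DD}$ and $\vec{K}_{DG}$ injective---each case contradicts $(u,v)\neq 0$, so $\Re(\lambda) < 0$. The hypothesis $\eta < 1/(2\lambda_{\max}(-\vec{J}_{DD}))$ (equivalent to $I + 2\eta\vec{J}_{DD}\succeq 0$) is what keeps the modified bottom-left block structurally compatible with this argument across the full range of interest.

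The WGAN claim is obtained by specializing $f''(0)=0$, so $\vec{J}_{DD}=0$: the identity collapses to $\Re(\lambda)(\|u\|^2+\|v\|^2)=-2\eta|\lambda|^2\|u\|^2$ and the Hurwitz argument goes through for any $\eta > 0$. For the explicit rate bounds, I would substitute $u = \vec{J}_{DG}v/\lambda$ into the second block row (valid since $\lambda\ne 0$), reducing the eigenproblem to $\vec{J}_{DG}^T\vec{J}_{DG}v = -\lambda^2/(1+2\eta\lambda)\,v$. Thus $v$ is an eigenvector of $f'(0)^2\vec{K}_{DG}^T\vec{K}_{DG}$ with eigenvalue $\mu\in[f'(0)^2\lambda_{\min}^{(+)},\,f'(0)^2\lambda_{\max}]$, and $\lambda$ solves the scalar quadratic $\lambda^2 + 2\eta\mu\lambda + \mu = 0$. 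Splitting by the sign of the discriminant $\eta^2\mu^2-\mu$: the complex case gives $\Re(\lambda)=-\eta\mu$, bounded above by $-\eta f'(0)^2\lambda_{\min}^{(+)}$; the real case uses $\Re(\lambda_+) = -\mu/(\eta\mu + \sqrt{\mu(\eta^2\mu-1)})$ together with monotonicity in $\mu$ to deliver the stated rational-function bound.

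The main obstacle is the rank-deficiency of $\vec{K}_{DD}$ and $\vec{K}_{DG}$ coming from the local subspace of equivalent equilibria: the naive strict-inequality arguments break down on those null directions. As in Theorem~\ref{thm:general-stability}, the resolution is to project the linearized system onto the orthogonal complement of this subspace, on which Property~\ref{prop:convex} makes both matrices positive definite, and then invoke the subspace-stability result Theorem~\ref{thm:multiple-equilibria} to conclude local exponential stability relative to the equilibrium subspace.
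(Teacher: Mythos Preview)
Your proposal is correct but follows a genuinely different route from the paper for the conventional-GAN part. The paper does not argue directly with an eigenvector identity; instead it builds an explicit quadratic Lyapunov function with weight matrix
\[
\vec{P}=\begin{bmatrix}\vec{T}_D(\vec{I}+2\eta\vec{J}_{DD})\vec{T}_D^T & 0\\ 0 & \vec{I}\end{bmatrix}
\]
and verifies that $\vec{J}'^T\vec{P}+\vec{P}\vec{J}'\prec 0$ on the projected subspace. The hypothesis $\eta<\tfrac{1}{2\lambda_{\max}(-\vec{J}_{DD})}$ is precisely what makes $\vec{P}\succ 0$. Your eigenvector identity
\[
\Re(\lambda)(\|u\|^2+\|v\|^2)=u^*\vec{J}_{DD}u\,(1+2\eta\Re(\lambda))-2\eta|\lambda|^2\|u\|^2
\]
is a cleaner device and, interestingly, proves Hurwitz for \emph{every} $\eta>0$: when $\Re(\lambda)\ge 0$ the factor $1+2\eta\Re(\lambda)$ is automatically positive, so your remark that the $\eta$-bound is needed for ``structural compatibility'' is a misattribution---your own argument never uses it. (To complete your argument you should note that after projection the bottom-left block really is $-\vec{J}_{DG}'^T(I+2\eta\vec{J}_{DD}')$; this relies on $\Null(\vec{K}_{DD})\subseteq\Null(\vec{K}_{DG}^T)$, which the paper establishes in its projection lemma.)

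For the WGAN bounds the paper again proceeds differently: it swaps the $D$ and $G$ blocks so the projected Jacobian takes the form $\begin{bmatrix}-\vec{Q}&\vec{P}\\-\vec{P}^T&0\end{bmatrix}$ with $\vec{Q}=2\eta\,\vec{J}_{DG}'^T\vec{J}_{DG}'$ and $\vec{P}=-\vec{J}_{DG}'^T$, then invokes the general eigenvalue bound (Lemma~\ref{lem:undamped-bound}) and the fact that nonzero eigenvalues of $AB$ and $BA$ coincide. Your reduction to the scalar quadratic $\lambda^2+2\eta\mu\lambda+\mu=0$ exploits the extra structure that $\vec{Q}$ and $\vec{P}^T\vec{P}$ are both multiples of $\vec{K}_{DG}^T\vec{K}_{DG}$, which lets you compute the eigenvalues exactly; this is tighter than the paper's generic bound. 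One caution: ``monotonicity in $\mu$'' gives you $\sup_\mu\lambda_+(\mu)=\lambda_+(\mu_{\max})$, which is not literally the rational expression in the statement---you still need the elementary estimate $\eta\mu_{\max}+\sqrt{\mu_{\max}(\eta^2\mu_{\max}-1)}\le 2\eta\mu_{\max}$ together with $\mu_{\max}\ge\mu_{\min}$ to recover the displayed bound.
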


\section{Experimental results}
\label{sec:results}
We very briefly present experimental results that
demonstrate that our regularization term also has substantial
practical promise.\footnote{We provide an implementation of this technique at \url{https://github.com/locuslab/gradient_regularized_gan}}
 In Figure~\ref{fig:toy}, we compare our gradient regularization to
$10$-unrolled GANs on the same architecture and dataset (a mixture of eight
gaussians) as in \citet{metz2016unrolled}.
Our system quickly spreads out all the points
instead of first exploring only a few modes and then redistributing its mass
over all the modes gradually. Note that the conventional GAN updates are known to enter mode collapse
for this setup. We see similar results (see
Figure~\ref{fig:mnist} here, and 
Figure~\ref{fig:mnist-full} in the Appendix for a more detailed figure)
 in the case of a stacked MNIST dataset using a DCGAN
\citep{radford2015unsupervised} i.e., three random digits from MNIST are stacked together so as to
create a distribution over 1000 modes.  
Finally,  Figure~\ref{fig:streamline}, presents streamline plots for 
a  2D system where both the true and the latent distribution is uniform over
$[-1,1]$ and the discriminator is $D(x) = w_2 x^2$ while the generator is $G(z)
= az$. Observe that while the WGAN system goes in orbits
as expected, the original GAN system converges. With our
updates, both these systems converge quickly to the true equilibrium.

%Perhaps the best way of illustrating how gradient regularization helps near the
%equilibrium would be to look at streamline plots of a 2D system.

\begin{figure}[!htb]
    \centering
    \begin{minipage}{.2\textwidth}
        \centering
        \adjincludegraphics[width=1\textwidth,trim={0 {0.1\height} 0 {0.1\height}},clip,valign=t]{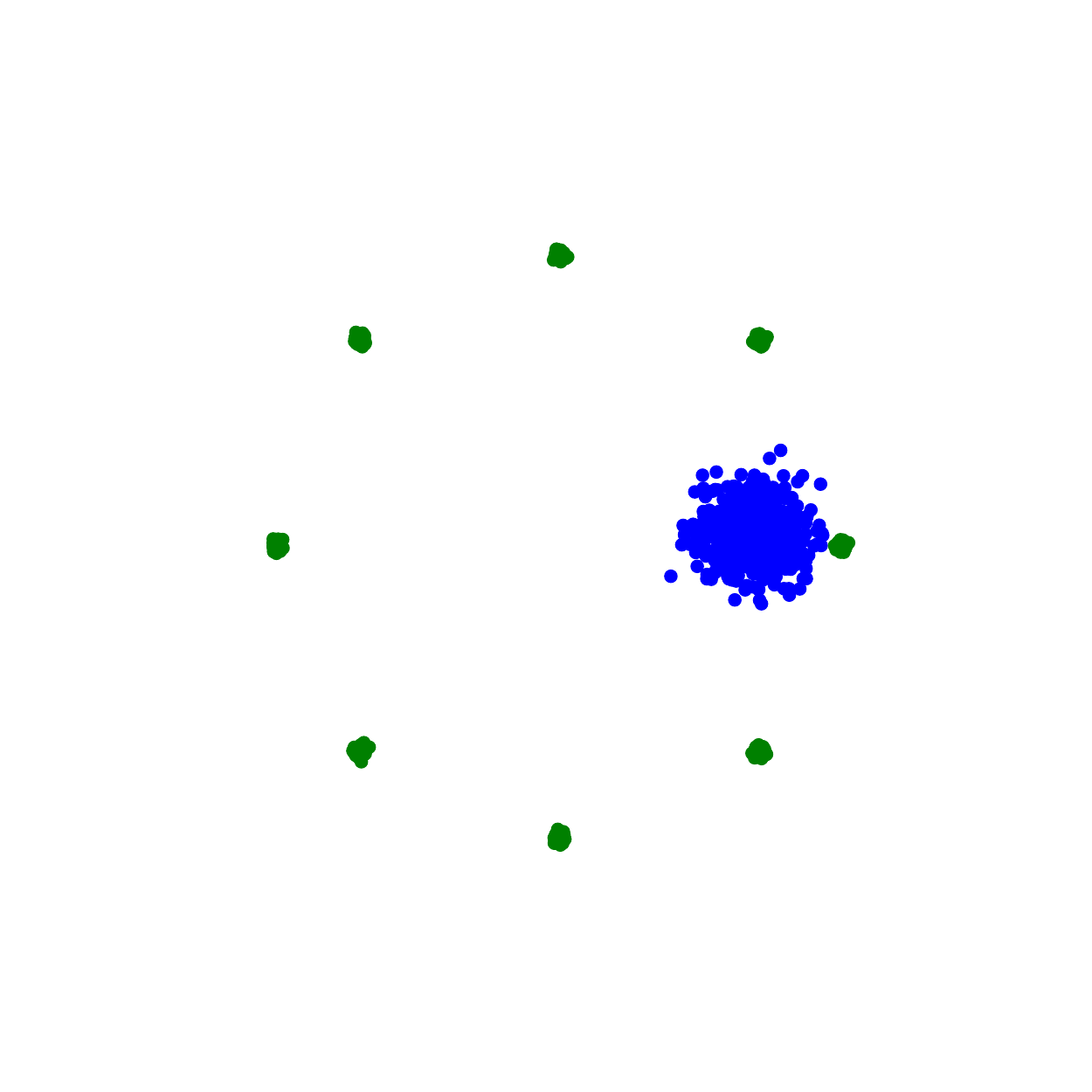} \\
         \adjincludegraphics[width=1\textwidth,trim={0 {0.1\height} 0 {0.1\height}},clip,valign=t]{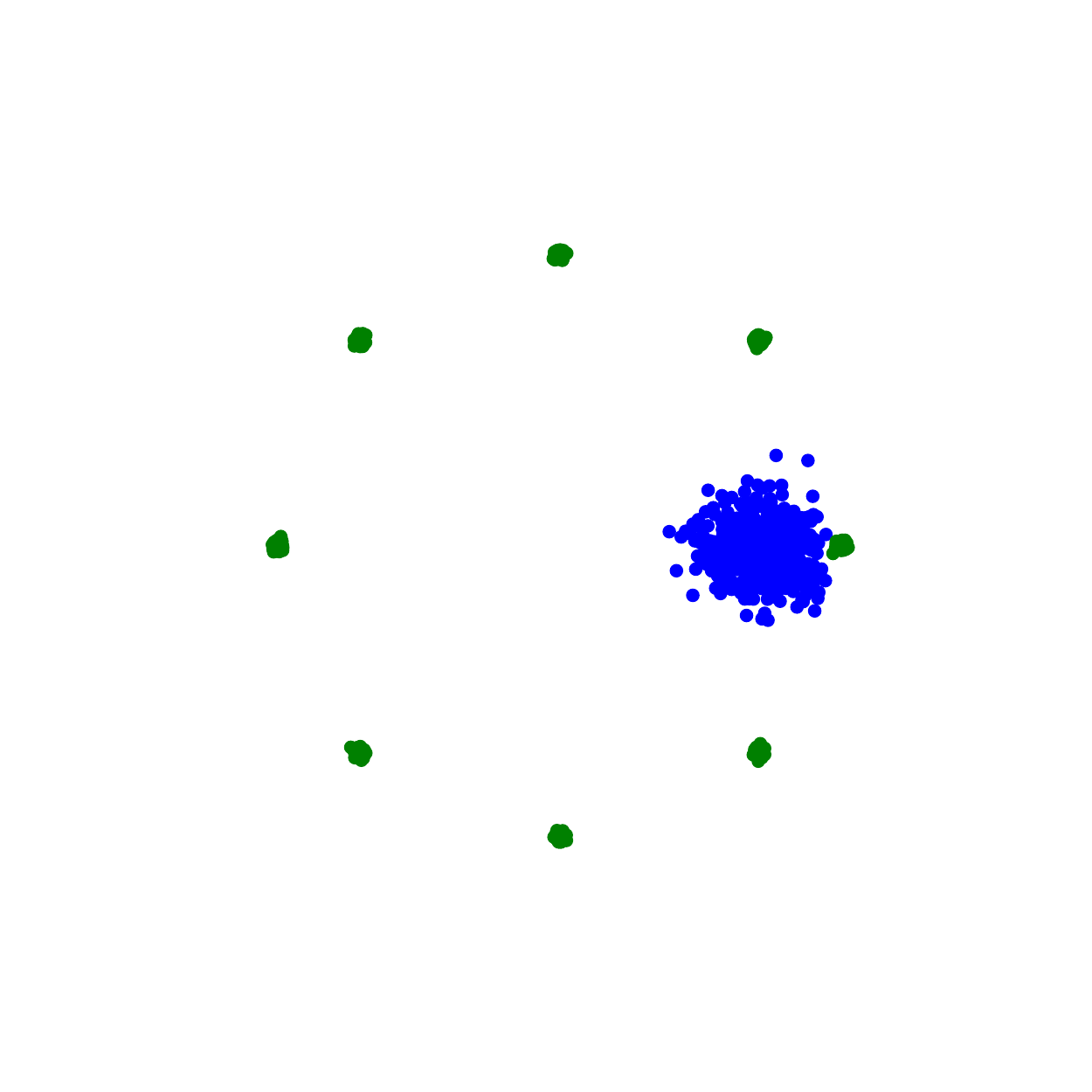}
         \caption*{Iteration 0}
    \end{minipage}%
        \begin{minipage}{.2\textwidth}
        \centering
        \adjincludegraphics[width=1\textwidth,trim={0 {0.1\height} 0 {0.1\height}},clip,valign=t]{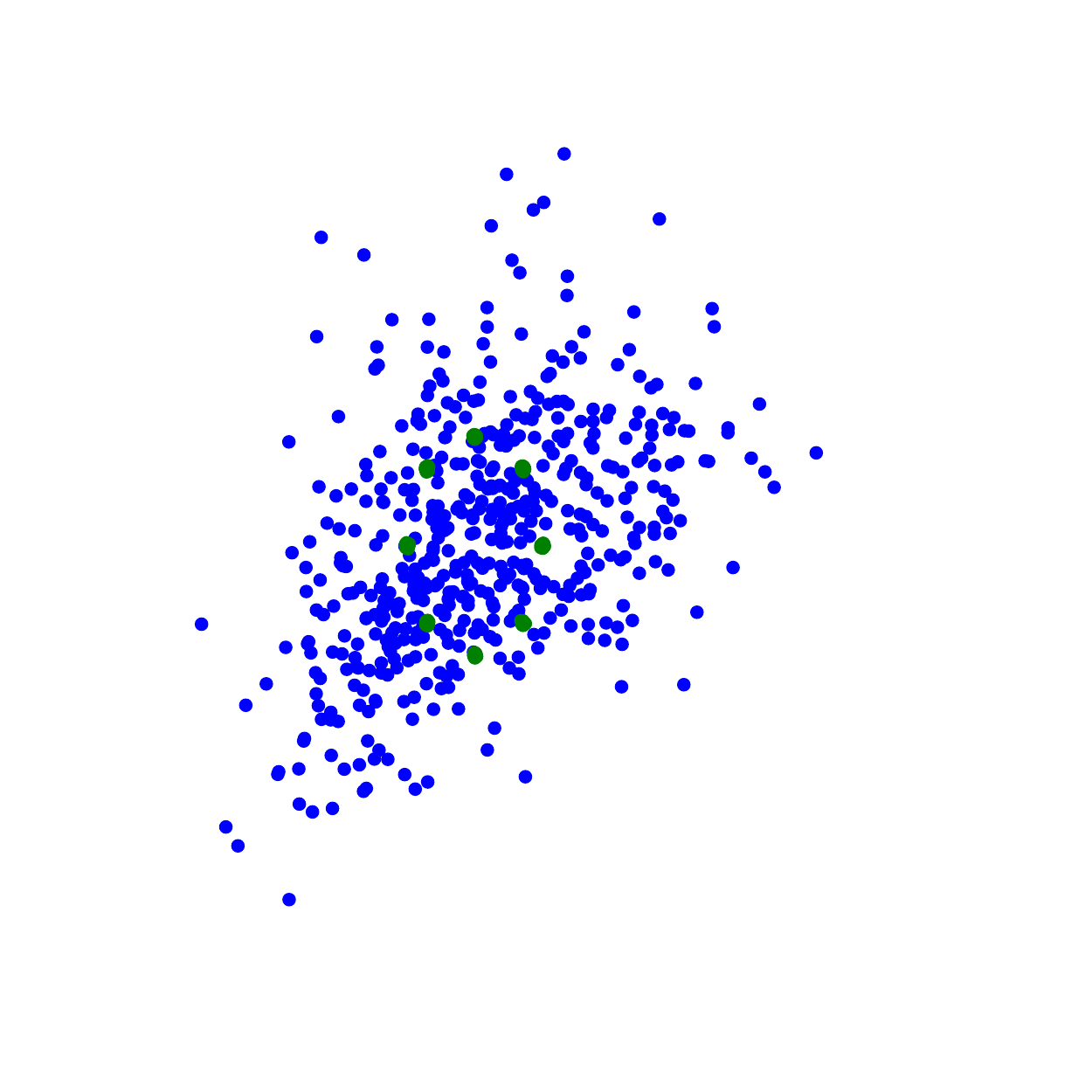} \\
         \adjincludegraphics[width=1\textwidth,trim={0 {0.1\height} 0 {0.1\height}},clip,valign=t]{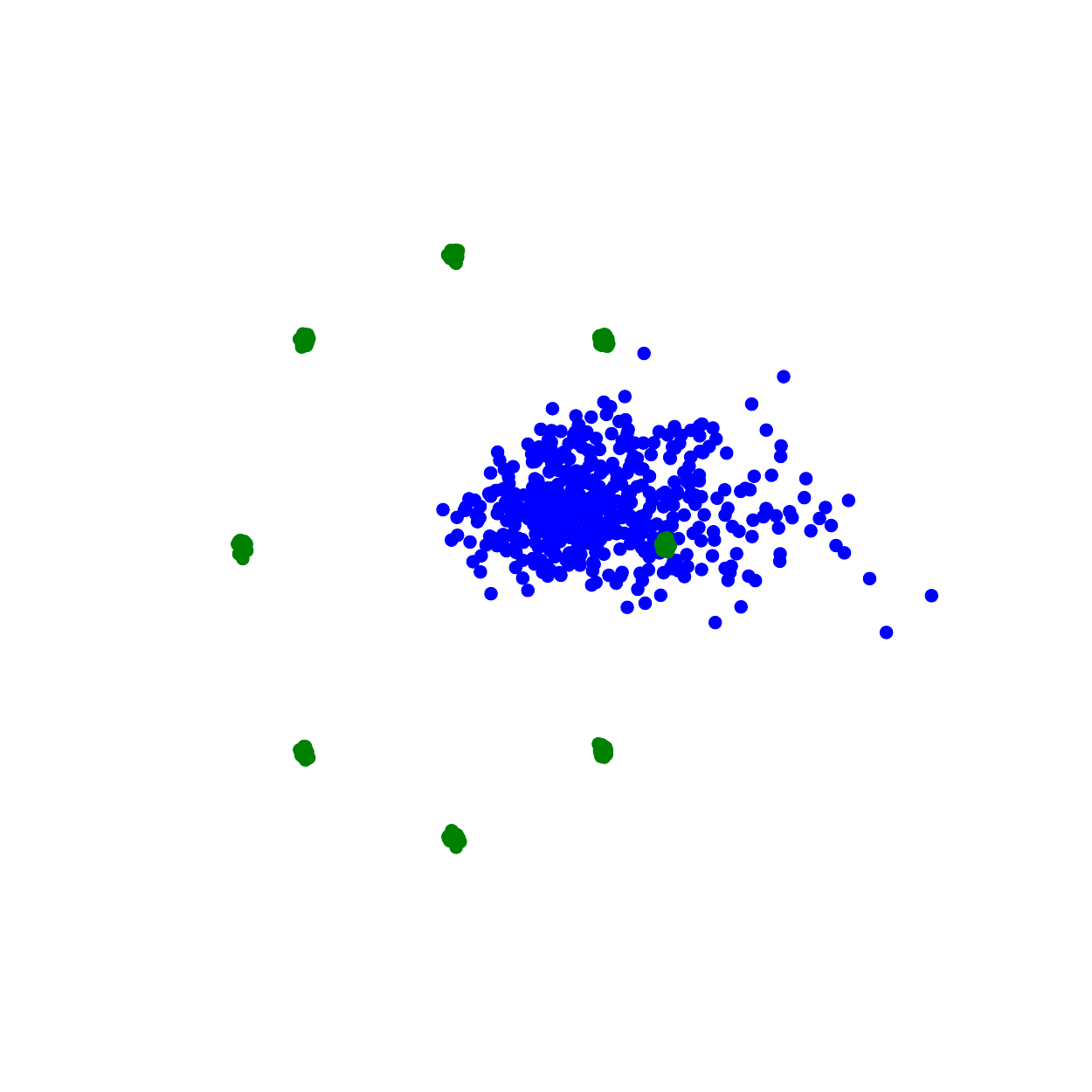}
         \caption*{Iteration 3000}
    \end{minipage}%
        \begin{minipage}{.2\textwidth}
        \centering
        \adjincludegraphics[width=1\textwidth,trim={0 {0.1\height} 0 {0.1\height}},clip,valign=t]{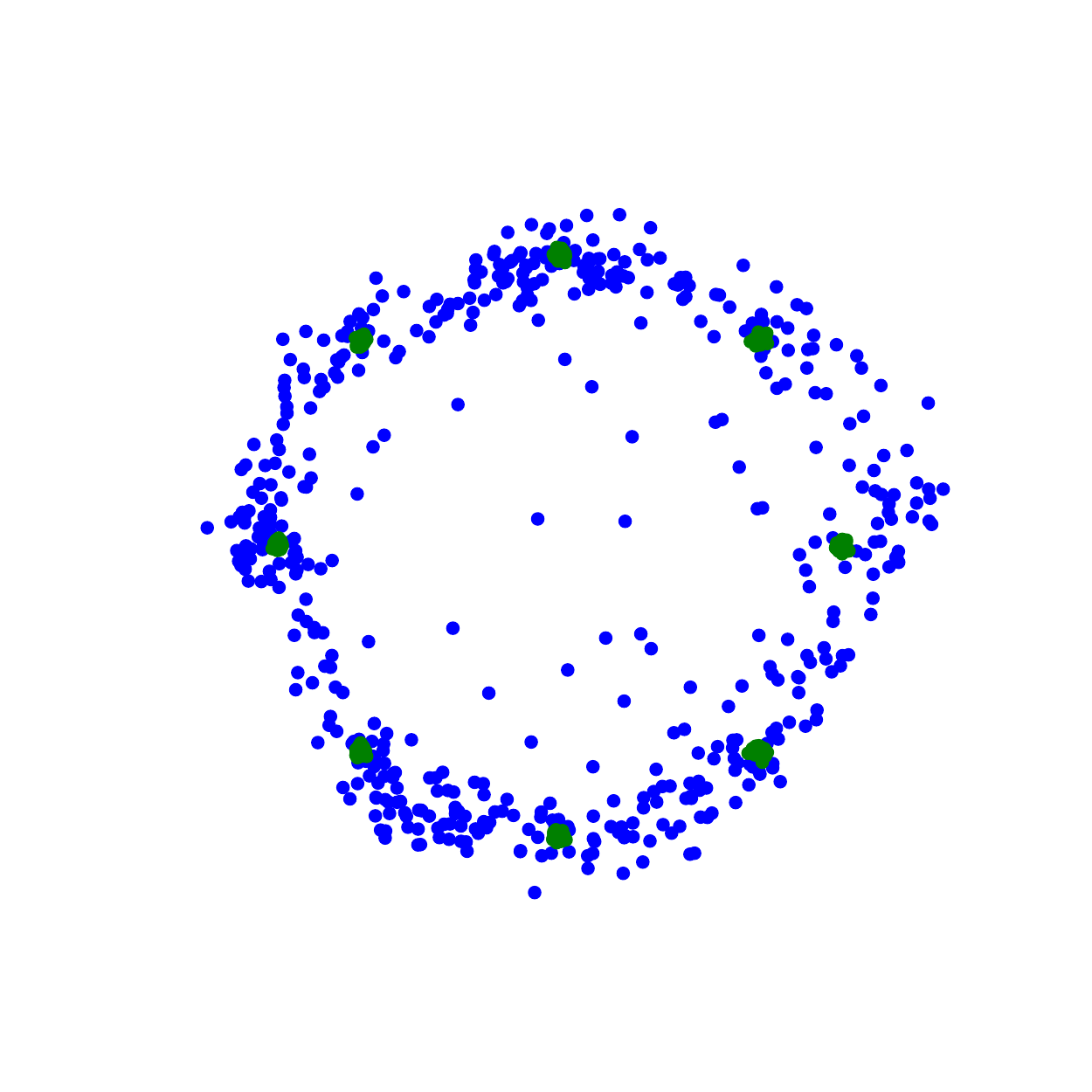} \\
         \adjincludegraphics[width=1\textwidth,trim={0 {0.1\height} 0 {0.1\height}},clip,valign=t]{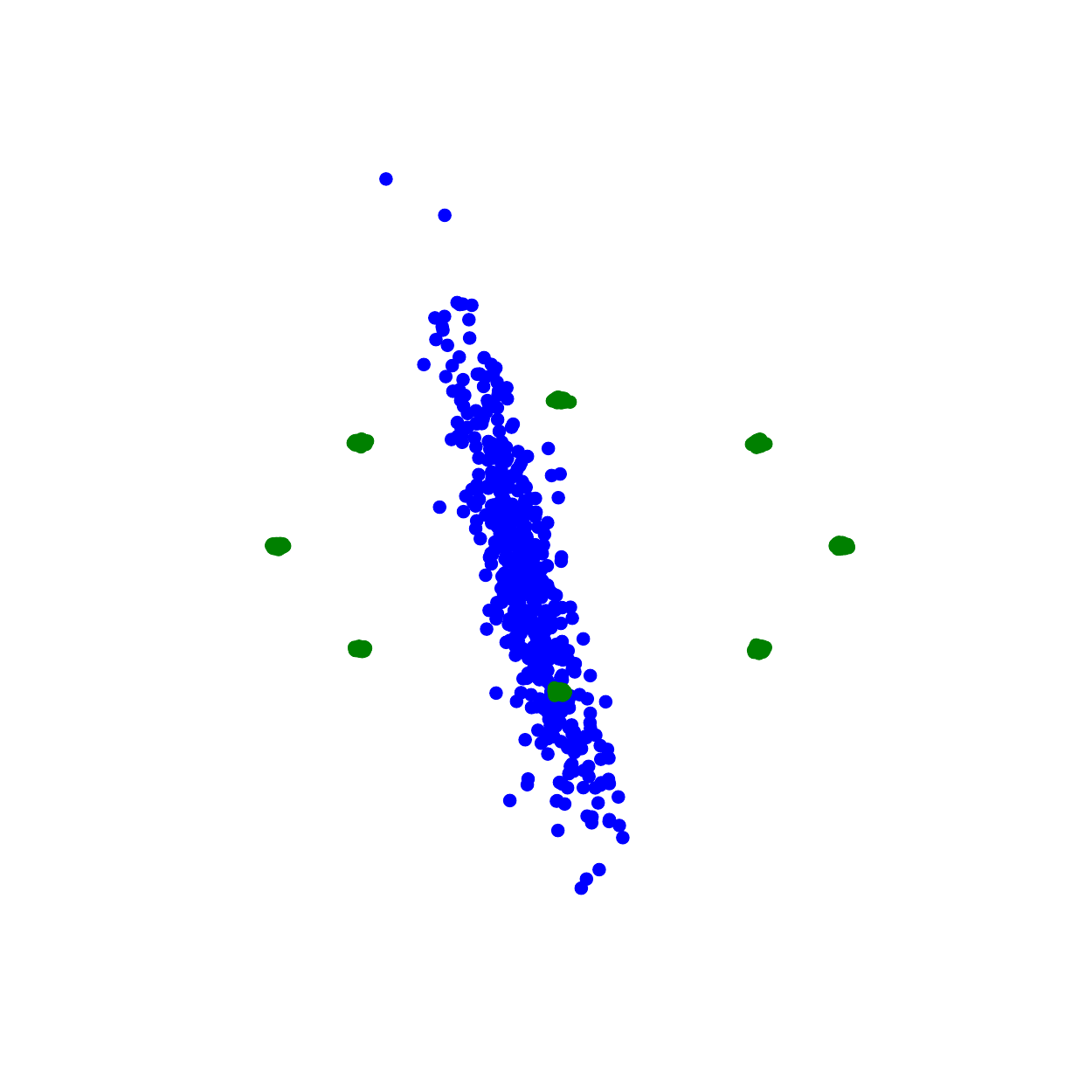}
         \caption*{Iteration 8000}
    \end{minipage}%
        \begin{minipage}{.2\textwidth}
        \centering
        \adjincludegraphics[width=1\textwidth,trim={0 {0.1\height} 0 {0.1\height}},clip,valign=t]{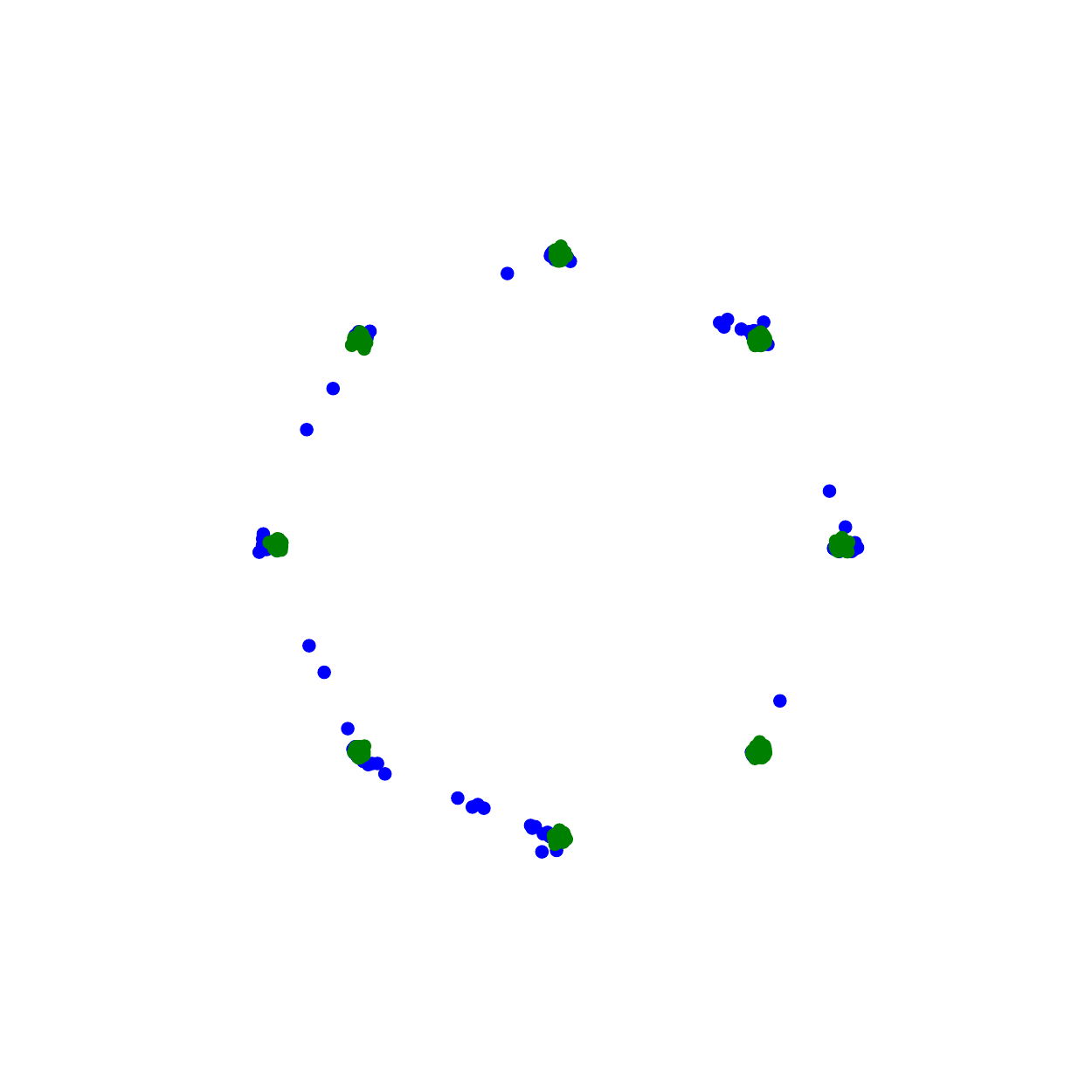} \\
         \adjincludegraphics[width=1\textwidth,trim={0 {0.1\height} 0 {0.1\height}},clip,valign=t]{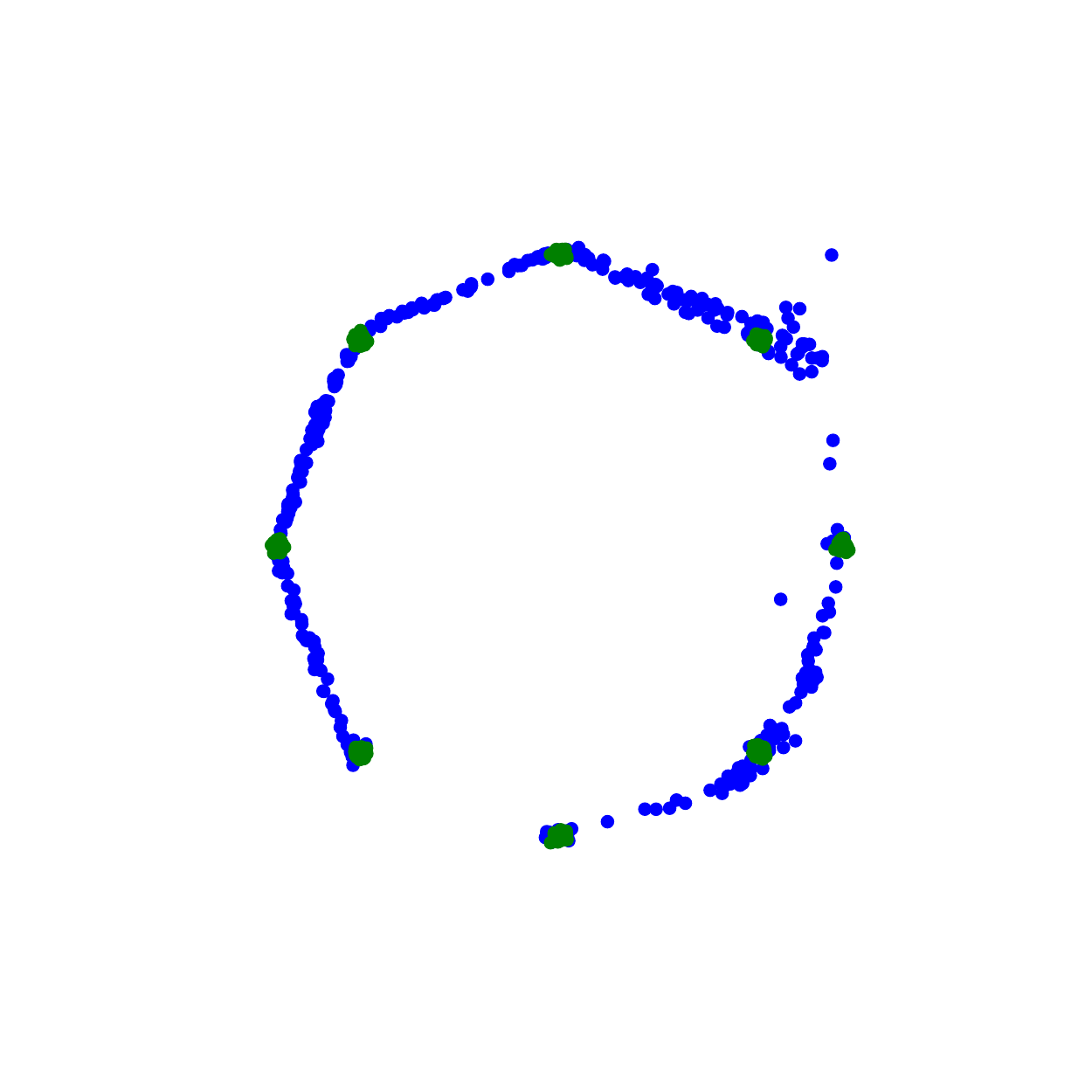}
         \caption*{Iteration 50000}
    \end{minipage}%
            \begin{minipage}{.2\textwidth}
        \centering
        \adjincludegraphics[width=1\textwidth,trim={0 {0.1\height} 0 {0.1\height}},clip,valign=t]{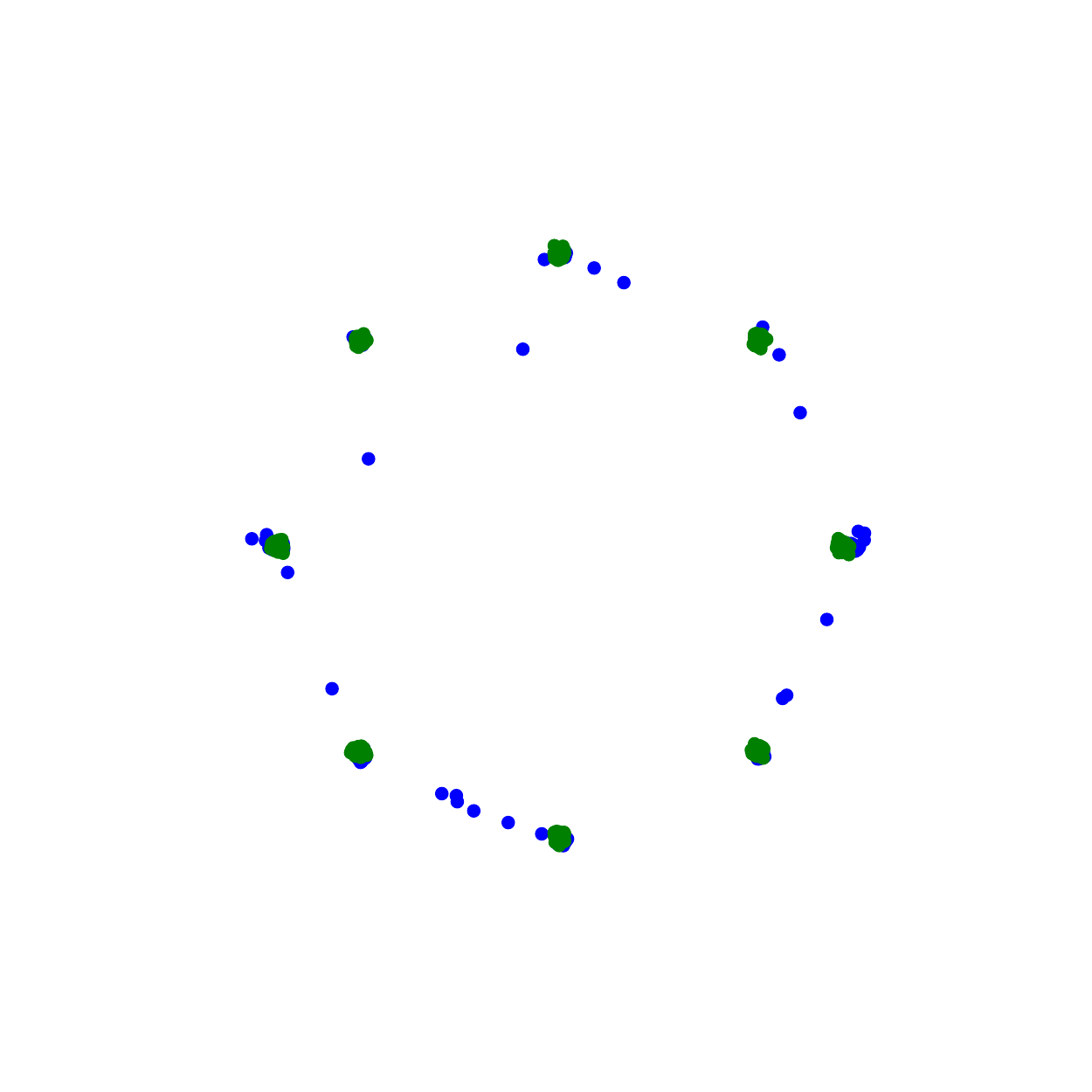} \\
         \adjincludegraphics[width=1\textwidth,trim={0 {0.1\height} 0 {0.1\height}},clip,valign=t]{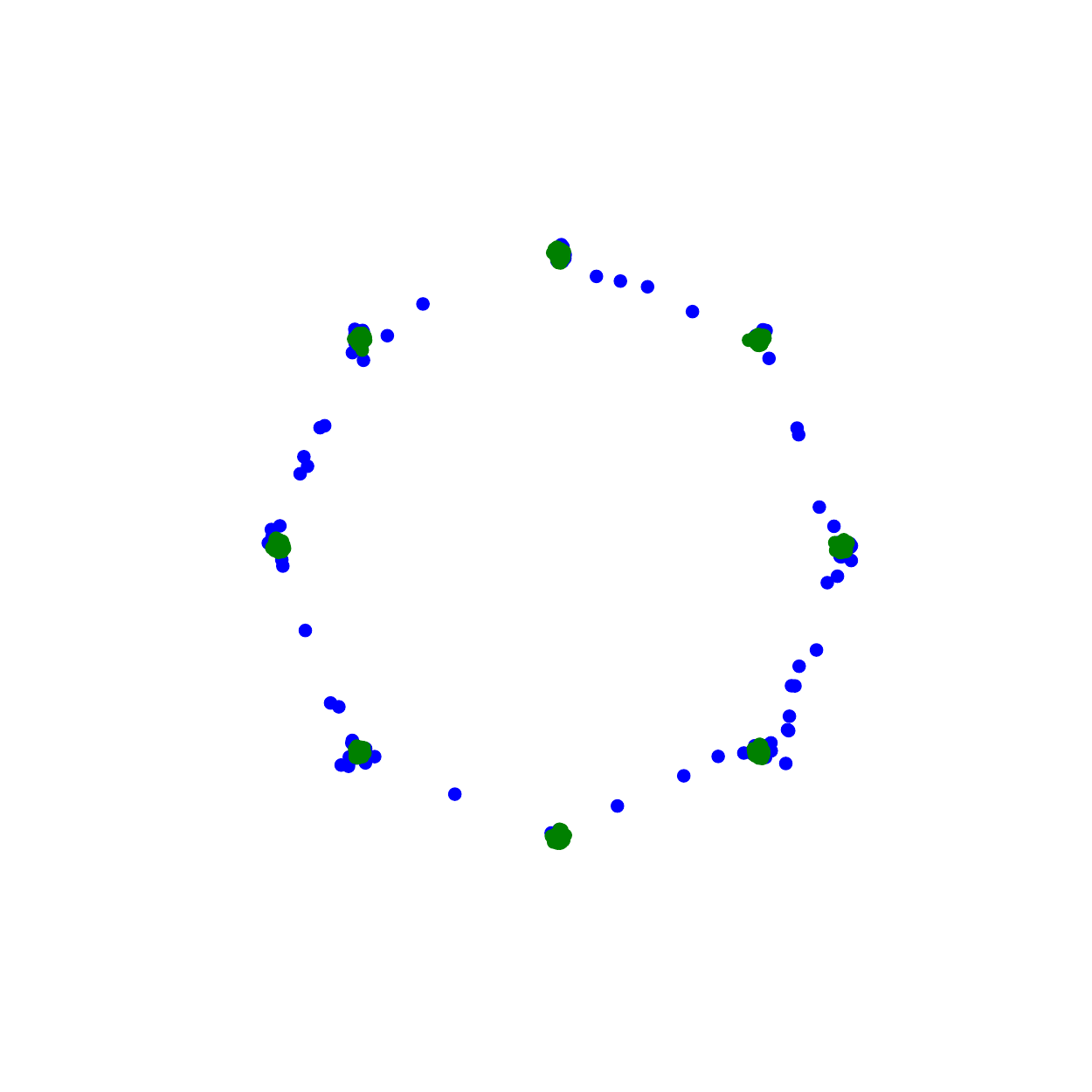}
         \caption*{Iteration 70000}
    \end{minipage}%
    \caption{Gradient regularized GAN, $\eta = 0.5$ (top row) vs. 10-unrolled with $\eta=10^{-4}$  (bottom row)}
    \label{fig:toy}
\end{figure}

\begin{figure}[!h]
    \centering
    \begin{minipage}[t]{0.5\textwidth}
    \centering
        \begin{minipage}{.3\textwidth}
        \centering
        \adjincludegraphics[width=0.95\textwidth,trim={0 {0.75\height} 0 0},clip,,valign=t]{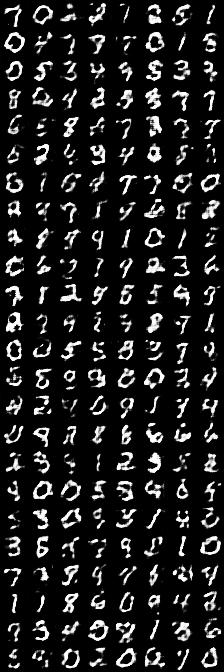} %
    \end{minipage}%
            \begin{minipage}{.3\textwidth}
        \centering
        \adjincludegraphics[width=0.95\textwidth,trim={0 {0.75\height} 0 0},clip,valign=t]{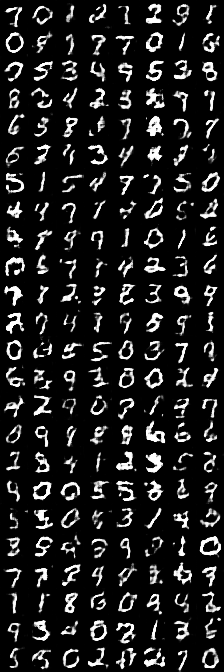} %
    \end{minipage}%
                \begin{minipage}{.3\textwidth}
        \centering
        \adjincludegraphics[width=0.95\textwidth,trim={0 {0.75\height} 0 0},clip,valign=t]{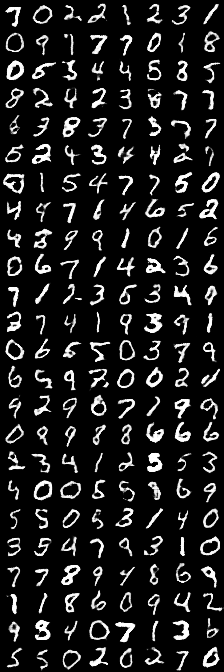} %
    \end{minipage}%
    \end{minipage}%
        \begin{minipage}[t]{0.5\textwidth}
        \centering
        \begin{minipage}{.3\textwidth}
        \centering
        \adjincludegraphics[width=0.95\textwidth,trim={0 {0.75\height} 0 0},clip,valign=t]{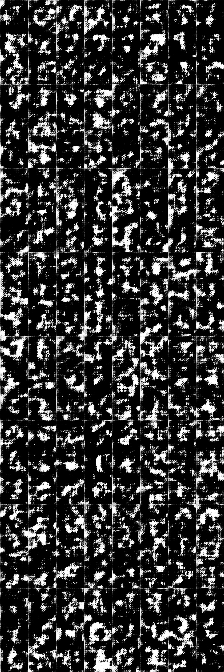} %
    \end{minipage}%
            \begin{minipage}{.3\textwidth}
        \centering
        \adjincludegraphics[width=0.95\textwidth,trim={0 {0.75\height} 0 0},clip,valign=t]{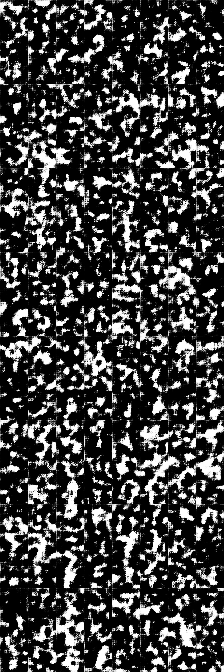} %
    \end{minipage}%
                \begin{minipage}{.3\textwidth}
        \centering
        \adjincludegraphics[width=0.95\textwidth,trim={0 {0.75\height} 0 0},clip,valign=t]{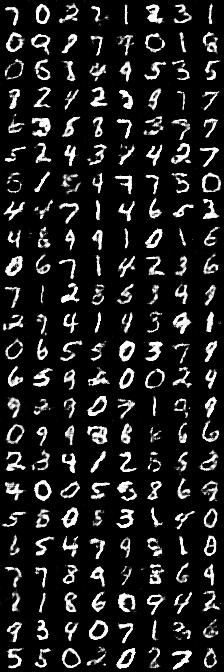} %
    \end{minipage}%
    \end{minipage}
    \caption{Gradient regularized (left) and traditional (right) DCGAN
      architectures on stacked MNIST examples, after 1,4 and 20 epochs.}
        \label{fig:mnist}
\end{figure}

\begin{figure}[!htb]
    \centering
    \begin{minipage}{.25\textwidth}
        \centering
        \includegraphics[width=1\textwidth]{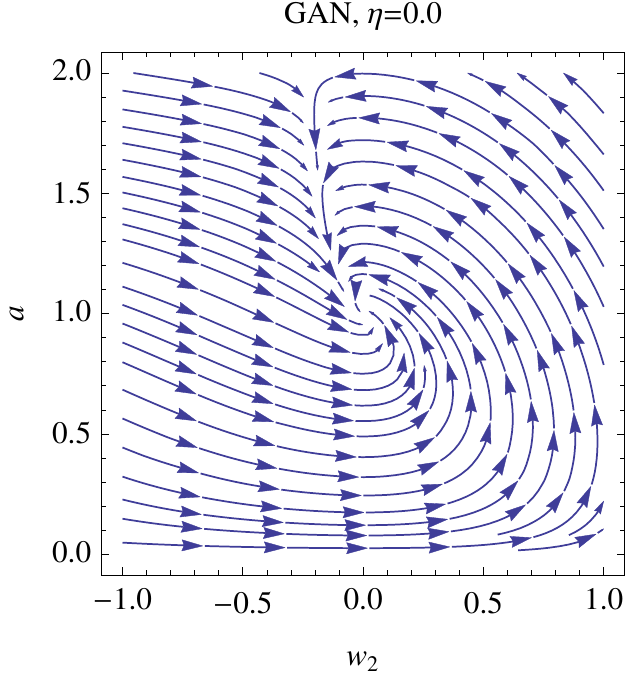} \\
         \includegraphics[width=1\textwidth]{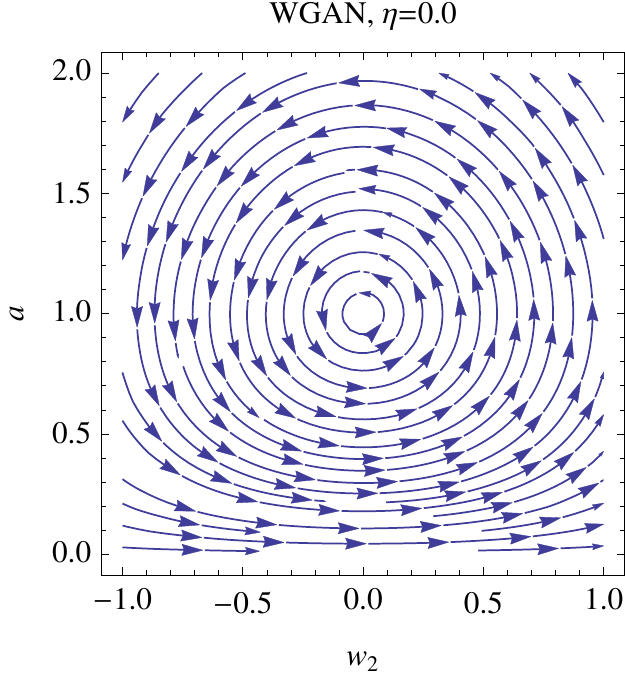}
         %\caption*{$\eta=0$}
    \end{minipage}%
        \begin{minipage}{.25\textwidth}
        \centering
        \includegraphics[width=1\textwidth]{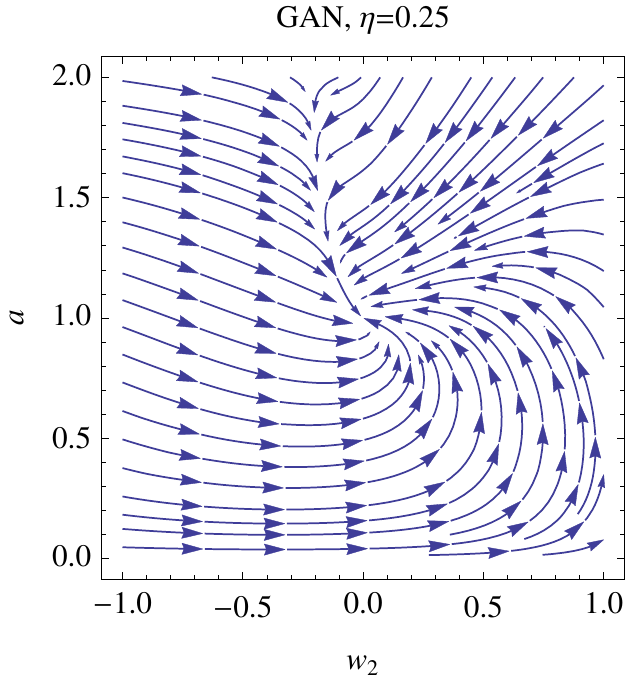} \\
         \includegraphics[width=1\textwidth]{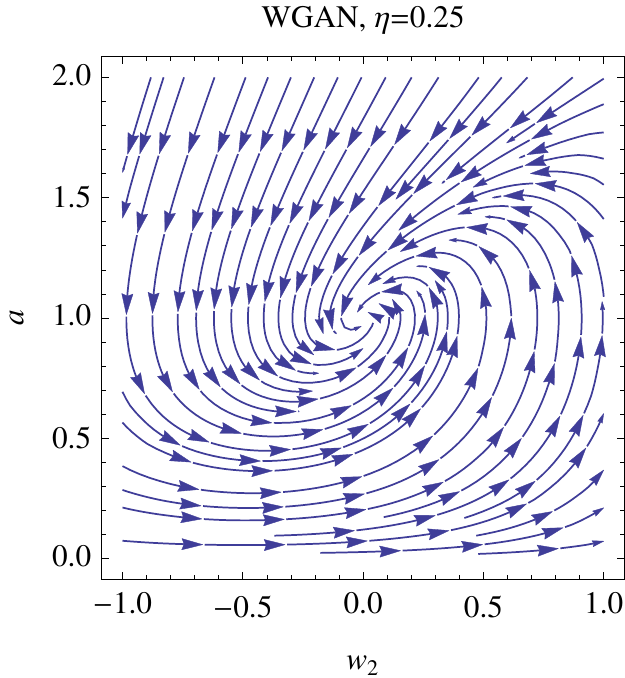}
        % \caption*{$\eta=0.25$}
    \end{minipage}%
        \begin{minipage}{.25\textwidth}
        \centering
        \includegraphics[width=1\textwidth]{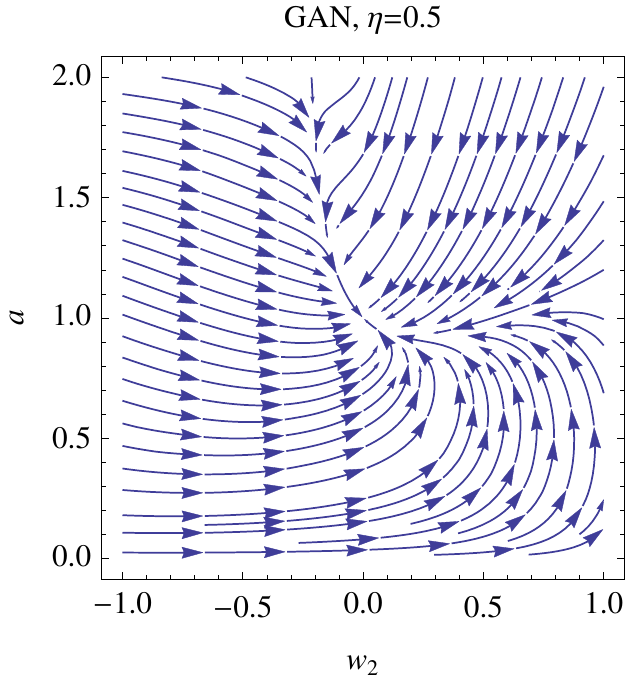} \\
         \includegraphics[width=1\textwidth]{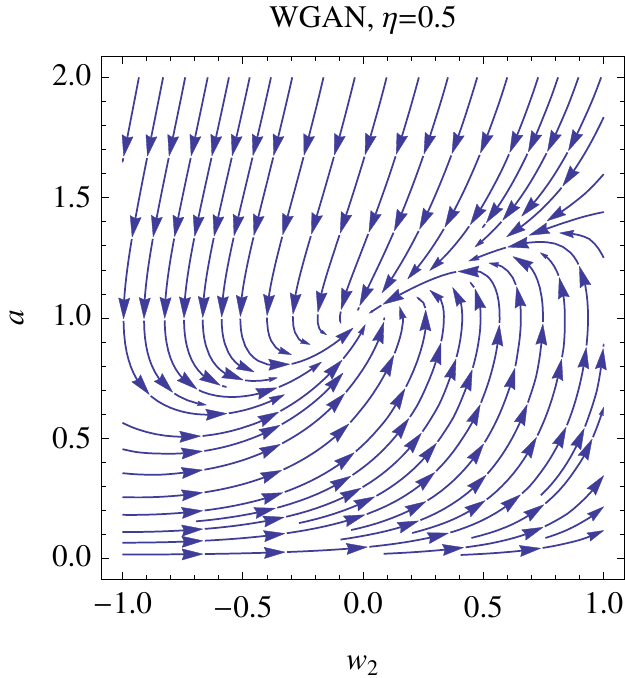}
         %\caption*{$\eta=0.5$}
    \end{minipage}%
            \begin{minipage}{.25\textwidth}
        \centering
        \includegraphics[width=1\textwidth]{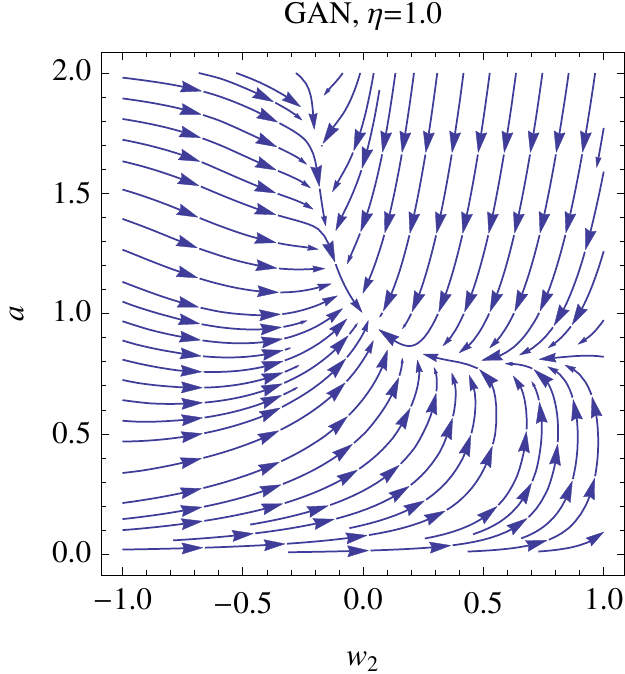} \\
         \includegraphics[width=1\textwidth]{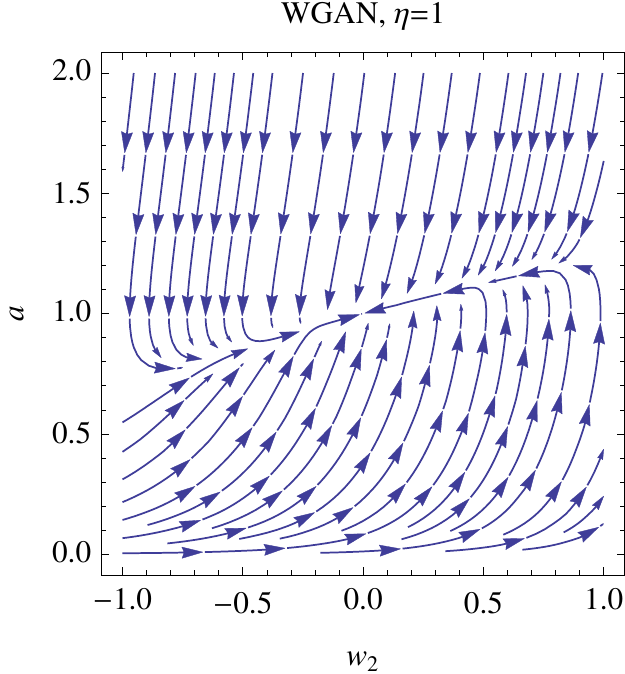}
         %\caption*{$\eta=1$}
    \end{minipage}%
          \caption{Streamline plots around the equilibrium $(0,1)$ for the conventional GAN (top) and the WGAN (bottom) for $\eta=0$ (vanilla updates) and $\eta =0.25,0.5,1$ (left to right).}
    \label{fig:streamline}
\end{figure}

\section{Conclusion}
In this paper, we presented a theoretical analysis of the local asymptotic
stability of GAN optimization under proper conditions.  We further showed that
the recently proposed WGAN is \emph{not} asymptotically stable under the same
conditions, but we introduced a gradient-based regularizer which stabilizes both
traditional GANs and the WGANs, and can improve convergence speed in practice. 

The results here provide substantial insight into the nature of GAN
optimization, perhaps even offering some clues as to why these methods have
worked so well \emph{despite} not being convex-concave. 
%Our results also
%offer some connections to previously-proposed methods like unrolled GANs.
However, we also emphasize that there are substantial limitations to the
analysis, and directions for future work.  Perhaps most notably, the analysis
here only provides an understanding of what happens locally, close to an
equilibrium point.  For non-convex architectures this may be all that is
possible, but it seems plausible that much stronger \emph{global} convergence
results could hold for simple settings like the linear quadratic GAN (indeed, as
the streamline plots show, we observe this in practice for simple domains).
Second, the analysis here does not show the equilibrium points necessarily
exist, but only illustrates convergence if there do exist points that satisfy
certain criteria: the existence question has been addressed by previous work
\citep{arora2017generalization}, but much more analysis remains to be done here.
GANs are rapidly becoming a cornerstone of deep learning methods, and the
theoretical and practical understanding of these methods will prove crucial in
moving the field forward.

\subparagraph{Acknowledgements.} We thank Lars Mescheder for pointing out a missing condition in the relaxed version of Assumption~\ref{as:same-support} (see Appendix~\ref{app:realizable-relaxed}) in earlier versions of this manuscript.

\bibliographystyle{plainnat}
\bibliography{references}

\newpage
\appendix
    \section*{Appendix}

\begin{figure}[!htb]
    \centering
        \begin{minipage}{.19\textwidth}
        \centering
        \includegraphics[width=0.7\textwidth]{Images/mnist-refined/1} \\
         \includegraphics[width=0.7\textwidth]{Images/mnist-vanilla/1}
         \caption*{Epoch 1}
    \end{minipage}%
        \begin{minipage}{.19\textwidth}
        \centering
        \includegraphics[width=0.7\textwidth]{Images/mnist-refined/2} \\
         \includegraphics[width=0.7\textwidth]{Images/mnist-vanilla/2}
         \caption*{Epoch 2}
    \end{minipage}%
        \begin{minipage}{.19\textwidth}
        \centering
        \includegraphics[width=0.7\textwidth]{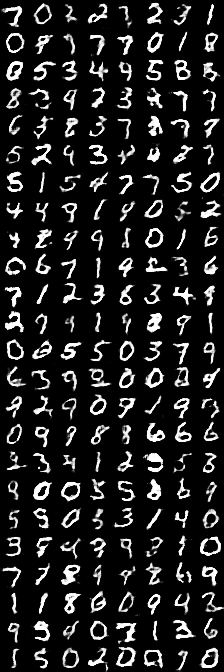} \\
         \includegraphics[width=0.7\textwidth]{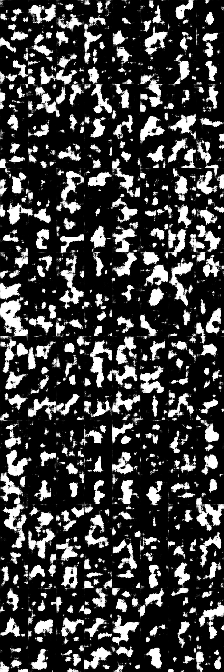}
         \caption*{Epoch 4}
    \end{minipage}%
            \begin{minipage}{.19\textwidth}
        \centering
        \includegraphics[width=0.7\textwidth]{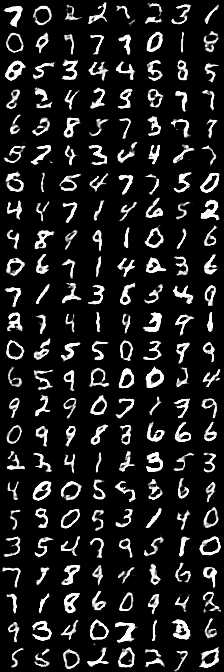} \\
         \includegraphics[width=0.7\textwidth]{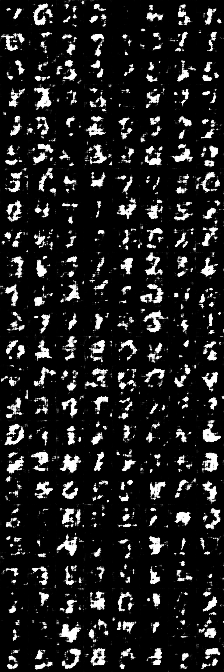}
         \caption*{Epoch 8}
    \end{minipage}%
                \begin{minipage}{.19\textwidth}
        \centering
        \includegraphics[width=0.7\textwidth]{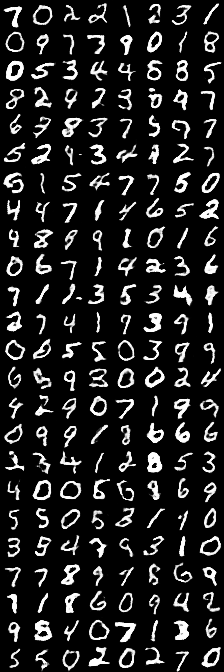} \\
         \includegraphics[width=0.7\textwidth]{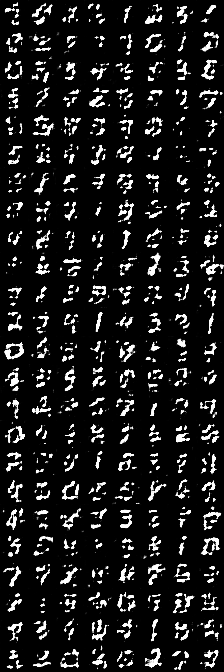}
         \caption*{Epoch 16}
    \end{minipage}%
                    \begin{minipage}{.19\textwidth}
        \centering
        \includegraphics[width=0.7\textwidth]{Images/mnist-refined/20} \\
         \includegraphics[width=0.7\textwidth]{Images/mnist-vanilla/20}
         \caption*{Epoch 20}
    \end{minipage}%
    \caption{Gradient regularized GAN with $\eta = 5 \times 10^{-6}$ vs. traditional GAN}
    \label{fig:mnist-full}
\end{figure}

\section{Preliminaries}
\label{app:lyapunov}

In this section, we present preliminaries from non-linear systems theory \citep{khalil1996noninear}. In particular, we formally define local stability of dynamic systems, and then present an important theorem that helps us study stability of non-linear systems. Finally, we present a modification of this result that will be crucial in proving stability of GANs under our assumptions.

Consider a system consisting of variables $\vec{\theta} \in \mathbb{R}^n$ whose time derivative is defined by $h(\vec{\theta})$ i.e.,
\begin{align}
\label{eq:basicsystem}
\vec{\dot{\theta}} = h(\theta).
\end{align}

Without loss of generality let the origin be an equilibrium point of this sytem. That is, 
$h(\vec{0}) = \vec{0}$. Let $\vec{\theta}(t)$ denote the state of the system 
at some time $t$. Then, we have the following definition of local stability:

\begin{definition}[\textbf{Stability}]
\label{def:stability} (Definition 4.1 from \citet{khalil1996noninear})
The origin of the system in Equation \ref{eq:basicsystem} is 
\begin{itemize}
	\item stable if for each $\epsilon > 0$, there is $\delta=\delta(\epsilon) > 0$ such that
	\[
\| \vec{\theta}(0) \| < \delta \implies \| \vec{\theta}(t)\| < \epsilon , \, \forall t \geq 0.
	\]
	\item unstable if not stable.
	\item asymptotically stable if it is stable and $\delta > 0$ can be chosen such that 
	\[
\|\vec{\theta}(0) \| < \delta \implies \lim_{t \to \infty} \theta(t) = 0
	\]
	\item exponentially stable if it is asymptotically stable and $\delta, k, \lambda > 0$ can be chosen such that 
	\[
\|\vec{\theta}(0) \| < \delta \implies  \|\vec{\theta}(t) \| \leq k \| \vec{\theta}(0)  \| \exp(-\lambda t)
	\]
	\end{itemize}
\end{definition}

The system is stable if for any chosen ball around the equilibrium (of radius $\epsilon$), one can initialize the system anywhere within a sufficiently small ball around the equilibrium (of radius $\delta(\epsilon)$) such that the system always stays within the $\epsilon$ ball. Note that such a system may either converge to equilibrium or orbit around equilibrium perennially within the $\epsilon$ ball. In contrast, a system is unstable if there are initializations that are arbitrarily close to the equilibrium which can escape the $\epsilon$-ball. Finally, asymptotic stability is a stronger notion of stability, which implies that there is a region around the equilibrium such that any initialization within that region will converge to the equilibrium (in the limit $t \to \infty$). For example, as we saw, GANs are always stable; however, WGANs are stable but not asymptotically stable. 

\subparagraph{Extension to multiple equilibria.} Note that since a GAN system might have multiple arbitrarily close equilibria, or a subspace of equilibria, we will define asymptotic stability to imply convergence to any of the equilibria in the neighborhood of a considered equilibrium. That is, $\lim_{t \to \infty} \theta(t) = \vec{\theta^{\star}}$ where $\vec{\theta^{\star}}$ is either the considered equilibrium point at the origin or any other equilibrium point that is within some small neighborhood around origin.

We now present Lyapunov's stability theorem which is used to prove locally asymptotic stability of a given system. The basic idea is that a system is asymptotically stable if we can find a scalar ``energy'' function $V(\vec{\theta})$ (also called a Lyapunov function) that i) is {\em positive definite} which means, $V(\vec{\theta})$ positive everywhere and zero at the equilibrium ii) its time derivative $\dot{V}(\vec{\theta})$ is strictly negative around the equilibrium. 
\begin{theorem}[\textbf{Lyapunov function}](Theorem 4.1 from \citet{khalil1996noninear})
\label{thm:lyapunov}
Let $B_{\epsilon}(\vec{0})$ be a small region around the origin of the system in Equation~\ref{eq:basicsystem}. Let $V: B_{\epsilon}(\vec{0}) \to \mathbb{R}$ be a continuously differentiable  function such that
\begin{itemize}
\item it is {\em positive definite} i.e., $V(\vec{0}) = 0$ and  $V(\vec{\theta}) > 0$ for $\vec{\theta} \in B_{\epsilon}(0) - \{ \vec{0} \}$
\item $\dot{V}(\vec{\theta}) \leq 0$ for $\vec{\theta} \in B_{\epsilon}(0) - \{ 0\} $
\end{itemize}
Then, the origin is stable. Moreover, if %$V$ is positive definite i.e.,
\[
\dot{V}(\vec{\theta}) < 0, \, \forall \vec{\theta} \in B_{\epsilon}(0) - \{ \vec{0}\} 
\]
then the origin is asymptotically stable.
\end{theorem}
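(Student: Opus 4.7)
The plan is to exploit the classical sub-level-set argument: because $\dot V \leq 0$, every sub-level set $\{V \leq \beta\}$ is positively invariant under the flow, and by choosing $\beta$ small enough we can trap trajectories in arbitrarily small neighborhoods of the origin. I would split the argument into the bare stability claim and the asymptotic refinement, treating them separately.

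For stability, I would first fix $\epsilon>0$ small enough that $\overline{B_\epsilon(\vec{0})}$ lies inside the region where the hypotheses hold. On the compact sphere $\{\|\vec{\theta}\| = \epsilon\}$, continuity and positive definiteness of $V$ force a strictly positive minimum $\alpha = \min_{\|\vec{\theta}\|=\epsilon} V(\vec{\theta}) > 0$. Pick any $\beta \in (0,\alpha)$ and consider the closed sub-level set $\Omega_\beta = \{\vec{\theta} \in \overline{B_\epsilon(\vec{0})} : V(\vec{\theta}) \leq \beta\}$; because $V \geq \alpha > \beta$ on the boundary sphere, $\Omega_\beta$ is strictly contained in the open ball $B_\epsilon(\vec{0})$. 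Continuity of $V$ at the origin, together with $V(\vec{0}) = 0$, then yields some $\delta > 0$ with $B_\delta(\vec{0}) \subset \Omega_\beta$. The hypothesis $\dot V \leq 0$ implies that $t \mapsto V(\vec{\theta}(t))$ is non-increasing along trajectories, so any solution starting in $\Omega_\beta$ stays in $\Omega_\beta$, and hence in $B_\epsilon(\vec{0})$, for all $t \geq 0$. This is exactly the $\delta(\epsilon)$ required by Definition~\ref{def:stability}. Confinement to the compact set $\Omega_\beta$ also guarantees global-in-time existence of the solution by standard ODE theory.

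For asymptotic stability, I would combine the above construction with a strict-descent argument. The stability part supplies $\delta_1 > 0$ such that trajectories from $B_{\delta_1}(\vec{0})$ remain in a fixed $B_\epsilon(\vec{0})$. Along such a trajectory, $V(\vec{\theta}(t))$ is monotonically non-increasing and bounded below by $0$, so it converges to some limit $c \geq 0$; the goal is to rule out $c > 0$. Suppose for contradiction $c > 0$. By continuity of $V$ at $\vec{0}$, there is $r \in (0,\epsilon)$ with $V(\vec{\theta}) < c$ on $B_r(\vec{0})$; since $V(\vec{\theta}(t)) \geq c$ always, the trajectory is forced into the compact annulus $\{r \leq \|\vec{\theta}\| \leq \epsilon\}$, on which the continuous strictly negative function $\dot V$ attains a maximum $-\gamma < 0$. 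Integrating yields $V(\vec{\theta}(t)) \leq V(\vec{\theta}(0)) - \gamma t$, contradicting non-negativity of $V$ for large $t$. Hence $c = 0$, and a final application of positive definiteness (its infimum on $\{r \leq \|\vec{\theta}\| \leq \epsilon\}$ is strictly positive for every $r>0$) upgrades $V(\vec{\theta}(t)) \to 0$ to $\vec{\theta}(t) \to \vec{0}$.

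The main obstacle is bookkeeping rather than a deep idea: one must pick $\beta$ strictly below the boundary minimum $\alpha$ so that $\Omega_\beta$ stays away from $\partial B_\epsilon(\vec{0})$, and the asymptotic step crucially uses strict positive definiteness of $V$ (not merely $V(\vec{0}) = 0$) to convert $V(\vec{\theta}(t)) \to 0$ back into $\vec{\theta}(t) \to \vec{0}$. A subtlety worth noting is that solutions to $\vec{\dot \theta} = h(\vec{\theta})$ are a priori only guaranteed to exist locally in time; positive invariance of the compact set $\Omega_\beta$ is what rules out finite-time escape and legitimizes the analysis over the infinite time horizon $t \geq 0$.
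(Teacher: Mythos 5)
Your proof is correct and is the classical sub-level-set argument for Lyapunov's stability theorem; this is precisely the proof in the cited source (Khalil, Theorem 4.1), and the paper itself states the result without proof, so there is nothing to compare beyond that. One minor point to tighten: positive invariance of $\Omega_\beta$ should be justified by a first-crossing argument (since $\dot V(\vec{\theta}) \leq 0$ is only assumed on $B_{\epsilon}(\vec{0})$, note that a trajectory leaving $B_{\epsilon}(\vec{0})$ would first have to meet the sphere $\|\vec{\theta}\|=\epsilon$ where $V \geq \alpha > \beta$, contradicting monotonicity of $V$ up to that time), but your choice of $\beta < \alpha$ already supplies exactly this and the rest of the argument, including the strict-descent contradiction and the conversion of $V(\vec{\theta}(t)) \to 0$ into $\vec{\theta}(t) \to \vec{0}$, is sound.
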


We next present an important tool that simplifies the study of stability of non-linear systems. The result is that one can ``linearize'' any non-linear system  near an equilibrium and analyze the stability of the linearized system to comment on the local stability of the original system.

\begin{theorem}[\textbf{Linearization}] 
\label{thm:jacobian} (Theorem 4.5 from \citet{khalil1996noninear})
Let $\vec{J}$ be the Jacobian of the system in Equation~\ref{eq:basicsystem} at its 
 origin i.e.,
 \[
\vec{J} = \left.\der{\vec{\theta}}{h(\vec{\theta})}\right\vert_{\vec{\theta} = \vec{0}}.
 \]

 Then,
 \begin{itemize}
\item The origin is locally exponentially stable if $\vec{J}$ is Hurwitz i.e., $\Re(\lambda) < 0$ for all eigenvalues $\lambda$ of $\vec{J}$.
\item The origin is unstable if $\Re(\lambda) > 0$ for all eigenvalues $\lambda$ of $\vec{J}$.
 \end{itemize}
\end{theorem}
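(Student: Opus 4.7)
The plan is to invoke the linearization theorem (Theorem~\ref{thm:jacobian}) once more: compute the Jacobian $\tilde{\vec{J}}$ of the regularized system at the equilibrium, show it is Hurwitz (on the subspace transverse to any manifold of equivalent equilibria), and derive the rate bounds from the eigenvalue analysis. First I would verify the Jacobian form already displayed in the excerpt by differentiating the regularized generator update. The essential simplification is that $\nabla_{\vec{\theta_D}}V = \vec{0}$ at the equilibrium, so when one differentiates $\nabla_{\vec{\theta_G}}\|\nabla_{\vec{\theta_D}}V\|^2 = 2\vec{J}_{DG}^T\nabla_{\vec{\theta_D}}V$ with respect to either parameter, only the terms that pick up a derivative of $\nabla_{\vec{\theta_D}}V$ survive, producing $-2\eta\vec{J}_{DG}^T\vec{J}_{DD}$ in the $(G,D)$ block and $-2\eta\vec{J}_{DG}^T\vec{J}_{DG}$ in the $(G,G)$ block. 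The identity $\vec{J}_{GG}=\vec{0}$ from Theorem~\ref{thm:general-stability} still applies since the regularizer contributes nothing beyond these two terms.

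For the general GAN case, the hypothesis $\eta < 1/(2\lambda_{\max}(-\vec{J}_{DD}))$ is equivalent to $\vec{I}+2\eta\vec{J}_{DD}\succ \vec{0}$ (using $\vec{J}_{DD}=2f''(0)\vec{K}_{DD}\preceq \vec{0}$). I would establish the Hurwitz property by taking a complex eigenvector $(\vec{u},\vec{v})$ of $\tilde{\vec{J}}$ with eigenvalue $\lambda$, writing out the two block equations, and forming a weighted inner product that exploits the ``twisted antisymmetry'' between $\vec{J}_{DG}$ in the upper right and $-\vec{J}_{DG}^T(\vec{I}+2\eta\vec{J}_{DD})$ in the lower left---one is the transpose of the other right-multiplied by the strictly positive factor $\vec{I}+2\eta\vec{J}_{DD}$. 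Combined with strict concavity in the discriminator direction ($\vec{u}^*\vec{J}_{DD}\vec{u}<0$ when $\vec{u}\notin\Null(\vec{K}_{DD})$) and the new dissipation $-2\eta\vec{J}_{DG}^T\vec{J}_{DG}$ in the generator direction, this should replace the exact antisymmetry used in Lemma~\ref{lem:undamped-bound} and yield $\Re(\lambda)<0$. Null directions of $\vec{K}_{DD}$ or $\vec{K}_{DG}$ are handled as in Theorem~\ref{thm:general-stability} by projecting onto the orthogonal complement of the local equilibrium subspace; since the regularizer gradient vanishes wherever $\nabla_{\vec{\theta_D}}V=\vec{0}$, the equilibrium set itself is unchanged by the regularization.

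For the WGAN specialization, $f''(0)=0$ forces $\vec{J}_{DD}=\vec{0}$ and the Jacobian collapses to the cleaner form $\begin{bmatrix}\vec{0} & \vec{J}_{DG} \\ -\vec{J}_{DG}^T & -2\eta \vec{J}_{DG}^T\vec{J}_{DG}\end{bmatrix}$. Taking an eigenvector $(\vec{u},\vec{v})$ of eigenvalue $\lambda$, contracting the two block equations with $\vec{u}^*$ and $\vec{v}^*$ respectively, and using $\vec{u}^*\vec{J}_{DG}\vec{v}=\lambda\|\vec{u}\|^2$ together with $\|\vec{J}_{DG}\vec{v}\|^2=|\lambda|^2\|\vec{u}\|^2$, yields the master identity $-\bar\lambda\|\vec{u}\|^2-2\eta|\lambda|^2\|\vec{u}\|^2=\lambda\|\vec{v}\|^2$. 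Its real part gives $\Re(\lambda)(\|\vec{u}\|^2+\|\vec{v}\|^2)=-2\eta|\lambda|^2\|\vec{u}\|^2\leq 0$, with equality only when $\vec{u}=\vec{0}$, which via the first row forces either the trivial eigenvector or $\lambda=0$ with $\vec{v}\in\Null(\vec{J}_{DG})$ (a direction corresponding to an equivalent equilibrium). The imaginary part forces $\|\vec{u}\|=\|\vec{v}\|$ whenever $\Im(\lambda)\neq 0$, producing $\Re(\lambda)=-\eta|\lambda|^2\leq -\eta f'(0)^2\lambda_{\min}^{(+)}(\vec{K}_{DG}^T\vec{K}_{DG})$ (since $\vec{v}\in\Col(\vec{J}_{DG}^T)$ whenever $\lambda\neq 0$); the real case reduces to a quadratic in $\lambda$ whose less-negative root can be bounded by the same pair of extreme eigenvalues of $\vec{K}_{DG}^T\vec{K}_{DG}$ together with $\eta$, yielding the stated rational bound.

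The main obstacle I anticipate is the general-GAN step: the extra term $-2\eta\vec{J}_{DG}^T\vec{J}_{DD}$ in the $(G,D)$ block destroys the clean antisymmetry exploited in Lemma~\ref{lem:undamped-bound}, so getting $\Re(\lambda)<0$ requires the condition $\vec{I}+2\eta\vec{J}_{DD}\succ \vec{0}$ to rehabilitate an analogous energy estimate---an upper bound on $\eta$, which is a new ingredient relative to the unregularized theorem. Once this twisted-antisymmetry lemma is in place, the rank-deficient projection onto the complement of the equilibrium subspace and the final appeal to Theorem~\ref{thm:jacobian} proceed exactly as in the proof of Theorem~\ref{thm:general-stability}.
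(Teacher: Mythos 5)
Your proposal does not prove the statement in question. The statement is the classical linearization theorem (Lyapunov's indirect method, Theorem 4.5 of Khalil): for a general system $\vec{\dot{\theta}} = h(\vec{\theta})$ with equilibrium at the origin, if the Jacobian $\vec{J}$ is Hurwitz then the origin is locally exponentially stable, and if every eigenvalue has strictly positive real part then it is unstable. What you have written is instead a proof sketch for the \emph{regularized GAN stability theorem} (Theorem~\ref{thm:regularized}): you compute the Jacobian of the gradient-penalized update, argue it is Hurwitz via a twisted-antisymmetry energy estimate, and then --- critically --- you ``invoke the linearization theorem (Theorem~\ref{thm:jacobian}) once more'' to conclude. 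That is, you use the very result you were asked to establish as a black box to derive one of its downstream applications. Nothing in your argument addresses why a Hurwitz Jacobian of a general nonlinear system implies local exponential stability of that system, nor the instability half of the claim.

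A proof of the actual statement would proceed along the lines the paper itself sketches immediately after the theorem (and carries out in detail in its extension, Theorem~\ref{thm:multiple-equilibria}): write $h(\vec{\theta}) = \vec{J}\vec{\theta} + g_1(\vec{\theta})$ with $\|g_1(\vec{\theta})\| = O(\|\vec{\theta}\|^2)$ near the origin; since $\vec{J}$ is Hurwitz, solve the Lyapunov equation $\vec{J}^T\vec{P} + \vec{P}\vec{J} = -\vec{Q}$ for $\vec{P} \succ 0$ given any $\vec{Q} \succ 0$; take $V(\vec{\theta}) = \vec{\theta}^T\vec{P}\vec{\theta}$ and show $\dot{V} = -\vec{\theta}^T\vec{Q}\vec{\theta} + 2\vec{\theta}^T\vec{P}g_1(\vec{\theta}) \leq -\tfrac{1}{2}\lambda_{\min}(\vec{Q})\|\vec{\theta}\|^2$ in a sufficiently small ball, because the cubic remainder is dominated by the quadratic decrease; this yields exponential convergence. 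The instability direction requires a separate argument (e.g., Chetaev's theorem). None of these ingredients appears in your write-up, so as a proof of the stated theorem it is a complete miss, even though the material you did produce is broadly consistent with the paper's proof of Theorem~\ref{thm:regularized}.
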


The key idea in the proof for this result is that the system can be written as $h(\vec{\theta}) = \vec{J} \vec{\theta} + g_1(\vec{\theta})$, where $g_1(\vec{\theta})$, the remainder of the linear approximation is bounded as $\| g_1(\vec{\theta})\| \leq O(\|\vec{\theta}  \|^2)$ sufficiently close to equilibrium.  Now, it turns out that when $\vec{J}$ is Hurwitz, one can find a quadratic Lyapunov function for the original system whose rate of decrease is also quadratic in $\vec{\theta}$. Since, $\| g_1(\vec{\theta})\|$ is only a quadratic remainder term, one can show that the remainder term only adds a cubic term to the change in the Lyapunov function. This is however smaller than a quadratic change near the equilibrium, and therefore the quadratic Lyapunov function for the linearized system works as a Lyapunov function for the original system too.

In all our analyses, we will linearize our system and show that the Jacobian is Hurwitz. However, it is often useful to identify the quadratic Lyapunov function for the (linearized) system.
Unfortunately, for some of the Jacobians we will encounter, it is hard to come up with a
quadratic Lyapunov function that always strictly decreases. Instead, we will identify a function that either strictly decreases or sometimes remains constant but only instantenously. While Lyapunov's stability theorem does not help us conclude anything about stability for this case, the following corollary of LaSalle's theorem (we do not state the theorem here) is sufficient to prove asymptotic stability in this case.

\begin{theorem}[\textbf{Corollary of LaSalle's invariance principle}, Corollary 4.1 from \citet{khalil1996noninear}]
\label{thm:strong-lyapunov}
Let $B_{\epsilon}(\vec{0})$ be a small region around an equilibrium $0$ of the system in Equation~\ref{eq:basicsystem}. Let $V: B_{\epsilon}(\vec{0}) \to \mathbb{R}$ be a continuously differentiable function such that
\begin{itemize}
\item  $V(\vec{\theta}) = 0$ if and only if $\vec{\dot{\theta}} = 0$ and $V(\vec{\theta}) > 0$  for $\vec{\theta} \in B_{\epsilon}(0) - \{ \vec{0} \}$ such that $\vec{\dot{\theta}} \neq 0$ .
\item $\dot{V}(\vec{\theta}) \leq 0$ for $\vec{\theta} \in B_{\epsilon}(\vec{0}) - \{ 0\} $
\item Let $S = \{\vec{\theta} \in B_{\epsilon}(\vec{0}) \, | \, \dot{V}(\vec{\theta})  = 0 \}$. There is no trajectory that identically stays in $S$ except for the trajectories at equilibrium points.
\end{itemize}
then the system is locally asymptotically stable with respect to $\vec{0}$ and other equilibria in its neighborhood. 
\end{theorem}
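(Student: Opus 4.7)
The plan is to derive this from the classical LaSalle invariance principle, adapted to the authors' generalized notion of asymptotic stability which allows convergence to any equilibrium in a neighborhood of $\vec{0}$ (as remarked at the start of Appendix~\ref{app:lyapunov}). The standard LaSalle theorem (from which this is a corollary, per Khalil Sec.~4.2) says that every bounded trajectory of $\vec{\dot\theta}=h(\vec\theta)$ which lies in a positively invariant compact set $\Omega$ on which $\dot V\leq 0$ approaches, as $t\to\infty$, the largest invariant subset $M$ of $E\cap\Omega$, where $E=\{\vec\theta : \dot V(\vec\theta)=0\}$. So the proof reduces to (i) producing an invariant compact neighborhood of $\vec 0$, and (ii) identifying $M$ with the set of equilibria in that neighborhood.

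For step (i), I would use the first two hypotheses to manufacture sublevel sets of $V$ that are both positively invariant and shrink to $\{\vec 0\}$. Pick $\epsilon'<\epsilon$ small enough that $\overline{B_{\epsilon'}(\vec 0)}\subset B_\epsilon(\vec 0)$, and let $c = \min_{\|\vec\theta\|=\epsilon'} V(\vec\theta)$. This minimum is attained and strictly positive: the minimum exists by continuity of $V$ on the compact sphere, and it is strictly positive because any point $\vec\theta$ with $\|\vec\theta\|=\epsilon'$ has $\vec\theta\neq\vec 0$, so either $\vec{\dot\theta}\neq 0$ (whence $V(\vec\theta)>0$ by hypothesis one) or $\vec{\dot\theta}=0$ (in which case we would need to argue that the sphere can be chosen to avoid other equilibria; this is possible because the equilibrium set nearby, by hypothesis three applied to stationary trajectories, is a relatively closed subset that we can separate from $\{\vec 0\}$ by choosing $\epsilon'$ appropriately, or alternatively by taking $c$ to be a small positive value and directly defining the component of $\{V\leq c\}$ containing $\vec 0$). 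Define $\Omega_c$ to be the connected component of $\{\vec\theta\in B_\epsilon(\vec 0): V(\vec\theta)\le c\}$ containing $\vec 0$. Because $\dot V\le 0$ along trajectories, $V$ cannot increase, so trajectories entering $\Omega_c$ cannot cross the boundary $\{V=c\}$, making $\Omega_c$ positively invariant. For any prescribed $r>0$, repeating this construction with $\epsilon'<r$ gives a $\delta$-ball inside $\Omega_c\subseteq B_r(\vec 0)$, establishing Lyapunov stability of $\vec 0$.

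For step (ii), I would apply standard LaSalle to conclude that every trajectory starting in $\Omega_c$ approaches the largest invariant set $M\subseteq \Omega_c\cap E$. By hypothesis one, $E\cap\Omega_c$ coincides precisely with the set of equilibria inside $\Omega_c$: indeed, $\dot V(\vec\theta)=0$ holds in particular when $\vec{\dot\theta}=0$, but conversely if $\dot V(\vec\theta)=0$ then $\vec\theta\in S$, and hypothesis three says that if $\vec\theta$ lies on a full trajectory contained in $S$ (which is exactly the content of being in the invariant set $M\subseteq S$), then that trajectory must be an equilibrium. Hence $M$ is exactly the set of equilibria in $\Omega_c$. Combined with Lyapunov stability, this yields asymptotic stability in the sense that every trajectory starting sufficiently close to $\vec 0$ converges to some equilibrium in a neighborhood of $\vec 0$, which is the authors' generalized notion.

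The main obstacle I anticipate is the first step, specifically carving out a sublevel-set neighborhood $\Omega_c$ that is genuinely a neighborhood of $\vec 0$ despite $V$ vanishing at several nearby equilibria. The classical proof of LaSalle's corollary leans on strict positive-definiteness of $V$ away from $\vec 0$, which is unavailable here; the hypothesis only says $V>0$ off the equilibrium set. The workaround — restricting to the connected component of the sublevel set containing $\vec 0$, and exploiting hypothesis three to ensure there is actually a sphere around $\vec 0$ on which $V$ is bounded below by a positive constant — is what makes the argument slightly more delicate than the textbook version, and is the step where care is needed to make the invariance and separation arguments rigorous.
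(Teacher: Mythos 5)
The paper never proves this statement: it is quoted (as Corollary 4.1 of \citet{khalil1996noninear}, adapted to allow nearby equilibria) and used as a black box, so the only question is whether your derivation from the classical LaSalle invariance principle is sound. In the classical setting of an isolated equilibrium and positive definite $V$, your two-step plan (build a positively invariant compact sublevel-set neighborhood, then invoke LaSalle and hypothesis three to identify the largest invariant set in $\{\dot V=0\}$ with the equilibria) is exactly the textbook argument and is fine. The problem is precisely the step you flagged, and it is not a matter of extra care: in the situation this generalized statement exists for -- a continuum of equilibria passing through or accumulating at $\vec{0}$, e.g.\ a subspace of equilibria -- every sphere around $\vec{0}$ meets the zero set of $V$, so $\min_{\|\vec{\theta}\|=\epsilon'}V(\vec{\theta})=0$ and no choice of $\epsilon'$ ``separates'' the other equilibria from the origin; your fallback of taking the connected component of $\{V\le c\}$ containing $\vec{0}$ does not help, because that component contains the whole local equilibrium continuum, need not be compactly contained in $B_\epsilon(\vec{0})$, and its positive invariance gives no control over drift along the equilibrium set and out of the ball.

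This gap cannot be closed from the stated hypotheses alone, because the statement in this generality is false without additional structure. Take $\dot{x}=-x^3$, $\dot{y}=x^2$ and $V(x,y)=x^2$: then $V$ vanishes exactly on the equilibrium set $\{x=0\}$ and is positive elsewhere, $\dot V=-2x^4\le 0$, and $S=\{x=0\}$ contains only equilibria, so all three hypotheses hold; yet $y(t)=y_0+\tfrac{1}{2}\log(1+2x_0^2t)\to\infty$ whenever $x_0\neq 0$, so trajectories starting arbitrarily close to $\vec{0}$ leave every ball and converge to no equilibrium. The missing ingredient -- which the paper supplies separately where it actually needs multi-equilibria convergence, namely in the proof of Theorem~\ref{thm:multiple-equilibria} -- is a quantitative bound on the drift along the equilibrium directions (there, $\|\dot{\vec{\gamma}}\|\le c'\|\vec{\theta}\|$ combined with exponential decay of $\|\vec{\theta}\|$, which makes the total displacement of $\vec{\gamma}$ finite); some hypothesis of this kind, or positive definiteness of $V$ with respect to the distance to the equilibrium set together with such a drift bound, must be added before your step (i) can go through. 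A secondary issue in your step (ii): LaSalle yields $\mathrm{dist}(\vec{\theta}(t),M)\to 0$, i.e.\ convergence to the equilibrium \emph{set}, while the paper's definition of asymptotic stability with multiple equilibria demands convergence to a single equilibrium point; with a continuum of equilibria this needs a further argument (e.g.\ finite trajectory length), which your proposal does not provide.
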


Finally, we prove an extension of the linearization theorem that helps us deal with analyzing the stability of a special kind of non-linear systems, specifically those with multiple equilibria in a local neighborhood of a considered equilibrium. The theorem, though inuitively follows from the original linearization theorem itself, is not a standard theorem in non-linear systems, to the best of our knowledge. 

Formally, we consider a case where the system consists of two sets of parameters $\vec{\theta}$ and $\vec{\gamma}$ such that from the equilibrium, any small perturbation along $\vec{\gamma}$ preserves the equilibrium. We show that it is enough to show that the Jacobian with respect to $\vec{\theta}$ is Hurwitz to prove stability. 

\begin{theorem}
\label{thm:multiple-equilibria}
Consider a non-linear system of parameters $(\vec{\theta}, \vec{\gamma})$,
\begin{align}
\label{eq:theta-gamma}
\vec{\dot{\theta}} = h_1(\vec{\theta},\vec{\gamma}), \vec{\dot{\gamma}} = h_2(\vec{\theta},\vec{\gamma})
\end{align}
with an equilibrium point at the origin. Let there exist $\epsilon$ such that for any $\vec{\gamma} \in B_{\epsilon}(\vec{0})$,  $(\vec{0},\vec{\gamma})$ is an equilibrium point. Then, if 
\begin{align}
\label{eq:theta}
\vec{J} = \left. \der{\vec{\theta}}{h_1(\vec{\theta}, \vec{\gamma})} \right\vert_{(\vec{0},\vec{0})}
\end{align}

is a Hurwitz matrix,
the non-linear system in Equation~\ref{eq:theta-gamma} is exponentially stable.
\end{theorem}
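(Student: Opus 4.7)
The plan is to reduce the problem to a Lyapunov analysis purely in the $\vec{\theta}$-coordinate, exploiting the equilibrium-manifold hypothesis to kill every $\vec{\gamma}$-derivative of $h_1$ and $h_2$ at the origin.

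First I would extract the structural consequences of the hypothesis. Since $(\vec{0},\vec{\gamma})$ is an equilibrium for every $\vec{\gamma} \in B_\epsilon(\vec{0})$, the identities $h_1(\vec{0},\vec{\gamma}) \equiv \vec{0}$ and $h_2(\vec{0},\vec{\gamma}) \equiv \vec{0}$ hold throughout that ball. Differentiating each in $\vec{\gamma}$ at the origin gives $\partial h_1/\partial \vec{\gamma}|_{(\vec{0},\vec{0})} = 0$ and $\partial h_2/\partial \vec{\gamma}|_{(\vec{0},\vec{0})} = 0$, so the full Jacobian at the origin has the block form
\[
\begin{bmatrix} \vec{J} & 0 \\ \vec{K} & 0 \end{bmatrix},
\qquad \vec{K} = \left.\der{\vec{\theta}}{h_2(\vec{\theta},\vec{\gamma})}\right\vert_{(\vec{0},\vec{0})}.
\]
The zero block produces zero eigenvalues, so Theorem~\ref{thm:jacobian} cannot be invoked directly; the classical linearization theorem is silent on systems with a center subspace along a manifold of equilibria.

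The workaround is to use a Lyapunov function that depends on $\vec{\theta}$ alone. Since $\vec{J}$ is Hurwitz, the Lyapunov equation $\vec{J}^T\vec{P} + \vec{P}\vec{J} = -\vec{I}$ admits a symmetric positive definite solution $\vec{P}$; set $V(\vec{\theta}) = \vec{\theta}^T \vec{P}\vec{\theta}$. Taylor-expanding $h_1$ around the origin and using $h_1(\vec{0},\vec{\gamma}) \equiv \vec{0}$ gives a remainder bound $\|h_1(\vec{\theta},\vec{\gamma}) - \vec{J}\vec{\theta}\| \leq c_1(\|\vec{\theta}\|+\|\vec{\gamma}\|)\|\vec{\theta}\|$ on a sufficiently small ball around the origin, whence $\dot V = -\|\vec{\theta}\|^2 + 2\vec{\theta}^T\vec{P}(h_1 - \vec{J}\vec{\theta}) \leq -\tfrac{1}{2}\|\vec{\theta}\|^2$ as soon as $\|\vec{\theta}\|+\|\vec{\gamma}\|$ is small enough. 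By the usual Gronwall-style argument this yields $\|\vec{\theta}(t)\| \leq \kappa\|\vec{\theta}(0)\|e^{-\alpha t}$ on any interval during which the trajectory remains inside this ball.

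The remaining step, and where I expect the real technical bookkeeping, is a bootstrap that keeps $\vec{\gamma}$ inside the ball for all time. The same vanishing identity applied to $h_2$ yields $\|h_2(\vec{\theta},\vec{\gamma})\| \leq L\|\vec{\theta}\|$ on the ball, so whenever the Lyapunov decay holds, $\|\vec{\gamma}(t) - \vec{\gamma}(0)\| \leq L\int_0^t \|\vec{\theta}(s)\|\,ds \leq L\kappa\|\vec{\theta}(0)\|/\alpha$. Shrinking the initial radius $\delta$ so that both $\kappa\delta$ and $L\kappa\delta/\alpha$ are less than half the ball radius, a standard continuity/continuation argument shows the trajectory never leaves the ball, so the exponential bound persists for all $t \ge 0$. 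Since $\|\dot{\vec{\gamma}}(t)\|$ is then also exponentially decaying, $\vec{\gamma}(t)$ converges to some $\vec{\gamma}_\infty$ with $\|\vec{\gamma}(t) - \vec{\gamma}_\infty\| \leq L\int_t^\infty\|\vec{\theta}(s)\|\,ds$ exponentially small. The limit $(\vec{0},\vec{\gamma}_\infty)$ is itself an equilibrium by hypothesis, giving exponential stability in the generalized sense (convergence to some nearby equilibrium). The main obstacle is making the various constants $c_1, L, \kappa, \alpha$ and the ball radii fit together so that the Lyapunov inequality stays valid along the whole trajectory; this is routine once the remainder estimates above are in place.
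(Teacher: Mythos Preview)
Your proposal is correct and follows essentially the same approach as the paper: both build a quadratic Lyapunov function $V=\vec{\theta}^T\vec{P}\vec{\theta}$ in the $\vec{\theta}$-variables alone, use the equilibrium-manifold identity $h_i(\vec{0},\vec{\gamma})\equiv 0$ to get remainder bounds of order $\|\vec{\theta}\|$ on both $h_1-\vec{J}\vec{\theta}$ and $h_2$, deduce $\dot V\le -c\|\vec{\theta}\|^2$ and hence exponential decay of $\vec{\theta}$, and then integrate the bound on $\|\dot{\vec{\gamma}}\|$ to trap $\vec{\gamma}$ in the good neighborhood via a bootstrap. The only cosmetic difference is that the paper writes the remainder as $\|g_1\|\le c\|\vec{\theta}\|$ with $c$ arbitrarily small (splitting into a $\vec{\theta}\,g_2(\vec{\gamma})$ piece and a piece quadratic in $\vec{\theta}$), whereas you write it as $c_1(\|\vec{\theta}\|+\|\vec{\gamma}\|)\|\vec{\theta}\|$; these are equivalent once the neighborhood is shrunk.
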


\begin{proof}
The proof for this statement is quite similar to the proof of the original theorem for linearization. The high level idea is that if $\vec{J}$ is exponentially stable, then there exists a quadratic Lyapunov function that is always decreasing for the system $\dot{\vec{\theta}} = \vec{J} \vec{\theta} $. Then, we show that the same quadratic function works for the original non-linear system too in a small neighborhood around equilibrium for which
the  non-linear remainder terms are sufficiently small. In particular, we show that $\vec{\theta}$ converges to zero, and $\vec{\gamma}$ converges to a value less than $\epsilon$.

A subtle point however, is that this quadratic function would decrease only when it is within a particular neighborhood of $\vec{\theta}$ around origin, and also a particular neighborhood of  $\vec{\gamma}$ around origin. However, within this neighborhood, say $\mathcal{S}$, we can only guarantee that $\vec{\theta}$ exponentially approaches the origin; $\vec{\gamma}$ might move away from the $\epsilon$-neighborhood around origin, and if it does, the system may exit $\mathcal{S}$ and the system may not even converge! We carefully overcome this, by first identifying $\mathcal{S}$, and then identifying a smaller space within $\mathcal{S}$ where $\vec{\gamma}$ does not vary too much over the course of convergence, so that the system stays within $\mathcal{S}$ forever -- until convergence.

To identify $\mathcal{S}$, let,
\[
h_1(\theta, \gamma) = \vec{J} \vec{\theta}  + g_1(\vec{\theta}, \vec{\gamma}).
\]

The first crucial step is to show that for any constant $c > 0$, for a sufficiently small neighborhood around the equilibrium, we will have $\| g_1(\vec{\theta}, \vec{\gamma}) \| \leq c\|\vec{\theta}\| $. To show this, consider the Taylor series expansion for the remainder $g_1(\vec{\theta}, \vec{\gamma}) = h_1(\vec{\theta}, \vec{\gamma}) -\vec{J} \vec{\theta}$ around equilibrium. Clearly, the expansion would not have a constant term because $h_1(\vec{0},\vec{0})=0$. It would not have a linear term in $\vec{\theta}$ because that is accounted for already. Finally, it will not have any term that is purely a function of $\vec{\gamma}$, because $h_1(\vec{0},\vec{\gamma})=0$ in a small neighborhood around equilibrium (since $(\vec{0},\vec{\gamma})$ are all equilibria). Therefore, we can write:

\[
g_1(\vec{\theta}, \vec{\gamma}) = \vec{\theta} g_2(\vec{\gamma}) + g_3(\vec{\theta}, \vec{\gamma})
\]

where $g_2(\vec{\gamma})$ only consists of linear or higher degree terms in $\vec{\gamma}$ and $g_3(\vec{\theta}, \vec{\gamma})$ consists only of terms that are quadratic or higher degree terms in $\vec{\theta}$ (and any arbitrary degree of $\vec{\gamma}$). Therefore, we have that:
\[
\lim_{\vec{\gamma} \to \vec{0}} g_2(\vec{\gamma}) = 0  , \,\,\, \lim_{\vec{\theta} \to 0} \frac{g_3(\vec{\theta}, \vec{\gamma})}{\| \vec{\theta} \|} = 0
\]

Then, for an arbitrarily chosen small constant $c$, for a sufficiently close neighborhood around the equilibrium, we can say that $\|g_2(\vec{\gamma})\| \leq c/2$ and $\|g_3(\vec{\theta}, \vec{\gamma}) \|\leq c \| \vec{\theta} \| /2 $. Thus,
\[
\|g_1(\vec{\theta}, \vec{\gamma})\|\leq \|\vec{\theta}\| \| g_2(\vec{\gamma}) \|+\| g_3(\vec{\theta}, \vec{\gamma}) \| \leq c\| \vec{\theta}  \|
\]

We will use this property soon for a cleverly chosen value of $c$. 
Now, by Theorem 4.6 in \citet{khalil1996noninear}, we have that for any positive definite symmetric matrix $\vec{Q}$, there exists a positive definite matrix $\vec{P}$ such that $\vec{J}^T \vec{P} + \vec{J}\vec{P}  = -\vec{Q}$. Then, if we choose $V(\vec{\theta}) = \vec{\theta}^T \vec{P} \vec{\theta}$ as the quadratic Lyapunov function for the linearized system $\vec{\dot{\theta}} = \vec{J} \vec{\theta}$, the rate of its decrease is given by $\dot{V}(\vec{\theta})=-\vec{\theta}^T \vec{Q} \vec{\theta}$ which is negative at all points except at $\vec{\theta} = 0$.

Now, if we use the same Lyapunov function for the whole system as $V(\vec{\theta}, \vec{\gamma}) = \vec{\theta}^T \vec{P} \vec{\theta}$, the rate of its decrease near the origin would be $\dot{V}(\vec{\theta}, \vec{\gamma}) = -\vec{\theta}^T \vec{Q} \vec{\theta} + 2\vec{\theta}^T \vec{P} g_1(\vec{\theta}, \vec{\gamma})$. If we choose a sufficiently small neighborhood such that for $c=\frac{1}{4 \|\vec{P} \|_F} \lambda_{\min}(\vec{Q})$, $\|g_1(\vec{\theta}, \vec{\gamma})\| \leq c \| \vec{\theta}\|$, then we have that,

\[
\dot{V}(\vec{\theta}, \vec{\gamma}) \leq - \lambda_{\min}(\vec{Q}) \| \vec{\theta}\|^2 + \frac{2}{4 \|\vec{P} \|_F} \lambda_{\min}(\vec{Q}) \| \vec{P}\|_F \| \vec{\theta} \|^2 = - \frac{1}{2}\lambda_{\min}(\vec{Q}) \| \vec{\theta}\|^2  < 0
\] 

Now, as long as we ensure that the trajectory of the system remains in the neighborhood around origin for which $|g_1(\vec{\theta}, \vec{\gamma})\| \leq \frac{1}{4 \|\vec{P} \|_F} \lambda_{\min}(\vec{Q}) \| \vec{\theta}\|$ and $\| \vec{\gamma} \| < \epsilon$, this system would then exponentially converge to one of the equilibria near origin.  Let us call this neighborhood $\mathcal{S}$ i.e.,  within this neighborhood of $\vec{\gamma}$ and $\vec{\theta}$, the Lyapunov function strictly decreases for the non-linear system.

This brings us to the second crucial part of this proof, which is to ensure that we always stay in $\mathcal{S}$. Let $\mathcal{S}$ contain a ball of radius $d$. We will show that for sufficiently close initializations which are within a ball of radius $d/2$, the displacement of $\vec{\gamma}$ is at most $d/2$. Since $\vec{\theta}$ only approaches origin, this means that the system never exited $\mathcal{S}$.

To bound how much $\gamma$ changes with time, let us consider the Taylor series expansion of $h_2(\vec{\theta}, \vec{\gamma})$. First of all, there is no constant term. Next, there is no term that is purely a function of $\vec{\gamma}$ because $h_2(\vec{0}, \vec{\gamma}) = 0$. Then, we can say that:

\[
h_2( \vec{\theta}, \vec{\gamma}) =  g_4(\vec{\theta},\vec{\gamma}) \vec{\theta}
\]

Since $g_4(\vec{0},\vec{0})$ is finite, in a small neighborhood around equilibrium, there exists a fixed constant $c'$ such that $\|g_4(\vec{\theta},\vec{\gamma}) \|_2 \leq c'$. Then, $h_2( \vec{\theta}, \vec{\gamma}) \leq c' \| \vec{\theta} \|$.

\begin{figure}[H]
\centering
\includegraphics[scale=0.75]{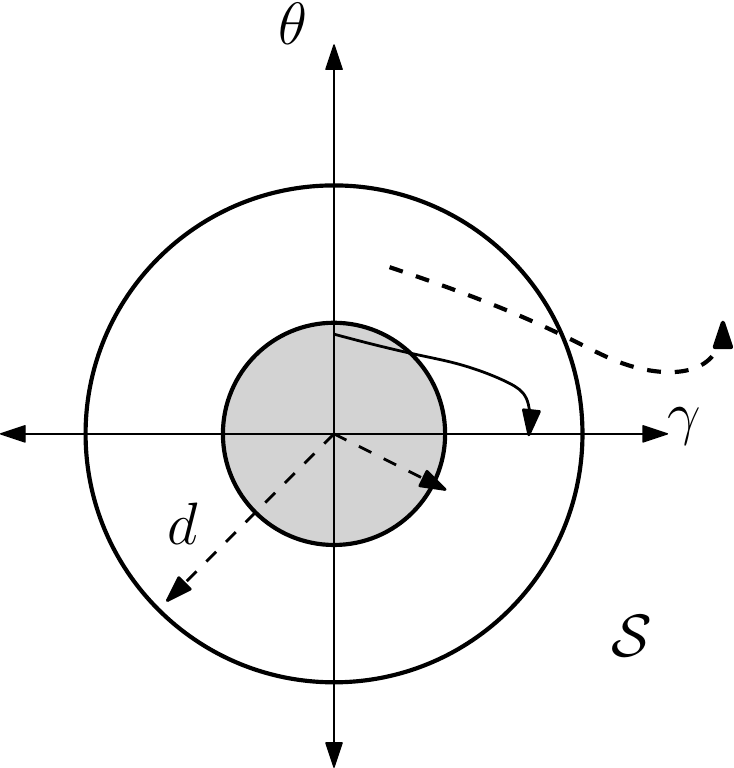}
\caption{\textbf{Illustration of Theorem~\ref{thm:multiple-equilibria}}: $\mathcal{S}$ is the neighborhood within which $\vec{\theta}$ converges exponentially to $\vec{0}$ to a point on the $\vec{\gamma}$-axis which corresponds to an equilibrium. However, all initializations within $\mathcal{S}$ may not preserve the trajectory within $\mathcal{S}$ due to a lack of guarantee on how $\vec{\gamma}$ behaves -- as illustrated by the dashed trajectory. We identify a smaller ball within $\mathcal{S}$ such that for any intitialization within that ball, $\vec{\gamma}$ is well-behaved and consquently ensures exponential convergence of $\vec{\theta}$.}
\label{fig:convergence}
\end{figure}

Now, if the trajectory indeed always remained in $\mathcal{S}$, we know that $\| \vec{\theta}(t) \| = \| \vec{\theta}(0)\|\exp \left(- c'' t\right)$ for some constant $c'' > 0$. Assume we initialize $\vec{\theta}(0)$ within  a radius of $\frac{c'' d}{2c'} $.
The rate at which $\vec{\gamma}$ changes at any point is,
\[
\| \dot{\vec{\gamma}}\| \leq c'  \| \vec{\theta}(0)\|\exp \left( -c'' t\right)
\]
Then, the maximum displacement in $\vec{\gamma}$ can be,
\[
\int\limits_{t=0}^{\infty} c'  \| \vec{\theta}(0)\|\exp \left( -c'' t\right) dt = c'\frac{\| \vec{\theta}(0)\| }{c''} \leq \frac{d}{2}
\]

Thus, the trajectory always lies in $\mathcal{S}$, which implies exponential convergence along $\vec{\theta}$ to a point where $\vec{\theta} =0$ and $\|\vec{\gamma}\| < \epsilon$. Thus the system exponentially converges to an equilibrium.

\end{proof}

\section{GANs are not concave-convex near equilibrium}
\label{app:convex-concave}
In this section, we consider a more general system than the one considered in the main paper to demonstrate that GANs are not concave-convex near equilibrium. In particular, consider the following discriminator and generator pair learning a distribution in 1-D:
\begin{align*}
D_{\vec{w}}(x) &= \sum_{i=0}^{d_D} w_{i} x^i \\
G_{\vec{a}}(z) &= \sum_{j=0}^{d_G} a_{j} z^j \\  
\end{align*}
where $d_D \geq 1$ and $d_G \geq 1$. 
Let the distribution to be learned be arbitrary. Let the latent distribution be the standard normal. Then, the gradient of the objective with respect to the generator parameters is:

\begin{align*}
\frac{\partial V(G,D)}{\partial a_j} = -\mathbb{E}_{z \sim \mathcal{N}(0,1)} \left[f'\left(-\sum_{i=0}^{d_D} w_{i} (G_{\vec{a}}(z))^i\right)  \cdot\left( \sum_{i=1}^{d_D} i w_{i} (G_{\vec{a}}(z))^{{i}-1}\right) \cdot z^{j}\right] 
\end{align*}

The second derivative is,
\begin{align*}
\frac{\partial^2 V(G,D)}{\partial a_j^2} &= - \mathbb{E}_{z \sim \mathcal{N}(0,1)} \left[f'\left(-\sum_{i=0}^{d_D} w_{i} (G_{\vec{a}}(z))^i\right)  \cdot\left( \sum_{i=2}^{d_D} i ({i}-1)w_{i} (G_{\vec{a}}(z))^{{i}-2}\right) \cdot z^{2j}\right]   \\
& + \mathbb{E}_{z \sim \mathcal{N}(0,1)} \left[f''\left(-\sum_{i=0}^{d_D} w_{i} (G_{\vec{a}}(z))^i\right)  \cdot \left(\left( \sum_{i=1}^{d_D} i w_{i} (G_{\vec{a}}(z))^{{i}-1}\right) \cdot z^{j} \right)^2\right] 
\end{align*}

Now, consider the case where $f''(x) < 0$. For points in the discriminator parameter space where  $w_1 \neq 0$ but $w_i = 0$ for all $i\neq 1$, the term above simplifies to the following when $j \neq 1$:

\[
 \mathbb{E}_{z \sim \mathcal{N}(0,1)} \left[f''\left(-w_{1} (G_{\vec{a}}(z))\right)  \cdot \left( w_1  z^{j} \right)^2\right] 
\] 
 which is clearly negative i.e., the objective is concave in most of the generator parameters, and this holds for parameters arbitrarily close to the all-zero discriminator parameter (as $w_1 \to 0$).
 
  On the other hand, consider the case where $f''(x) = 0$ for all $x\in\mathbb{R}$. Then,  if $d_D > 2$, we can consider $w_{2} \neq 0$ while $w_i =0$  for all $i \neq 2$. In this case, the second derivative simplifies to:
 
 \[
-\mathbb{E}_{z \sim \mathcal{N}(0,1)} \left[f'\left(- w_{2}(G^2_{\vec{a}}(z))\right) 2w_2  z^{2j}\right]  .
\]

If $f'(x) > 0$ for all $x$ (which is true in the case of WGANs), then in the region $w_2 >0$ the above term is negative i.e., the GAN objective is concave in terms of the generator parameters.

\section{Local exponential stability of GANs}
\label{app:general-stability}

In this section, we provide the full proof for our result about the local stability of GANs through the following lemmas. First, we derive the Jacobian at equilibrium.

\begin{lemma}
\label{lem:jacobian}
For the dynamical system defined by the GAN objective in
Equation~\ref{eq:generic_gan} and the updates in
Equation~\ref{eq:undamped_updates}, the Jacobian at 
an equilibrium point  
$(\vec{\theta^\star_D},\vec{\theta^\star_G})$, under the Assumptions~\ref{as:global-gen} and ~\ref{as:same-support} is:
\[
\vec{J}= 
\begin{bmatrix}
\vec{J}_{DD}& \vec{J}_{DG} \\
-\vec{J}_{DG}^T & \vec{J}_{GG} \\
\end{bmatrix} = 
\begin{bmatrix}
2f''(0) \vec{K}_{DD} &f'(0)
\vec{K}_{DG} \\ 
-f'(0) \vec{K}_{DG} ^T & 0 \\
\end{bmatrix} 
\]
where \[\vec{K}_{DD} \triangleq \left. \mathbb{E}_{p_{\rm data}} [(\nabla_{\vec{\theta_D}}
    D_{\vec{\theta_D}}(x)) (\nabla_{\vec{\theta_D}} D_{\vec{\theta_D}}(x))^T] \right\vert_{\vec{\theta^\star_D}} \succeq 0\]  and 
 \[\vec{K}_{DG} \triangleq\left. \int_{\mathcal{X}} \nabla_{\vec{\theta_D}} D_{\vec{\theta_D}}(x)  \nabla^T_{\vec{\theta_G}} p_{\vec{\theta_G}}(x)  dx\right\vert_{\vec{\theta_D} = \vec{\theta^\star_D}, \vec{\theta_G} = \vec{\theta^\star_G}} \] 
\end{lemma}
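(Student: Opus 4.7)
The plan is to compute the four blocks of $\vec{J}$ by differentiating the vector field $(\nabla_{\vec{\theta_D}} V,\; -\nabla_{\vec{\theta_G}} V)$ of the min-max updates and then evaluating at the equilibrium, working throughout with the density form
\[
V = \int p_{\rm data}(x)\, f(D_{\vec{\theta_D}}(x))\,dx + \int p_{\vec{\theta_G}}(x)\, f(-D_{\vec{\theta_D}}(x))\,dx.
\]
Assumption~\ref{as:same-support} keeps $\mathrm{supp}(p_{\vec{\theta_G}})$ constant in a neighborhood of $\vec{\theta_G^\star}$, which justifies moving $\nabla_{\vec{\theta_G}}$ under the integral, and Assumption~\ref{as:global-gen} supplies the two simplifications that drive the collapse at equilibrium: $D_{\vec{\theta_D^\star}} \equiv 0$ on $\mathrm{supp}(p_{\rm data})$ and $p_{\vec{\theta_G^\star}} = p_{\rm data}$.

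For $\vec{J}_{DD} = \nabla^2_{\vec{\theta_D}} V\vert_\star$, differentiating each summand twice in $\vec{\theta_D}$ produces a rank-one piece proportional to $f''\, \nabla_{\vec{\theta_D}}D\, \nabla_{\vec{\theta_D}}^T D$ and a curvature piece proportional to $f'\, \nabla^2_{\vec{\theta_D}} D$, with an opposite sign on the curvature piece in the generator integral coming from the chain rule on $-D$. At the equilibrium, $f''$ and $f'$ reduce to $f''(0)$ and $f'(0)$, and $p_{\vec{\theta_G^\star}} = p_{\rm data}$, so the two $\nabla^2_{\vec{\theta_D}} D$ contributions cancel exactly and the two outer-product contributions add, leaving $2f''(0)\vec{K}_{DD}$. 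Positive semidefiniteness of $\vec{K}_{DD}$ is immediate from its definition as an expectation of an outer product.

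For $\vec{J}_{GG}$, only the generator integral depends on $\vec{\theta_G}$, and at equilibrium $f(-D_\star(x)) \equiv f(0)$ on $\mathrm{supp}(p_{\vec{\theta_G^\star}})$, so
\[
\nabla^2_{\vec{\theta_G}} V\Big\vert_\star = f(0)\, \nabla^2_{\vec{\theta_G}}\!\!\int p_{\vec{\theta_G}}(x)\,dx = f(0)\, \nabla^2_{\vec{\theta_G}} (1) = 0.
\]
For the cross blocks, I differentiate $\nabla_{\vec{\theta_D}} V$ with respect to $\vec{\theta_G}$; only the generator integral contributes, and at equilibrium $f'(-D_\star) = f'(0)$ pulls out of the integral, collapsing the expression to a multiple of $\int \nabla_{\vec{\theta_D}} D(x)\, \nabla^T_{\vec{\theta_G}} p_{\vec{\theta_G}}(x)\,dx = \vec{K}_{DG}$. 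The skew-coupled placement of $\pm f'(0)\vec{K}_{DG}$ in the off-diagonal blocks then follows from the opposite signs of the discriminator (ascent) and generator (descent) updates.

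The main subtlety I expect is the Leibniz-rule justification for the $\vec{\theta_G}$ derivatives of the generator integral, which is exactly what Assumption~\ref{as:same-support} is designed to enable; a secondary bookkeeping task is tracking the chain-rule signs on $-D$ together with the ascent/descent convention so that the off-diagonal blocks come out skew-coupled rather than symmetric, since this skew coupling is precisely what drives the oscillatory-but-damped behavior exploited in Theorem~\ref{thm:general-stability}.
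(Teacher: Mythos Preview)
Your proposal is correct and follows essentially the same route as the paper: both work in the density form, use $p_{\vec{\theta_G^\star}}=p_{\rm data}$ together with $D_{\vec{\theta_D^\star}}\equiv 0$ to make the $f'(0)\nabla^2_{\vec{\theta_D}}D$ terms cancel in $\vec{J}_{DD}$, pull $f'(0)$ out of the cross term to obtain $\pm f'(0)\vec{K}_{DG}$, and invoke Assumption~\ref{as:same-support} to show $\vec{J}_{GG}=0$. The only cosmetic difference is that the paper argues $\vec{J}_{GG}=0$ by splitting the integral into ``inside support'' (where $D_\star=0$) and ``outside support'' (where $\nabla_{\vec{\theta_G}}p_{\vec{\theta_G}}=0$) via a limit, whereas you observe more directly that the generator integral equals the constant $f(0)$ for all nearby $\vec{\theta_G}$; both are valid under the constant-support assumption.
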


\begin{proof}
 To derive the Jacobian, we begin with a subtly different algebraic form of the GAN objective in Equation~\ref{eq:generic_gan} by replacing the term $  \mathbb{E}_{z\sim p_{\mathrm{latent}}}[f(-D_{\vec{\theta_D}}(G_{\vec{\theta_G}}(z)))]$ with $\mathbb{E}_{p_{\vec{\theta_G}}}[f(-D_{\vec{\theta_D}}(x))] = \int_{\mathcal{X}} p_{\vec{\theta_G}}(x) f(-D_{\vec{\theta_D}}(x))$. Effectively, we separate the discriminator and the generator's effects in this term. This is crucial because we will proceed with all of our analysis in this form.  Observe that the system then becomes,

 \begin{align*}
V(D_{\vec{\theta_D}},G_{\vec{\theta_G}}) & = \mathbb{E}_{p_{\rm data}}[f(D_{\vec{\theta_D}}(x))] + \mathbb{E}_{p_{\vec{\theta_G}}}[f(-D_{\vec{\theta_D}}(x))] \\
\vec{\dot{\theta}_D} & = \mathbb{E}_{p_{\rm data}}[f'(D_{\vec{\theta_D}}(x)) \nabla_{\vec{\theta_D}} D_{\vec{\theta_D}}(x)] - \mathbb{E}_{p_{\vec{\theta_G}}}[f'(-D_{\vec{\theta_D}}(x)) \nabla_{\vec{\theta_D}} D_{\vec{\theta_D}}(x)] \\
\vec{\dot{\theta}_G} &  =- \int_{\mathcal{X}} \nabla_{\vec{\theta_G}} p_{\vec{\theta_G}}(x) f(-D_{\vec{\theta_D}}(x)) dx
\end{align*}

Throughout this paper we will use the notation $\nabla^T (\cdot)$ to denote the row vector corresponding to the gradient that is being computed. Now, let $n_D$ be the number of discriminator parameters and $n_G$  the number of generator parameters. Then the first $n_D \times n_D$ block in $\vec{J}$, which we will denote by $\vec{J}_{DD}$ is:

\begin{align*}
\vec{J}_{DD} & \triangleq \left. \nabla^2_{\vec{\theta_D}} V(G_{\vec{\theta_G}}, D_{\vec{\theta_D}}) \right\vert_{(\vec{\theta^\star_D},\vec{\theta^\star_G})} =  \left. \frac{\partial  \vec{\dot{\theta}_D}}{\partial \vec{\theta_D}}  \right\vert_{\vec{\theta_D} = \vec{\theta^\star_D}, \vec{\theta_G} = \vec{\theta^\star_G}}  = \left. \frac{\partial \left. \vec{\dot{\theta}_D} \right\vert_{\vec{\theta_G} = \vec{\theta^\star_G}} }{\partial \vec{\theta_D}} \right\vert_{\vec{\theta_D} = \vec{\theta^\star_D}}   \\
& = \left. \frac{\partial \left( \mathbb{E}_{p_{\rm data}}[f'(D_{\vec{\theta_D}}(x)) \nabla_{\vec{\theta_D}} D_{\vec{\theta_D}}(x)] - \mathbb{E}_{p_{\rm data}}[f'(-D_{\vec{\theta_D}}(x)) \nabla_{\vec{\theta_D}} D_{\vec{\theta_D}}(x)]\right) }{\partial \vec{\theta_D}} \right\vert_{\vec{\theta_D} = \vec{\theta^\star_D}}     \\  
& =  \left. \left( \mathbb{E}_{p_{\rm data}}\left[f''(D_{\vec{\theta_D}}(x)) \nabla_{\vec{\theta_D}} D_{\vec{\theta_D}}(x)  \nabla^T_{\vec{\theta_D}} D_{\vec{\theta_D}}(x) \right] + \mathbb{E}_{p_{\rm data}}\left[f'(D_{\vec{\theta_D}}(x)) \nabla^2_{\vec{\theta_D}} D_{\vec{\theta_D}}(x)  \right]
 \right)   \right\vert_{\vec{\theta_D} = \vec{\theta^\star_D}} \\
& + \left. \left( \mathbb{E}_{p_{\rm data}}\left[f''(-D_{\vec{\theta_D}}(x)) \nabla_{\vec{\theta_D}} D_{\vec{\theta_D}}(x)  \nabla^T_{\vec{\theta_D}} D_{\vec{\theta_D}}(x) \right] - \mathbb{E}_{p_{\rm data}}\left[f'(-D_{\vec{\theta_D}}(x)) \nabla^2_{\vec{\theta_D}} D_{\vec{\theta_D}}(x)  \right] \right)   \right\vert_{\vec{\theta_D} = \vec{\theta^\star_D}} \\
&  =\left. \left( \mathbb{E}_{p_{\rm data}}\left[f''(0) \nabla_{\vec{\theta_D}} D_{\vec{\theta_D}}(x)  \nabla^T_{\vec{\theta_D}} D_{\vec{\theta_D}}(x) \right] + \mathbb{E}_{p_{\rm data}}\left[f'(0) \nabla^2_{\vec{\theta_D}} D_{\vec{\theta_D}}(x)  \right] \right)   \right\vert_{\vec{\theta_D} = \vec{\theta^\star_D}} \\
& + \left. \left( \mathbb{E}_{p_{\rm data}}\left[f''(0) \nabla_{\vec{\theta_D}} D_{\vec{\theta_D}}(x)  \nabla^T_{\vec{\theta_D}} D_{\vec{\theta_D}}(x) \right] - \mathbb{E}_{p_{\rm data}}\left[f'(0) \nabla^2_{\vec{\theta_D}} D_{\vec{\theta_D}}(x)  \right] \right)   \right\vert_{\vec{\theta_D} = \vec{\theta^\star_D}} \\
&  =2 f''(0) \left. \mathbb{E}_{p_{\rm data}}\left[ \nabla_{\vec{\theta_D}} D_{\vec{\theta_D}}(x)  \nabla^T_{\vec{\theta_D}} D_{\vec{\theta_D}}(x) \right]   \right\vert_{\vec{\theta_D} = \vec{\theta^\star_D}} \\
\end{align*}

The  subsequent $n_{D} \times n_G$ matrix, which we will denote by $\vec{J}_{DG}$ is:
\begin{align*}
\vec{J}_{DG} & \triangleq \left. \der{\vec{\theta_G}} {\nabla_{\vec{\theta_D}} V(G_{\vec{\theta_G}}, D_{\vec{\theta_D}})} \right\vert_{(\vec{\theta^\star_D},\vec{\theta^\star_G})} = \left. \frac{\partial  \vec{\dot{\theta}_D}}{\partial \vec{\theta_G}}  \right\vert_{\vec{\theta_D} = \vec{\theta^\star_D}, \vec{\theta_G} = \vec{\theta^\star_G}}  = \left. \frac{\partial \left. \vec{\dot{\theta}_D} \right\vert_{\vec{\theta_D} = \vec{\theta^\star_D}} }{\partial \vec{\theta_G}} \right\vert_{\vec{\theta_G} = \vec{\theta^\star_G}}   \\
&=  \left. \frac{\partial   }{\partial \vec{\theta_G}} \mathbb{E}_{p_{\vec{\theta_G}}} [f'(0) \nabla_{\vec{\theta_D}} D_{\vec{\theta_D}}(x) ]  \right\vert_{\vec{\theta_D} = \vec{\theta^\star_D}, \vec{\theta_G} = \vec{\theta^\star_G}}   \\
&= f'(0) \left. \int_{\mathcal{X}} \nabla_{\vec{\theta_D}} D_{\vec{\theta_D}}(x)  \nabla_{\vec{\theta_G}}^T p_{\vec{\theta_G}}(x)  dx\right\vert_{\vec{\theta_D} = \vec{\theta^\star_D}, \vec{\theta_G} = \vec{\theta^\star_G}} = f'(0)\vec{K}_{DG}  \\
\end{align*}

It is easy to see that the lower $n_G \times n_D$ matrix is $-\vec{J}_{DG}^T$:
\begin{align*}
 \left. \frac{\partial  \vec{\dot{\theta}_G}}{\partial \vec{\theta_D}}  \right\vert_{\vec{\theta_D}  = \vec{\theta^\star_D}, \vec{\theta_G} = \vec{\theta^\star_G}}  &= \left. \frac{\partial \left. \vec{\dot{\theta}_G} \right\vert_{\vec{\theta_G} = \vec{\theta^\star_G}} }{\partial \vec{\theta_D}} \right\vert_{\vec{\theta_D} = \vec{\theta^\star_D}}   \\
&=  - \left. \frac{\partial   }{\partial \vec{\theta_D}}  \int_{\mathcal{X}} f(D_{\vec{\theta_D}}(x)) \nabla_{\vec{\theta_G}} p_{\vec{\theta_G}}(x)  dx  \right\vert_{\vec{\theta_D} = \vec{\theta^\star_D}, \vec{\theta_G} = \vec{\theta^\star_G}}   = - \vec{J}_{DG}^T\\ 
\end{align*}

Furthermore, the lower $n_{G} \times n_{G}$ matrix $\vec{J}_{GG}$ turns out to be zero. Here, we will use an implication of Assumption~\ref{as:same-support}. More specifically, generators $\vec{\theta_G}$  that are within a sufficiently small radius $\epsilon_G$ around the equilibrium have the same support and therefore  i) $D_{\vec{\theta^\star_D}}(x) = 0$ for $x$ in this support. Furthermore for all generators within a radius $\epsilon_G/2$, any perturbation of the generator is not going to change the support, and therefore  ii) $\nabla_{\vec{\theta_G}} p_{\vec{\theta_G}}(x) = 0$  for $x$ that is not in this support. \footnote{ We can consider only $\epsilon_G/2$ perturbations and not $\epsilon_G$ perturbations because for $\vec{\theta_G}$ that is $\epsilon_G$ away from $\vec{\theta^\star_G}$, perturbing it a little further might potentially change its support as a result of which $\nabla_{\vec{\theta_G}} p_{\vec{\theta_G}}(x)$ may not necessarily be zero for all $x \notin {\rm supp} (p_{\vec{\theta_G^\star}})$}

Now, to show that $\vec{J}_{GG}$ is zero, we take any vector $\vec{v}$ that is a perturbation in the generator space and show that $\vec{v}^T \vec{J}_{GG} = 0$. Here, we will use the limit definition of the derivative along a particular direction $\vec{v}$. 

\begin{align*}
 \vec{v}^T \left. \frac{\partial  \vec{\dot{\theta}_G}}{\partial \vec{\theta_G}}  \right\vert_{\vec{\theta_D}  =  \vec{\theta^\star_D}, \vec{\theta_G} =   \vec{\theta^\star_G}}  &= \vec{v}^T\left. \frac{\partial \left. \vec{\dot{\theta}_G} \right\vert_{\vec{\theta_D} =  \vec{\theta^\star_D}} }{\partial \vec{\theta_G}} \right\vert_{\vec{\theta_G} = \vec{\theta^\star_G}}
  =- \lim_{\substack{\vec{\theta_G}- \vec{\theta^\star_G}= \epsilon \vec{v} \\ \epsilon \to 0}} \frac{ \int_{\mathcal{X}} f(-D_{\vec{\theta^\star_D}}(x)) \overbrace{\nabla^T_{\vec{\theta_G}} p_{\vec{\theta_G}}(x)}^{\text{$0$ for $x\notin {\rm supp}(p_{\vec{\theta}^\star_G})$}}  dx  }{ \epsilon}   \\
&=-  \lim_{\substack{\vec{\theta_G}- \vec{\theta^\star_G}= \epsilon \vec{v} \\ \epsilon \to 0}} \frac{ \int_{\rm supp (p_{\vec{\theta^\star_G}})} f(-\overbrace{D_{\vec{\theta^\star_D}}(x)}^{0})  \nabla^T_{\vec{\theta_G}} p_{\vec{\theta_G}}(x)  dx  }{\epsilon} \\
& =  - f(0) \lim_{\substack{\vec{\theta_G}- \vec{\theta^\star_G}= \epsilon \vec{v} \\ \epsilon \to 0}} \frac{  \nabla^T_{\vec{\theta_G}} \int_{\rm supp (p_{\vec{\theta^\star_G}})} p_{\vec{\theta_G}}(x)  dx  }{\epsilon}  \\
& =   - f(0) \lim_{\substack{\vec{\theta_G}- \vec{\theta^\star_G}= \epsilon \vec{v} \\ \epsilon \to 0}}  \frac{  \nabla^T_{\vec{\theta_G}} 1  }{\epsilon}  = 0 \\
\end{align*}

\end{proof}

To prove that the system is stable we will need to show that this matrix is Hurwitz. 
We show later in Lemma~\ref{lem:undamped-bound} that when i) $\vec{J}_{DD} \prec 0$ and furthermore ii) $\vec{J}_{DG}$ is full column rank, then $\vec{J}$ is indeed Hurwitz. However from $f''(0) < 0$, we only have that $\vec{J}_{DD} \preceq 0$. For these two conditions to be met, we will need $\vec{K}_{DD}$ and $\vec{K}_{DG}^T \vec{K}_{DG}$ to be full rank, which  you may recall from our discussion in the main paper below Assumption~\ref{as:convexity}, is met only when there is a unique equilibrim locally. 

%If these two conditions need to be met, we will need that  $V(\vec{\theta_D}, \vec{\theta_G})$ is strongly concave with respect to $\vec{\theta}_D$ and $\|\nabla_{\vec{\theta_D}}V(\vec{\theta_D}, \vec{\theta_G}) \|^2$  is strongly convex with respect to $\vec{\theta_G}$, thereby excluding the case where there are multiple equilibria. 
 
Now, we show why this is the case -- by establishing a relation between the matrices $\vec{K}_{DD}$ and $\vec{K}_{DG}$ and the curvature of functions in  Assumption~\ref{as:convexity} -- and further show how the null spaces of these matrices correspond to a subspace of equilibria. Then, we show in Lemma~\ref{lem:projection}, how to consider a rotation of the system and project to a space that is orthogonal to this subspace of equilibria. Then from the Theorem~\ref{thm:multiple-equilibria} that we have proved in Appendix~\ref{app:lyapunov}, it is sufficient to show that the Jacobian of the projected system is Hurwitz.

%However, we argue below how we can address such a case. We first begin with a simple observation  that the null space of the matrices involved in the above lemma correspond to a subspace of equilibria. 

%In the following discussion, we will use $\C(\cdot)$ and $\N(\cdot)$ to denote the column space and null space of a matrix respectively. 

In the following discussion, we will use the term ``equilibrium discriminator'' to denote a discriminator that is identically zero on the support and ``equilibrium generator'' to denote a generator that matches the true distribution, as defined in Assumption~\ref{as:global-gen}. Note that for an equilibrium discriminator, the generator updates are zero and vice versa for an equilibrium generator.

\begin{lemma}
\label{lem:eqspace}
For the dynamical system defined by the GAN objective in
Equation~\ref{eq:generic_gan} and the updates in
Equation~\ref{eq:undamped_updates},
under Assumptions~\ref{as:global-gen} and ~\ref{as:convexity}, there exists $\epsilon_D, \epsilon_G > 0$ such that for all $\epsilon_D' \leq \epsilon_D$ and $\epsilon_G' \leq \epsilon_G$, and for any unit vectors  $\vec{u} \in \N(\vec{K}_{DD}), \vec{v} \in \N(\vec{K}_{DG})$,
 $(\vec{\theta_D^\star} + \epsilon_D' \vec{u}, \vec{\theta_G^\star} + \epsilon_G' \vec{v})$ is an equilibrium point as defined in Assumption~\ref{as:global-gen}.
 \end{lemma}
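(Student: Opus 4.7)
The plan is to reduce the claim to a direct application of Property~\ref{prop:convex} on the two nonnegative scalar functions featured in Assumption~\ref{as:convexity}, after identifying their Hessians at the equilibrium with $\vec{K}_{DD}$ and $\vec{K}_{DG}^T \vec{K}_{DG}$. Write $g_1(\vec{\theta_D}) := \mathbb{E}_{p_{\rm data}}[D_{\vec{\theta_D}}^2(x)]$ and $g_2(\vec{\theta_G}) := \| \mathbb{E}_{p_{\rm data}}[\nabla_{\vec{\theta_D}} D_{\vec{\theta_D^\star}}(x)] - \mathbb{E}_{p_{\vec{\theta_G}}}[\nabla_{\vec{\theta_D}} D_{\vec{\theta_D^\star}}(x)] \|^2$. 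Both are nonnegative, and both vanish at the equilibrium by Assumption~\ref{as:global-gen} (the first because $D_{\vec{\theta_D^\star}} \equiv 0$ on $\mathrm{supp}(p_{\rm data})$, the second because $p_{\vec{\theta_G^\star}} = p_{\rm data}$), so the equilibrium is a global minimizer of each, and in particular $\nabla g_1|_{\vec{\theta_D^\star}} = 0$ and $\nabla g_2|_{\vec{\theta_G^\star}} = 0$.

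Next I would carry out the Hessian computations. Differentiating $g_1$ twice yields $\nabla^2 g_1 = 2\mathbb{E}_{p_{\rm data}}[\nabla_{\vec{\theta_D}} D \, \nabla_{\vec{\theta_D}}^T D + D \, \nabla_{\vec{\theta_D}}^2 D]$, and the $D \, \nabla^2 D$ term vanishes at $\vec{\theta_D^\star}$ because $D_{\vec{\theta_D^\star}} \equiv 0$ on the support, leaving $\nabla^2 g_1|_{\vec{\theta_D^\star}} = 2\vec{K}_{DD}$. For $g_2$, let $h(\vec{\theta_G})$ denote the inner $n_D$-vector so that $g_2 = \|h\|^2$; since $h(\vec{\theta_G^\star}) = 0$, the standard identity gives $\nabla^2 g_2|_{\vec{\theta_G^\star}} = 2(\nabla h)^T (\nabla h)$, and a direct calculation using $\nabla_{\vec{\theta_G}} h = -\vec{K}_{DG}$ yields $\nabla^2 g_2|_{\vec{\theta_G^\star}} = 2\vec{K}_{DG}^T \vec{K}_{DG}$. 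In particular $\N(\nabla^2 g_1|_{\vec{\theta_D^\star}}) = \N(\vec{K}_{DD})$ and, using $\N(\vec{A}^T\vec{A}) = \N(\vec{A})$, also $\N(\nabla^2 g_2|_{\vec{\theta_G^\star}}) = \N(\vec{K}_{DG})$.

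Finally I would invoke Property~\ref{prop:convex} via Assumption~\ref{as:convexity}. For any unit $\vec{u} \in \N(\vec{K}_{DD})$ the property provides a neighborhood in which $g_1$ is constant along $\vec{u}$; combined with $g_1(\vec{\theta_D^\star}) = 0$ and the fact that $g_1$ is the integral of a square against the positive measure $p_{\rm data}$, this forces $D_{\vec{\theta_D^\star} + \epsilon_D' \vec{u}}(x) = 0$ for $x \in \mathrm{supp}(p_{\rm data})$, which is the discriminator clause of Assumption~\ref{as:global-gen}. The mirror step for $g_2$ and $\vec{v} \in \N(\vec{K}_{DG})$ forces the perturbed generator to preserve the moments $\mathbb{E}[\nabla_{\vec{\theta_D}} D_{\vec{\theta_D^\star}}(x)]$ of $p_{\rm data}$, which is read as the generator clause of Assumption~\ref{as:global-gen} under the convention, flagged explicitly in the discussion following Assumption~\ref{as:convexity}, that null directions of $\vec{K}_{DG}$ correspond to equivalent equilibria. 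Uniformity of $\epsilon_D, \epsilon_G$ independent of the direction $\vec{u},\vec{v}$ follows from a standard compactness argument on the unit spheres of the two null subspaces. The main obstacle is precisely this last identification: Property~\ref{prop:convex} only gives local vanishing of $g_2$, i.e.\ moment-matching rather than literal equality $p_{\vec{\theta_G}} = p_{\rm data}$, and the argument relies on the modelling convention baked into Assumption~\ref{as:convexity} that the rank deficiencies of $\vec{K}_{DG}$ do correspond to equivalent equilibria in the sense of Assumption~\ref{as:global-gen}.
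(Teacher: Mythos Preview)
Your Hessian computations and the discriminator side of the argument are correct and match the paper exactly. The gap is on the generator side, and you have correctly located it yourself: from $g_2(\vec{\theta_G})=0$ you only get moment-matching, and you then close the argument by appealing to a ``convention'' from the discussion after Assumption~\ref{as:convexity}. But that discussion is a forward reference to this very lemma (``We prove this formally in Lemma~\ref{lem:eqspace}''), so invoking it is circular.

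The paper resolves this gap without any extra convention. The key observation you are missing is that $g_2(\vec{\theta_G})$ is, up to the constant $f'(0)^2$, precisely the squared norm of the discriminator update $\nabla_{\vec{\theta_D}} V$ evaluated at $(\vec{\theta_D^\star},\vec{\theta_G})$. Hence $g_2(\vec{\theta_G})=0$ says exactly that $\dot{\vec{\theta}}_D=0$ at the point $(\vec{\theta_D^\star},\vec{\theta_G})$. Separately, because $D_{\vec{\theta_D^\star}}\equiv 0$ on the support, one checks directly that the generator update $\dot{\vec{\theta}}_G$ vanishes at $(\vec{\theta_D^\star},\vec{\theta_G})$ as well (the integrand reduces to $f(0)\,\nabla_{\vec{\theta_G}} p_{\vec{\theta_G}}(x)$, which integrates to $f(0)\,\nabla_{\vec{\theta_G}} 1 = 0$). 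Thus $(\vec{\theta_D^\star},\vec{\theta_G})$ is an equilibrium of the dynamical system. Now recall the blanket stipulation preceding Assumption~\ref{as:global-gen} that all assumptions are imposed not only at $(\vec{\theta_D^\star},\vec{\theta_G^\star})$ but at every equilibrium in a small neighborhood; applying Assumption~\ref{as:global-gen} to the newly exhibited equilibrium $(\vec{\theta_D^\star},\vec{\theta_G})$ yields $p_{\vec{\theta_G}}=p_{\rm data}$ on the nose. That is the missing step.
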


\begin{proof}

Note that $2\vec{K}_{DD}$ is the Hessian of the function $\mathbb{E}_{p_{\rm data}}[D^2_{\vec{\theta_D}} (x)]$ at equilibrium:

\begin{align*}
\left.\nabla^2_{\vec{\theta_D}}   \mathbb{E}_{p_{\rm data}} [D^2_{\vec{\theta_D}}(x)] \right\vert_{\vec{\theta^\star_D}}& =   2 \left.\der{\vec{\theta_D}}   {\mathbb{E}_{p_{\rm data}} [D_{\vec{\theta_D}}(x) \nabla_{\vec{\theta_D}} D(x)]} \right\vert_{\vec{\theta^\star_D}}  \\
& =   2 \left.\left( \mathbb{E}_{p_{\rm data}} [\nabla_{\vec{\theta_D}} D_{\vec{\theta_D}}(x) \nabla^T_{\vec{\theta_D}} D(x)]+\mathbb{E}_{p_{\rm data}} [\underbrace{D_{\vec{\theta_D}}(x)}_{\text{0 at eqbm}} \nabla^2_{\vec{\theta_D}} D(x)] \right) \right\vert_{\vec{\theta^\star_D}}  \\
& =   2 \left.\left( \mathbb{E}_{p_{\rm data}} [\nabla_{\vec{\theta_D}} {D_{\vec{\theta_D}}(x)} \nabla^T_{\vec{\theta_D}} D(x)] \right) \right\vert_{\vec{\theta^\star_D}}  = 2\vec{K}_{DD}\\
 \end{align*}

Then, by Assumption~\ref{as:convexity}, $\mathbb{E}_{p_{\rm data}}[D^2_{\vec{\theta_D}} (x)]$ is locally constant along any unit vector $\vec{u} \in \N(\vec{K}_{DD})$. That is, 
 for sufficiently small $\epsilon$, if $\vec{\theta_D} = \vec{\theta^\star_D} + \epsilon  \vec{u}$, $\mathbb{E}_{p_{\rm data}}[D^2_{\vec{\theta_D}} (x)]$ equals the value of the function at equilibrium, which is $0$ because $D_{\vec{\theta_D^\star}(x)} = 0$ (according to Assumption~\ref{as:global-gen}). Thus, we can conclude that for all $x$ in the support of $p_{\rm data}$, $D_{\vec{\theta_D}} (x) = 0$. Then, the generator update is zero, because

\[\vec{\dot{\theta}_G} =  - f(0)\int_{\rm{supp}(p_{\rm data})} \nabla_{\vec{\theta_G}} p_{\vec{\theta_G}}(x) dx =  - f(0) \nabla_{\vec{\theta_G}}\int_{\rm{supp}(p_{\rm data})}  p_{\vec{\theta_G}}(x) dx   =  - f(0) \nabla_{\vec{\theta_G}} 1 = 0. \]

 In other words, $\vec{\theta_D}$ is an equilibrium discriminator which when paired with any generator results in zero updates on the generator.

 Similarly, $2\vec{K}_{DG}^T \vec{K}_{DG}$ is the Hessian of the function $\left\| \mathbb{E}_{p_{\rm data}}[ \nabla_{\vec{\theta_D}} D_{\vec{\theta_D}} (x)  ]   -  \mathbb{E}_{p_{\vec{\theta_G}}}[ \nabla_{\vec{\theta_D}} D_{\vec{\theta_D}} (x)  ]   \right\|^2$ at equilibrium:

 \begin{align*}
 &\nabla_{\vec{\theta_G}}   \left\| \mathbb{E}_{p_{\rm data}}[ \nabla_{\vec{\theta_D}} D_{\vec{\theta_D}} (x)  ]   -  \mathbb{E}_{p_{\vec{\theta_G}}}[ \nabla_{\vec{\theta_D}} D_{\vec{\theta_D}} (x)  ]   \right\|^2 \\
 & = -2 \left( \int_{\mathcal{X}} \nabla_{\vec{\theta_G}} p_{\vec{\theta_G}}(x) \nabla_{\vec{\theta_D}} D_{\vec{\theta_D}} (x)  dx\right)  \left(\mathbb{E}_{p_{\rm data}}[ \nabla_{\vec{\theta_D}} D_{\vec{\theta_D}} (x)  ]   -  \mathbb{E}_{p_{\vec{\theta_G}}}[ \nabla_{\vec{\theta_D}} D_{\vec{\theta_D}} (x) ] \right) \\
 \implies &\left.\nabla^2_{\vec{\theta_G}}   \left\| \mathbb{E}_{p_{\rm data}}[ \nabla_{\vec{\theta_D}} D_{\vec{\theta_D}} (x)  ]   -  \mathbb{E}_{p_{\vec{\theta_G}}}[ \nabla_{\vec{\theta_D}} D_{\vec{\theta_D}} (x)  ]   \right\|^2\right\vert_{\vec{\theta^\star_D}, \vec{\theta^\star_G}} \\
 & =  \left. 2\left(\int_{\mathcal{X}} \nabla_{\vec{\theta_G}} p_{\vec{\theta_G}}(x) \nabla_{\vec{\theta_D}} D_{\vec{\theta_D}} (x)  dx  \right)\left(\int_{\mathcal{X}} \nabla_{\vec{\theta_G}} p_{\vec{\theta_G}}(x) \nabla_{\vec{\theta_D}} D_{\vec{\theta_D}} (x)  dx\right)^{T} \right\vert_{\vec{\theta_D^\star},\vec{\theta_G^\star}}\\
 & - 2\left. \left(\underbrace{\mathbb{E}_{p_{\rm data}}[ \nabla_{\vec{\theta_D}} D_{\vec{\theta_D}} (x)  ]   -  \mathbb{E}_{p_{\vec{\theta_G}}}[ \nabla_{\vec{\theta_D}} D_{\vec{\theta_D}} (x) ] }_{\text{0 at eqbm}}\right)^T \int_{\mathcal{X}}  \nabla_{\vec{\theta_D}} D_{\vec{\theta_D}} (x)\nabla^2_{\vec{\theta_G}} p_{\vec{\theta_G}}(x)  dx \right\vert_{\vec{\theta_D^\star},\vec{\theta_G^\star}} \\
 &=2\vec{K}_{DG}^T \vec{K}_{DG}
 \end{align*}

Then, by Assumption~\ref{as:convexity}, $\left\| \mathbb{E}_{p_{\rm data}}[ \nabla_{\vec{\theta_D}} D_{\vec{\theta_D}} (x)  ]   -  \mathbb{E}_{p_{\vec{\theta_G}}}[ \nabla_{\vec{\theta_D}} D_{\vec{\theta_D}} (x)  ]   \right\|^2$  is locally constant along any unit vector $\vec{v} \in \N(\vec{K}_{DG})$. That is, 
 for sufficiently small $\epsilon'$, if $\vec{\theta_G} = \vec{\theta^\star_G} + \epsilon'  \vec{v}$, $\left\| \mathbb{E}_{p_{\rm data}}[ \nabla_{\vec{\theta_D}} D_{\vec{\theta_D}} (x)  ]   -  \mathbb{E}_{p_{\vec{\theta_G}}}[ \nabla_{\vec{\theta_D}} D_{\vec{\theta_D}} (x)  ]   \right\|^2$  equals the value of the function at equilibrium, which is $0$ because $p_{\vec{\theta_G^\star}} = p_{\rm data}$ (according to Assumption~\ref{as:global-gen}).

 Now, we can't immediately conclude that $\vec{\theta}_G$ corresponds to the true distribution. To show that, we first note that that at $(\vec{\theta_D^\star}, \vec{\theta_G})$, the discriminator update, whose magnitude is equal to $|f'(0)| \cdot \left\| \mathbb{E}_{p_{\rm data}}[ \nabla_{\vec{\theta_D}} D_{\vec{\theta_D}} (x)  ]   -  \mathbb{E}_{p_{\vec{\theta_G}}}[ \nabla_{\vec{\theta_D}} D_{\vec{\theta_D}} (x)  ]   \right\|$, is zero. However, as we have seen at $\vec{\theta_D^\star}$ the generator update is zero too.
Therefore, $(\vec{\theta_D^\star}, \vec{\theta_G})$ is an equilibrium point (both updates are zero) and from Assumption~\ref{as:global-gen} we can conclude that $p_{\vec{\theta_G}} = p_{\rm data}$. Thus, $\vec{\theta_G}$ is an equilibrium generator i.e., when paired with any equilibrium discriminator, the discriminator updates are zero.

In summary, for all slight perturbations along $\vec{u} \in \N(\vec{K}_{DD}), \vec{v} \in \N(\vec{K}_{DG})$ we have established that the discriminator and generator individually satisfy the requirements of an equilibrium discriminator and generator pair, and therefore the system is itself in equilibrium for these perturbations.
\end{proof}

Now, we show how to rotate and project the system to get a Hurwitz Jacobian matrix.

\begin{lemma}
\label{lem:projection}
For the dynamical system defined by the GAN objective in
Equation~\ref{eq:generic_gan} and the updates in
Equation~\ref{eq:undamped_updates},
consider the eigenvalue decompositions $\vec{K}_{DD} = \vec{U_D} \vec{\Lambda_D} \vec{U_{D}}^T$ and $\vec{K}_{DG}^T \vec{K}_{DG} = \vec{U_G} \vec{\Lambda_G} \vec{U_{G}}^T$. Let $\vec{U_D} = [\vec{T}_D^T, \vec{T}_D'^T]$ and $\vec{U_G} = [\vec{T}_G^T, \vec{T}_G'^T]$ such that $\C(\vec{T}_{D}'^T) = \N(\vec{K}_{DD})$ and $\C(\vec{T}_{G}'^T) = \N(\vec{K}_{DG})$. Consider the projections, $\vec{\gamma_D} = \vec{T}_D \vec{\theta}_D$ and $\vec{\gamma_G} = \vec{T}_G \vec{\theta}_G$. Then, the block in the Jacobian at equilibrium that corresponds to the projected system has the form:
\[
\vec{J}' = \begin{bmatrix}
\vec{J}_{DD}' & \vec{J}_{DG}' \\
-\vec{J}_{DG}'^T & 0 
\end{bmatrix} = 
\begin{bmatrix}
2f''(0) \vec{T}_D \vec{K}_{DD} \vec{T}_{D}^T &  f'(0) \vec{T}_D \vec{K}_{DG} \vec{T}_{G}^T \\
-f'(0)\vec{T}_{G}\vec{K}_{DG}^T \vec{T}_D^T & 0
\end{bmatrix}
\]
Under Assumption~\ref{as:loss}, we have that $\vec{J}_{DD}' \prec 0$ and $\vec{J}_{DG}'$ is full column rank.
\end{lemma}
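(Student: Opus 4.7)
The plan is to view the change of variables $(\vec{\theta_D},\vec{\theta_G})\mapsto (\vec{U}_D^T\vec{\theta_D},\vec{U}_G^T\vec{\theta_G})$ as an orthogonal rotation and then read off the block of the rotated Jacobian that acts on the ``non-null'' coordinates $(\vec{\gamma_D},\vec{\gamma_G})$. Under this rotation, the Jacobian from Lemma~\ref{lem:jacobian} becomes conjugated by $\mathrm{diag}(\vec{U}_D,\vec{U}_G)$; its $DD$ subblock is simply $2f''(0)\vec{\Lambda}_D$, which already vanishes on everything touching $\vec{T}_D'$ and reduces to $2f''(0)\vec{T}_D\vec{K}_{DD}\vec{T}_D^T$ on the top-left corner. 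To isolate $\vec{J}'$ cleanly, I further need the off-diagonal block $f'(0)\vec{U}_D^T\vec{K}_{DG}\vec{U}_G$ to vanish except at its top-left entry $f'(0)\vec{T}_D\vec{K}_{DG}\vec{T}_G^T$. The subblocks containing $\vec{T}_G'^T$ vanish for free, because $\C(\vec{T}_G'^T)=\N(\vec{K}_{DG}^T\vec{K}_{DG})=\N(\vec{K}_{DG})$. The subblocks containing $\vec{T}_D'$ are the delicate part: they require the inclusion $\N(\vec{K}_{DD})\subseteq\N(\vec{K}_{DG}^T)$, which I expect to be the crux of the argument.

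I would establish this inclusion as follows. For any $\vec{u}\in\N(\vec{K}_{DD})$, expanding $\vec{u}^T\vec{K}_{DD}\vec{u}=\mathbb{E}_{p_{\rm data}}[(\nabla_{\vec{\theta_D}}^T D_{\vec{\theta_D^\star}}(x)\vec{u})^2]=0$ forces $\nabla_{\vec{\theta_D}}^T D_{\vec{\theta_D^\star}}(x)\vec{u}=0$ on $\mathrm{supp}(p_{\rm data})$. Assumption~\ref{as:same-support} then tells me that nearby generators share this support, so $\nabla_{\vec{\theta_G}} p_{\vec{\theta_G^\star}}(x)=0$ for $x$ outside this common support (on any such region $p_{\vec{\theta_G}}$ integrates to the constant $0$, so its generator-derivative vanishes there). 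Splitting the integral defining $\vec{u}^T\vec{K}_{DG}$ into on-support and off-support pieces, the former vanishes because $\nabla_{\vec{\theta_D}}^T D(x)\vec{u}=0$ there and the latter vanishes because $\nabla_{\vec{\theta_G}}^T p_{\vec{\theta_G^\star}}(x)=0$ there. Hence $\vec{u}^T\vec{K}_{DG}=0$, i.e.\ $\C(\vec{K}_{DG})\subseteq\C(\vec{K}_{DD})=\C(\vec{T}_D^T)$.

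With this in hand the two remaining claims are routine. By construction $\vec{T}_D\vec{K}_{DD}\vec{T}_D^T$ is the diagonal matrix $\vec{\Lambda}_D^+$ of strictly positive eigenvalues of $\vec{K}_{DD}$, so $\vec{J}'_{DD}=2f''(0)\vec{\Lambda}_D^+\prec 0$ using $f''(0)<0$ from Assumption~\ref{as:loss}. For full column rank of $\vec{J}'_{DG}$, the analogous identity $\vec{T}_G\vec{K}_{DG}^T\vec{K}_{DG}\vec{T}_G^T=\vec{\Lambda}_G^+\succ 0$ shows that $\vec{K}_{DG}\vec{T}_G^T$ is injective; since its image lies in $\C(\vec{K}_{DG})\subseteq\C(\vec{T}_D^T)$ by the inclusion just proved, and $\vec{T}_D$ acts as an isometry on $\C(\vec{T}_D^T)$ (it inverts $\vec{T}_D^T$ there), left-multiplication by $\vec{T}_D$ preserves injectivity; combined with $f'(0)\neq 0$ this yields full column rank. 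The hard part is the nullspace inclusion in the second paragraph: without Assumption~\ref{as:same-support} there is no reason for flat directions of the discriminator covariance $\vec{K}_{DD}$ to be flat for the cross term $\vec{K}_{DG}^T$, and this single inclusion is what simultaneously block-diagonalizes the rotated Jacobian into $\vec{J}'$ and guarantees that $\vec{J}'_{DG}$ stays injective after restricting to the non-null $\vec{\theta_D}$-directions.
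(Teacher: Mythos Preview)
Your proposal is correct and follows essentially the same approach as the paper: identify the projected Jacobian block by conjugation, recognize $\vec{T}_D\vec{K}_{DD}\vec{T}_D^T=\vec{\Lambda}_D^{(+)}$ and $\vec{T}_G\vec{K}_{DG}^T\vec{K}_{DG}\vec{T}_G^T=\vec{\Lambda}_G^{(+)}$, and then argue the key inclusion $\N(\vec{K}_{DD})\subseteq\N(\vec{K}_{DG}^T)$ via the on-support/off-support split of the integral defining $\vec{K}_{DG}$. One small difference worth noting: to get $\vec{u}^T\nabla_{\vec{\theta_D}}D_{\vec{\theta_D^\star}}(x)=0$ on $\mathrm{supp}(p_{\rm data})$, the paper routes through Lemma~\ref{lem:eqspace} (hence Assumption~\ref{as:convexity}), whereas you obtain it directly from $\vec{u}^T\vec{K}_{DD}\vec{u}=\mathbb{E}_{p_{\rm data}}\big[(\vec{u}^T\nabla_{\vec{\theta_D}}D_{\vec{\theta_D^\star}}(x))^2\big]=0$; your argument is slightly more self-contained for this step, but otherwise the two proofs coincide.
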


\begin{proof}
Note that the columns of $\vec{U}_D$ and $\vec{U}_G$ correspond to eigenvectors, and furthermore, the rows of $\vec{T}_D'$ and $\vec{T}_G'$ are the eigenvectors that correspond to zero eigenvalues. These eigenvectors correspond to a local subspace of equilibria and 
 the above lemma considers a projection of the system to a space orthogonal to this subspace.

We first address a corner case where either $\vec{T}_{D}$ or $\vec{T}_{G}$ (the eigenvectors with non-zero eigenvalues) is empty. In the case that $\vec{T}_{D}$ is empty, it means that all discriminators in a neighborhood of the considered equilibrium are identically zero on the support of the true distribution (as proved in Lemma~\ref{lem:eqspace}). Then, for any generator, the discriminator update would be zero (because moving the discriminator in any direction locally does not result in a change in the objective). At the same time, the generator update would be zero too because these are all equilibrium discriminators. This means that the considered point is surrounded by a neighborhood of equilibria. Then, the system is trivially exponentially stable since any sufficiently close initialization is already at equilibrium. 

Similarly when $\vec{T}_G$ is empty it means that all generators in a small neighborhood have the same distribution, namely the true underlying distribution (as proved in Lemma~\ref{lem:eqspace}). Then, the generator update for any discriminator would be zero (changing the generator slightly in any direction does not change the generated distribution, and hence the objective). Furthermore, since these are equilibrium generators, the discriminator updates would be zero too, for any discriminator. Thus, again we are situated in a neighborhood of equilibria and the system is trivially exponentially stable.

Now we handle the general case. First note that, the Jacobian block of the projected variables must be

\begin{align*}
\left(
\begin{bmatrix}
{\vec{T_D}} \\
{\vec{T_G}} \\
\end{bmatrix}
\vec{J}
\begin{bmatrix}
{\vec{T_D}}^T &
{\vec{T_G}}^T 
\end{bmatrix} \right)
 = \begin{bmatrix}
2f''(0) \vec{T}_D \vec{K}_{DD} \vec{T}^T_{D} &  f'(0) \vec{T}_D \vec{K}_{DG} \vec{T}^T_{G} \\
-f'(0)\vec{T}_{G}\vec{K}^T_{DG} \vec{T}^T_D & 0
\end{bmatrix}
\end{align*}
where $\vec{J}$ is the Jacobian of the original system which we derived in Lemma~\ref{lem:jacobian}. Now note that, $\vec{T}_D \vec{K}_{DD} \vec{T}_{D}^T  =\vec{T}_D \vec{U_D} \vec{\Lambda_D} \vec{U_{D}}^T \vec{T}_{D}^T = \vec{\Lambda}^{(+)}_D $ which is a diagonal matrix with only the positive eigenvalues. Therefore, since $f''(0) < 0$, $\vec{J}_{DD}' \prec 0$.

Next, in a similar manner we can show that $\vec{T}_{G} \vec{K}_{DG}^T\vec{K}_{DG} \vec{T}_{G}^T = \vec{\Lambda}^{(+)}_G$, which is a diagonal matrix with only positive eigenvalues. Thus, $\vec{K}_{DG} \vec{T}_{G}^T $ is full column rank.  The non-trivial step here is to show that the matrix $ \vec{T}_D \vec{K}_{DG} \vec{T}_{G}^T $ which has fewer rows is full column rank too. This will follow if we showed that for any $\vec{u}$ such that $\vec{u}^T \vec{K}_{DD} = 0$, $\vec{u}^T\vec{K}_{DG} = 0$ too. That is, the left null space of $\vec{K}_{DD}$ is a subset of the left null space of $\vec{K}_{DG}$ and therefore projecting to the row span of $\vec{K}_{DD}$ does not hurt the row rank of $\vec{K}_{DG}$. 

To see why this is true, observe that from Lemma~\ref{lem:eqspace} for any small perturbation along such a $\vec{u}$, since we are always at an equilibrium discriminator i.e., $\vec{D}_{\vec{\theta_D}} (x) = 0$ for $x$ in the true support,  it must be that $\vec{u}^T\nabla_{\vec{\theta_D}}\vec{D}_{\vec{\theta_D}} (x) = 0$. Furthermore, recall from our derivation of the Jacobian that $\nabla{\vec{\theta_G}} p_{\vec{\theta_G}}(x) = 0$ for $x$ outside of this support.  Then,

\begin{align*}
\vec{u}^T \vec{K}_{DG} &= \left. \int_{\mathcal{X}} \underbrace{\vec{u}^T   \nabla_{\vec{\theta_D}} D_{\vec{\theta_D}}(x)}_{0 \text{ inside supp}}  \underbrace{\nabla^T_{\vec{\theta_G}} p_{\vec{\theta_G}}(x)}_{0 \text{ outside supp}}  dx\right\vert_{\vec{\theta_D} = \vec{\theta^\star_D}, \vec{\theta_G} = \vec{\theta^\star_G}}=0\\
\end{align*}
Therefore, since $f'(0) \neq 0$, this means $f'(0) \vec{T}_D \vec{K}_{DG} \vec{T}_{G}^T$ is full column rank.

\end{proof}

The main theorem then follows from the above lemmas.

\generalstability*

\begin{proof}
We have from Lemma~\ref{lem:eqspace} that the considered equilibrium point lies in a subspace of equilibria in a small neighborhood. Then, we have from Lemma~\ref{lem:projection} that the Jacobian block corresponding to the subspace orthogonal to this, satsifies properties from Lemma~\ref{lem:undamped-bound} which make it Hurwitz. We can then conclude exponential stability of the system from Theorem~\ref{thm:multiple-equilibria}.  The eigenvalue bounds presented in the theorem follow from Lemma~\ref{lem:undamped-bound}.
\end{proof}

Finally, we show that we can indeed find a Lyapunov function that satisfies LaSalle's principle for the projected linearized system. 

\begin{fact}
For the linearized projected system with the Jacobian $\vec{J}'$, we have that 
$1/2 \| \vec{\gamma}_D - \vec{\gamma^\star}_D \|^2 + 1/2 \| \vec{\gamma}_G - \vec{\gamma^\star}_G \|^2$ is a Lyapunov function such that for all non-equilbrium points, it either always decreases or only instantaneously remains constant.
\end{fact}

\begin{proof}
Note that the Lyapunov function is zero only at the equilibrium of the projected system. Furthermore, it is straightforward to verify that the rate at which this changes is given by $ f''(0)  (\vec{\gamma}_D - \vec{\gamma^\star}_D)^T \vec{T}_D^T \vec{K}_{DD} \vec{T}_D (\vec{\gamma}_D - \vec{\gamma^\star}_D)$. Observe that the generator terms have canceled out. Clearly this is zero only when $\vec{\gamma}_D = \vec{\gamma^\star}_D$  because $\vec{T}_D^T \vec{K}_{DD} \vec{T}_D$ is positive definite; otherwise it is strictly negative. Now, when this rate is indeed zero, we have that $\vec{\dot{\gamma}_D} = f'(0) \vec{T}_D \vec{K}_{DG} \vec{T}_{G}^T ( \vec{\gamma}_G - \vec{\gamma^\star}_G )$ because the other term in the update which is proportional to $\vec{K}_{DD} (\vec{\gamma}_D - \vec{\gamma^\star}_D)$ is zero. Now, again, this term is zero only when  $\vec{\gamma}_G = \vec{\gamma^\star}_G$ because $ \vec{T}_D \vec{K}_{DG} \vec{T}_{G}^T$ is full column rank. Thus, when we are not at equilibrium which means $\vec{\gamma}_G \neq \vec{\gamma^\star_G}$, the update on the discriminator parameters is nonzero i.e., $\vec{\dot{\gamma}_D} \neq 0$. In other words, it does not identically stay in the manifold $\vec{\gamma}_D=0$ on which the energy does not decrease.
\end{proof}

\subsection{Realizable case with a relaxed assumption}
\label{app:realizable-relaxed}

In this section, we will relax Assumption ~\ref{as:same-support} and prove stability under certain conditions. Specifically, recall that originally we required the equilibrium generator to share the same support with any perturbation of the generator. Now, we will allow the generator to have different supports when perturbed, and instead impose conditions on the discriminator. 

Our first condition is that the equilibrium discriminator must be zero not only on the support of $\vec{\theta}_G^\star$ but also on the supports of small perturbations of $\vec{\theta^\star_G}$. If this were not true, $\vec{\theta^\star_G}$ may not be at equilibrium as the slope of the discriminator function $D_{\vec{\theta_D^\star}}(x)$ may be non-zero at the boundaries of ${\rm supp}(p_{\vec{\theta}_G}^\star)$ in $\mathcal{X}$, thus potentially encouraging the generator to push data points away from the true support.
 
To motivate our second condition, recall from  Assumption~\ref{as:convexity}, we have that there could be directions along which we can perturb $\vec{\theta_D^\star}$, while ensuring that the discriminator still outputs zero on ${\rm supp}(p_{\vec{\theta^\star_G}})$. The intention behind allowing this was that these directions could allow other equivalent equilibrium discriminators in the neighborhood of $\vec{\theta_D^\star}$. However, under the relaxation of Assumption ~\ref{as:same-support} that we are now aiming for,  these perturbations will correspond to equilibrium discriminators only if they satsify the above condition i.e., that they are zero on the support of perturbations of $\vec{\theta^\star_G}$ too. We need to explicitly assume that this holds as we describe below. \footnote{Thanks to Lars Mescheder for identifying that such a condition was missing in earlier versions of this paper.}

%Similarly, we require this from all perturbations of the discriminator within the null space of $\vec{K}_{DD}$ so that these perturbations correspond to only equivalent equilibrium discriminators. 
 To state this assumption using the terminology we've developed so far, recall that imposing Property~\ref{prop:convex} on the function $\mathbb{E}_{p_{\rm data}}[D^2_{\vec{\theta_D}} (x)]$ at $\vec{\theta_D^\star}$ (where it attains its minimum of zero) implied that perturbations of $\vec{\theta^\star_D}$ along the flat directions of the function retains the property that the discriminator is zero on the support of $p_{\rm data}$ (i.e., $p_{\vec{\theta_G^\star}}$).  Extending this, we will assume that this property holds at $\vec{\theta^\star_D}$ for the functions $\mathbb{E}_{p_{\vec{\theta}_G}}[D^2_{\vec{\theta_D}} (x)]$ corresponding to every small perturbation $\vec{{\theta}_G}$ of $\vec{\theta_G^\star}$. Furthermore, the flat directions of all these functions must be identical so that perturbing $\vec{\theta_D^\star}$ along these directions guarantees that all these functions are zero. Then, the output of the perturbed discriminator would be zero on the support of all perturbations of $\vec{\theta^\star_G}$.

Formally, we can state  these  assumptions as follows:

\textbf{Assumption ~\ref{as:same-support}} (\textbf{Relaxed})
$\exists \epsilon_G, \epsilon_D > 0$  such that for all $\vec{\theta_G} \in B_{\epsilon_G}(\vec{\theta^\star_G})$:
\begin{enumerate}
	\item for all $x \in  {\rm supp} (p_{\vec{\theta_G}})$, $ D_{\vec{\theta_D^\star}}(x) = 0$.
	\item at $(\vec{\theta^\star_D}, \vec{\theta_G})$, the function $\mathbb{E}_{p_{\vec{\theta_G}}}[D^2_{\vec{\theta_D}} (x)]$ satisfies Property~\ref{prop:convex} in the discriminator space and furthermore, 
	$\Null\left(\left. \nabla^2_{\vec{\theta}_D} \mathbb{E}_{p_{\rm data}}[D^2_{\vec{\theta_D}} (x)]\right
	\vert_{\vec{\theta_D}=\vec{\theta_D^{\star}}}\right) = \Null\left(\left. \nabla^2_{\vec{\theta}_D} \mathbb{E}_{p_{\vec{\theta_G}}}[D^2_{\vec{\theta_D}} (x)]\right\vert_{\vec{\theta_D}=\vec{\theta_D^{\star}}}\right)$.
	\end{enumerate}
%for any equilibrium discriminator $\vec{\theta_D^\star}$, 

%In this section, we prove stability for Assumption ~\ref{as:same-support} (\textbf{Relaxed}) where we relaxed 
%the assumption that all generators in a small neighborhood around equilibrium share the same support. % (which was primarily made in the main paper for the sake of a simpler discussion). 

\subparagraph{Examples.} It is useful to illustrate simple examples that satisfy or break the two conditions above, for a clearer picture of what these assumptions imply. First, as an example that satisfies these conditions (and not the original Assumption~\ref{as:same-support}), consider a system where $p_{\rm data}$ is uniform over $[-1,1]$ ($\mathcal{X} = \mathbb{R}$), the generator is a uniform distribution over an interval parametrized as $[-\theta_G, \theta_G]$, and the discriminator is any polynomial, for example, a linear function $\theta_D x$. Note that at equilibrium $\theta_D=0$. Then, it can be verified that for this system the Hessian of $\mathbb{E}_{p_{\rm data}}[D^2_{\vec{\theta_D}} (x)]$  is positive definite at equilibrium, thus trivially satisfying the second assumption. 

As a simple example that breaks these assumptions, specifically condition (2) above\footnote{Thanks to Lars Mescheder for identifying this example.}, consider a system  where $p_{\rm data}$ is just a point mass at $0$ ($\mathcal{X} = \mathbb{R}$), the generator is also a point mass at $\theta_G$ and the discriminator is a linear function $\theta_D x$. Again, at equilibrium $\theta_D = 0$ and $\theta_G = 0$. Surprisingly, even though this is a unique equilibrium, the Hessian of $\mathbb{E}_{p_{\rm data}}[D^2_{\vec{\theta_D}} (x)]$  at equilibrium turns out to be zero. Thus, the null space of $\nabla^2_{\vec{\theta}_D} \mathbb{E}_{p_{\rm data}}[D^2_{\vec{\theta_D}} (x)]$ at equilibrium corresponds to the whole parameter space. On the other hand,  at equilibrium $\nabla^2_{\vec{\theta}_D} \mathbb{E}_{p_{\vec{\theta_G}}}[D^2_{\vec{\theta_D}} (x)] = 2\theta_G^2$, which is non-zero for any $\theta_G$ arbitrarily close to equilibrium. Thus, in the second condition above, while we have a null space for the first Hessian, there is no null space for the second Hessian, thereby breaking the condition. It can be shown that this system which breaks the condition is in fact not locally exponentially stable!\\

We now show that if these conditions hold, local exponentially stability holds too.

\begin{proof}
Most of the original proof holds as it is because all we needed was that the equilibrium discriminator be identically zero on the true support. We will prove only parts of the proof that required more than just this.

First, we extend Lemma~\ref{lem:eqspace} for this assumption. First, observe that any vector $\vec{u} \in \N(\vec{K_{DD}})$, also satisfies $\vec{u} \in \Null\left(\left. \nabla^2_{\vec{\theta}_D} \mathbb{E}_{p_{\vec{\theta_G}}}[D^2_{\vec{\theta_D}} (x)]\right\vert_{\vec{\theta_D}=\vec{\theta_D^{\star}}}\right)$  for all $\vec{\theta_G} \in B_{\epsilon}(\vec{\theta_G^\star})$ by the second condition in Assumption~\ref{as:same-support}. Then for any $\vec{\theta_D} = \vec{\theta_D^\star} + \epsilon \vec{u}$, $D_{\vec{\theta_D}}(x) = 0$ for all $x$ in the support of $p_{\vec{\theta_G}}$ where $\vec{\theta_G} \in B_{\epsilon}(\vec{\theta_G^\star})$. 
Then, we can show that any perturbation of the discriminator within the null space of $\vec{K}_{DD}$ is an `equilibrium discriminator' which when paired with any generator in small neighborhood around $\vec{\theta^\star_G}$, results in zero updates on the generator.  To prove this, recall that $\vec{\dot{\theta}_G}$ consists of two terms integrated over $\mathcal{X}$, $D_{\vec{\theta_D}}(x)$ and $\nabla_{\vec{\theta_G}} p_{\vec{\theta_G}}(x)$. In our previous proof under the original version of Assumption~\ref{as:same-support}, we used an intricate fact about these two terms. In particular, we said that for a generator within a radius of $\epsilon_G/2$ from equilibrium (where $\epsilon_G$ is as defined in the original version of Assumption~\ref{as:same-support}), i) the support of $p_{\vec{\theta_G}}$ is the same as $p_{\rm data}$ and therefore $D_{\vec{\theta}_D}(x) = 0$ for all $x$ in the true support and ii) for all $x$ not in the true support,  and for any generator $\vec{\theta_G} \in  B_{\epsilon_G/2}(\vec{\theta^\star_G})$,  $\nabla_{\vec{\theta_G}} p_{\vec{\theta_G}}(x) = 0$. 

In this case, we only have a weaker guarantee that for a generator within a perturbation of $\epsilon_G/2$ from $\vec{\theta_G^\star}$, the support is contained in the combined support $ \bigcup_{\vec{\theta_G} \in B_{\epsilon_G}(\vec{\theta^\star_G})} {\rm supp} (p_{\vec{\theta_G}})$. But then, i)
 for all $x$ in the combined support we have that  $D_{\vec{\theta_D}}(x) = 0$ and ii) for all $x$ not in the combined support and for any generator $\vec{\theta_G} \in  B_{\epsilon_G/2}(\vec{\theta^\star_G})$, $\nabla_{\vec{\theta_G}} p_{\vec{\theta_G}} (x)=0$. Then, the generator updates are:

\begin{align*}\vec{\dot{\theta}_G} = & - \int_{\mathcal{X}} f(-\underbrace{D_{\vec{\theta_D}}(x)}_{\substack{0 \text{ inside} \\ \text{combined supp}}}) \underbrace{\nabla_{\vec{\theta_G}} p_{\vec{\theta_G}}(x)}_{\substack{0 \text{ outside} \\ \text{combined supp}}} dx =  - f(0) \nabla_{\vec{\theta_G}}\int_{ \bigcup_{\vec{\theta_G} \in B_{\epsilon_G}(\vec{\theta^\star_G})} {\rm supp} (p_{\vec{\theta_G}})}  p_{\vec{\theta_G}}(x) dx \\
&  =  - f(0) \nabla_{\vec{\theta_G}} 1 = 0 \end{align*}

The second part of Lemma~\ref{lem:eqspace} holds similarly.

We need to make a similar argument to prove that the generator's Hessian $\vec{J}_{GG} = 0$ at equilibrium. 
%Now, (i) still holds because as a corollary of A5b,  \[\forall x \in \bigcup_{\vec{\theta_G} \in B_{\epsilon_G}(\vec{\theta^\star_G})} {\rm supp} (p_{\vec{\theta_G}}) \; | D_{\vec{\theta^\star_D}}(x) |\leq \max_{x' \in {\rm supp}(p_{\vec{\theta^\star_G}})}  |D_{\vec{\theta^\star_D}}(x') | = 0\]. That is, even though the neighboring generators might have different supports, we are guaranteed that  the discriminator is zero on the combined support because we know there are no spikes in the discriminator's output. 

%Thus, in this case too, we can show that $\vec{J}_{GG} = 0$:

\begin{align*}
 \vec{v}^T \left. \frac{\partial  \vec{\dot{\theta}_G}}{\partial \vec{\theta_G}}  \right\vert_{\vec{\theta_D}  =  \vec{\theta^\star_D}, \vec{\theta_G} =   \vec{\theta^\star_G}}  &= \vec{v}^T \left. \frac{\partial \left. \vec{\dot{\theta}_G} \right\vert_{\vec{\theta_D} = \vec{v}^T \vec{\theta^\star_D}} }{\partial \vec{\theta_G}} \right\vert_{\vec{\theta_G} = \vec{\theta^\star_G}} =-\lim_{\substack{\vec{\theta_G}- \vec{\theta^\star_G}= \epsilon \vec{v} \\ \epsilon \to 0}} \frac{ \int_{\mathcal{X}} 
 f(-\overbrace{D_{\vec{\theta^\star_D}}(x)}^{\substack{0 \text{ inside} \\ \text{combined supp}}})
 \overbrace{\nabla^T_{\vec{\theta_G}} p_{\vec{\theta_G}}(x)  }^{\substack{0 \text{ outside} \\ \text{combined supp}}}
   dx  }{ \epsilon}   \\
&=- f(0) \lim_{\substack{\vec{\theta_G}- \vec{\theta^\star_G}= \epsilon \vec{v} \\ \epsilon \to 0}} \frac{ \int_{ \bigcup_{\vec{\theta_G} \in B_{\epsilon_G}(\vec{\theta^\star_G})} {\rm supp} (p_{\vec{\theta_G}})} \nabla^T_{\vec{\theta_G}} p_{\vec{\theta_G}}(x)  dx  }{\epsilon} \\
& =  - f(0)\lim_{\substack{\vec{\theta_G}- \vec{\theta^\star_G}= \epsilon \vec{v} \\ \epsilon \to 0}} \frac{  \nabla^T_{\vec{\theta_G}} \int_{ \bigcup_{\vec{\theta_G} \in B_{\epsilon_G}(\vec{\theta^\star_G})} {\rm supp} (p_{\vec{\theta_G}})} p_{\vec{\theta_G}}(x)  dx  }{\epsilon}  \\
& =   - f(0) \lim_{\substack{\vec{\theta_G}- \vec{\theta^\star_G}= \epsilon \vec{v} \\ \epsilon \to 0}}  \frac{  \nabla^T_{\vec{\theta_G}} 1  }{\epsilon}  = 0 \\
\end{align*}
A similar modification of the proof can be done for Lemma~\ref{lem:projection} where we show that $\vec{T}_{D} \vec{K}_{DG} \vec{T}_{G}^T$ has the same column rank as $\vec{K}_{DG} \vec{T}_{G}^T$.
The rest of the proof follows as it did.
\end{proof}

\subsection{The non-realizable case}
\label{app:nonrealizable}
In this section, we extend our results about local stability of GANs to the case in which the true distribution can not be represented by any generator in the generator space. 
%Our aim is to study any arbitrary equilibrium point for such a system. 
 While this is a hard problem in general,  we consider a specific case in which the discriminator is linear in its parameters and show that the system is locally stable at any equilibrium and its surrounding equilibria (none of which may correspond to the true distribution). More formally, consider a discriminator of the form:
\[
D_{\vec{\theta}_D}(x) = \vec{\theta}_D^T \vec{\upphi}(x)
\]

where $\vec{\upphi}$ is any feature mapping. For example, $\vec{\upphi}(x)$ could be a polynomial basis or the representation learned by a neural network (which we assume is not trained during the updates near equilibrium). Thus, the objective in this case is:
\[
V(D_{\vec{\theta_D}}, G_{\vec{\theta_G}})  = \mathbb{E}_{p_{\rm data}} [ f(\vec{\theta}_D^T \vec{\upphi}(x))] + \mathbb{E}_{p_{\vec{\theta_G}}} [ f(\vec{\theta}_D^T \vec{\upphi}(x))] 
\]

%Let us now go through the assumptions A1-5. 

We consider a generator space that does not necessarily contain the true distribution, but however contains a generator $\vec{\theta^{\star}_G}$ that is an equilibrium point when paired with a discriminator that is zero on the support of the true data and the generated data. It must be noted that $\vec{\theta^\star_D}=\vec{0}$ is not necessarily the only equilibrium discriminator. Especially, if $\vec{\upphi}$ lies in a lower dimensional manifold, there could be a subspace of all-zero discriminators. Now, for such a generator to exist, we need:
\begin{align*}
\nabla_{\vec{\theta_D}} V(D_{\vec{\theta_D}}, G_{\vec{\theta_G}})  \vert_{(\vec{\theta^{\star}_D}, \vec{\theta^{\star}_G})}=  0 \\
\implies \mathbb{E}_{p_{\rm data}} [ \vec{\upphi}(\vec{x})] = \mathbb{E}_{p_{\vec{\theta^{\star}_G}}} [ \vec{\upphi}(\vec{x})]
\end{align*}

In other words, we want the means of the generated distribution and the true distribution in the representation $\vec{\upphi}$ to be identical. For a given generator space, this essentially is a restriction on the representation $\vec{\upphi}$ that has been learned/chosen for the discriminator. If $\vec{\upphi}$ was a richer representation that computes many higher order moments of the data, we may never find an equilibrium generator. %Now, we can redefine our notion of a globally optimal equilibrium. 

We now prove Theorem~\ref{thm:general-stability} for the non-realizable case. 
Our main idea is identical to that of the proof in the realizable case. However, we need to be careful in a number of steps.  We first prove a result similar to Lemma~\ref{lem:jacobian} that derives the Jacobian of the system at equilibrium.

\begin{lemma}
\label{lem:jacobian-nonrealizable}
For the dynamical system defined by the GAN objective in
Equation~\ref{eq:generic_gan} and the updates in
Equation~\ref{eq:undamped_updates}, the Jacobian at 
an equilibrium point  
$(\vec{\theta^\star_D},\vec{\theta^\star_G})$, under the Assumptions~\ref{as:global-gen} (for the non-realizable case) and ~\ref{as:same-support} is:
\[
\vec{J}= 
\begin{bmatrix}
\vec{J}_{DD}& \vec{J}_{DG} \\
-\vec{J}_{DG}^T & \vec{J}_{GG} \\
\end{bmatrix} = 
\begin{bmatrix}
2f''(0) \vec{K}_{DD} &f'(0)
\vec{K}_{DG} \\ 
-f'(0) \vec{K}_{DG} ^T & 0 \\
\end{bmatrix} 
\]
where \[ 2\vec{K}_{DD} \triangleq \left.  \mathbb{E}_{p_{\rm data}} [(\nabla_{\vec{\theta_D}}
    D_{\vec{\theta_D}}(x)) (\nabla_{\vec{\theta_D}} D_{\vec{\theta_D}}(x))^T] + \mathbb{E}_{p_{\vec{\theta_G^\star}}} [(\nabla_{\vec{\theta_D}}
    D_{\vec{\theta_D}}(x)) (\nabla_{\vec{\theta_D}} D_{\vec{\theta_D}}(x))^T] \right\vert_{\vec{\theta^\star_D}} \succeq 0\]  and 
 \[\vec{K}_{DG} \triangleq\left. \int_{\mathcal{X}} \nabla_{\vec{\theta_D}} D_{\vec{\theta_D}}(x)  \nabla^T_{\vec{\theta_G}} p_{\vec{\theta_G}}(x)  dx\right\vert_{\vec{\theta_D} = \vec{\theta^\star_D}, \vec{\theta_G} = \vec{\theta^\star_G}}\]
\end{lemma}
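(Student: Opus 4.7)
The plan is to mirror the derivation of Lemma~\ref{lem:jacobian} almost line for line, but to substitute its key algebraic cancellation by the unconditional vanishing of $\nabla^2_{\vec{\theta_D}} D_{\vec{\theta_D}}$ that linearity now provides. I would first rewrite the latent expectation as an integral against the generator density,
\[
V = \mathbb{E}_{p_{\rm data}}[f(D_{\vec{\theta_D}}(x))] + \int p_{\vec{\theta_G}}(x)\, f(-D_{\vec{\theta_D}}(x))\, dx,
\]
so that $\vec{\theta_G}$ enters only through $p_{\vec{\theta_G}}$ and $\vec{\theta_D}$ only through $D_{\vec{\theta_D}}$. The discriminator dynamics become $\vec{\dot{\theta}_D} = \mathbb{E}_{p_{\rm data}}[f'(D)\nabla_{\vec{\theta_D}} D] - \mathbb{E}_{p_{\vec{\theta_G}}}[f'(-D)\nabla_{\vec{\theta_D}} D]$, while the generator dynamics are identical to those in Lemma~\ref{lem:jacobian}.

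To compute $\vec{J}_{DD}$, I would differentiate both expectations in $\vec{\theta_D}$. Each produces an $f''$-term (an outer product of $\nabla_{\vec{\theta_D}} D$) and an $f'$-term (carrying $\nabla^2_{\vec{\theta_D}} D$). This is where the proof diverges from the realizable case: linearity $D_{\vec{\theta_D}}(x) = \vec{\theta_D}^T \vec{\upphi}(x)$ gives $\nabla^2_{\vec{\theta_D}} D \equiv 0$ pointwise, so both $f'$-terms vanish individually, without needing the two expectations to be taken under a common measure. The non-realizable form of Assumption~\ref{as:global-gen} then yields $D_{\vec{\theta_D^\star}}(x) = 0$ on ${\rm supp}(p_{\rm data}) \cup {\rm supp}(p_{\vec{\theta_G^\star}})$, so $f''(D_{\vec{\theta_D^\star}}) = f''(0)$ in both surviving expectations, and the remainder collapses to $f''(0)\bigl(\mathbb{E}_{p_{\rm data}} + \mathbb{E}_{p_{\vec{\theta_G^\star}}}\bigr)[\nabla_{\vec{\theta_D}} D\, \nabla^T_{\vec{\theta_D}} D] = 2f''(0)\vec{K}_{DD}$ with the amended definition of $\vec{K}_{DD}$.

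For $\vec{J}_{DG}$, fixing $\vec{\theta_D} = \vec{\theta_D^\star}$ makes the $p_{\rm data}$ expectation constant in $\vec{\theta_G}$, so only the generator-measure term contributes. Differentiating it under the integral, and noting that Assumption~\ref{as:same-support} forces $p_{\vec{\theta_G}}$ to share the support of $p_{\rm data}$, where $D_{\vec{\theta_D^\star}} = 0$, gives $f'(0)\vec{K}_{DG}$ exactly as in the realizable case. The off-diagonal $-\vec{J}_{DG}^T$ block follows by a symmetric differentiation of $\vec{\dot{\theta}_G}$ in $\vec{\theta_D}$, again using $D_{\vec{\theta_D^\star}} = 0$ on the common support.

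Finally, I would reproduce the directional-derivative argument closing Lemma~\ref{lem:jacobian} to show $\vec{J}_{GG} = 0$: along any unit vector $\vec{v}$ in the generator space, Assumption~\ref{as:same-support} makes $\nabla_{\vec{\theta_G}} p_{\vec{\theta_G}}$ supported on ${\rm supp}(p_{\rm data})$ where $D_{\vec{\theta_D^\star}} = 0$, so one can pull $f(0)$ outside the integral and reduce the inner expression to $\nabla_{\vec{\theta_G}} \int p_{\vec{\theta_G}}(x)\, dx = \nabla_{\vec{\theta_G}}(1) = 0$. The main obstacle is simply the support bookkeeping: unlike the realizable setting, $\mathbb{E}_{p_{\rm data}}$ and $\mathbb{E}_{p_{\vec{\theta_G^\star}}}$ can no longer be merged under a single measure, so the proof must rely strictly on the two structural facts that $D_{\vec{\theta_D^\star}}$ vanishes on both supports and that the linearity of $D_{\vec{\theta_D}}$ in $\vec{\theta_D}$ eliminates the $\nabla^2_{\vec{\theta_D}} D$ contributions to $\vec{J}_{DD}$ outright.
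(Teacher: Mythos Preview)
Your proposal is correct and follows essentially the same route as the paper: the paper likewise isolates $\vec{J}_{DD}$ as the only block that differs, kills the $f'(\cdot)\nabla^2_{\vec{\theta_D}} D$ terms by invoking linearity of $D_{\vec{\theta_D}}$ in $\vec{\theta_D}$, and then simply states that $\vec{J}_{DG}$, $-\vec{J}_{DG}^T$, and $\vec{J}_{GG}=0$ are identical to the realizable case because the discriminator vanishes on the relevant support. Your write-up is just a bit more explicit about the off-diagonal and $\vec{J}_{GG}$ blocks, but the argument is the same.
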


\begin{proof}

Recall that, \begin{align*}
V(D_{\vec{\theta_D}},G_{\vec{\theta_G}}) & = \mathbb{E}_{p_{data}}[f(D_{\vec{\theta_D}}(x))] + \mathbb{E}_{p_{\vec{\theta_G}}}[f(-D_{\vec{\theta_D}}(x))] \\
\vec{\dot{\theta}_D} & = \mathbb{E}_{p_{data}}[f'(D_{\vec{\theta_D}}(x)) \nabla_{\vec{\theta_D}} D_{\vec{\theta_D}}(x)] - \mathbb{E}_{p_{\vec{\theta_G}}}[f'(-D_{\vec{\theta_D}}(x)) \nabla_{\vec{\theta_D}} D_{\vec{\theta_D}}(x)] \\
\vec{\dot{\theta}_G} & = - \int_{\mathcal{X}} \nabla_{\vec{\theta_G}}^T p_{\vec{\theta_G}} f(-D_{\vec{\theta_D}}(x)) dx
\end{align*}
First we show that  $\vec{J}_{DD}$ has a similar form which is still negative semi-definite when $f''(0) < 0$:

\begin{align*}
\vec{J}_{DD} &=  \left. \nabla^2_{\vec{\theta_D}} V(G_{\vec{\theta_G}}, D_{\vec{\theta_D}}) \right\vert_{(\vec{\theta^\star_D},\vec{\theta^\star_G})} =  \left. \frac{\partial  \vec{\dot{\theta}_D}}{\partial \vec{\theta_D}}  \right\vert_{\vec{\theta_D} = \vec{\theta^\star_D}, \vec{\theta_G} = \vec{\theta^\star_G}}  = \left. \frac{\partial \left. \vec{\dot{\theta}_D} \right\vert_{\vec{\theta_G} = \vec{\theta^\star_G}} }{\partial \vec{\theta_D}} \right\vert_{\vec{\theta_D} = \vec{\theta^\star_D}}   \\
& = \left. \frac{\partial \left( \mathbb{E}_{p_{data}}[f'(D_{\vec{\theta_D}}(x)) \nabla_{\vec{\theta_D}} D_{\vec{\theta_D}}(x)] - \mathbb{E}_{p_{\vec{\theta^\star_G}}}[f'(-D_{\vec{\theta_D}}(x)) \nabla_{\vec{\theta_D}} D_{\vec{\theta_D}}(x)]\right) }{\partial \vec{\theta_D}} \right\vert_{\vec{\theta_D} = \vec{\theta^\star_D}}     \\  
& =  \left. \left( \mathbb{E}_{p_{data}}\left[f''(D_{\vec{\theta_D}}(x)) \nabla_{\vec{\theta_D}} D_{\vec{\theta_D}}(x)  \nabla^T_{\vec{\theta_D}} D_{\vec{\theta_D}}(x) \right] + \mathbb{E}_{p_{data}}\left[f'(D_{\vec{\theta_D}}(x)) \nabla^2_{\vec{\theta_D}} D_{\vec{\theta_D}}(x)  \right]
 \right)   \right\vert_{\vec{\theta_D} = \vec{\theta^\star_D}} \\
& + \left. \left( \mathbb{E}_{p_{\vec{\theta^\star_G}}}\left[f''(-D_{\vec{\theta_D}}(x)) \nabla_{\vec{\theta_D}} D_{\vec{\theta_D}}(x)  \nabla^T_{\vec{\theta_D}} D_{\vec{\theta_D}}(x) \right] - \mathbb{E}_{p_{\vec{\theta^\star_G}}}\left[f'(-D_{\vec{\theta_D}}(x)) \nabla^2_{\vec{\theta_D}} D_{\vec{\theta_D}}(x)  \right] \right)   \right\vert_{\vec{\theta_D} = \vec{\theta^\star_D}} \\
&  =\left. \left( \mathbb{E}_{p_{\rm data}}\left[f''(0) \nabla_{\vec{\theta_D}} D_{\vec{\theta_D}}(x)  \nabla^T_{\vec{\theta_D}} D_{\vec{\theta_D}}(x) \right] + \mathbb{E}_{p_{\rm data}}\left[f'(0) \underbrace{\nabla^2_{\vec{\theta_D}} D_{\vec{\theta_D}}(x)}_{=0}  \right] \right)   \right\vert_{\vec{\theta_D} = \vec{\theta^\star_D}} \\
& + \left. \left( \mathbb{E}_{p_{\vec{\theta^\star_G}}}\left[f''(0) \nabla_{\vec{\theta_D}} D_{\vec{\theta_D}}(x)  \nabla^T_{\vec{\theta_D}} D_{\vec{\theta_D}}(x) \right] - \mathbb{E}_{p_{\vec{\theta^\star_G}}}\left[f'(0) \underbrace{\nabla^2_{\vec{\theta_D}} D_{\vec{\theta_D}}(x) }_{=0} \right] \right)   \right\vert_{\vec{\theta_D} = \vec{\theta^\star_D}} \\
&  = f''(0) \left. \left( \mathbb{E}_{p_{\rm data}}\left[ \nabla_{\vec{\theta_D}} D_{\vec{\theta_D}}(x)  \nabla^T_{\vec{\theta_D}} D_{\vec{\theta_D}}(x) \right] + \mathbb{E}_{p_{\vec{\theta^\star_D}}}\left[ \nabla_{\vec{\theta_D}} D_{\vec{\theta_D}}(x)  \nabla^T_{\vec{\theta_D}} D_{\vec{\theta_D}}(x) \right]  \right) \right\vert_{\vec{\theta_D} = \vec{\theta^\star_D}}  \\
& = 2 f''(0)\vec{K}_{DD}
\end{align*}

The most crucial step here is that we were able to ignore the terms corresponding to $ \nabla^2_{\vec{\theta_D}} D_{\vec{\theta_D}}(x) $ because the discriminator is linear in its parameters i.e., $\nabla_{\vec{\theta_D}} D_{\vec{\theta_D}}(x) = \vec{\upphi}(x) $ and thus the Hessian is zero. 

All other terms in the Jacobian are identical to the realizable case because we assume that at equilibrium the discriminator must be identically zero.

\end{proof}

Now, we again show that the equilibrium point  in consideration lies in a subspace of equilibria.

\begin{lemma}
\label{lem:eqspace-nonrealizable}  
Under Assumptions~\ref{as:global-gen} (Non-realizable), ~\ref{as:convexity}, and ~\ref{as:same-support} there exists $\epsilon_D, \epsilon_G > 0$ such that for all $\epsilon_D' \leq \epsilon_D$ and $\epsilon_G' \leq \epsilon_G$, and for any unit vectors  $\vec{u} \in \N(\vec{K}_{DD}), \vec{v} \in \N(\vec{K}_{DG})$,
 $(\vec{\theta_D^\star} + \epsilon_D' \vec{u}, \vec{\theta_G^\star} + \epsilon_G' \vec{v})$ is an equilibrium point.
 \end{lemma}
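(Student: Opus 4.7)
My plan is to mirror the structure of Lemma~\ref{lem:eqspace}, modifying each step to handle the absence of the realizability condition $p_{\vec{\theta^\star_G}} = p_{\rm data}$. Throughout I would exploit linearity of the discriminator in $\vec{\theta_D}$, so that $\nabla^2_{\vec{\theta_D}} D_{\vec{\theta_D}}(x) = 0$ and $\nabla_{\vec{\theta_D}} D_{\vec{\theta_D}}(x) = \vec{\upphi}(x)$ is independent of $\vec{\theta_D}$. Recall also that the non-realizable equilibrium condition established in Appendix~\ref{app:nonrealizable} reads $\mathbb{E}_{p_{\rm data}}[\vec{\upphi}(x)] = \mathbb{E}_{p_{\vec{\theta^\star_G}}}[\vec{\upphi}(x)]$.

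For the discriminator perturbation, the appropriate analogue of the function $\mathbb{E}_{p_{\rm data}}[D^2_{\vec{\theta_D}}(x)]$ used in the realizable case is the symmetrized quantity $g_1(\vec{\theta_D}) := \mathbb{E}_{p_{\rm data}}[D^2_{\vec{\theta_D}}(x)] + \mathbb{E}_{p_{\vec{\theta^\star_G}}}[D^2_{\vec{\theta_D}}(x)]$, since its Hessian at $\vec{\theta^\star_D}$ matches (up to a constant) the $\vec{K}_{DD}$ defined in Lemma~\ref{lem:jacobian-nonrealizable}. Linearity of $D$ kills the second-derivative terms that would otherwise survive, giving $\nabla^2_{\vec{\theta_D}} g_1 \big\vert_{\vec{\theta^\star_D}} = 4 \vec{K}_{DD}$ with null space equal to $\N(\vec{K}_{DD})$. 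Reading Assumption~\ref{as:convexity} as applying to this sum in the non-realizable setting (equivalently, applying Property~\ref{prop:convex} to each summand, which are both zero at $\vec{\theta^\star_D}$), any $\vec{u} \in \N(\vec{K}_{DD})$ keeps $g_1$ at its minimum value of $0$, and since $g_1$ is a sum of nonnegative terms this forces $D_{\vec{\theta^\star_D} + \epsilon \vec{u}}(x) = 0$ on ${\rm supp}(p_{\rm data}) \cup {\rm supp}(p_{\vec{\theta^\star_G}})$, which is exactly the equilibrium-discriminator condition in the non-realizable form of Assumption~\ref{as:global-gen}.

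For the generator perturbation the argument is essentially the realizable one. The function $g_2(\vec{\theta_G}) := \left\| \mathbb{E}_{p_{\rm data}}[\vec{\upphi}(x)] - \mathbb{E}_{p_{\vec{\theta_G}}}[\vec{\upphi}(x)] \right\|^2$ already vanishes at $\vec{\theta^\star_G}$ by the moment-matching equilibrium condition, and the same computation as in Lemma~\ref{lem:eqspace}, with $\nabla_{\vec{\theta_D}} D = \vec{\upphi}$ pulled outside, yields $\nabla^2_{\vec{\theta_G}} g_2 \big\vert_{\vec{\theta^\star_G}} = 2 \vec{K}_{DG}^T \vec{K}_{DG}$. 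Assumption~\ref{as:convexity} then gives that any $\vec{v} \in \N(\vec{K}_{DG})$ preserves $g_2 = 0$, i.e., $\mathbb{E}_{p_{\rm data}}[\vec{\upphi}] = \mathbb{E}_{p_{\vec{\theta^\star_G} + \epsilon' \vec{v}}}[\vec{\upphi}]$.

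It remains to verify that the jointly perturbed point $(\vec{\theta^\star_D} + \epsilon \vec{u}, \vec{\theta^\star_G} + \epsilon' \vec{v})$ actually makes both updates vanish. The discriminator update, by linearity, collapses to $f'(0)\bigl(\mathbb{E}_{p_{\rm data}}[\vec{\upphi}] - \mathbb{E}_{p_{\vec{\theta^\star_G} + \epsilon' \vec{v}}}[\vec{\upphi}]\bigr) = 0$ by the previous paragraph. For the generator update, Assumption~\ref{as:same-support} gives ${\rm supp}(p_{\vec{\theta^\star_G} + \epsilon' \vec{v}}) = {\rm supp}(p_{\rm data})$, on which the perturbed discriminator is identically zero, so the update reduces to $-f(0) \nabla_{\vec{\theta_G}} \int p_{\vec{\theta_G}}\,dx = 0$ exactly as in Lemma~\ref{lem:eqspace}. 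The main subtlety I anticipate is purely bookkeeping: making sure Assumption~\ref{as:convexity}, as written for the realizable case, is correctly pushed through to govern $g_1$ (the sum of data and generator quadratics) rather than the data term alone. Once that identification is in place, linearity of $D$ in $\vec{\theta_D}$ makes every Hessian and gradient manipulation mechanical.
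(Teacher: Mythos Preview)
Your proposal is essentially correct and follows the same architecture as the paper's proof. The one place where your argument is loose is exactly the ``bookkeeping subtlety'' you flag: you want to apply Assumption~\ref{as:convexity} to the sum $g_1 = \mathbb{E}_{p_{\rm data}}[D^2] + \mathbb{E}_{p_{\vec{\theta^\star_G}}}[D^2]$, but the assumption is stated only for the data term $\mathbb{E}_{p_{\rm data}}[D^2]$. The paper resolves this cleanly without reinterpreting the assumption: since $4\vec{K}_{DD}$ is a sum of two positive semidefinite matrices (the Hessians of the two summands), any $\vec{u} \in \N(\vec{K}_{DD})$ automatically lies in the null space of each summand, in particular of $\nabla^2_{\vec{\theta_D}} \mathbb{E}_{p_{\rm data}}[D^2]\big\vert_{\vec{\theta^\star_D}}$. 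Then Assumption~\ref{as:convexity} applies verbatim to give $D_{\vec{\theta^\star_D}+\epsilon\vec{u}}(x)=0$ on ${\rm supp}(p_{\rm data})$, and Assumption~\ref{as:same-support} (which forces ${\rm supp}(p_{\vec{\theta^\star_G}}) = {\rm supp}(p_{\rm data})$) extends this to the generator support. Your alternative reading also works, because linearity of $D$ in $\vec{\theta_D}$ makes each summand an exact quadratic form and hence Property~\ref{prop:convex} holds automatically, but you should state that explicitly rather than leave it as an interpretive move. Everything else---the generator-side Hessian computation, the moment-matching conclusion via linearity, and the verification that both updates vanish at the jointly perturbed point---matches the paper.
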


\begin{proof}
Our proof is only slightly different from that of Lemma~\ref{lem:eqspace}.
Note that $4\vec{K}_{DD}$ is the Hessian of the function $\mathbb{E}_{p_{\rm data}}[D^2_{\vec{\theta_D}} (x)] + \mathbb{E}_{p_{\vec{\theta_G^\star}}}[D^2_{\vec{\theta_D}} (x)]$ at equilibrium. 

Since this is the sum of two positive semi-definite matrices, any vector in the null space of $\N(\vec{K}_{DD})$ is also in the null space of the Hessian of $\mathbb{E}_{p_{\rm data}}[D^2_{\vec{\theta_D}} (x)]$. Then, by Assumption~\ref{as:convexity}, $\mathbb{E}_{p_{\rm data}}[D^2_{\vec{\theta_D}} (x)]$ is locally constant along any unit vector $\vec{u} \in \N(\vec{K}_{DD})$. That is, 
 for sufficiently small $\epsilon$, if $\vec{\theta_D} = \vec{\theta^\star_D} + \epsilon  \vec{u}$, $\mathbb{E}_{p_{\rm data}}[D^2_{\vec{\theta_D}} (x)]$ equals the value of the function at equilibrium, which is $0$ because $D_{\vec{\theta}_D^\star}(x) = 0$ (according to Assumption~\ref{as:global-gen}). Thus, we can conclude that for all $x$ in the support of $p_{\rm data}$, $D_{\vec{\theta_D}} (x) = 0$. 
Now from Assumption~\ref{as:same-support}, the support of generators in a small neighborhood is identical to the support of the true distribution, therefore these discriminators are equilibrium discriminators i.e., when paired with any generator, the generator updates are zero.

 Similarly, $2\vec{K}_{DG}^T \vec{K}_{DG}$ is the Hessian of the function $\left\| \mathbb{E}_{p_{\rm data}}[ \nabla_{\vec{\theta_D}} D_{\vec{\theta_D}} (x)  ]   -  \mathbb{E}_{p_{\vec{\theta_G}}}[ \nabla_{\vec{\theta_D}} D_{\vec{\theta_D}} (x)  ]   \right\|^2$ at equilibrium. 
Then, by Assumption~\ref{as:convexity}, $\left\| \mathbb{E}_{p_{\rm data}}[ \nabla_{\vec{\theta_D}} D_{\vec{\theta_D}} (x)  ]   -  \mathbb{E}_{p_{\vec{\theta_G}}}[ \nabla_{\vec{\theta_D}} D_{\vec{\theta_D}} (x)  ]   \right\|^2$  is locally constant along any unit vector $\vec{v} \in \N(\vec{K}_{DG})$. That is, 
 for sufficiently small $\epsilon'$, if $\vec{\theta_G} = \vec{\theta^\star_G} + \epsilon'  \vec{v}$, $\left\| \mathbb{E}_{p_{\rm data}}[ \nabla_{\vec{\theta_D}} D_{\vec{\theta_D}} (x)  ]   -  \mathbb{E}_{p_{\vec{\theta_G}}}[ \nabla_{\vec{\theta_D}} D_{\vec{\theta_D}} (x)  ]   \right\|^2$  equals the value of the function at equilibrium. 
 Now, since this function is proportional to the magnitude of the equilibrium discriminator's update, it equals zero at equilibrium. Now, observe that 
 \[
 \mathbb{E}_{p_{\rm data}}[ \nabla_{\vec{\theta_D}} D_{\vec{\theta_D}} (x)  ]   -  \mathbb{E}_{p_{\vec{\theta_G}}}[ \nabla_{\vec{\theta_D}} D_{\vec{\theta_D}} (x)  ] = \mathbb{E}_{p_{\rm data}}[ \vec{\upphi}(x)  ]   -  \mathbb{E}_{p_{\vec{\theta_G}}}[ \vec{\upphi}(x)   ] 
 \]

is independent of the discriminator variables (Here, we have used the fact that the discriminator is linear in its parameters.)
. This means that for these generators along $\vec{v}$, the discriminator update must be zero. In other words, these generators are equilibrium generators in the non-realizable sense, that their $\vec{\upphi}$ representation matches with the true distribution.

 % for any equilibrium discriminator. 

In summary, for all slight perturbations along $\vec{u} \in \N(\vec{K}_{DD}), \vec{v} \in \N(\vec{K}_{DG})$ we have established that the discriminator and generator individually satisfy the requirements of an equilibrium discriminator and generator pair, and therefore the system is itself is in equilibrium for these perturbations.
\end{proof}

It turns out that given these two lemmas, Lemma~\ref{lem:projection} follows as it did earlier, and therefore the main theorem follows too.

\section{Linear Quadratic GAN  -- Gaussian example}
\label{app:lqgan}
In order to illustrate our
assumptions in Theorem~\ref{thm:general-stability}, consider a simple GAN that learns an 
$n$-dimensional Gaussian distribution $\mathcal{N}(\vec{\upmu}, \vec{\Sigma})$, where $\vec{\Sigma} \succ 0$. Let the latent variable be drawn from the
standard normal, $\mathcal{N}(\vec{0},\vec{I}_n)$. Consider a quadratic
discriminator 
%defined by two parameters $\vec{W}_2 \in \mathbb{R}^{n \times n}$,
%$ \vec{w}_1^{n \times 1}$ such that 
$D(\vec{x}) = \vec{x}^T \vec{W}_2 \vec{x} +
\vec{w}_1^T \vec{x}$, 
and a linear generator 
%defined by two parameters $\vec{A} \in \mathbb{R}^{n \times n}$ and $\vec{b} \in \mathbb{R}^{n\times 1}$ such that
$G(z) = \vec{A}\vec{z} + \vec{b}$.  %We assume $f''(0) < 0$ and $f'(0) \neq 0$
%Here, we restrict $\vec{A}$ to the space of symmetric matrices. 
We call the resulting system \LQ{} (linear-quadratic).  Let $\vec{\Sigma}^{1/2}$ be the unique
real positive definite matrix such that $\left(\vec{\Sigma}^{1/2}\right)^2 = \vec{\Sigma}$.
Then we have the following:

\begin{restatable}{theorem}{gaussianstability}
\label{thm:gaussian-stability}
In \LQ{}, $\vec{A} =\vec{\Sigma}^{1/2}, \vec{b} = \vec{\upmu}$ and $\vec{W}_2=0, \vec{w}_1 =0$ corresponds to an equilibrium  that is locally exponentially stable provided $f''(0) < 0$ and $f'(0) \neq 0$.
\end{restatable}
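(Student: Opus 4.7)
The plan is to verify the hypotheses of Theorem~\ref{thm:general-stability} at the stated equilibrium and invoke it. First I would confirm the point is indeed an equilibrium: at $(\vec{W}_2,\vec{w}_1) = (\vec{0},\vec{0})$ the discriminator vanishes identically, so $\nabla_{\vec{\theta_G}} V = 0$; at $(\vec{A},\vec{b}) = (\vec{\Sigma}^{1/2},\vec{\upmu})$ the generator realizes $\mathcal{N}(\vec{\upmu},\vec{\Sigma}) = p_{\mathrm{data}}$, so $\nabla_{\vec{\theta_D}} V = 0$. This also gives Assumption~\ref{as:global-gen} (realizable form) for free; Assumption~\ref{as:loss} is the hypothesis; and Assumption~\ref{as:same-support} holds because any Gaussian $\mathcal{N}(\vec{b},\vec{A}\vec{A}^T)$ with $\vec{A}\vec{A}^T \succ 0$ has support $\mathbb{R}^n = {\rm supp}(p_{\mathrm{data}})$, and positive-definiteness of $\vec{A}\vec{A}^T$ is preserved in a small neighborhood of $\vec{\Sigma}^{1/2}$.

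The remaining work is checking Assumption~\ref{as:convexity} by computing the two relevant Hessians explicitly and verifying Property~\ref{prop:convex}. Since $D$ is linear in $(\vec{W}_2,\vec{w}_1)$ with features $(\vec{x}\vec{x}^T,\vec{x})$, the first function $\mathbb{E}_{p_{\mathrm{data}}}[D^2]$ is an exact quadratic form in the discriminator parameters; any direction in its null space corresponds to a discriminator that vanishes almost surely under $p_{\mathrm{data}}$, hence (by full support) identically, so $\mathbb{E}[D^2]$ remains zero along such a direction and Property~\ref{prop:convex} is trivial. A direct moment computation gives the second function in closed form as
\[
\bigl\|\vec{\Sigma} + \vec{\upmu}\vec{\upmu}^T - \vec{A}\vec{A}^T - \vec{b}\vec{b}^T\bigr\|_F^2 + \|\vec{\upmu}-\vec{b}\|^2,
\]
whose Hessian at the equilibrium, after linearising $\vec{A} = \vec{\Sigma}^{1/2} + \epsilon \vec{M}$, $\vec{b} = \vec{\upmu} + \epsilon \vec{c}$, acts on $(\vec{M},\vec{c})$ as the squared norm of the pair $(\vec{\Sigma}^{1/2}\vec{M}^T + \vec{M}\vec{\Sigma}^{1/2} + \vec{\upmu}\vec{c}^T + \vec{c}\vec{\upmu}^T,\; \vec{c})$.

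The main obstacle is a parameterisation subtlety in this generator Hessian: treating $\vec{A}$ as a fully general matrix leaves a nontrivial null space corresponding to the tangent directions of the orbit $\{\vec{\Sigma}^{1/2}\vec{R} : \vec{R} \in O(n)\}$, which is a \emph{curved} equilibrium manifold, so linear perturbations in these directions yield $\vec{A}\vec{A}^T - \vec{\Sigma} = O(\epsilon^2) \neq 0$ and Property~\ref{prop:convex} would fail. I would resolve this by working with the natural symmetric parameterisation $\vec{A} = \vec{A}^T$ implicit in the statement $\vec{A} = \vec{\Sigma}^{1/2}$; the Sylvester equation $\vec{\Sigma}^{1/2}\vec{M} + \vec{M}\vec{\Sigma}^{1/2} = \vec{0}$ has only the trivial symmetric solution whenever $\vec{\Sigma}^{1/2} \succ 0$, which makes the generator Hessian strictly positive definite on the symmetric tangent space and Property~\ref{prop:convex} vacuous. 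With all four assumptions verified, Theorem~\ref{thm:general-stability} delivers local exponential stability of the \LQ{} equilibrium.
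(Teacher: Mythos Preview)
Your route and the paper's coincide: both compute the Jacobian blocks at the stated point and invoke the block-Hurwitz Lemma~\ref{lem:undamped-bound} (you via the hypotheses of Theorem~\ref{thm:general-stability}, the paper directly). Your handling of the discriminator side is in fact cleaner than the paper's---you correctly allow for a null space of $\vec{K}_{DD}$ coming from the antisymmetric part of $\vec{W}_2$ and verify Property~\ref{prop:convex} there, whereas the paper's Theorem~\ref{thm:moment-matrix} overlooks this and asserts strict positive definiteness. On the generator side you have likewise spotted a genuine issue the paper misses: for $n\ge 2$ and unconstrained $\vec{A}\in\mathbb{R}^{n\times n}$, the orbit $\{\vec{\Sigma}^{1/2}\vec{R}:\vec{R}\in O(n)\}$ is a curved manifold of equilibria whose tangents $\vec{M}=\vec{\Sigma}^{1/2}\vec{K}$ (with $\vec{K}^T=-\vec{K}$) satisfy $\vec{M}\vec{\Sigma}^{1/2}+\vec{\Sigma}^{1/2}\vec{M}^T=0$ and hence lie in $\Null(\vec{K}_{DG})$. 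The paper's ``LHS is PSD, RHS is NSD'' step claiming $\vec{J}_{DG}$ has full column rank is invalid (the matrix $\vec{U}^T\vec{V}\vec{U}\,\vec{\Lambda}^{1/2}\,\vec{U}^T\vec{V}\vec{U}$ is not even symmetric in general), and any $\vec{V}=\vec{\Sigma}^{1/2}\vec{K}$ with $\vec{K}$ antisymmetric is an explicit counterexample.

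Your proposed resolution does not close the gap, though. Restricting to symmetric $\vec{A}$ changes the dynamical system: in \LQ{} the update $\dot{\vec{A}}$ is the unconstrained $n^2$-dimensional gradient, which is not a symmetric matrix in general, so the flow leaves the symmetric slice immediately and stability of the restricted system says nothing about \LQ{} as defined. For $n=1$ both your argument and the paper's go through (there are no nonzero antisymmetric matrices, so $\vec{K}_{DG}$ really is full rank and $\vec{K}_{DD}\succ 0$). For $n\ge 2$, a correct proof for unconstrained $\vec{A}$ must treat the $O(n)$-orbit as a manifold of equilibria and establish exponential convergence transverse to it, which is outside the scope of Theorem~\ref{thm:general-stability} since Assumption~\ref{as:convexity} is tailored to \emph{linear} equilibrium subspaces.
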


%We now derive the Jacobian of the linear quadratic system that was described in the main paper. 
%\gaussianstability*

\begin{proof}
Since the system consists of parameters arranged in the form of matrices, we will need vectorization calculus \citep{magnus1995matrix} to arrange these parameters as a vector and differentiate them/with respect to them.

To verify that the given point is indeed an equilibrium, let us look at the GAN objective:
\begin{align*}
V(G,D) &=\mathbb{E}_{\vec{x} \sim \mathcal{N}(\vec{\upmu},\vec{\Sigma})} [f(\vec{x}^T \vec{W}_2 \vec{x} + \vec{w}_1^T \vec{x})] \\ &+ \mathbb{E}_{ \vec{z} \sim \mathcal{N}(\vec{0}, \vec{I}_n) } [f(-(\vec{A} \vec{z}+\vec{b})^T \vec{W}_2 (\vec{A} \vec{z}+\vec{b}) - \vec{w}_1^T(\vec{A} \vec{z}+\vec{b}))] \\
\end{align*}

The updates in Equation~\ref{eq:undamped_updates}  for \LQ{} can be written as  :%(we note a small modification below):
\begin{align*}
\dot{\vec{W}}_2  =& \mathbb{E}_{\vec{x} \sim \mathcal{N}(\vec{\upmu},\vec{\Sigma})} [\vec{x}\vec{ x}^Tf'(\vec{x}^T \vec{W}_2 \vec{x} + \vec{w}_1^T \vec{x} )] \\ &- \mathbb{E}_{ \vec{z} \sim \mathcal{N}(\vec{0}, \vec{I}_n) } [(\vec{A} \vec{z}+\vec{b})  (\vec{A} \vec{z}+\vec{b})^T f'(-(\vec{A} \vec{z}+\vec{b})^T \vec{W}_2 (\vec{A} \vec{z}+\vec{b}) - \vec{w}_1^T(\vec{A} \vec{z}+\vec{b}) )] \\
\dot{\vec{w}_1 } =& \mathbb{E}_{\vec{x} \sim \mathcal{N}(\vec{\upmu},\vec{\Sigma})} [\vec{x}f'(\vec{x}^T \vec{W}_2 \vec{x} + \vec{w}_1^T \vec{x} )] \\ &- \mathbb{E}_{ \vec{z} \sim \mathcal{N}(\vec{0}, \vec{I}_n) } [(\vec{A} \vec{z}+\vec{b})f'(-(\vec{A} \vec{z}+\vec{b})^T \vec{W}_2 (\vec{A} \vec{z}+\vec{b}) - \vec{w}_1^T(\vec{A} \vec{z}+\vec{b}) )] \\
\dot{\vec{A}} =&  \mathbb{E}_{ \vec{z} \sim \mathcal{N}(\vec{0}, \vec{I}_n) } [ ((\vec{W}_2 + \vec{W}_2^T)\vec{A}\vec{z} \vec{z}^T + (\vec{W}_2 + \vec{W}_2^T)\vec{b} \vec{z}^T + \vec{w}_1 \vec{z}^T) \\
& \; f'(-(\vec{A} \vec{z}+\vec{b})^T \vec{W}_2 (\vec{A} \vec{z}+\vec{b}) - \vec{w}_1^T(\vec{A} \vec{z}+\vec{b}) )] \\
\dot{\vec{b}} = &  \mathbb{E}_{ \vec{z} \sim \mathcal{N}(\vec{0}, \vec{I}_n) } [ ((\vec{W}_2 + \vec{W}_2^T)\vec{A}\vec{z} + (\vec{W}_2 + \vec{W}_2^T)\vec{b} + \vec{w}_1 ) \\
& \; f'(-(\vec{A} \vec{z}+\vec{b})^T \vec{W}_2 (\vec{A} \vec{z}+\vec{b}) - \vec{w}_1^T(\vec{A} \vec{z}+\vec{b}) )] \\
\end{align*}

%However, since we restrict $\vec{A}$ to be symmetric, which means pairs of parameters are tied, we will have to update $\vec{A}$ using $\frac{1}{2} (\dot{\vec{A}} + \dot{\vec{A}}^T)$.

Clearly, when $\vec{z} \sim \mathcal{N}(\vec{0}, I)$, we have that $\vec{\Sigma}^{1/2} \vec{z} + \vec{\upmu} \sim \mathcal{N}(\vec{\upmu},\vec{\Sigma})$, therefore at $\vec{A} = \vec{\Sigma}^{1/2}, \vec{b} = \vec{\upmu}$ and $\vec{W}_2=0, \vec{w}_1 =0$, all the above updates become zero, implying that it is an equilibrium for which the generator has converged to the true distribution.  To prove that it is locally stable, we need to examine the Jacobian at that point. Note that since the Jacobian is a matrix with one cell for each pair of discriminator-generator parameters, we need to calculate second-order derivatives after vectorizing the parameter matrices $\vec{Q}$ and $\vec{A}$. 
	
We first calculate the derivative of the discriminator updates with respect to the discriminator itself.

	\begin{align*}
\left.  \frac{ \partial vec(\dot{\vec{W}}_2 )}{\partial vec(\vec{W}_2)}   \right\rvert_{ \substack{ \vec{b} = \vec{\upmu},   \vec{W}_2=0,\\ \vec{w}_1=0, \vec{A} =\vec{\Sigma}^{1/2}}} = & \left . \frac{ \partial }{\partial vec(\vec{W}_2)}  \left( \left. vec(\dot{\vec{W}}_2 ) \right\rvert_{\vec{b}= \vec{\upmu},   \vec{A} =\vec{\Sigma}^{1/2}, \vec{w}_1=0}\right) \right\vert_{\vec{W}_2=0}\\
=& \left . \frac{ \partial }{\partial vec(\vec{W}_2) } \mathbb{E}_{\vec{x} \sim \mathcal{N}(\vec{\upmu}, \vec{\Sigma})}[vec(\vec{x}\vec{x}^T) (f'(\vec{x}^T \vec{W}_2 \vec{x})-f'(-\vec{x}^T \vec{W}_2 \vec{x}))]  \right\vert_{\vec{W}_2=0} \\
& = 2f''(0) \mathbb{E}_{\vec{x} \sim \mathcal{N}(\vec{\upmu}, \vec{\Sigma})}[ (\vec{x} \otimes \vec{x}) (\vec{x} \otimes \vec{x})^T] \\
%%%%%%%%%%%%%%%%%%% 
\left.  \frac{ \partial vec(\dot{\vec{W}}_2 )}{\partial \vec{w}_1}   \right\rvert_{ \substack{ \vec{b} = \vec{\upmu},   \vec{W}_2=0,\\\vec{w}_1=0, \vec{A} =\vec{\Sigma}^{1/2}} }= & \left . \frac{ \partial }{\partial \vec{w}_1} \left( \left. vec(\dot{\vec{W}}_2 ) \right\rvert_{\vec{b}= \vec{\upmu},   \vec{A} =\vec{\Sigma}^{1/2}, \vec{W}_2=0}\right)  \right\vert_{\vec{w}_1=0} \\ 
& = \left. \frac{\partial }{\partial \vec{w}_1} \mathbb{E}_{\vec{x} \sim \mathcal{N}(\vec{\upmu}, \vec{\Sigma})}[ vec (\vec{x}\vec{x}^T) (f'(\vec{w}_1^T \vec{x})-f'(-\vec{w}_1^T \vec{x}))]   \right\vert_{\vec{w}_1=0} \\ \
&= 2f''(0) \mathbb{E}_{\vec{x} \sim \mathcal{N}(\vec{\upmu}, \vec{\Sigma})} [ (\vec{x} \otimes \vec{x})\vec{x}^T ]\\
%%%%%%%%%%%%%%%%%%% 
\left.  \frac{ \partial \dot{\vec{w}_1}}{\partial \vec{w}_1}   \right\rvert_{ \substack{\vec{b} = \vec{\upmu},   \vec{W}_2=0,\\ \vec{w}_1=0, \vec{A} =\vec{\Sigma}^{1/2}}} = & \left . \frac{ \partial }{\partial \vec{w}_1} \left( \left. \dot{\vec{w}_1} \right\rvert_{\vec{b}= \vec{\upmu},   \vec{A} =\vec{\Sigma}^{1/2}, \vec{W}_2=0}\right)  \right\vert_{\vec{w}_1=0} \\ 
=&  \left . \frac{ \partial }{\partial \vec{w}_1} \mathbb{E}_{\vec{x} \sim \mathcal{N}(\vec{\upmu}, \vec{\Sigma})}[ \vec{x} (f'(\vec{w}_1^T \vec{x})-f'(-\vec{w}_1^T \vec{x})  )]   \right\vert_{\vec{w}_1=0} \\ 
=& 2f''(0)\mathbb{E}_{\vec{x} \sim \mathcal{N}(\vec{\upmu}, \vec{\Sigma})}[ \vec{x} \vec{x}^T ] \\  
%%%%%%%%%%%%%%%%%%% 
\end{align*}

Then we calculate the derivative of the discriminator updates with respect to the generator parameters. Note that we will be using the constant matrix $\vec{T}_{n,n}$ which is a matrix of zeros and ones defined in vectorization algebra; this matrix is the vectorization equivalent of the transpose operator. That is, for any square matrix $\vec{V} \in \mathbb{R}^n$, $\vec{T}_{n,n} vec(\vec{V}) = vec(\vec{V}^T)$. 

\begin{align*}
	\left . \frac{ \partial vec(\dot{\vec{W}}_2 )}{\partial vec(\vec{A})} \right\rvert_{\substack{\vec{b} = \vec{\upmu},   \vec{W}_2=0,\\\vec{w}_1=0, \vec{A} =\vec{\Sigma}^{1/2}}} = & \left . \frac{ \partial }{\partial vec(\vec{A})}  \left( \left. vec(\dot{\vec{W}}_2 ) \right\rvert_{\vec{b}= \vec{\upmu},   \vec{W}_2=0, \vec{w}_1=0}\right) \right\vert_{\vec{A} =\vec{\Sigma}^{1/2}} \\ 
=& - \left. \frac{\partial}{\partial vec(\vec{A})} vec\left(\mathbb{E}_{ \vec{z} \sim \mathcal{N}(\vec{0}, \vec{I}_n) } [(\vec{A} \vec{z}+\vec{\upmu})  (\vec{A} \vec{z}+\vec{\upmu})^Tf'(0)] \right)  \right\vert_{\vec{A} =\vec{\Sigma}^{1/2}}\\
 = & - \left. \frac{\partial}{\partial vec(\vec{A})} vec\left(\mathbb{E}_{ \vec{z} \sim \mathcal{N}(\vec{0}, \vec{I}_n) } [ (\vec{A}\vec{z} \vec{z}^T \vec{A}^T + \vec{A}\vec{z} \vec{\upmu}^T + \vec{\upmu} \vec{z}^T {\vec{A}^T}) f'(0)] \right) \right\vert_{\vec{A} =\vec{\Sigma}^{1/2}} \\& \\ 
=&  - \left.\frac{\partial}{\partial vec(\vec{A})} vec(\vec{A}\vec{A}^T)f'(0)  \right\vert_{\vec{A} =\vec{\Sigma}^{1/2}} =  -(\vec{I}_{n^2}+\vec{T}_{n,n})(\vec{\Sigma}^{1/2} \times \vec{I}_n) f'(0)\\
%%%%%%%%%%%%%%%%%%% 
 \left. \frac{\partial vec(\dot{\vec{W}}_2) }{\partial \vec{b}}  \right\rvert_{\substack{\vec{b} = \vec{\upmu},   \vec{W}_2=0,\\\vec{w}_1=0, \vec{A} =\vec{\Sigma}^{1/2}}} =&  \left. \frac{\partial  }{\partial \vec{b}} \left( \left. vec(\dot{\vec{W}}_2)  \right\rvert_{\vec{w}_1=0,   \vec{W}_2=0,\vec{A} =\vec{\Sigma}^{1/2}} \right)\right\rvert_{ \vec{b} = \vec{\upmu}}  \\
= & -\frac{\partial }{\partial \vec{b}} vec\left( \mathbb{E}_{ \vec{z} \sim \mathcal{N}(\vec{0}, \vec{I}_n) } [(\vec{\Sigma}^{1/2} \vec{z}+\vec{b})  (\vec{\Sigma}^{1/2} \vec{z}+\vec{b})^T f'(0)] \right)\\
 = &  - f'(0) \left. \frac{\partial }{\partial \vec{b}}  vec(\vec{b}\vec{b}^T) \right\vert_{\vec{b} = \upmu} \\
 =& -f'(0)(\vec{\upmu} \otimes \vec{I}_n + \vec{I}_n \otimes \vec{\upmu}) \\ 
%%%%%%%%%%%%%%%%%%% 
\frac{\partial \dot{\vec{w}_1}}{\partial {vec(\vec{A}})} =& \left. \frac{\partial  }{\partial {vec(\vec{A})}} \left( \left. \dot{\vec{w}_1} \right\rvert_{\vec{w}_1=0,   \vec{W}_2=0, \vec{b} =\vec{\upmu}} \right)\right\rvert_{ \vec{A} = \vec{\Sigma}^{1/2}}  \\ 
&= -\frac{\partial }{\partial vec(\vec{A})}  \mathbb{E}_{ {\vec{z}} \sim \mathcal{N}(\vec{0}, \vec{I}_n) } [(\vec{A} \vec{z}+\vec{b})f'(0)]   =0\\
%%%%%%%%%%%%%%%%%%
\frac{\partial \dot{\vec{w}_1}}{\partial {\vec{b}}} = &  \left. \frac{\partial  }{\partial {\vec{b}}} \left( \left. \dot{\vec{w}_1} \right\rvert_{\vec{w}_1=0,   \vec{W}_2=0, \vec{A} =\vec{\Sigma}^{1/2}} \right)\right\rvert_{ b=\vec{\upmu}}  = -\frac{\partial }{\partial \vec{b}}  \mathbb{E}_{ \vec{z} \sim \mathcal{N}(\vec{0}, \vec{I}_n) } [({\vec{\Sigma}}^{1/2} \vec{z}+\vec{b})f'(0)] \\ & = - \vec{I} f'(0)\\	
	\end{align*}
	
Recall that the Jacobian can then be written as:
\[
\begin{bmatrix}
\vec{J}_{DD} & \vec{J}_{DG} \\ -\vec{J}_{DG}^T & 0 
\end{bmatrix}
\]
where
\begin{align*}
\vec{J}_{DD} &= \begin{bmatrix}
\left.  \frac{\partial  vec(\dot{\vec{W}}_2)}{\partial vec(\vec{W}_2)} \right\vert_{\text{eqbm}}& 
\left.  \frac{\partial  vec(\dot{\vec{W}}_2)}{\partial \vec{w}_1} \right\vert_{\text{eqbm}}& \\
\left.  \frac{\partial   \dot{\vec{w}_1}}{\partial vec(\vec{W}_2)} \right\vert_{\text{eqbm}}& 
\left.  \frac{\partial  \dot{\vec{w}_1}}{\partial \vec{w}_1} \right\vert_{\text{eqbm}}& 
\end{bmatrix} = \\
&=
\begin{bmatrix}
  \mathbb{E}_{\vec{x} \sim \mathcal{N}(\vec{\upmu}, \vec{\Sigma})}[ (\vec{x} \otimes \vec{x}) (\vec{x} \otimes \vec{x})^T] & 
  \mathbb{E}_{\vec{x} \sim \mathcal{N}(\vec{\upmu}, \vec{\Sigma})} [ (\vec{x} \otimes \vec{x})\vec{x}^T ]  \\
  \transpose{  \mathbb{E}_{\vec{x} \sim \mathcal{N}(\vec{\upmu}, \vec{\Sigma})} [ (\vec{x} \otimes \vec{x})\vec{x}^T ] }&  
    \mathbb{E}_{\vec{x} \sim \mathcal{N}(\vec{\upmu}, \vec{\Sigma})}[ \vec{x} \vec{x}^T ] 
\end{bmatrix}2f''(0)
\end{align*}
and 
\begin{align*}
\vec{J}_{DG} &= \begin{bmatrix}
\left.  \frac{\partial  vec(\dot{\vec{W}}_2)}{\partial vec(\vec{A})} \right\vert_{\text{eqbm}}& 
\left.  \frac{\partial  vec(\dot{\vec{W}}_2)}{\partial \vec{b}} \right\vert_{\text{eqbm}}& \\
\left.  \frac{\partial   \dot{\vec{w}_1}}{\partial vec(\vec{A})} \right\vert_{\text{eqbm}}& 
\left.  \frac{\partial  \dot{\vec{w}_1}}{\partial \vec{b}} \right\vert_{\text{eqbm}}& 
\end{bmatrix} = \\
&=
-\begin{bmatrix}
  (\vec{I}_{n^2}+\vec{T}_{n,n})(\vec{\Sigma}^{1/2} \otimes \vec{I}_n)& 
  \vec{\upmu} \otimes \vec{I}_n + \vec{I}_n \otimes \vec{\upmu} \\
    0 &  
  \vec{I}_{n}
\end{bmatrix}f'(0)
\end{align*}

We can show that $\vec{J}_{DD}$ is negative definite because it is a moment matrix with a negative multiplicative factor. This is proved in Theorem~\ref{thm:moment-matrix}. Recall that as long as $f''(0) <0$, $f'(0) \neq 0$ and $\vec{J}_{DG}$ is full column rank (in this case full rank because $\vec{J}_{DG}$ is a square matrix), the matrix has eigenvalues whose real components are strictly negative.

To show that $\vec{J}_{DG}$ is full column rank, first  observe that the last few columns corresponding to $\vec{b}$ are linearly independent because, if $\vec{y}$ belongs to its null space, then 
\[
\begin{bmatrix}
 \vec{\upmu} \otimes \vec{I}_n + \vec{I}_n \otimes \vec{\upmu} \\ \vec{I} 
\end{bmatrix} \vec{y} = \begin{bmatrix}
( \vec{\upmu} \otimes \vec{I}_n + \vec{I}_n \otimes \vec{\upmu} ) \vec{y}\\ \vec{y}
\end{bmatrix}  = 0,
\]
which implies that $\vec{y} =0$. 

To verify whether the first few columns corresponding to $\vec{A}$ are linearly independent or not, consider any $\vec{V} \neq 0$. Then, we want to verify whether the following term is always non-zero or not:

\begin{align*}
 (\vec{I}_{n^2}+\vec{T}_{n,n})(\vec{\Sigma}^{1/2} \otimes \vec{I}_n) vec(\vec{V}) &=  (\vec{I}_{n^2}+\vec{T}_{n,n}) vec(\vec{I}_n \vec{V} (\vec{\Sigma}^{1/2})^T) \\
& = vec(\vec{V} (\vec{\Sigma}^{1/2})^T + \vec{\Sigma}^{1/2} \vec{V}^T) ,
\end{align*}
which is equivalent to testing whether $\vec{V} (\vec{\Sigma}^{1/2})^T + \vec{\Sigma}^{1/2} \vec{V}^T $  is non-zero.

% Note that because $\vec{A}$ is always restricted to be symmetric,
% ideally in the block $\vec{J}_{DG}$ we should care only about fewer columns. In particular, we should consider only those columns that corresponding to the diagonal and lower/upper triangular elements of $\vec{A}$. Instead of explicitly dropping these, we will restrict $\vec{V}$ to the space of symmetric matrices. 

Now, %assuming $\vec{V}$ is symmetric, 
we will show that if $\vec{V} (\vec{\Sigma}^{1/2})^T + \vec{\Sigma}^{1/2} \vec{V}^T = 0$, then $\vec{V} = 0$. Recall that $\vec{\Sigma}^{1/2} = \vec{U} \vec{\Lambda}^{1/2} \vec{U}^T$. Then,

\begin{align*}
 \vec{V} (\vec{\Sigma}^{1/2})^T  & = - \vec{\Sigma}^{1/2} \vec{V}^T \\
\implies\vec{V} \vec{U} \vec{\Lambda}^{1/2} \vec{U}^T & = -\vec{U} \vec{\Lambda}^{1/2} \vec{U}^T \vec{V}^T \\
\vec{U}^T \vec{V} \vec{U} \vec{\Lambda}^{1/2} \vec{U}^T  \vec{V} \vec{U}& = - \vec{\Lambda}^{1/2} \vec{U}^T \vec{V}^T \vec{V} \vec{U} \\
\end{align*}

Observe that the left hand side is positive semi-definite while the right hand side is negative semi-definite. Therefore these terms must be equal to zero, which would then imply that $\vec{V}^T\vec{V} = 0$ i.e., $\vec{V} = 0$. Thus the Jacobian is indeed Hurwitz.

In summary, this means that  Assumption~\ref{as:convexity} holds trivially because there are no zero eigenvalues for the matrices involved in the Jacobian. This further means that there are no other equilibria in a small neighborhood around the considered equilibrium. Therefore, Assumption~\ref{as:global-gen} is also satisfied. Finally, since the support of the distribution is $\mathbb{R}^n$, Assumption~\ref{as:same-support} is also trivially satisfied. Thus, if Assumption~\ref{as:loss} holds, the system is exponentially stable.

\end{proof}

We now prove that $\vec{J}_{DD}$ is negative definite. 

\begin{theorem}
\label{thm:moment-matrix}
The matrix \[
\begin{bmatrix}
  \mathbb{E}_{\vec{x} \sim \mathcal{N}(\vec{\mu}, \Sigma)}[ (\vec{x} \otimes \vec{x}) (\vec{x} \otimes \vec{x})^T] & 
  \mathbb{E}_{\vec{x} \sim \mathcal{N}(\vec{\mu}, \Sigma)} [ (\vec{x} \otimes \vec{x})\vec{x}^T ]  \\
  \transpose{  \mathbb{E}_{\vec{x} \sim \mathcal{N}(\vec{\mu}, \Sigma)} [ (\vec{x} \otimes \vec{x})\vec{x}^T ]} &  
    \mathbb{E}_{\vec{x} \sim \mathcal{N}(\vec{\mu}, \Sigma)}[ \vec{x} \vec{x}^T ] 
\end{bmatrix}\] is positive definite. 
\end{theorem}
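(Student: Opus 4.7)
The natural approach is to recognize the matrix in question, call it $M$, as the Gram matrix of a joint feature map. Writing $\upphi(\vec{x}) = \begin{bmatrix}\vec{x}\otimes\vec{x}\\ \vec{x}\end{bmatrix}$, bilinearity of expectation gives $M = \mathbb{E}_{\vec{x}\sim\mathcal{N}(\vec{\upmu},\vec{\Sigma})}[\upphi(\vec{x})\upphi(\vec{x})^T]$. For any test vector $\vec{u} = [\vec{v}^T,\vec{w}^T]^T$ I would then compute
\[
\vec{u}^T M \vec{u} \;=\; \mathbb{E}\!\left[\bigl(\vec{v}^T(\vec{x}\otimes\vec{x}) + \vec{w}^T \vec{x}\bigr)^2\right],
\]
which is manifestly non-negative and already delivers $M \succeq 0$. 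To boost this to strict positive definiteness, I would exploit the fact that because $\vec{\Sigma}\succ 0$ the law of $\vec{x}$ has a strictly positive density on all of $\mathbb{R}^n$, so the integrand, being a polynomial in $\vec{x}$, can vanish almost surely only if it vanishes identically.

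Next I would re-express $\vec{v}^T(\vec{x}\otimes\vec{x}) = \vec{x}^T \vec{V} \vec{x}$ with $vec(\vec{V}) = \vec{v}$, so that $\vec{u}^T\upphi(\vec{x}) = \vec{x}^T \vec{V}\vec{x} + \vec{w}^T\vec{x}$ is a polynomial of total degree at most two with no constant term. Splitting by homogeneous degree, the degree-1 piece $\vec{w}^T\vec{x}$ vanishes identically iff $\vec{w}=0$, while the degree-2 piece $\vec{x}^T \vec{V}\vec{x} = \vec{x}^T \tfrac{1}{2}(\vec{V}+\vec{V}^T) \vec{x}$ vanishes identically iff $\vec{V}+\vec{V}^T = 0$. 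From this, the Gram-matrix route concludes that $\vec{u}^T M \vec{u} = 0$ forces $\vec{w}=0$ and $\vec{V}$ antisymmetric.

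The hard part, and the place I expect the plan to stall, is precisely that last implication: antisymmetry does not force $\vec{V}=0$ once $n \geq 2$. A nonzero antisymmetric $\vec{V}$ produces a nonzero $\vec{v}$ with $\vec{x}^T\vec{V}\vec{x}\equiv 0$, so the Gram-matrix argument actually only yields $M \succeq 0$ with kernel $\{(vec(\vec{V}),\vec{0}) : \vec{V}+\vec{V}^T=0\}$, not strict $M\succ 0$. Proving the theorem exactly as worded therefore requires an extra ingredient beyond the quadratic-form reduction — for instance, restricting the upper block to the symmetric subspace of $vec(\vec{V})$, which is natural in the \LQ{} application because only the symmetric part of $\vec{W}_2$ enters $\vec{x}^T\vec{W}_2\vec{x}$. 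Equivalently, the antisymmetric null directions are precisely the kind of rank deficiency already accommodated by Assumption~\ref{as:convexity}, so the downstream use of the lemma inside Theorem~\ref{thm:gaussian-stability} continues to go through even though the unrestricted $M \succ 0$ claim as literally stated is the step I do not see how to obtain.
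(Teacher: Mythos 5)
Your reduction is exactly the one the paper uses: writing $\vec{u} = [vec(\vec{V})^T, \vec{w}^T]^T$, both you and the paper compute $\vec{u}^T M \vec{u} = \mathbb{E}[(\vec{x}^T\vec{V}\vec{x} + \vec{w}^T\vec{x})^2] \geq 0$. The difference lies in the final step. The paper asserts that $\{\vec{x} : (\vec{x}^T\vec{V}\vec{x} + \vec{w}^T\vec{x})^2 = 0\}$ is a quadric $(n-1)$-dimensional hypersurface, hence of measure zero, so that the expectation is strictly positive for every nonzero $(\vec{V},\vec{w})$. You correctly observe that this step fails when the polynomial $\vec{x}^T\vec{V}\vec{x} + \vec{w}^T\vec{x}$ is \emph{identically} zero, which happens precisely when $\vec{w}=0$ and $\vec{V}+\vec{V}^T=0$; in that case the ``zero set'' is all of $\mathbb{R}^n$ and the quadratic form vanishes. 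For $n \geq 2$ nonzero antisymmetric $\vec{V}$ exist, so the matrix has the nontrivial kernel $\{(vec(\vec{V}),\vec{0}) : \vec{V}=-\vec{V}^T\}$ and the theorem as literally stated is false (it holds for $n=1$, or after restricting the first block to the symmetric subspace). What you flag as the step you cannot obtain is therefore not a gap in your argument but a genuine flaw in the paper's proof: the measure-zero claim silently assumes the polynomial is not the zero polynomial.

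Your proposed repair is also the right one. In the \LQ{} system the objective depends on $\vec{W}_2$ only through its symmetric part, the update $\dot{\vec{W}}_2$ is always a symmetric matrix, and $\mathbb{E}_{p_{\rm data}}[D^2_{\vec{\theta_D}}(x)]$ is constant along antisymmetric perturbations of $\vec{W}_2$, so these null directions are exactly the equivalent-equilibria rank deficiencies that Assumption~\ref{as:convexity} and the projection in Lemma~\ref{lem:projection} are designed to absorb. One can either parameterize the discriminator by a symmetric $\vec{W}_2$ from the outset, restricting $vec(\vec{V})$ to the symmetric subspace, on which the polynomial $\vec{x}^T\vec{V}\vec{x}+\vec{w}^T\vec{x}$ is identically zero only for $(\vec{V},\vec{w})=(0,0)$ and the measure-zero argument goes through, or invoke Theorem~\ref{thm:multiple-equilibria} after projecting out the antisymmetric directions. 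Either way the conclusion of Theorem~\ref{thm:gaussian-stability} survives, but its proof's remark that there are no zero eigenvalues in the Jacobian requires the same correction for $n\geq 2$.
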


\begin{proof}

Let $\vec{U}$  be any arbitrary matrix and $\vec{v}$ be an arbitrary vector. Then,
\[
\begin{bmatrix}
vec(\vec{U}) \\ \vec{v}
\end{bmatrix}^T
\begin{bmatrix}
  \mathbb{E}_{\vec{x} \sim \mathcal{N}(\vec{\mu}, \Sigma)}[ (\vec{x} \otimes \vec{x}) (\vec{x} \otimes \vec{x})^T] & 
  \mathbb{E}_{\vec{x} \sim \mathcal{N}(\vec{\mu}, \Sigma)} [ (\vec{x} \otimes \vec{x})\vec{x}^T ]  \\
 \transpose{   \mathbb{E}_{\vec{x} \sim \mathcal{N}(\vec{\mu}, \Sigma)} [ (\vec{x} \otimes \vec{x})\vec{x}^T ] }&  
    \mathbb{E}_{\vec{x} \sim \mathcal{N}(\vec{\mu}, \Sigma)}[ \vec{x} \vec{x}^T ] 
\end{bmatrix}\begin{bmatrix}
vec(\vec{U}) \\ \vec{v}
\end{bmatrix}=
\]
\begin{align*} & =  \mathbb{E}_{\vec{x} \sim \mathcal{N}(\vec{\mu}, \Sigma)} \left[ \left\| \begin{bmatrix}
\vec{x} \otimes \vec{x} \\
\vec{x}
\end{bmatrix}^T \begin{bmatrix}
vec(\vec{U}) \\ \vec{v}
\end{bmatrix}\right\|^2 \right]\\
&= \mathbb{E}_{\vec{x} \sim \mathcal{N}(\vec{\mu}, \Sigma)}  \left[  \left(\vec{x}^T \vec{U} \vec{x} + \vec{x}^T\vec{v} \right)^2 \right]\\
\end{align*}

Now,  $\left(\vec{x}^T \vec{U} \vec{x} + \vec{x}^T\vec{v} \right)^2=0$ forms a quadric  $n-1$-dimensional hypersurface in $n$ dimensions, and therefore is of measure zero. For all other points, $\left(\vec{x}^T \vec{U} \vec{x} + \vec{x}^T\vec{v} \right)^2 > 0$ and therefore the above expectation is strictly positive. 
 
\end{proof}

\section{WGANs are not necessarily asymptotically stable}
\label{app:wgan-unstable}

 We consider a specific case
of the \LQ{} WGAN that learns a zero mean gaussian distribution, and show that
there exists points near certain equilibria such that if the system is
initialized to that point, it will periodically come back to that initial point
rather than converge to the equilibrium.
\begin{restatable}{theorem}{wganunstable}
\label{thm:wganunstable}
The \LQ{} WGAN system for learning a zero mean Gaussian distribution
$\mathcal{N}(\vec{0},\vec{\Sigma})$ ($\vec{\Sigma} \succ 0$) is not
asymptotically stable at the equilibrium corresponding to $\vec{A} =\vec{\Sigma}^{1/2}, \vec{b} = \vec{0}$ and $\vec{W}_2=0, \vec{w}_1 =0$.
\end{restatable}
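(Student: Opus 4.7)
The plan is to exploit the defining feature of the WGAN: $f(x) = x$ makes $f''(0) = 0$, so the Jacobian block $\vec{J}_{DD}$ vanishes identically, and the Jacobian at the equilibrium becomes purely antisymmetric (with purely imaginary eigenvalues). Linearization therefore cannot decide stability, and we must instead directly exhibit non-converging trajectories arbitrarily close to the equilibrium.

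First, I would verify that $\{\vec{w}_1 = \vec{0}, \vec{b} = \vec{0}\}$ is an invariant subspace of the flow. Substituting $f'\equiv 1$, the updates for $\vec{w}_1$ and $\vec{b}$ derived in Appendix~\ref{app:lqgan} reduce to $\dot{\vec{w}}_1 = \mathbb{E}_{p_{\rm data}}[\vec{x}] - \mathbb{E}[\vec{A}\vec{z}+\vec{b}]$ and $\dot{\vec{b}} = (\vec{W}_2+\vec{W}_2^T)\vec{b} + \vec{w}_1$, both of which vanish when $\vec{w}_1=\vec{b}=\vec{0}$ (using $\mathbb{E}_{p_{\rm data}}[\vec{x}] = \vec{0}$ and $\mathbb{E}[\vec{z}]=\vec{0}$). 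On this invariant subspace, and taking $\vec{W}_2$ symmetric without loss of generality (only its symmetric part enters $D$), the remaining updates collapse to
\[
\dot{\vec{W}}_2 = \vec{\Sigma} - \vec{A}\vec{A}^T, \qquad \dot{\vec{A}} = 2\vec{W}_2 \vec{A}.
\]

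Next, I would change to the eigenbasis of $\vec{\Sigma}$, so $\vec{\Sigma} = \mathrm{diag}(\sigma_1^2,\dots,\sigma_n^2)$, and restrict further to the subspace where $\vec{W}_2 = \mathrm{diag}(\omega_1,\dots,\omega_n)$ and $\vec{A} = \mathrm{diag}(a_1,\dots,a_n)$. This is again invariant (products and differences of diagonal matrices remain diagonal), and the system decouples into $n$ independent planar systems
\[
\dot{\omega}_i = \sigma_i^2 - a_i^2, \qquad \dot{a}_i = 2\omega_i a_i.
\]
The core step is to exhibit, for each $i$, a conserved energy
\[
H_i(\omega_i, a_i) = 2\omega_i^2 + a_i^2 - 2\sigma_i^2 \log a_i,
\]
whose conservation follows from a one-line check: $\dot H_i = 4\omega_i(\sigma_i^2-a_i^2) + 2a_i(2\omega_i a_i) - (2\sigma_i^2/a_i)(2\omega_i a_i) = 0$. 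A second-order Taylor expansion around $(\omega_i,a_i)=(0,\sigma_i)$ gives $H_i - H_i^\star = 2(\delta\omega_i)^2 + 2(\delta a_i)^2 + O((\delta a_i)^3)$, so $H_i$ has a strict local minimum at the equilibrium and its sufficiently small level sets are closed curves encircling it. Hence every nontrivial initialization on this $2$-dimensional invariant subspace yields a periodic orbit, and by scaling we obtain such initializations arbitrarily close to the full equilibrium $(\vec{0},\vec{\Sigma}^{1/2},\vec{0},\vec{0})$; none of them converges, contradicting asymptotic stability.

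The motivating calculation for the conserved quantity (which I would present only as verification, not as derivation) is to substitute $u = \log a_i$: then $\dot u = 2\omega_i$ and differentiating gives the Newtonian equation $\ddot u = 2\sigma_i^2 - 2e^{2u}$, whose mechanical energy $\tfrac12 \dot u^2 - 2\sigma_i^2 u + e^{2u}$ is exactly $\tfrac12 H_i$ in the original variables. The main obstacle is simply identifying this invariant; once it is in hand, the local-minimum check and the conclusion via closed level sets are routine. A minor subtlety is that I must choose the perturbation so that $a_i$ remains positive, which is automatic since the orbits stay inside a small level set of $H_i$ around $a_i = \sigma_i > 0$.
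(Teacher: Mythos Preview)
Your proposal is correct and follows the paper's reduction exactly up to the decoupled planar systems $\dot{\omega}_i = \sigma_i^2 - a_i^2$, $\dot{a}_i = 2\omega_i a_i$, but then diverges in how you establish non-convergence of those planar flows. The paper argues as follows: it first observes that the squared distance $\omega^2 + (a-\sigma)^2$ has time derivative $2\omega(a-\sigma)^2$, hence is non-decreasing in the half-plane $\omega>0$; then, starting from $(\omega(0),a(0))=(0,a_0)$ with $a_0\in(0,\sigma)$, it tracks the trajectory until it returns to $\omega=0$ at some time $T$ with $a(T)>\sigma$, and invokes the time-reversal symmetry $(\omega,a)\mapsto(-\omega,a)$ of the vector field to conclude $(\omega(T+t),a(T+t))=(-\omega(T-t),a(T-t))$, forcing the orbit to close at time $2T$. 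Your conserved quantity $H_i = 2\omega_i^2 + a_i^2 - 2\sigma_i^2\log a_i$ accomplishes the same thing more directly: once $\dot H_i=0$ is checked and the Hessian at the equilibrium is seen to be positive definite, the closed level sets immediately give periodic orbits without any trajectory tracking. Your route is cleaner and identifies the system as Hamiltonian (via the substitution $u=\log a_i$), which also explains \emph{why} the symmetry the paper exploits exists; the paper's argument, on the other hand, avoids having to guess the invariant and works purely from the symmetry of the equations.
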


%We now provide the complete proof for our claim that there are systems for which the Wasserstein GAN is not asymptotically stable.\wganunstable*

\begin{proof} In order to show that the system is not asymptotically stable, we show that there are initializations of the system that are arbitrarily close to the equilibrium such that the system goes orbits around the equilibrium forever.
For simplicity, we first prove this for the one-dimensional gaussian  $\mathcal{N}(0,\sigma)$ and later extend it to the multi-dimensional case.  Let the quadratic discriminator be $D(x) = w_2^2 x + w_1x$ and the linear generator be $az + b$.  Then the WGAN objective in Equation~\ref{eq:generic_gan} for the \LQ{} system is:
\begin{align*}
V(G,D) &=\mathbb{E}_{x \sim \mathcal{N}(0,\sigma)} [w_2 x^2 + w_1 x] - \mathbb{E}_{ \vec{z} \sim \mathcal{N}(0,1) } [ w_2(az+b)^2 + w_1 (az+b)] \\
& = w_2(\sigma^2) - w_2(a^2 + b^2) - w_1 b
\end{align*}

The updates in Equation~\ref{eq:undamped_updates}  for \LQ{} simplify as follows:
\begin{align*}
\dot{w}_2 = & \sigma^2  -a^2 - b^2 \\
\dot{w}_1 = & - b\\
\dot{a} = &  2w_2a \\
\dot{b}= & 2w_2 b +w_1\\
\end{align*}

The system has two equilibria, $w_2  = 0, w_1 = 0, a=\pm \sigma, b = 0$. We will assume that the system is initialized with $w_1 = b=0$, which means that the system will forever have $w_1 =  b=0$ because the respective updates are zero too. Hence, we only need to focus on the variables  $w_2$ and $a$.

Now, it can be shown that if $a$ is initialized to $a_0 \geq 0$, $a$ never becomes negative (and similarly for $a \leq 0$). Therefore, we will focus on the equilibrium where $a = \sigma$, and assuming $a \geq 0$ examine how the distance from the equilibrium $w_2^2 + (a-\sigma)^2$ changes with time. The rate of change of this quantity is given by $2(w_2 \dot{w}_2 + (a-\sigma)\dot{a} ) = 2w_2(a-\sigma)^2$.
Observe that when $w_2 > 0$, this term is non-negative i.e., the system never gets closer to the equilibrium. Thus, when the system is in the ``bad'' half-space $w_2 > 0$, the only hope for it to converge is to exit this half-space so that $w_2$ becomes negative. However, we show that there exists initializations that are close to the equilibrium such that even if it does exit the bad half-space it eventually re-enters it, going in a perpetual loop.

More specifically, let $(w_2(t), a(t))$ denote the system at time $t$. Let the initialization satisfy $w_2(0) =0$ and $a(0) \in (0,\sigma)$.  We will now analyze the trajectory of this system. First note that $\dot{w}_2(0) > 0$, which means the system enters the bad half-space after immediately $t > 0$. Thus, if the system had to converge to the considered equilibrium, it would have to reach $w_2= 0$ again at some time $T$. First observe that at this time $a(T) > \sigma$ because we need $\dot{w}_2(T) < 0$ at this time. (In fact we can say that $a(T) - \sigma \geq \sigma - a(0)$ because we know that the radius never decreased until time $T$.) Now, we claim that the system simply retraces back its path along $a$ and reaches $a(0)$ at time $2T$. More clearly, we claim that the system at time $T+t$ can be described in terms of what it was at time $T-t$ as  $(w_2(T+t), a(T+t)) = (-w_2(T-t), a(T-t))$. 

To prove this observe that this statement is true for $t=0$ because $w_2(T) = 0$. Then  we only need to show that at any $t$, if $(w_2(T+t), a(T+t)) = (-w_2(T-t), a(T-t))$, then $\dot{w}_2(T+t) = \dot{w}_2(T-t)$ and $\dot{a}(T+t) = -\dot{a}(T-t)$. This is indeed true because $\dot{w}_2(T+t) = \sigma^2  -a^2(T+t) = \sigma^2  -a^2(T-t)=\dot{w}_2(T-t)$ and $\dot{a}_2(T+t) =2w_2(T+t)a(T+t) =  2(-w_2(T-t))a(T-t)=-\dot{a}(T-t)$. Therefore, applying $t=T$, we get  $(w_2(2T), a(2T)) = (-w_2(0), a(0)) = (0,a(0))$ i.e., the system has looped back to its original state by following its old path mirrored across the line $w_2=0$. Since this holds for initializations that are arbitrarily close to the equilibrium (i.e., $a(0)$ can be arbitrarily close to $\sigma$), the system is not asymptotically stable.

We extend this argument to the higher dimensional case as follows. Again, we initialize the system so that $\vec{w}_1= \vec{0}$ and $\vec{b} = \vec{0}$, then we can only focus on the updates on $\vec{W}_2$ and $\vec{A}$:
\begin{align*}
\dot{\vec{W}}_2 &= \vec{\Sigma} - \vec{A}\vec{A}^T \\
\dot{\vec{A}} & = (\vec{W}_2+\vec{W}_2^T) \vec{A}
\end{align*}

As before, we initialize $\vec{W}_2= 0$. We will also consider a more sophisticated initialization compared to $a \in (0, \sigma)$. Since $\vec{\Sigma}$ is positive definite, let $\vec{\Sigma} = \vec{U} \vec{\Lambda} \vec{U}^T$. We initialize $\vec{A} =\vec{U}\vec{\Lambda}_{A}(0) \vec{U}^T$ such that $\vec{\Lambda}_A(0)$ has at least one diagonal element that is positive but strictly less than the corresponding diagonal element in $\vec{\Lambda}^{1/2}$ (where $\vec{\Lambda}^{1/2} \succ 0$).

Now, we first establish that all the updates and the variables in the system remain in the eigenspace defined by $\vec{U}$. That is, at any point in time $t$, the variables can be expressed as $\vec{W}_2(t) =\vec{U} \vec{\Lambda}_W(t) \vec{U}^T $ and $\vec{A}(t) = \vec{U} \vec{\Lambda}_A(t) \vec{U}^T$ for some real diagonal matrices $\vec{\Lambda}_W(t)$ and $\vec{\Lambda}_A(t)$. Clearly, this is true for time $t=0$. Assuming this is true for arbitrary time $t$, observe that  the updates are 
\begin{align*}
\dot{\vec{W}}_2(t) =& \vec{U} (\vec{\Lambda} - \vec{\Lambda}_A^2(t)) \vec{U}^T \\
\dot{\vec{A}}(t) = & 2\vec{U} \vec{\Lambda}_W \vec{U}^T \vec{U} \vec{\Lambda}_A \vec{U}^T  = 2\vec{U} \vec{\Lambda}_W \vec{\Lambda}_A \vec{U}^T  
\end{align*}
Thus this is true for any time $t$. 
Therefore, we can analyze the system in terms of $\vec{\Lambda}_A, \vec{\Lambda}_W$ and the constant $\vec{\Lambda}$ as though there are $n$ independent 1-dimensional Gaussian systems. Then, the orbiting systems from the 1-dimensional updates must manifest here too. More specifically, these cycles would correspond to the diagonal in $\vec{\Lambda}_A$ which was initialized to be less than $\vec{\Lambda}^{1/2}$.
\end{proof}

\section{Gradient-based regularization}
\label{app:damped-updates}

In Section~\ref{app:gradreg-stability}, we prove how our gradient-based regularizer stabilizes the both the GAN and the WGAN system.
Besides this property, in Section~\ref{app:intuition} we provide an alternative mathematical intuition that is based on arg-max differentiation, to motivate our regularization term. Finally, in Section~\ref{app:unrolled}, we discuss how our regularizer addresses mode collapse and 1-unrolled GAN updates. 

 \subsection{Local stability of gradient-regularized GANs}
 \label{app:gradreg-stability}
We first restate our main result below. 
\regularized*

To prove this result, we first present the Jacobian of the system at equilibrium in the presence of the gradient penalty. Recall that the penalty basically adds an  extra $- \nabla_{\vec{\theta_G}} \| \nabla_{\vec{\theta}_D} V(D_{\vec{\theta_D}}, G_{\vec{\theta_G}}) \|^2$ to the generator's update.  
\begin{lemma}
\label{lem:damped-jacobian}

For the dynamical system defined by the GAN objective in
Equation~\ref{eq:generic_gan} and the updates in
Equation~\ref{eq:damped_updates}, the Jacobian at 
an equilibrium point  
$(\vec{\theta^\star_D},\vec{\theta^\star_G})$, under the Assumptions~\ref{as:global-gen} and ~\ref{as:same-support} is:
\begin{align*}
\vec{J} = 
\begin{bmatrix}
\vec{J}_{DD} & \vec{J}_{DG} \\
-\vec{J}_{DG}^T(\vec{I} +2 \eta \vec{J}_{DD}) &  - 2\eta \vec{J}_{DG}^T \vec{J}_{DG}
\end{bmatrix}
\end{align*}
where $\vec{J}_{DD}$ and $\vec{J}_{DG}$ are terms in the Jacobian corresponding to the original updates, as described in Theorem~\ref{thm:general-stability}.
\end{lemma}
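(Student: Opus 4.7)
The plan is to directly compute each block of the Jacobian by differentiating the regularized ODE at equilibrium, and to notice that the extra terms collapse dramatically because $\nabla_{\vec{\theta}_D} V$ vanishes at $(\vec{\theta}_D^\star,\vec{\theta}_G^\star)$.

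First I would write down the continuous-time updates corresponding to Equation~\ref{eq:damped_updates}: the discriminator still evolves by $\dot{\vec{\theta}}_D = \nabla_{\vec{\theta}_D} V$ (the penalty only modifies the generator), while the generator obeys $\dot{\vec{\theta}}_G = -\nabla_{\vec{\theta}_G} V - \eta\,\nabla_{\vec{\theta}_G}\phi$, where I set $\phi(\vec{\theta}_D,\vec{\theta}_G) \triangleq \|\nabla_{\vec{\theta}_D} V\|^2$. Since the top row of the Jacobian only involves derivatives of $\dot{\vec{\theta}}_D$, it is identical to the unregularized case, so Lemma~\ref{lem:jacobian} immediately gives the first block row as $[\vec{J}_{DD},\;\vec{J}_{DG}]$. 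This leaves only the two generator-row blocks to compute.

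Next I would handle the extra term coming from $\phi$. By index calculation, $\nabla_{\vec{\theta}_G}\phi = 2\,(\nabla^2_{\vec{\theta}_G\vec{\theta}_D}V)\,(\nabla_{\vec{\theta}_D}V)$. At equilibrium $\nabla_{\vec{\theta}_D}V = 0$, so when differentiating once more, every term containing the factor $\nabla_{\vec{\theta}_D}V$ (including all third-derivative-of-$V$ terms) vanishes, which is the key simplification. Differentiating with respect to $\vec{\theta}_D$ yields
\[
\left.\partial_{\vec{\theta}_D}\nabla_{\vec{\theta}_G}\phi\right|_{\star} = 2\,(\nabla^2_{\vec{\theta}_G\vec{\theta}_D}V)(\nabla^2_{\vec{\theta}_D\vec{\theta}_D}V)\Big|_{\star} = 2\,\vec{J}_{DG}^T \vec{J}_{DD},
\]
using $\nabla^2_{\vec{\theta}_G\vec{\theta}_D}V|_{\star} = \vec{J}_{DG}^T$ and $\nabla^2_{\vec{\theta}_D\vec{\theta}_D}V|_{\star} = \vec{J}_{DD}$ as identified in Lemma~\ref{lem:jacobian}. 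Combining with the $-\nabla_{\vec{\theta}_G}V$ contribution (whose $\vec{\theta}_D$-derivative is $-\vec{J}_{DG}^T$) gives the bottom-left block $-\vec{J}_{DG}^T - 2\eta\,\vec{J}_{DG}^T\vec{J}_{DD} = -\vec{J}_{DG}^T(\vec{I} + 2\eta\vec{J}_{DD})$, exactly as claimed.

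For the bottom-right block I would differentiate $\nabla_{\vec{\theta}_G}\phi$ with respect to $\vec{\theta}_G$. The same argument (all terms with a surviving factor of $\nabla_{\vec{\theta}_D}V$ drop out at equilibrium) collapses the computation to
\[
\left.\partial_{\vec{\theta}_G}\nabla_{\vec{\theta}_G}\phi\right|_{\star} = 2\,(\nabla^2_{\vec{\theta}_G\vec{\theta}_D}V)(\nabla^2_{\vec{\theta}_D\vec{\theta}_G}V)\Big|_{\star} = 2\,\vec{J}_{DG}^T\vec{J}_{DG}.
\]
Since the contribution of $-\nabla_{\vec{\theta}_G}V$ to this block is $-\vec{J}_{GG} = 0$ (a consequence of Assumption~\ref{as:global-gen} together with Assumption~\ref{as:same-support}, as already established in Lemma~\ref{lem:jacobian}), the bottom-right block is $-2\eta\,\vec{J}_{DG}^T\vec{J}_{DG}$, again matching the statement. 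The main obstacle is really just keeping the transpose conventions straight when going between $\nabla^2_{\vec{\theta}_D\vec{\theta}_G}V$ and $\nabla^2_{\vec{\theta}_G\vec{\theta}_D}V$; the analytic content is entirely driven by the fact that $\nabla_{\vec{\theta}_D}V$ vanishes at equilibrium, which eliminates the third-order-derivative remainders that would otherwise appear in $\nabla^2\phi$.
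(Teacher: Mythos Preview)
Your proposal is correct and follows essentially the same approach as the paper: both note that only the generator rows of the Jacobian change, expand $\nabla_{\vec{\theta}_G}\|\nabla_{\vec{\theta}_D}V\|^2$ via the product/chain rule, and use the fact that $\nabla_{\vec{\theta}_D}V$ vanishes at equilibrium to kill the third-derivative terms, leaving precisely $-2\eta\,\vec{J}_{DG}^T\vec{J}_{DD}$ and $-2\eta\,\vec{J}_{DG}^T\vec{J}_{DG}$ as the additional contributions. Your invocation of Lemma~\ref{lem:jacobian} for the identifications $\nabla^2_{\vec{\theta}_D\vec{\theta}_D}V|_\star = \vec{J}_{DD}$, $\nabla^2_{\vec{\theta}_G\vec{\theta}_D}V|_\star = \vec{J}_{DG}^T$, and $\vec{J}_{GG}=0$ is exactly what the paper does as well.
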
 

\begin{proof}
Note that the only change to the Jacobian would be in the rows corresponding to the generator parameters. Therefore, we will focus only on the additional terms in these rows. 

The additional term added to $-\vec{J}_{DG}^T$ is:
\begin{align*}
&-\left. \der{\vec{\theta_D}} {\eta \nabla_{\vec{\theta_G}} \| \nabla_{\vec{\theta}_D} V(D_{\vec{\theta_D}}, G_{\vec{\theta_G}}) \|^2} \right\vert_{\vec{\theta^\star_D}, \vec{\theta^\star_G}} =  - \eta
 \left. \transpose{\der{\vec{\theta_G}}{\nabla_{\vec{\theta_D}} \| \nabla_{\vec{\theta}_D} V(D_{\vec{\theta_D}}, G_{\vec{\theta_G}}) \|^2} \right\vert_{\vec{\theta^\star_D}, \vec{\theta^\star_G}}} \\
&=  - \eta \left. \transpose{\der{\vec{\theta_G}}{ \left( 2 \nabla^2_{\vec{\theta_D}} V(D_{\vec{\theta_D}}, G_{\vec{\theta_G}}) \nabla_{\vec{\theta_D}} V(D_{\vec{\theta_D}}, G_{\vec{\theta_G}})  \right)} }\right\vert_{\vec{\theta^\star_D}, \vec{\theta^\star_G}} \\
&=  -2 \eta \left.  \transpose{ \der{\vec{\theta_G}} {\nabla^2_{\vec{\theta_D}} V(D_{\vec{\theta_D}}, G_{\vec{\theta_G}})} \underbrace{\nabla_{\vec{\theta_D}} V(D_{\vec{\theta_D}}, G_{\vec{\theta_G}})}_{0 \text{ at eqbm}} +   
\der{\vec{\theta_D}}{\nabla_{\vec{\theta_G}}V(D_{\vec{\theta_D}}, G_{\vec{\theta_G}} )}
 \nabla^2_{\vec{\theta_D}} V(D_{\vec{\theta_D}}, G_{\vec{\theta_G}})  } \right\vert_{\vec{\theta^\star_D}, \vec{\theta^\star_G}} \\
= & -2\eta \vec{J}_{DG}^T \vec{J}_{DD}
\end{align*}

Now, the additional term added to $\vec{J}_{GG}$ is:

\begin{align*}
&-\left. \der{\vec{\theta_G}}{ \eta \nabla_{\vec{\theta_G}} \| \nabla_{\vec{\theta}_D} V(D_{\vec{\theta_D}}, G_{\vec{\theta_G}}) \|^2 }\right\vert_{\vec{\theta^\star_D}, \vec{\theta^\star_G}} \\
&=  - \eta \left. \flatder{\vec{\theta_G}}{  \left( 2 \der{\vec{\theta_D}}{\nabla_{\vec{\theta_G}} V(D_{\vec{\theta_D}}, G_{\vec{\theta_G}})} \nabla_{\vec{\theta_D}} V(D_{\vec{\theta_D}}, G_{\vec{\theta_G}})  \right)} \right\vert_{\vec{\theta^\star_D}, \vec{\theta^\star_G}} \\
&=  -2 \eta \left. \left( \der{\vec{\theta_D}}{\nabla^2_{\vec{\theta_G}} V(D_{\vec{\theta_D}}, G_{\vec{\theta_G}})} \underbrace{\nabla_{\vec{\theta_D}} V(D_{\vec{\theta_D}}, G_{\vec{\theta_G}})}_{0 \text{ at eqbm}} +  
\transpose{
\der{\vec{\theta_D}}{ \nabla_{\theta_G}V(D_{\vec{\theta_D}}, G_{\vec{\theta_G}} )}
}
\der{\vec{\theta_D}}{ \nabla_{\theta_G}V(D_{\vec{\theta_D}}, G_{\vec{\theta_G}} )}
 \right) \right\vert_{\vec{\theta^\star_D}, \vec{\theta^\star_G}} \\
= & -2\eta \vec{J}_{DG}^T \vec{J}_{DG}
\end{align*}
\end{proof}

Now, we will prove stability of the regularized system for conventional GANs. Observe that  Lemmas~\ref{lem:eqspace} regarding the subspace of equilibria holds in this case too. Again, we can project the system as follows:

\begin{lemma}
\label{lem:reg-projection}
For the dynamical system defined by the GAN objective in
Equation~\ref{eq:generic_gan} and the updates in
Equation~\ref{eq:damped_updates},
consider the eigenvalue decompositions $\vec{K}_{DD} = \vec{U_D} \vec{\Lambda_D} \vec{U_{D}}^T$ and $\vec{K}_{DG}^T \vec{K}_{DG} = \vec{U_G} \vec{\Lambda_G} \vec{U_{G}}^T$. Let $\vec{U_D} = [\vec{T}_D^T, \vec{T}_D'^T]$ and $\vec{U_G} = [\vec{T}_G^T, \vec{T}_G'^T]$ such that $\C(\vec{T}_{D}'^T) = \N(\vec{K}_{DD})$ and $\C(\vec{T}_{G}'^T) = \N(\vec{K}_{DG})$. Consider the projections, $\vec{\gamma_D} = \vec{T}_D \vec{\theta}_D$ and $\vec{\gamma_G} = \vec{T}_G \vec{\theta}_G$. Then, the block in the Jacobian at equilibrium that corresponds to the projected system has the form:
\[
\vec{J}' = \begin{bmatrix}
\vec{J}_{DD}' & \vec{J}_{DG}' \\
-\vec{J}_{DG}'^T & \vec{J}_{GG}' 
\end{bmatrix} = 
\begin{bmatrix}
\vec{T}_D \vec{J}_{DD} \vec{T}_{D}^T &   \vec{T}_D \vec{J}_{DG} \vec{T}_{G}^T \\
-\vec{T}_{G}\vec{J}_{DG}^T (\vec{I}+2\eta\vec{J}_{DD}) \vec{T}_D^T & -2\eta\vec{T}_{G}\vec{J}_{DG}^T\vec{J}_{DG} \vec{T}_{G}^T
\end{bmatrix}
\]
Under Assumption~\ref{as:loss}, we have that $\vec{J}_{DD}' \prec 0$ and $\vec{J}_{DG}'$ is full column rank and $\vec{J}_{GG}' \prec 0$.
\end{lemma}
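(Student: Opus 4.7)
The plan is to mimic the structure of Lemma~\ref{lem:projection}, extending it to account for the two extra terms that the gradient regularizer contributes to the generator block of the Jacobian in Lemma~\ref{lem:damped-jacobian}. The claimed form of $\vec{J}'$ is obtained simply by conjugating the full Jacobian with the block-diagonal rotation matrix $\mathrm{diag}(\vec{T}_D, \vec{T}_G)$: the top-left block becomes $\vec{T}_D \vec{J}_{DD} \vec{T}_D^T$, the top-right $\vec{T}_D \vec{J}_{DG} \vec{T}_G^T$, the bottom-left $-\vec{T}_G \vec{J}_{DG}^T (\vec{I} + 2\eta \vec{J}_{DD}) \vec{T}_D^T$, and the bottom-right $-2\eta \vec{T}_G \vec{J}_{DG}^T \vec{J}_{DG} \vec{T}_G^T$. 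This is just bookkeeping.

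Because the top two blocks of $\vec{J}'$ are identical to the analogous blocks in Lemma~\ref{lem:projection}, the arguments for $\vec{J}_{DD}' \prec 0$ and for $\vec{J}_{DG}'$ having full column rank carry over verbatim. Specifically, $\vec{J}_{DD}' = 2f''(0)\,\vec{T}_D \vec{K}_{DD} \vec{T}_D^T = 2f''(0)\,\vec{\Lambda}_D^{(+)}$ is strictly negative definite by Assumption~\ref{as:loss} ($f''(0)<0$); and the full-column-rank claim reduces, exactly as in Lemma~\ref{lem:projection}, to the observation (established there using the equilibrium condition $D_{\vec{\theta_D^\star}}(x)=0$ on the support) that the left null space of $\vec{K}_{DD}$ is contained in the left null space of $\vec{K}_{DG}$, so projecting onto the row span of $\vec{K}_{DD}$ does not decrease the column rank of $\vec{K}_{DG} \vec{T}_G^T$.

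The only genuinely new item is showing $\vec{J}_{GG}' \prec 0$. For this I would use $\vec{J}_{DG} = f'(0)\,\vec{K}_{DG}$, so that
\[
\vec{J}_{GG}' \;=\; -2\eta\,f'(0)^2\,\vec{T}_G \vec{K}_{DG}^T \vec{K}_{DG} \vec{T}_G^T.
\]
Using the eigendecomposition $\vec{K}_{DG}^T \vec{K}_{DG} = \vec{U}_G \vec{\Lambda}_G \vec{U}_G^T$ together with the defining property that $\vec{T}_G$ collects the rows of $\vec{U}_G^T$ corresponding to the strictly positive eigenvalues, one obtains $\vec{T}_G \vec{K}_{DG}^T \vec{K}_{DG} \vec{T}_G^T = \vec{\Lambda}_G^{(+)} \succ 0$. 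Combined with $\eta > 0$ and $f'(0) \neq 0$ from Assumption~\ref{as:loss}, this gives $\vec{J}_{GG}' = -2\eta f'(0)^2 \vec{\Lambda}_G^{(+)} \prec 0$.

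There is no real obstacle here; the main content was already extracted in Lemma~\ref{lem:projection}, and the regularizer's contribution to the generator block is by construction the negative of a Gram matrix, which the projection preserves as strictly negative definite. The payoff, to be exploited in the proof of Theorem~\ref{thm:regularized}, is that now \emph{both} diagonal blocks of $\vec{J}'$ are strictly negative definite (once $\eta$ is small enough that $\vec{I} + 2\eta \vec{J}_{DD} \succeq 0$ does not wreck the antisymmetry-breaking argument needed to show the whole matrix is Hurwitz), and in particular the generator block is strictly stable even when $f''(0)=0$ as in the WGAN case.
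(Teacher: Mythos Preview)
Your proposal is correct and matches the paper's own treatment: the paper does not give a separate proof of this lemma but simply states that it is a straightforward extension of Lemma~\ref{lem:projection}, and you have filled in exactly those details --- the top blocks carry over verbatim, and the only new ingredient is that $\vec{J}_{GG}' = -2\eta f'(0)^2\,\vec{T}_G \vec{K}_{DG}^T \vec{K}_{DG}\,\vec{T}_G^T = -2\eta f'(0)^2\,\vec{\Lambda}_G^{(+)} \prec 0$ by the choice of $\vec{T}_G$.
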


It is straightforward to extend the proof of Lemma~\ref{lem:projection} to prove this lemma. 
Now, recall from Theorem~\ref{thm:multiple-equilibria} that if we show $\vec{J'}$ is Hurwitz the original system is exponentially stable. In the non-regularized system, we showed this by making use of the structure of the matrix. For this system, we will design a quadratic Lyapunov function that strictly decreases at non-equilibria points.

\begin{lemma}
\label{lem:reg-lyapunov}
For the dynamical system defined by the GAN objective in
Equation~\ref{eq:generic_gan} and the updates in
Equation~\ref{eq:damped_updates}, if $\eta < \frac{1}{2\lambda_{\max}(-\vec{J_{DD}})}$  the linearization of the system projected to a subspace orthogonal to the subspace of equilibria is exponentially stable with the Lyapunov function $\vec{x}^T \vec{P} \vec{x}$ where,
\[
\vec{P} = 
\begin{bmatrix}
\vec{T}_D (\vec{I} + 2\eta \vec{J}_{DD}) \vec{T}_D^T & 0 \\
0 & \vec{I}
\end{bmatrix}
\]
and $\vec{x}^T$ is $[\vec{\gamma_D}^T  \vec{\gamma_G}^T ] - [\vec{\gamma^\star_D}^T  \vec{\gamma^\star_G}^T ]$. The function
strictly decreases with time except at the equilibrium $[\vec{\gamma^\star_D}^T \vec{\gamma^\star_G}^T]^T$.
\end{lemma}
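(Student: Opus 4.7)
The plan is to verify the four ingredients needed for $V(\vec{x}) = \vec{x}^T \vec{P} \vec{x}$ to be a strict Lyapunov function for the linearized projected system $\vec{\dot{x}} = \vec{J}' \vec{x}$: (i) $\vec{P} \succ 0$; (ii) the off-diagonal blocks of $\vec{P}\vec{J}' + \vec{J}'^T\vec{P}$ vanish; (iii) the upper-left block of $\vec{P}\vec{J}' + \vec{J}'^T\vec{P}$ is negative definite; (iv) the lower-right block is negative definite. Together these give $\dot{V}(\vec{x}) < 0$ for $\vec{x} \neq 0$ on the projected subspace, which is the condition needed in Theorem~\ref{thm:multiple-equilibria} to conclude local exponential stability of the full regularized system.

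For (i), I would use that the rows of $\vec{T}_D$ are orthonormal eigenvectors associated with the nonzero eigenvalues of $\vec{K}_{DD}$, so $\vec{T}_D \vec{T}_D^T = \vec{I}$ and $\vec{T}_D \vec{M} \vec{T}_D^T \succ 0$ whenever $\vec{M} \succ 0$. Choosing $\eta < 1/(2\lambda_{\max}(-\vec{J}_{DD}))$ makes $\vec{I} + 2\eta \vec{J}_{DD} \succ 0$, so the upper-left block of $\vec{P}$ is positive definite and the lower-right block is the identity.

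The heart of the argument is (ii), the off-diagonal cancellation. Let $\vec{P}_1 = \vec{T}_D(\vec{I}+2\eta\vec{J}_{DD})\vec{T}_D^T$. I need to verify the identity $\vec{P}_1 \vec{J}_{DG}' = \bigl(\vec{T}_G \vec{J}_{DG}^T(\vec{I}+2\eta\vec{J}_{DD})\vec{T}_D^T\bigr)^T$, i.e.\ $\vec{P}_1 \vec{J}_{DG}' = \vec{T}_D(\vec{I}+2\eta\vec{J}_{DD})\vec{J}_{DG}\vec{T}_G^T$. Expanding $\vec{J}_{DG}' = \vec{T}_D \vec{J}_{DG}\vec{T}_G^T$ and using the two projector identities (a) $\vec{T}_D^T\vec{T}_D \vec{J}_{DD} = \vec{J}_{DD}$ (since $\vec{J}_{DD}\propto \vec{K}_{DD}$ vanishes on $\Null(\vec{K}_{DD})$) and (b) $\vec{T}_D^T\vec{T}_D \vec{J}_{DG} = \vec{J}_{DG}$ (since by Lemma~\ref{lem:projection}, the left null space of $\vec{K}_{DD}$ is contained in the left null space of $\vec{K}_{DG}$), together with $\vec{T}_D\vec{T}_D^T = \vec{I}$, gives the desired identity through a short symbolic manipulation. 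I expect this bookkeeping with the projectors $\vec{T}_D^T\vec{T}_D$ to be the main obstacle; without it the off-diagonal terms would not cancel exactly and the Lyapunov function would need unpleasant cross-terms.

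For (iii), the same projector identities reduce the upper-left block to $\vec{P}_1 \vec{J}_{DD}' + \vec{J}_{DD}'^T \vec{P}_1 = 2\vec{T}_D \vec{J}_{DD}(\vec{I}+2\eta\vec{J}_{DD})\vec{T}_D^T$; on the range of $\vec{T}_D^T$, $\vec{J}_{DD}$ is strictly negative definite and the step-size bound makes $\vec{I}+2\eta\vec{J}_{DD}$ strictly positive definite, so the product is negative definite. For (iv), the lower-right block equals $-4\eta\,\vec{J}_{DG}'^T\vec{J}_{DG}'$, which is negative definite since $\vec{J}_{DG}'$ has full column rank by Lemma~\ref{lem:reg-projection}. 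Combining (i)--(iv) gives $\vec{P}\vec{J}' + \vec{J}'^T\vec{P} \prec 0$, hence $\dot{V}(\vec{x}) < 0$ for all nonzero $\vec{x}$, establishing exponential stability of the linearized projected system and, through Theorem~\ref{thm:multiple-equilibria}, of the original gradient-regularized GAN dynamics near the subspace of equilibria.
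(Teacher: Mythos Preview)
Your proposal is correct and follows essentially the same approach as the paper: verify $\vec{P}\succ 0$ under the step-size bound, compute $\vec{Q}=\vec{P}\vec{J}'+\vec{J}'^T\vec{P}$, show its off-diagonal blocks vanish, and show each diagonal block is negative definite. The only cosmetic difference is in the off-diagonal cancellation: the paper shows the stronger matrix identity $\vec{T}_D(\vec{I}+2\eta\vec{J}_{DD})(\vec{I}-\vec{T}_D^T\vec{T}_D)=0$ (which needs only that $\vec{J}_{DD}$ vanishes on $\Null(\vec{K}_{DD})$), whereas you additionally invoke the projector identity $\vec{T}_D^T\vec{T}_D\vec{J}_{DG}=\vec{J}_{DG}$ from Lemma~\ref{lem:projection}; both routes are valid and lead to the same block-diagonal $\vec{Q}$.
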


\begin{proof}

Note that when $\eta < \frac{1}{2\lambda_{\max}(-\vec{J_{DD}})}$,  $\vec{P} = \vec{P}^T \succ 0$ therefore the Lyapunov function is indeed positive definite. Furthermore, note that the rate of decrease is given by $\vec{x}^T \vec{Q} \vec{x}$ where $\vec{Q} = (\vec{J}'^T \vec{P} + \vec{P}\vec{J}') $. To show that this is strictly decreasing, we only need to show that $\vec{J}'^T \vec{P} + \vec{P} \vec{J}' \prec 0$.
First of all, note that $\vec{Q} =$

\begin{align*}
\begin{bmatrix}
\vec{T}_D \\
\vec{T}_G
\end{bmatrix}
\Bigg(\begin{bmatrix}
\vec{J}_{DD} (\vec{I} + 2\eta \vec{J}_{DD})&  -(\vec{I} +2 \eta \vec{J}_{DD}) \vec{J_{DG}}  \\
\vec{J}_{DG}^T \vec{T}_D^T \vec{T}_D (\vec{I} + 2\eta \vec{J}_{DD}) &  - 2\eta \vec{J}_{DG}^T \vec{J}_{DG}
\end{bmatrix} + \\
\begin{bmatrix}
 (\vec{I} + 2\eta \vec{J}_{DD}) \vec{J}_{DD}  &  (\vec{I} + 2\eta \vec{J}_{DD})\vec{T}_D^T \vec{T}_D \vec{J}_{DG} \\
-\vec{J}_{DG}^T(\vec{I} +2 \eta \vec{J}_{DD}) &  - 2\eta \vec{J}_{DG}^T \vec{J}_{DG}
\end{bmatrix}\Bigg)\begin{bmatrix}
\vec{T}_D \\
\vec{T}_G
\end{bmatrix}^T
\end{align*}
Here, the off-diagonal terms are $ \vec{T}_D (\vec{I} + 2\eta \vec{J}_{DD}) (\vec{I} - \vec{T}_D^T \vec{T}_D) \vec{J}_{DG} \vec{T}_{G}^T$ and its negative. This can be equated to zero because, 
\begin{align*}
\vec{T}_D (\vec{I} + 2\eta \vec{J}_{DD}) (\vec{I} - \vec{T}_D^T \vec{T}_D) & = \vec{T}_{D} - \overbrace{\vec{T}_{D}\vec{T}_D^T}^{\vec{I}} \vec{T}_D +   2\eta \vec{T}_{D} \vec{J}_{DD} - 2\eta\vec{T}_{D} \vec{J}_{DD} \vec{T}_D^T \vec{T}_D  \\
&= 2\eta  (\vec{T}_{D} \vec{J}_{DD} - \vec{T}_{D} \vec{J}_{DD} \vec{T}_D^T \vec{T}_D  ) \\
&= 2\eta  (\vec{T}_{D} \vec{T}_{D}^T \vec{\Lambda}_{D} \vec{T}_D - \vec{T}_{D} \vec{T}_{D}^T \vec{\Lambda}_{D} \vec{T}_D  \vec{T}_D^T \vec{T}_D  ) \\
& = 2\eta (\vec{\Lambda}_{D} \vec{T}_D  - \vec{\Lambda}_{D} \vec{T}_D  \vec{T}_D^T \vec{T}_D  ) = 0
\end{align*}

Then, the above matrix is equal to the diagonal matrix:
\[
\begin{bmatrix}
\vec{T_D}\left[\vec{J}_{DD} (\vec{I} + 2\eta \vec{J}_{DD}) + (\vec{I} + 2\eta \vec{J}_{DD}) \vec{J}_{DD} \right]\vec{T_D}^T & 0\\
0 &  - 4\eta \vec{T}_G\vec{J}_{DG}^T \vec{J}_{DG} \vec{T}_G^T \\
\end{bmatrix}  
\]

Note that by our choice of $\eta$, $(\vec{I} + 2\eta \vec{J}_{DD}) \succ 0$. 
Therefore, $\vec{J}_{DD}$ and $\vec{I} + 2\eta \vec{J}_{DD}$ share the same set of eigenvectors. Thus, the null space of $\vec{J}_{DD} $ and the term $\vec{J}_{DD} (\vec{I} + 2\eta \vec{J}_{DD}) + (\vec{I} + 2\eta \vec{J}_{DD}) \vec{J}_{DD}$ are the same, specifically orthogonal to $\vec{T}_D$. In other words, the top-left block above is a diagonal matrix with strictly negative eigenvalues.  Similarly, we also know that $-2\eta \vec{T_G}\vec{J}_{DG}^T \vec{J}_{DG} \vec{T_G}^T$ is a diagonal matrix with negative values. Hence, the above matrix is negative definite.
\end{proof}

\subsubsection{Exponential stability of gradient-regularized WGAN} 

We now proceed to the Wasserstain GAN scenario. First we lay down equivalent assumptions for the WGAN under which we can guarantee exponential stability in the regularized case. Note that even under these conditions, the unregularized update does not ensure asymptotic stability.

First, we note that due to the linearity of the loss function, it is not necessary that the discriminator be only identically zero on the support for the system to be at equilibrium --- it could also be constant on the support. 
Thus, we relax Assumption~\ref{as:global-gen} for this case to accommodate this.

\textbf{Assumption} \textbf{~\ref{as:global-gen}.} (\textbf{WGAN, Realizable}) %If
%$(\vec{\theta^\star_G}, \vec{\theta^\star_D})$ is an equilibrium point, %we assume that  
% the generator is equal to the true underlying distribution: 
$p_{\vec{\theta_G^\star}} = p_{\rm data}$ and 
%constant for some $c \in \mathbb{R}$, 
$D_{\vec{\theta^\star_D}}(x) = c$, $\forall \;
x \in {\rm supp}(p_{\rm data})$ for some $c \in \mathbb{R}$.

Next, we state an assumption equivalent to Assumption~\ref{as:convexity}.  Recall that earlier we wanted $\mathbb{E}_{p_{\rm data}}[D^2_{\vec{\theta_D}}(x)]$ to satisfy Property~\ref{prop:convex} in the discriminator space.  Instead  of this function, we will now require that the magnitude of the generator updates satisfy Property~\ref{prop:convex} in the discriminator space. Note that the Hessian of this function at equilibrium is $\vec{K}_{DG}\vec{K}_{DG}^T$.
% Also note that this is symmetric to the function we defined on the generator space.  

\textbf{Assumption}  \textbf{~\ref{as:convexity}.} (\textbf{WGAN}) At an equilibrium $(\vec{\theta^\star_D}, \vec{\theta^\star_G})$, the functions
$\left. \left\| \int_{\mathcal{X}} \nabla_{\vec{\theta_G}} p_{\vec{\theta_G}}(x) D_{\vec{\theta_D}}(x)  \right\|^2 \right\vert_{\vec{\theta_G}=\vec{\theta_G^\star}}$ and $\left. \left\| \mathbb{E}_{p_{\rm data}}[ \nabla_{\vec{\theta_D}} D_{\vec{\theta_D}} (x)  ]   -  \mathbb{E}_{p_{\vec{\theta_G}}}[ \nabla_{\vec{\theta_D}} D_{\vec{\theta_D}} (x)  ]   \right\|^2 \right\vert_{\vec{\theta_D} = \vec{\theta_D^\star}}$  
must satisfy Property~\ref{prop:convex} in the discriminator and generator space respectively.\\

Note that in effect, we get rid of the assumption on the other function and introduce a different function here; in either case, the original system is not asymptotically stable due to zero diagonal blocks in its Jacobian. Next, we retain Assumption~\ref{as:same-support} as it is. These are the only three assumptions we will need. 

We will now begin with a lemma similar to Lemma~\ref{lem:eqspace}

\begin{lemma}
\label{lem:wgan-eqspace}
For the dynamical system defined by the WGAN objective in
Equation~\ref{eq:generic_gan} and the updates in
Equation~\ref{eq:damped_updates},
under Assumptions~\ref{as:global-gen} and ~\ref{as:convexity} under the WGAN case, there exists $\epsilon_D, \epsilon_G > 0$ such that for all $\epsilon_D' \leq \epsilon_D$ and $\epsilon_G' \leq \epsilon_G$, and for any unit vectors  $\vec{u} \in \N(\vec{K}_{DG}^T), \vec{v} \in \N(\vec{K}_{DG})$,
 $(\vec{\theta_D^\star} + \epsilon_D' \vec{u}, \vec{\theta_G^\star} + \epsilon_G' \vec{v})$ is an equilibrium point.
 \end{lemma}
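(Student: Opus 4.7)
The proof will closely follow that of Lemma~\ref{lem:eqspace}, replacing the Hessian computations that depended on $\vec{K}_{DD}$ by ones that involve $\vec{K}_{DG}\vec{K}_{DG}^T$, since in the WGAN case (where $f''(0)=0$) the discriminator-side structure that controls the local subspace of equilibria is instead encoded by the first function in the WGAN version of Assumption~\ref{as:convexity}. Concretely, the two things to show are: (i) perturbing $\vec{\theta}_D^\star$ along any $\vec{u}\in\N(\vec{K}_{DG}^T)$ leaves the GAN updates at zero with $\vec{\theta}_G^\star$ fixed, and (ii) perturbing $\vec{\theta}_G^\star$ along any $\vec{v}\in\N(\vec{K}_{DG})$ leaves the GAN updates at zero with $\vec{\theta}_D^\star$ fixed.

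First I would compute the Hessian of $\left\|\int_{\mathcal{X}}\nabla_{\vec{\theta}_G}p_{\vec{\theta}_G}(x) D_{\vec{\theta}_D}(x)\,dx\right\|^2$ with respect to $\vec{\theta}_D$ at equilibrium. A direct product-rule calculation yields $2\vec{K}_{DG}\vec{K}_{DG}^T$, where the cross term involving the second derivative of the integrand vanishes because, by the WGAN version of Assumption~\ref{as:global-gen}, $D_{\vec{\theta}_D^\star}\equiv c$ on ${\rm supp}(p_{\rm data})$, so the inner factor $\int\nabla_{\vec{\theta}_G}p_{\vec{\theta}_G^\star}(x) D_{\vec{\theta}_D^\star}(x)\,dx = c\,\nabla_{\vec{\theta}_G}\!\int p_{\vec{\theta}_G^\star}(x)\,dx = 0$ (using Assumption~\ref{as:same-support} to extend the integration to all of $\mathcal{X}$). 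The Hessian of the second function with respect to $\vec{\theta}_G$ at equilibrium is $2\vec{K}_{DG}^T\vec{K}_{DG}$ by exactly the same computation as in Lemma~\ref{lem:eqspace}.

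Next, applying Property~\ref{prop:convex} via the WGAN version of Assumption~\ref{as:convexity}, both functions are locally constant (and hence remain $0$) along $\vec{u}\in\N(\vec{K}_{DG}\vec{K}_{DG}^T)=\N(\vec{K}_{DG}^T)$ and $\vec{v}\in\N(\vec{K}_{DG})$ respectively. I would then translate these vanishing identities into vanishing updates at the perturbed points. At $(\vec{\theta}_D^\star+\epsilon_D'\vec{u},\vec{\theta}_G^\star)$, the first function being zero is exactly the squared norm of the generator update (using $f'(0)=1$), so $\dot{\vec{\theta}}_G=0$; the discriminator update is automatically zero because $\vec{\theta}_G^\star$ is unchanged and $p_{\vec{\theta}_G^\star}=p_{\rm data}$. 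At $(\vec{\theta}_D^\star,\vec{\theta}_G^\star+\epsilon_G'\vec{v})$, the second function being zero gives $\dot{\vec{\theta}}_D=0$; for the generator update, Assumption~\ref{as:same-support} ensures ${\rm supp}(p_{\vec{\theta}_G^\star+\epsilon_G'\vec{v}})={\rm supp}(p_{\rm data})$, on which $D_{\vec{\theta}_D^\star}\equiv c$, so $\dot{\vec{\theta}}_G=\int\nabla_{\vec{\theta}_G}p_{\vec{\theta}_G}(x)\cdot c\,dx=c\,\nabla_{\vec{\theta}_G}\!\int p_{\vec{\theta}_G}(x)\,dx = c\cdot\nabla 1 = 0$.

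The main delicacy is handling the constant $c$ appearing in the WGAN equilibrium condition: unlike the standard GAN case in Lemma~\ref{lem:eqspace} where $D_{\vec{\theta}_D^\star}\equiv 0$ makes several integral terms collapse for free, here one must carefully use the fact that $D_{\vec{\theta}_D^\star}$ is merely \emph{constant} (not zero) on the support, together with Assumption~\ref{as:same-support}, to reduce integrals of the form $\int\nabla_{\vec{\theta}_G}p_{\vec{\theta}_G}(x) D_{\vec{\theta}_D^\star}(x)\,dx$ to $c\,\nabla 1 = 0$. Once this is kept straight, all remaining steps parallel the proof of Lemma~\ref{lem:eqspace} almost verbatim, with the role of $\vec{K}_{DD}$ uniformly taken over by $\vec{K}_{DG}\vec{K}_{DG}^T$.
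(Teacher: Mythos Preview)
Your approach closely parallels the paper's and the individual computations are correct, but there is a genuine gap: you only verify that each \emph{single-axis} perturbation $(\vec{\theta}_D^\star+\epsilon_D'\vec{u},\vec{\theta}_G^\star)$ and $(\vec{\theta}_D^\star,\vec{\theta}_G^\star+\epsilon_G'\vec{v})$ is an equilibrium. The lemma, however, asserts that the \emph{joint} perturbation $(\vec{\theta}_D^\star+\epsilon_D'\vec{u},\vec{\theta}_G^\star+\epsilon_G'\vec{v})$ is an equilibrium, and nothing in your argument connects the two single-axis checks to the combined one. For instance, the first function in the WGAN version of Assumption~\ref{as:convexity} is evaluated at $\vec{\theta}_G=\vec{\theta}_G^\star$, so its vanishing only tells you the generator update is zero \emph{at $\vec{\theta}_G^\star$}, not at $\vec{\theta}_G^\star+\epsilon_G'\vec{v}$.

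The paper closes this gap by invoking Assumption~\ref{as:global-gen} (WGAN version) after each single-axis check. Once you know $(\vec{\theta}_D^\star+\epsilon_D'\vec{u},\vec{\theta}_G^\star)$ is an equilibrium, Assumption~\ref{as:global-gen} forces $D_{\vec{\theta}_D^\star+\epsilon_D'\vec{u}}$ to be constant on ${\rm supp}(p_{\rm data})$, upgrading it to an ``equilibrium discriminator'' that yields zero generator update when paired with \emph{any} nearby generator. Likewise, once $(\vec{\theta}_D^\star,\vec{\theta}_G^\star+\epsilon_G'\vec{v})$ is shown to be an equilibrium, Assumption~\ref{as:global-gen} forces $p_{\vec{\theta}_G^\star+\epsilon_G'\vec{v}}=p_{\rm data}$, upgrading it to an ``equilibrium generator'' that yields zero discriminator update when paired with \emph{any} discriminator. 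With both upgrades in hand, the combined perturbation is an equilibrium. Add these two invocations of Assumption~\ref{as:global-gen} and your proof is complete.
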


\begin{proof}

Note that $2\vec{K}_{DG}\vec{K}_{DG}^T$ is the Hessian of the function $\left\| \int_{\mathcal{X}} \nabla_{\vec{\theta_G}} p_{\vec{\theta_G}}(x) D_{\vec{\theta_D}}(x)  \right\|^2$ at equilibrium, namely the magnitude of the generator update. Then, by Assumption~\ref{as:convexity},  this function is locally constant  along any unit vector $\vec{u} \in \N(\vec{K}_{DG}^T)$. That is, 
 for sufficiently small $\epsilon$, if $\vec{\theta_D} = \vec{\theta^\star_D} + \epsilon  \vec{u}$, the function value is equal to the value at equilibrium which is zero, because by definition at equilibrium the generator update is zero. Now at $(\vec{\theta_D}, \vec{\theta_G^\star})$, the discriminator update is zero too since the generator matches the true distribution. Then by Assumption~\ref{as:global-gen}, it means that  $D_{\vec{\theta_D}}$ is identical over the true support. Then, it is an equilibrium discriminator such that the update for any generator would be zero.

 Similarly, as we saw, $2\vec{K}_{DG}^T \vec{K}_{DG}$ is the Hessian of the function $\left\| \mathbb{E}_{p_{\rm data}}[ \nabla_{\vec{\theta_D}} D_{\vec{\theta_D}} (x)  ]   -  \mathbb{E}_{p_{\vec{\theta_G}}}[ \nabla_{\vec{\theta_D}} D_{\vec{\theta_D}} (x)  ]   \right\|^2$ at equilibrium, namely the magnitude of the discriminator update. We also saw that 
 for sufficiently small $\epsilon'$, if $\vec{\theta_G} = \vec{\theta^\star_G} + \epsilon'  \vec{v}$, this function is zero. Thus,  at $(\vec{\theta_D^\star}, \vec{\theta_G})$, the discriminator update is zero. Furthermore, the generator update is zero too because the discriminator is constant throughout the support. Thus,  $(\vec{\theta_D^\star}, \vec{\theta_G})$ is an equilibrium point and from Assumption~\ref{as:global-gen} we can conclude that $p_{\vec{\theta_G}} = p_{\rm data}$. Thus, it is an equilibrium generator such that the update for any discriminator would be zero.

In summary, for all slight perturbations along $\vec{u} \in \N(\vec{K}_{DG}^T), \vec{v} \in \N(\vec{K}_{DG})$ we have established that the discriminator and generator individually satisfy the requirements of an equilibrium discriminator and generator pair, and therefore the system is itself is in equilibrium for these perturbations.
\end{proof}

Now, we show that this system can again be projected to a subspace orthogonal the equilibrium subspace such that the resulting Jacobian of the reduced system is Hurwitz. While earlier we chose $\vec{T}_{D}$ based on the matrix $\vec{K}_{DD}$ now we will choose it based on $\vec{K}_{DG}^T$.

\begin{lemma}
\label{lem:wgan-projection}
For the dynamical system defined by the GAN objective in
Equation~\ref{eq:generic_gan} and the updates in
Equation~\ref{eq:damped_updates},
consider the eigenvalue decompositions $\vec{K}_{DG}\vec{K}_{DG}^T = \vec{U_D} \vec{\Lambda_D} \vec{U_{D}}^T$ and $\vec{K}_{DG}^T \vec{K}_{DG} = \vec{U_G} \vec{\Lambda_G} \vec{U_{G}}^T$. Let $\vec{U_D} = [\vec{T}_D^T, \vec{T}_D'^T]$ and $\vec{U_G} = [\vec{T}_G^T, \vec{T}_G'^T]$ such that $\C(\vec{T}_{D}'^T) = \N(\vec{K}_{DD})$ and $\C(\vec{T}_{G}'^T) = \N(\vec{K}_{DG})$. Consider the projections, $\vec{\gamma_D} = \vec{T}_D \vec{\theta}_D$ and $\vec{\gamma_G} = \vec{T}_G \vec{\theta}_G$. Then, the block in the Jacobian at equilibrium that corresponds to the projected system has the form:
\[
\vec{J}' = \begin{bmatrix}
\vec{J}_{DD}' & \vec{J}_{DG}' \\
-\vec{J}_{DG}'^T & \vec{J}_{GG}' 
\end{bmatrix} = 
\begin{bmatrix}
0 &   \vec{T}_D \vec{J}_{DG} \vec{T}_{G}^T \\
-\vec{T}_{G}\vec{J}_{DG}^T \vec{T}_D^T & -2\eta\vec{T}_{G}\vec{J}_{DG}^T\vec{J}_{DG} \vec{T}_{G}^T
\end{bmatrix}
\]
Furthermore $\vec{J}_{GG}' \prec 0$ and $\vec{J}_{DG}'^T$ is full column rank.
\end{lemma}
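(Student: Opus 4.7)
The plan is to compute the projected Jacobian blockwise by applying the orthogonal projection $\mathrm{diag}(\vec{T}_D, \vec{T}_G)$ to the full Jacobian derived in Lemma~\ref{lem:damped-jacobian}, and then verify the two structural claims separately. The crucial simplification is that for the WGAN loss $f(x)=x$ we have $f''(0)=0$, so $\vec{J}_{DD}=2f''(0)\vec{K}_{DD}=0$ and consequently $\vec{I}+2\eta\vec{J}_{DD}=\vec{I}$. This immediately produces the four claimed blocks: the top-left becomes $\vec{T}_D\cdot 0\cdot\vec{T}_D^T=0$, the off-diagonal blocks are $\vec{T}_D\vec{J}_{DG}\vec{T}_G^T$ and $-\vec{T}_G\vec{J}_{DG}^T\vec{T}_D^T$, and the bottom-right reduces to $-2\eta\,\vec{T}_G\vec{J}_{DG}^T\vec{J}_{DG}\vec{T}_G^T$.

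For $\vec{J}_{GG}'\prec 0$, I would substitute $\vec{J}_{DG}=f'(0)\vec{K}_{DG}$ and use the eigendecomposition $\vec{K}_{DG}^T\vec{K}_{DG}=\vec{U}_G\vec{\Lambda}_G\vec{U}_G^T$. By construction the rows of $\vec{T}_G$ are the eigenvectors with strictly positive eigenvalues, so $\vec{T}_G(\vec{K}_{DG}^T\vec{K}_{DG})\vec{T}_G^T$ equals the diagonal matrix $\vec{\Lambda}_G^{(+)}$ of strictly positive eigenvalues. Hence $\vec{J}_{GG}'=-2\eta f'(0)^2\vec{\Lambda}_G^{(+)}\prec 0$, which holds since $\eta>0$ and $f'(0)\neq 0$ by Assumption~\ref{as:loss}.

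The main step is showing $\vec{J}_{DG}'^T=-f'(0)\,\vec{T}_G\vec{K}_{DG}^T\vec{T}_D^T$ has full column rank, or equivalently that the square matrix $\vec{T}_D\vec{K}_{DG}\vec{T}_G^T$ is invertible (note $\mathrm{rank}(\vec{K}_{DG})=\mathrm{rank}(\vec{K}_{DG}\vec{K}_{DG}^T)=\mathrm{rank}(\vec{K}_{DG}^T\vec{K}_{DG})$, so both projections strip the same number of dimensions). First, $\vec{K}_{DG}\vec{T}_G^T$ has full column rank by the same eigendecomposition argument: $\vec{T}_G(\vec{K}_{DG}^T\vec{K}_{DG})\vec{T}_G^T\succ 0$ forces its null space to be trivial. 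The remaining worry is that left-multiplication by $\vec{T}_D$ could kill some columns; but by the standard identity $\Col(\vec{K}_{DG})=\Col(\vec{K}_{DG}\vec{K}_{DG}^T)$ combined with the eigendecomposition of $\vec{K}_{DG}\vec{K}_{DG}^T$, this column space equals the row span of $\vec{T}_D$. Equivalently, every $\vec{u}\in\Null(\vec{K}_{DG}\vec{K}_{DG}^T)$ satisfies $\|\vec{K}_{DG}^T\vec{u}\|^2=\vec{u}^T\vec{K}_{DG}\vec{K}_{DG}^T\vec{u}=0$, so $\vec{T}_D'\vec{K}_{DG}=0$. Using $\vec{T}_D^T\vec{T}_D+\vec{T}_D'^T\vec{T}_D'=\vec{I}$, this yields $\vec{T}_D^T(\vec{T}_D\vec{K}_{DG}\vec{T}_G^T)=\vec{K}_{DG}\vec{T}_G^T$, and since the right-hand side has full column rank, so does $\vec{T}_D\vec{K}_{DG}\vec{T}_G^T$.

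The main obstacle I anticipate is handling the degenerate edge cases where $\vec{T}_D$ or $\vec{T}_G$ is empty (the analogue of the boundary cases discussed in Lemma~\ref{lem:projection}); in that situation the considered equilibrium sits in a full neighborhood of equilibria, so stability is trivial and the projection statement becomes vacuous. The genuinely new ingredient relative to Lemma~\ref{lem:projection} is that $\vec{T}_D$ is now defined from $\vec{K}_{DG}\vec{K}_{DG}^T$ rather than from $\vec{K}_{DD}$; this choice is dictated by Lemma~\ref{lem:wgan-eqspace}, since it is the null space of $\vec{K}_{DG}\vec{K}_{DG}^T$ (and not $\vec{K}_{DD}$, which vanishes identically for the WGAN) that encodes the directions along which the discriminator can be perturbed while preserving equilibrium.
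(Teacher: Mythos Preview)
Your proposal is correct and follows essentially the same approach as the paper: both start from Lemma~\ref{lem:damped-jacobian} with $\vec{J}_{DD}=0$, both obtain $\vec{J}_{GG}'\prec 0$ from the eigendecomposition of $\vec{K}_{DG}^T\vec{K}_{DG}$, and both establish the rank claim by observing that the projection $\vec{T}_D$ (respectively $\vec{T}_G$) removes exactly the null space of $\vec{K}_{DG}^T$ (respectively $\vec{K}_{DG}$) via the identity $\Null(\vec{K}_{DG}\vec{K}_{DG}^T)=\Null(\vec{K}_{DG}^T)$. The only cosmetic difference is that the paper argues the rank of $\vec{T}_G\vec{K}_{DG}^T\vec{T}_D^T$ by first showing $\vec{K}_{DG}^T\vec{T}_D^T$ is full column rank and then that left-multiplying by $\vec{T}_G$ preserves it, whereas you run the symmetric argument on the transpose; your explicit use of $\vec{T}_D^T\vec{T}_D+\vec{T}_D'^T\vec{T}_D'=\vec{I}$ is in fact more detailed than the paper's one-line justification.
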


\begin{proof}
Observe that the form of $\vec{J}'$ follows from Lemma~\ref{lem:damped-jacobian} by substituting $\vec{J}_{DD} = 0$. Furthermore, like we have seen before, observe that $\vec{T}_{G}\vec{J}_{DG}^T\vec{J}_{DG} \vec{T}_{G}^T$ is a diagonal matrix with positive eigenvalues and therefore $\vec{J'}_{GG} \prec 0$. Similarly, $\vec{J}_{DG}^T \vec{T}_D$ is a full column rank matrix because we have projected it to the subspace orthogonal to its null space. However, we need to show that $\vec{T}_G^T \vec{J}_{DG}^T \vec{T}_D$ which may have fewer rows, did not reduce in its rank. This is indeed true, since this is effectively a projection onto the subspace orthogonal to its left null space. 
\end{proof}

We now compile the above lemmas to prove our main result in Theorem~\ref{thm:regularized}.

%\regularized*

\begin{proof}
The first part of the theorem statement for the conventional GAN follows from Lemma~\ref{lem:damped-jacobian}, ~\ref{lem:reg-projection}, ~\ref{lem:reg-lyapunov}.

To prove the second part it is sufficient to show that the projected Jacobian of the linearized system in Lemma~\ref{lem:wgan-projection} is Hurwitz, from which exponential stability of the original system follows from Theorem~\ref{thm:multiple-equilibria}. The fact that this is Hurwitz follows as usual from Lemma~\ref{lem:undamped-bound} after we flip the discriminator and generator variables:

\[
\begin{bmatrix}
\vec{J}_{GG}'  & -\vec{J}_{DG}'^T  \\
 \vec{J}_{DG}'  & 0
\end{bmatrix}.
\]
The Jacobian is thus Hurwitz because $\vec{J}_{GG}'$ is negative definite and $-\vec{J}_{DG}'^T$ is full column rank. Now, for the eigenvalue bounds we have from Lemma~\ref{lem:undamped-bound} that:
\begin{itemize}
\item If $\Im(\lambda) = 0$, then $\Re(\lambda) \leq  - \frac{2 f'^2(0) \eta \lambda_{\min}^{(+)}(\vec{K}_{DG} \vec{K}_{DG}^T) \lambda_{\min}^{(+)}(\vec{K}_{DG}^T \vec{K}_{DG})}{4  f'^2(0)\eta^2 \lambda_{\max}(\vec{K}_{DG} \vec{K}_{DG}^T)\lambda_{\min}^{(+)}(\vec{K}_{DG} \vec{K}_{DG}^T)  +  \lambda_{\min}^{(+)}(\vec{K}_{DG}^T \vec{K}_{DG})} $
\item If $\Im(\lambda) \neq 0$, then $\Re(\lambda) \leq -  \eta f'^2(0) {\lambda_{\min}^{(+)}(\vec{K}_{DG} \vec{K}_{DG}^T)} $
 \end{itemize}

However, this can be further simplified to arrive at the given bound by noting that all the non-zero eigenvalues of any matrix $\vec{A} \vec{B}$ is also equal to the non-zero eigenvalues of the matrix $\vec{B} \vec{A}$. Therefore, we can replace every occurrence of $\vec{K}_{DG} \vec{K}_{DG}^T$ with $\vec{K}_{DG}^T \vec{K}_{DG}$ in the above inequality.

\end{proof}

Additionally, we show that we can find a Lyapunov function that satisfies LaSalle's principle for the projected linearized system. 

\begin{fact}
For the linearized projected system with the Jacobian $\vec{J}'$, we have that 
$1/2 \| \vec{\gamma}_D - \vec{\gamma^\star}_D \|^2 + 1/2 \| \vec{\gamma}_G - \vec{\gamma^\star}_G \|^2$ is a Lyapunov function such that for all non-equilbrium points, it either always decreases or only instantaneously remains constant.
\end{fact}

\begin{proof}
Note that the Lyapunov function is zero only at the equilibrium of the projected system. Furthermore, it is straightforward to verify that the rate at which this changes is given by 
$- 2\eta (\vec{\gamma_G} - \vec{\gamma_G^\star})^T
\vec{T}_G \vec{K}_{DG}^T \vec{K}_{DG} \vec{T}_G^T (\vec{\gamma_G} - \vec{\gamma_G^\star})$ which is non-positive. Clearly this is zero only when $\vec{\gamma_G} = \vec{\gamma_G^\star}$ because $\vec{T}_G \vec{K}_{DG}^T \vec{K}_{DG} \vec{T}_G^T $ is positive definite. 
When this rate is indeed zero, we have that for the linearized system,  
  $\vec{\dot{\gamma}_G} = \vec{T}_G \vec{K}_{DG}^T \vec{T}_{D}^T ( \vec{\gamma}_D - \vec{\gamma^\star}_D )$ because the other term becomes zero. For the system to identically stay on the manifold $\vec{\gamma_G} = \vec{\gamma_G^\star}$ we need $\vec{\dot{\gamma}_G} = 0$, which happens only when $\vec{\gamma}_D = \vec{\gamma^\star}_D$ because $\vec{T}_G \vec{K}_{DG}^T \vec{T}_{D}^T$ is full column rank. When that is the case, we are at equilibrium.

\end{proof}

%========================
%========================

 \subsection{Intuition based on arg-max differentiation}
\label{app:intuition}

 In an ideal world, an optimizer would hope
to have access to  a function $\vec{\theta^\star_D}(\vec{\theta_G}) =
\arg\max_{\vec{\theta_D}} V (D_{\vec{\theta_D}}, G_{\vec{\theta_G}})$,
which is basically the optimal discriminator as a function of the generator;
given this, the optimizer should be able to update the generator with respect to
that. Then, the update can be shown to be the following (for clarity we use the
superscript $t$ and $t+1$ to denote the current and the updated parameters):  

\begin{align*}
    \vec{\theta}^{(t+1)}_G := &  \vec{\theta}^{(t)}_G - \alpha \underbrace{ \nabla_{\vec{\theta_G}}  V(D_{\vec{\theta^\star_D}(\vec{\theta}^{(t)}_G)}, G_{\vec{\theta_G}})}_{\text{conventional update}}  - \\ & \alpha   \left.  
   \left(\frac{\partial \vec{\theta^\star_D}(\vec{\theta_G})  }{\partial \vec{\theta_G}}     \right)^T
 \right\vert_{\vec{\theta_G} = \vec{\theta}^{(t)}_G}     
    \left. \nabla_{\vec{\theta_D}}  V(D_{\vec{\theta_D}}, G_{\vec{\theta}^{(t)}_G})  \right\vert_{\vec{\theta_D} =  \vec{\theta^\star_D}(\vec{\theta}^{(t)}_G) }\label{eq:intuition} \\ 
\end{align*}

Observe that the last term is zero because, for the optimal discriminator
$\nabla_{\vec{\theta_D}} V(D_{\vec{\theta_D}}, G_{\vec{\theta}^{(t)}_G}) =
0$. However, in practice, we would not be at the optimal discriminator and
therefore this term may be non-zero. Our hypothesis is that, instead of ignoring
this term like it is done for the conventional updates, retaining this term may
prove to be useful. To do so, we simply plug in the current discriminator for
$\vec{\theta}^\star_D(\vec{\theta}^\star_G)$ while computing this term, and furthermore, estimate the value of
$\nabla_{\vec{\theta_G}} \vec{\theta^\star_D}(\vec{\theta_G})$ using the
following equation: %(which we derive  in Theorem~\ref{thm:penalty-intuition}): 

\begin{equation}
\begin{split}
0 = & \left. \nabla_{\vec{\theta_D}} V(\vec{\theta_D}, \vec{\theta_G^{(t)}}) \right\vert_{\vec{\theta_D} = \vec{\theta^\star_D}(\vec{\theta_G}^{(t)})} \implies \\ 
 0 =& \left. \flatder{\vec{\theta_G}} { \left( \left.
\nabla_{\vec{\theta_D}} V(D_{\vec{\theta_D}}, G_{\vec{\theta_G}}) \right\vert_{\vec{\theta_D} = \vec{\theta^\star_D}(\vec{\theta_G}^{(t)})} \right)}
\right\vert_{ \vec{\theta_G} = \vec{\theta_G}^{(t)}}  \\
= & \left. 
\der{\vec{\theta_G}}{\nabla_{\vec{\theta_D}} V(D_{\vec{\theta_D}}, G_{\vec{\theta_G}}) }
\right\vert_{\substack{\vec{\theta_D} = \vec{\theta^\star_D}(\vec{\theta_G}), \\ \vec{\theta_G} = \vec{\theta_G}^{(t)}} } +  \left. \nabla^2_{\vec{\theta_D}} V(D_{\vec{\theta_D}}, D_{\vec{\theta_G}^{(t)}}) \right\vert_{\vec{\theta_D} = \vec{\theta^\star_D}(\vec{\theta_G})} \left.
{\der{\vec{\theta_G}}  { \vec{\theta^\star_D}(\vec{\theta_G}) } }
\right\vert_{\vec{\theta_G} = \vec{\theta_G^{(t)}}}  
\end{split}
\end{equation}

In the second step above, we apply the chain rule. Rearranging, we get:
\begin{equation}
\begin{split}
\left.
{\der{\vec{\theta_G}}  { \vec{\theta^\star_D}(\vec{\theta_G}) } }
\right\vert_{\vec{\theta_G} = \vec{\theta_G^{(t)}}}  = - \left(  \left. \nabla^2_{\vec{\theta_D}} V(D_{\vec{\theta_D}}, D_{\vec{\theta_G}^{(t)}}) \right\vert_{\vec{\theta_D} = \vec{\theta^\star_D}(\vec{\theta_G})}   \right)^{-1} 
\left. 
\der{\vec{\theta_G}}{\nabla_{\vec{\theta_D}} V(D_{\vec{\theta_D}}, G_{\vec{\theta_G}}) }
\right\vert_{\substack{\vec{\theta_D} = \vec{\theta^\star_D}(\vec{\theta_G}), \\ \vec{\theta_G} = \vec{\theta_G}^{(t)}} }
\end{split}
\end{equation}

Since we hope the objective to be concave in the discriminator parameters, we
can approximate the Hessian as  $ \nabla^2_{\vec{\theta_D}}
V(D_{\vec{\theta_D}}, D_{\vec{\theta_G}^{(t)}}) = -\vec{I}/\eta$. Plugging
this into the update equation of $\vec{\theta}^{(t+1)}_G$ and also replacing  the optimal discriminator
with the current discriminator, 
%\begin{align*}
%\left.
%\nabla_{\vec{\theta_G}} \vec{\theta^\star_D}(\vec{\theta_G})
%\right\vert_{\vec{\theta_G} = \vec{\theta_G^{(t)}}} = & \eta  \left. \nabla_{\vec{\theta_G}} \nabla^T_{\vec{\theta_D}} V(D_{\vec{\theta_D}}, G_{\vec{\theta_G}}) \right\vert_{\vec{\theta_D} = \vec{\theta^\star_D}(\vec{\theta_G}), \vec{\theta_G}=\vec{\theta_G}^{(t)}}
%\end{align*}
we get the following  update rule which is equivalent to the original one presented in Equation~\ref{eq:damped_updates}:

\begin{equation} 
\label{eq:alternate_damped_updates}
    \vec{\theta}^{(t+1)}_G  :=  \vec{\theta}^{(t)}_G - \alpha \left. \nabla_{\vec{\theta_G}}  V(D_{\vec{\theta_D}}, G_{\vec{\theta_G}})\right\vert_{\vec{\theta_G} = \vec{\theta_G^{(t)}}}  -  \alpha    \eta  \left.
\transpose{\der{\vec{\theta_G}}{\nabla_{\vec{\theta_D}}    V(D_{\vec{\theta_D}}, G_{\vec{\theta_G}})   }    }
   \right\vert_{ \vec{\theta_G}=\vec{\theta_G}^{(t)}} 
    \nabla_{\vec{\theta_D}} V(D_{\vec{\theta}}, G_{\vec{\theta}^{(t)}_G})  \\
\end{equation}

\subsection{Mode Collapse and Relation to 1-unrolled updates}
\label{app:unrolled}

Our regularization term also has natural and intuitive
connections to an important issue that arises in GAN optimization called mode collapse.  Mode collapse is a situation where a GAN may enter an irrecoverable failure state 
where the generator incorrectly assigns all its probability mass to a
small region in space. This arises because a globally optimal strategy for the generator 
is to push all its mass towards the single point that the discriminator is the most confident
about being a real data point. To overcome this the generator needs more ``foresight'' -- it must
know that when it collapses all the mass, the discriminator will subsequently label
the collapsed point as fake data. Our penalty indeed encodes this foresight, because 
the  discriminator's ability to outdo the generator is quantified by the magnitude of the
discriminator's gradient. More clearly, our generator seeks a state where it can spread data out
enough, to make sure the discriminator has no obvious countermeasure (i.e., no
big gradients). 

In fact, we can show how our penalty term and 1-unrolled
GANs have very similar structure because intuitively both provide a one-step
lookahead to the generator. More precisely, we can arrive at 
1-unrolled updates if we simplify our updates further
and replace $\vec{\theta_D}$ by an ``unrolled''
$\vec{\theta_D} + \eta\nabla_{\vec{\theta_D}} \hat{V}(D_{\vec{\theta_D}},G_{\vec{\theta_G}})$.

 We begin by simplifying the 1-unrolled updates.
The key idea of a
1-unrolled update is to allow the generator to {\em explicitly} foresee how the
discriminator would react to its update, and optimize accordingly:
\begin{align*}
 & \vec{\theta_G}^{(t+1)}  := \vec{\theta_G}^{(t)} - \left.\alpha 
 \nabla_{\vec{\theta_G}}  
 V(
 D_{\vec{\theta_D} \underbrace{+ \eta\nabla_{\vec{\theta_D}} V(D_{\vec{\theta_D}},G_{\vec{\theta_G}})}_{\text{unrolling}}}, 
 G_{\vec{\theta_G}})
 \right\vert_{\vec{\theta_G} = \vec{\theta_G}^{(t)}}  \\
  & = \vec{\theta_G}^{(t)} - \left.\alpha \nabla_{\vec{\theta_G}}  V(D_{\vec{\theta_D} + \eta\nabla_{\vec{\theta_D}} V(D_{\vec{\theta_D}},G_{\vec{\theta_G}^{(t)}})}, G_{\vec{\theta_G}})\right\vert_{\vec{\theta_G} = \vec{\theta_G}^{(t)}}  \\
  &   - \left.\alpha \nabla_{\vec{\theta_G}}  V(D_{\vec{\theta_D} + \eta\nabla_{\vec{\theta_D}} V(D_{\vec{\theta_D}},G_{\vec{\theta_G}})}, G_{\vec{\theta_G}^{(t)}})\right\vert_{\vec{\theta_G} = \vec{\theta_G}^{(t)}}  \\
    & = \vec{\theta_G}^{(t)} - \left.\alpha \nabla_{\vec{\theta_G}}  V(D_{\vec{\theta_D} + \eta\nabla_{\vec{\theta_D}} V(D_{\vec{\theta_D}},G_{\vec{\theta_G}^{(t)}})}, G_{\vec{\theta_G}})\right\vert_{\vec{\theta_G} = \vec{\theta_G}^{(t)}}\\
  &   - \alpha \eta 
 \left.  
\transpose{\der{\vec{\theta_G}}{\nabla_{\vec{\theta_D}} V(D_{\vec{\theta_D}}, G_{\vec{\theta_G}}) }}  \right\vert_{\vec{\theta_G} = \vec{\theta_G}^{(t)}}  
\left.\nabla_{\vec{\theta_D'}} V(D_{\vec{\theta_D'}}, G_{\vec{\theta_G}^{(t)}}) \right\vert_{\vec{\theta_D'}=\vec{\theta_D} + \eta\nabla_{\vec{\theta_D}} V(D_{\vec{\theta_D}},G_{ \vec{\theta_G}^{(t)}})}\\
\end{align*}
In the first step, we compute gradient with respect to $\vec{\theta_G}$ as the sum of the gradients with respect to the two instances of $\vec{\theta_G}$ that occur in $V(
 D_{\vec{\theta_D} + \eta\nabla_{\vec{\theta_D}} V(D_{\vec{\theta_D}},G_{\vec{\theta_G}})}, 
 G_{\vec{\theta_G}})$, the first one that occurs as the second argument to $V(\cdot, \cdot)$, and the second one that occurs in the unrolled update of the first argument.
In the second step, we apply the chain rule on the second gradient.

We can compare our updates in Equation~\ref{eq:alternate_damped_updates} with the above to 
show how our updates are more flexible in terms of using the lookahead.
While both have two similar terms, a crucial difference is that in the latter,
every occurrence of the discriminator parameters (except one) has an additional unrolled update, namely
$\eta\nabla_{\vec{\theta_D}} V(D_{\vec{\theta_D}},G_{\vec{\theta_G}})$.
Clearly, this should provide more power to the latter; however in practice, we
observe that our technique can be more powerful than $1$-unrolled or even
$10$-unrolled updates (which are in fact much slower to run). The reason is that
the unrolled updates constrain $\eta$ to be small, typically of the order
$10^{-4}$ which is the step size. It would not be possible to increase $\eta$ to
greater magnitudes as it would be equivalent to a coarse step size in the
unrolling. Our method on the other hand, allows for larger $\eta$ because the
discriminator is retained as it is; in some sense, our penalty
provides a way of extracting and leveraging the unrolled update more flexibly.

\section{Eigenvalue bounds}
\label{app:bounds}

In this section, we prove one of the most useful lemmas that we used in our proofs, that matrices of the form $\begin{bmatrix} -\vec{Q} & \vec{P}; & -\vec{P}^T & 0\end{bmatrix}$
are Hurwitz when $Q \succ 0$ and $P$ is full column rank. We also prove eigenvalue bounds for such a matrix. To do so, we begin with a simple fact: %Before that we prove a simple fact.

\begin{lemma}
\label{lem:sq}
For $\vec{Q} \succeq 0$ be a real symmetric matrix.  If $\vec{a}^T \vec{Q} \vec{a} = c$, then $\vec{a}^T \vec{Q}^T \vec{Q} \vec{a} \in [\lambda_{\min}(\vec{Q}) c, \lambda_{\max}(\vec{Q}) c,]$.
\end{lemma}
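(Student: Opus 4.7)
The plan is to diagonalize $\vec{Q}$ and then interpret $\vec{a}^T \vec{Q} \vec{Q}^T \vec{a}$ as a weighted average of eigenvalues of $\vec{Q}$, with weights summing to $c$. Since $\vec{Q}$ is real symmetric and positive semidefinite, write its spectral decomposition $\vec{Q} = \vec{U} \vec{\Lambda} \vec{U}^T$ with $\vec{\Lambda} = \mathrm{diag}(\lambda_1, \dots, \lambda_n)$, $\lambda_i \geq 0$. Set $\vec{b} = \vec{U}^T \vec{a}$ and note that $\vec{Q}^T \vec{Q} = \vec{Q}^2 = \vec{U} \vec{\Lambda}^2 \vec{U}^T$ by symmetry of $\vec{Q}$. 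Then the hypothesis becomes $\vec{a}^T \vec{Q} \vec{a} = \sum_i \lambda_i b_i^2 = c$, and the quantity of interest is $\vec{a}^T \vec{Q}^2 \vec{a} = \sum_i \lambda_i^2 b_i^2$.

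The key step is to rewrite $\sum_i \lambda_i^2 b_i^2 = \sum_i \lambda_i \cdot (\lambda_i b_i^2)$, where each weight $w_i := \lambda_i b_i^2$ is nonnegative (because $\lambda_i \geq 0$) and $\sum_i w_i = c$. This exhibits $\vec{a}^T \vec{Q}^2 \vec{a}$ as a nonnegative linear combination of the eigenvalues with total weight $c$. The elementary inequality $\lambda_{\min}(\vec{Q}) \sum_i w_i \leq \sum_i \lambda_i w_i \leq \lambda_{\max}(\vec{Q}) \sum_i w_i$ then yields $\lambda_{\min}(\vec{Q}) \cdot c \leq \vec{a}^T \vec{Q}^2 \vec{a} \leq \lambda_{\max}(\vec{Q}) \cdot c$, which is the desired conclusion.

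There is no real obstacle here beyond choosing the right rewriting; the mild subtlety is that $\lambda_{\min}(\vec{Q})$ may be $0$, but the lower bound remains valid because each $w_i$ is nonnegative so the sum is itself nonnegative. The argument does not require $\vec{Q} \succ 0$, only $\vec{Q} \succeq 0$, which matches the hypothesis of the lemma as stated.
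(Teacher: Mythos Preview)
Your proof is correct and follows essentially the same approach as the paper: diagonalize $\vec{Q}$ via its spectral decomposition, express both $\vec{a}^T\vec{Q}\vec{a}$ and $\vec{a}^T\vec{Q}^2\vec{a}$ in the eigenbasis, and observe that the latter is a nonnegative combination of eigenvalues with total weight $c$. Your write-up is in fact more explicit than the paper's, which simply asserts that $\sum_i x_i^2 \lambda_i^2$ differs from $c=\sum_i x_i^2 \lambda_i$ by a multiplicative factor in $[\lambda_{\min}(\vec{Q}),\lambda_{\max}(\vec{Q})]$.
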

\begin{proof}
Let $\vec{Q} = \vec{U}\vec{\Lambda}\vec{U}^T$ be the eigenvalue decomposition of $\vec{Q}$. Let $\vec{x} = \vec{U}\vec{a}$. Then, $c  = \vec{x} \vec{\Lambda} \vec{x}$ or in other words, $c = \sum x^2_i \lambda_i$. Similarly, $\vec{a}^T \vec{Q}^T \vec{Q} \vec{a}  = \sum x_i^2 \lambda^2_i$ which differs from $c$ by a multiplicative factor within $[\lambda_{\min}(\vec{Q}), \lambda_{\max}(\vec{Q})]$.
\end{proof}

We now prove our main result. 

\begin{lemma}
\label{lem:undamped-bound}
Let \[\vec{J} =\begin{bmatrix}  -\vec{Q}  & \vec{P} \\ -\vec{P}^T & 0 \end{bmatrix}, \] where $\vec{Q}  $ is a symmetric real positive definite matrix and $\vec{P}$ is a full column rank matrix.  Then, $\Re(\lambda) < 0$ for every eigenvalue $\lambda$ of $\vec{J}$. In fact, 
\begin{itemize}
\item When $\Im(\lambda) = 0$, 
\[
\Re(\lambda) \leq - \frac{\lambda_{\min}(\vec{Q} )\lambda_{\min}(\vec{P}^T\vec{P}) }{ \lambda_{\max} (\vec{Q} ) \lambda_{\min}(\vec{Q} )    +  \lambda_{\min}(\vec{P}^T\vec{P})}\\
\]
\item 
When $\Im(\lambda) \neq 0$,  \[\Re(\lambda) \leq - \frac{\lambda_{\min}(\vec{Q} )}{2}\]
\end{itemize}
\end{lemma}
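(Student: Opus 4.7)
The plan is to reduce the eigenvalue problem for $\vec{J}$ to a scalar quadratic in $\lambda$ via the eigenvector equations, then bound the roots by a case split. Let $[\vec{u};\vec{v}]$ be an eigenvector with eigenvalue $\lambda$, so the two block rows read $-\vec{Q}\vec{u} + \vec{P}\vec{v} = \lambda\vec{u}$ and $-\vec{P}^T\vec{u} = \lambda\vec{v}$. First I would rule out $\lambda = 0$: in that case the second equation gives $\vec{P}^T\vec{u} = 0$, and pre-multiplying the first by $\vec{u}^*$ kills the cross term (since $\vec{u}^*\vec{P}\vec{v} = (\vec{P}^T\vec{u})^*\vec{v} = 0$), yielding $\vec{u}^*\vec{Q}\vec{u} = 0$; by $\vec{Q}\succ 0$ this forces $\vec{u}=\vec{0}$, and then $\vec{P}\vec{v} = \vec{0}$ forces $\vec{v}=\vec{0}$ by the full-column-rank hypothesis on $\vec{P}$, contradicting that $[\vec{u};\vec{v}]$ is an eigenvector.

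For $\lambda\neq 0$ I would eliminate $\vec{v} = -\vec{P}^T\vec{u}/\lambda$ from the first equation, pre-multiply by $\vec{u}^*$, and obtain the scalar identity $c\lambda^2 + a\lambda + b = 0$, where $c := \|\vec{u}\|^2 > 0$, $a := \vec{u}^*\vec{Q}\vec{u} \in [q_1 c, q_2 c]$ with $q_1 := \lambda_{\min}(\vec{Q})$, $q_2 := \lambda_{\max}(\vec{Q})$, and $b := \|\vec{P}^T\vec{u}\|^2 \geq 0$. In the sub-case $b = 0$, the second eigen-equation forces $\vec{v} = \vec{0}$, so $\lambda$ is an eigenvalue of $-\vec{Q}$ and $\Re(\lambda) \leq -q_1$, which is stronger than both claimed bounds.

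Assume now $b > 0$. The quadratic formula gives $\lambda = (-a \pm \sqrt{a^2 - 4bc})/(2c)$. If $\Im(\lambda) \neq 0$ then $a^2 < 4bc$, so $\Re(\lambda) = -a/(2c) \leq -q_1/2$, matching the second claimed bound. If $\Im(\lambda) = 0$, the less-negative root is $\lambda_+ = -2b/(a + \sqrt{a^2 - 4bc})$ after rationalization. A direct calculation shows that $\lambda_+$ is monotonically non-decreasing in both $a$ and $b$ over the feasible range, so $\lambda_+$ is maximized (least negative) at $a = q_2 c$ together with the smallest admissible $b$. Plugging in $b \geq p c$ with $p := \lambda_{\min}(\vec{P}^T\vec{P})$, the remaining inequality $-2p/(q_2 + \sqrt{q_2^2 - 4p}) \leq -q_1 p/(q_1 q_2 + p)$ is verified by cross-multiplication and a single squaring step, which reduces to $(q_1 q_2 + 2p)^2 \geq q_1^2(q_2^2 - 4p)$, i.e.\ $4p(q_1 q_2 + q_1^2 + p) \geq 0$, which is trivially true.

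The main obstacle is the lower bound $b \geq p c$ used above: in general $b/c = \vec{u}^*\vec{P}\vec{P}^T\vec{u}/\|\vec{u}\|^2$ can be strictly less than $p$ when $\vec{u}$ has large component in $\Null(\vec{P}^T)$. I would handle this by decomposing $\vec{u} = \vec{u}_0 + \vec{u}_\perp$ along $\Null(\vec{P}^T) \oplus \Null(\vec{P}^T)^\perp$ and projecting the first eigen-equation onto $\Null(\vec{P}^T)$: since $\vec{P}\vec{v}\in\Col(\vec{P}) = \Null(\vec{P}^T)^\perp$, this projection yields $\lambda\,\vec{u}_0 = -\Pi_0 \vec{Q}\vec{u}$, coupling $\vec{u}_0$ to the rest of $\vec{u}$. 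Feeding this coupling back into $c\lambda^2 + a\lambda + b = 0$ produces a refined scalar quadratic in $\lambda$ posed on $\vec{u}_\perp$ alone, where the relevant ratio is $b/\|\vec{u}_\perp\|^2 \geq p$; the same monotonicity argument then reproduces the stated bound. Combining the three sub-cases ($\lambda = 0$ impossible, $b=0$ giving $\Re(\lambda)\leq -q_1$, and $b>0$ giving the two bounds above) establishes both that $\vec{J}$ is Hurwitz and the explicit eigenvalue inequalities.
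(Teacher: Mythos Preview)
Your reduction to the scalar quadratic $c\lambda^2 + a\lambda + b = 0$ with $c=\|\vec{u}\|^2$, $a=\vec{u}^*\vec{Q}\vec{u}$, $b=\|\vec{P}^T\vec{u}\|^2$ is clean and correctly delivers both the Hurwitz property and the bound $\Re(\lambda)\le -\lambda_{\min}(\vec{Q})/2$ in the complex case (indeed a bit more compactly than the paper, which splits into real and imaginary parts). There is also a minor slip in the monotonicity claim: $\lambda_+$ is \emph{decreasing} in $b$, not non-decreasing; your subsequent conclusion (``smallest admissible $b$'') is nonetheless the right one.

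The real gap is in the real-eigenvalue case. You correctly flag that $b/c=\vec{u}^*\vec{P}\vec{P}^T\vec{u}/\|\vec{u}\|^2$ need not be bounded below by $\lambda_{\min}(\vec{P}^T\vec{P})$ when $\vec{P}$ is tall, but the proposed fix does not go through as stated. Projecting the first block row onto $\Null(\vec{P}^T)$ gives $(\lambda I+\Pi_0\vec{Q}\Pi_0)\vec{u}_0=-\Pi_0\vec{Q}\vec{u}_\perp$, so expressing $\vec{u}_0$ in terms of $\vec{u}_\perp$ requires inverting $\lambda I+\Pi_0\vec{Q}\Pi_0$. Substituting this back does \emph{not} produce a quadratic in $\lambda$; it produces a rational expression whose degree is governed by $\dim\Null(\vec{P}^T)$, and the coefficients no longer admit the simple Rayleigh-quotient bounds you need (in particular, the ``$a'$'' term now mixes $\vec{Q}$ with $(\lambda I+\Pi_0\vec{Q}\Pi_0)^{-1}$). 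So the ``refined scalar quadratic on $\vec{u}_\perp$'' does not exist in general, and the monotonicity argument cannot be re-run.

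The paper avoids this obstacle by squaring the \emph{other} block row. Writing $\vec{P}\vec{b}_1=(\lambda_1 I+\vec{Q})\vec{a}_1$ and taking norms gives $\vec{b}_1^T\vec{P}^T\vec{P}\,\vec{b}_1=\vec{a}_1^T(\lambda_1 I+\vec{Q})^2\vec{a}_1$; since $\vec{P}$ has full column rank, $\vec{P}^T\vec{P}\succ 0$, so the left side is genuinely bounded below by $\lambda_{\min}(\vec{P}^T\vec{P})\|\vec{b}_1\|^2$ with no null-space leakage. Combining this with the normalization $\|\vec{a}_1\|^2+\|\vec{b}_1\|^2=1$, the identity $\lambda_1=-\vec{a}_1^T\vec{Q}\vec{a}_1$, and the elementary bound $\vec{a}_1^T\vec{Q}^2\vec{a}_1\le\lambda_{\max}(\vec{Q})\,\vec{a}_1^T\vec{Q}\vec{a}_1$ yields a cubic inequality in $\lambda_1$ from which the stated bound follows. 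If you want to repair your argument, the key move is to work with $\|\vec{P}\vec{v}\|^2$ rather than $\|\vec{P}^T\vec{u}\|^2$.
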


\begin{proof}

We consider a generic eigenvector equation and equate the real and complex parts together so as to arrive at our bounds. 
Consider the following eigenvector equation:

\[
\begin{bmatrix}  -\vec{Q}  & \vec{P} \\ -\vec{P}^T & 0 \end{bmatrix} \begin{bmatrix}
\vec{a}_1 + i \vec{a}_2 \\\vec{b}_1 + i \vec{b}_2 
\end{bmatrix} = (\lambda_1 + i \lambda_2) \begin{bmatrix}
\vec{a}_1 + i \vec{a}_2 \\\vec{b}_1 + i \vec{b}_2 
\end{bmatrix},
\]

where $\vec{a}_i, \vec{b}_i, \lambda_i$ are all real-valued.   We assume that the vector is normalized i.e., $\vec{a}_1^2 + \vec{a}_2^2 + \vec{b}_1^2 + \vec{b}_2^2=1$. So, in case $\lambda_2 = 0$, we assume that $\vec{a}_1^2+\vec{b}_1^2=1$. We want to show that $\lambda_1 < 0$. Let us first rewrite the above equation as follows:
\[
\begin{bmatrix}  -\vec{Q} \vec{a}_1 + \vec{P}\vec{b}_1 + i (-\vec{Q} \vec{a}_2 + \vec{P}\vec{b}_2) \\ -\vec{P}^T\vec{a}_1 + i (-\vec{P}^T\vec{a}_2) \end{bmatrix}  =  \begin{bmatrix} \lambda_1 \vec{a}_1 - \lambda_2 \vec{a}_2 +i(\lambda_1 \vec{a}_2 + \lambda_2 \vec{a}_1) \\ 
\lambda_1 \vec{b}_1 - \lambda_2 \vec{b}_2 +i(\lambda_1 \vec{b}_2 + \lambda_2 \vec{b}_1) \\ 
\end{bmatrix}
\]

We can then equate the real and imaginary parts. 
\begin{align}
  -\vec{Q} \vec{a}_1 + \vec{P}\vec{b}_1  =& \lambda_1 \vec{a}_1 - \lambda_2 \vec{a}_2  \label{eq:1}\\
  -\vec{P}^T\vec{a}_1 = & \lambda_1 \vec{b}_1 - \lambda_2 \vec{b}_2 \label{eq:2} \\
  -\vec{Q} \vec{a}_2 + \vec{P}\vec{b}_2=& \lambda_1 \vec{a}_2 + \lambda_2 \vec{a}_1 \label{eq:3}\\
  -\vec{P}^T\vec{a}_2 =& \lambda_1 \vec{b}_2 + \lambda_2 \vec{b}_1\label{eq:4}
\end{align}

We now multiply the above equations by $\vec{a}_1^T, \vec{b}_1^T, \vec{a}_2^T, \vec{b}_2^T$ respectively and add them:
\[
\begin{array}{rcl}
\vec{a}_1^T(-\vec{Q} \vec{a}_1 + \vec{P}\vec{b}_1)  -\vec{b}_1^T\vec{P}^T\vec{a}_1& =&  \vec{a}_1^T( \lambda_1 \vec{a}_1 - \lambda_2 \vec{a}_2 ) + \vec{b}_1^T( \lambda_1 \vec{b}_1 - \lambda_2 \vec{b}_2) \\
+ \vec{a}_2^T(-\vec{Q} \vec{a}_2 + \vec{P}\vec{b}_2) -\vec{b}_2^T\vec{P}^T\vec{a}_2 & & + \vec{a}_2^T(\lambda_1 \vec{a}_2 + \lambda_2 \vec{a}_1) + \vec{b}_2^T( \lambda_1 \vec{b}_2 + \lambda_2 \vec{b}_1)
\end{array}
\]

As a result, only square terms and $\lambda_1$ terms remain:
\[
-\vec{a}_1^T\vec{Q} \vec{a}_1 -  \vec{a}_2^T\vec{Q} \vec{a}_2 = \lambda_1(\vec{a}_1^T \vec{a}_1 + \vec{a}_2^T\vec{a}_2 + \vec{b}_1^T \vec{b}_1 + \vec{b}_2^T \vec{b}_2) = \lambda_1
\]

\subparagraph{Proof for $\lambda_1< 0$.}
Now observe that $-\vec{a}_1^T\vec{Q} \vec{a}_1 -  \vec{a}_2^T\vec{Q} \vec{a}_2 \leq 0$ because $\vec{Q}  \succ 0$. If $-\vec{a}_1^T\vec{Q} \vec{a}_1 -  \vec{a}_2^T\vec{Q} \vec{a}_2 < 0$, it would immediately imply that $\lambda_1 < 0$. 

However this may not be true, and that would happen only when $\vec{a}_1=0$ and $\vec{a}_2 = 0$ because $\vec{Q}_1, \vec{Q}_2 \prec 0$. We will show that this case would not occur. First of all, this would force $\lambda_1=0$ to ensure the above equality.  By applying the Equations~\ref{eq:2} and ~\ref{eq:4}, we can conclude that $\lambda_2\vec{b}_2 = 0$ and $\lambda_2 \vec{b}_1 = 0$. Since one of $\vec{b}_1, \vec{b}_2 \neq 0$ this implies that $\lambda_2 = 0$ too. Now, by applying Equation~\ref{eq:1} and ~\ref{eq:3}, we have that $\vec{P}\vec{b}_1=0$ and $\vec{P}\vec{b}_2= 0$. Since one of $\vec{b}_1, \vec{b}_2 \neq 0$ (if they were both zero, our eigenvector would itself be zero), this implies that $\vec{P}$ is not a full column rank matrix, which is a contradiction of our assumption. Therefore, it cannot be the case that both $\vec{a}_1=0$ and $\vec{a}_2 = 0$.

\subparagraph{Stricter bound.}

Now, we prove our bounds on $\lambda_1$. (Note that an easy lower bound follows as $\lambda_1 \geq  - \lambda_{\max}(\vec{Q}) (\| \vec{a}_1\|^2 + \| \vec{a}_2\|^2) \geq - \lambda_{\max}(\vec{Q} ) $ but we are interested in an upper bound).  In order to prove the upper bound, we multiply  Equations \ref{eq:1} and Equations \ref{eq:3} by $-\vec{a}_2^T$ and $\vec{a}_1^T$ respectively and sum them up, and Equations \ref{eq:2} and Equations \ref{eq:4} by $-\vec{b}_2^T$ and $\vec{b}_1^T$ respectively and sum them up.

\begin{align*}
\vec{a}_2^T \vec{Q} \vec{a}_1 - \vec{a}_2^T \vec{P}\vec{b}_1 - \vec{a}_1^T\vec{Q} \vec{a}_2 + \vec{a}_1^T\vec{P}\vec{b}_2 =& -\vec{a}_2^T \lambda_1 \vec{a}_1 + \vec{a}_2^T \lambda_2 \vec{a}_2 + \vec{a}_1^T \lambda_1 \vec{a}_2 + \vec{a}_1^T \lambda_2 \vec{a}_1 \\
\implies - \vec{a}_2^T \vec{P}\vec{b}_1 + \vec{a}_1^T\vec{P}\vec{b}_2 =& \lambda_2(\|\vec{a}_2 \|^2 + \| \vec{a}_1^2\|)  \\
  \vec{b}_2^T \vec{P}^T\vec{a}_1  -\vec{b}_1^T\vec{P}^T\vec{a}_2= &-\vec{b}_2^T \lambda_1 \vec{b}_1  +\vec{b}_2^T\lambda_2 \vec{b}_2 +\vec{b}_1^T \lambda_1 \vec{b}_2 +\vec{b}_1^T \lambda_2 \vec{b}_1 \\
\implies  \vec{b}_2^T \vec{P}^T\vec{a}_1  -\vec{b}_1^T\vec{P}^T\vec{a}_2= &  \lambda_2(\|\vec{b}_2 \|^2 + \|\vec{b}_1 \|^2) \\
\end{align*}
As a consequence,
\[
 \lambda_2(\|\vec{a}_2 \|^2 + \| \vec{a}_1^2\|) = \lambda_2(\|\vec{b}_2 \|^2 + \|\vec{b}_1 \|^2)
\]

From the above we have that either $\lambda_2 = 0$ or $\|\vec{b}_2 \|^2 + \|\vec{b}_1\|^2  =  \|\vec{a}_2 \|^2 + \| \vec{a}_1^2\|=1/2$. Now, if $\lambda_2 \neq 0$, since  $-\vec{a}_1^T\vec{Q} \vec{a}_1 -  \vec{a}_2^T\vec{Q} \vec{a}_2 =\lambda_1$, we immediately get a bound $\lambda_1 \leq -\lambda_{\min}(\vec{Q} )/2$.

 In the former case, since the imaginary part of the eigenvalue is zero i.e., $\lambda_2=0$, the imaginary part of the eigenvector must be zero too i.e., $\vec{a}_2 = \vec{b}_2 =0$.  Then,  we have the equations:

\begin{align*}
  -\vec{Q} \vec{a}_1 + \vec{P}\vec{b}_1  =& \lambda_1 \vec{a}_1  \\
  -\vec{P}^T\vec{a}_1 = & \lambda_1 \vec{b}_1\\
\end{align*}

Rearranging and squaring the first equation we get:
\begin{align*}
\vec{b}_1^T \vec{P}^T \vec{P} \vec{b}_1 = & \vec{a}_1^T(\lambda_1 \vec{I} +\vec{Q} )^T(\lambda_1 \vec{I} +\vec{Q} )\vec{a}_1\\
=& \vec{a}_1^T(\lambda_1^2 I + 2\lambda_1 \vec{Q}  + \vec{Q} ^T \vec{Q} )\vec{a}_1 \\
\end{align*}

Then,
\begin{align*}
\implies \lambda_{\min}(\vec{P}^T\vec{P})\| \vec{b}_1^2\| \leq &   \lambda_1^2 \|\vec{a}_1 \|^2 - 2\lambda_1^2 + \vec{a}_1^T \vec{Q} ^T \vec{Q}  \vec{a}  \\
 \lambda_{\min}(\vec{P}^T\vec{P}) \leq &  ( \lambda_1^2 + \lambda_{\min}(\vec{P}^T\vec{P}) ) \|\vec{a}_1 \|^2 - 2\lambda_1^2 + \vec{a}_1^T \vec{Q} ^T \vec{Q}  \vec{a}  \\
\leq &  ( -\lambda_1^3  - \lambda_{\min}(\vec{P}^T\vec{P}) \lambda_1 )\frac{1}{\lambda_{\min}(\vec{Q} )} - 2\lambda_1^2 - \lambda_1 \lambda_{\max}(\vec{Q} ) \\
 \leq & -  \frac{\lambda_1}{\lambda_{\min}(\vec{Q} )}  \left( \lambda_1^2 + 2\lambda_{\min}(\vec{Q} )    \lambda_1 + \lambda_{\max} (\vec{Q} ) \lambda_{\min}(\vec{Q} )    +  \lambda_{\min}(\vec{P}^T\vec{P})\right) .
\end{align*}
In the first step,  we make use of the fact that  $\lambda_1 = -\vec{a}_1^T \vec{Q}  \vec{a}_1$. In the second step, we use $\|\vec{a}_1\|^2 + \|\vec{b}_1\|^2=1$. In the third step, we use Lemma~\ref{lem:sq} i.e.,  $\vec{a}_1^T \vec{Q} ^T \vec{Q}  \vec{a}_1 \leq -\lambda_1 \lambda_{\max}(\vec{Q} )$. We also use the fact that since $\lambda_1 = -\vec{a}_1^T \vec{Q}  \vec{a}_1$, $\| \vec{a}_1\|^2 \leq \frac{-\lambda_1}{\lambda_{\min}(\vec{Q})}$.

How do we upper bound $\lambda_1$ using this inequality?

Let us examine the quadratic in $\lambda_1$ in the above expression. Since the discriminant of this quadratic is $4\lambda_{\min}^2(\vec{Q}) - 4\lambda_{\max}(\vec{Q})\lambda_{\min}(\vec{Q}) - 4 \lambda_{\min}(\vec{P}^T \vec{P}) \leq - 4 \lambda_{\min}(\vec{P}^T \vec{P})  < 0$, the quadratic always takes the same sign, specifically positive. Next, note that the quadratic reaches its minimum at $ -\lambda_{\min}(\vec{Q} )$.

Now, $\lambda_1$ can either satisfy $\lambda_1 \leq -\lambda_{\min}(\vec{Q})$ or $0 \geq \lambda_1 > -\lambda_{\min}(\vec{Q})$. Since the former is already an upper bound, we will derive an upper bound in the latter case. Now, in the interval $(-\lambda_{\min}(Q), 0]$, the quadratic in $\lambda_1$ increases, and therefore for this interval, the above inequality can be rewritten by plugging in $\lambda_1 = 0$ inside the quadratic. On plugging it, we will get:

%Observe that on the right hand side, when $\lambda_1 = 0$, the term quadratic in $\lambda_1$ is positive and equal to $\lambda_{\max} (\vec{Q} ) \lambda_{\min}(\vec{Q} )    +  \lambda_{\min}(\vec{P}^T\vec{P})$. However, the overall term is zero. As $\lambda_1$ decreases until $\lambda_1 = -\lambda_{\min}(\vec{Q} )$, the quadratic would decrease (as its derivative with respect to $\lambda_1$ is positive, $2(\lambda_1+\lambda_{\min}(\vec{Q})) > 0$). This decrease would be from  

%while the overall term would increase to a positive non-zero value.  Thus, we can upper bound $\lambda_1$, by fixing the quadratic at the value it takes at $\lambda_1=  0$. If this yields an bound greater than $-\lambda_{\min}(\vec{Q} )$, then it is a conservative estimate. Otherwise, we can say that $- \lambda_{\min}(\vec{Q} )$ is an upper bound on $\lambda_1$. However, we show that the former is indeed the case:
\begin{align*}
 \lambda_{\min}(\vec{P}^T\vec{P}) \leq & -  \frac{\lambda_1}{\lambda_{\min}(\vec{Q} )}  \left( \lambda_{\max} (\vec{Q} ) \lambda_{\min}(\vec{Q} )    +  \lambda_{\min}(\vec{P}^T\vec{P})\right)   \\
\lambda_1 \leq & -\lambda_{\min}(\vec{Q} ) \frac{\lambda_{\min}(\vec{P}^T\vec{P}) }{ \lambda_{\max} (\vec{Q} ) \lambda_{\min}(\vec{Q} )    +  \lambda_{\min}(\vec{P}^T\vec{P})}\\
\end{align*}

Observe  that the term on the right here lies in $(-\lambda_{\min}(\vec{Q}),0)$. This is because the fraction that is besides $-\lambda_{\min}(\vec{Q})$ in this term lies in $(0,1)$.  Thus, we will use this term as our bound on $\lambda_1$.
\end{proof}

 Now, we provide a similar upper bound result, though only partially, for eigenvalues of matrices that have the same structural properties as the Jacobian of our regularized system. Note that we have upper bounds only for eigenvalues that are complex (we have not used them anywhere in the main paper though).

\begin{theorem}
\label{thm:damped-bound}
Let \[\vec{J} =\begin{bmatrix}  -\vec{Q} & \vec{P} \\ -\vec{P}^T (\vec{I}-\eta \vec{Q}) & -2\eta \vec{P}^T \vec{P} \end{bmatrix}, \] where $\vec{Q} $ is a real symmetric positive definite matrix and $\vec{P}$ is a full column rank matrix.  Let $\eta < \frac{1}{\lambda_{\max}(\vec{Q})}$.

Then, if $\Im(\lambda) \neq 0$ for any eigenvalue $\lambda$ of $\vec{J}$, 

\[ 
\Re(\lambda) \leq 
-\frac{1}{2}  \frac{1-\eta \lambda_{\max}(\vec{Q})}{1-\eta \lambda_{\min}(\vec{Q})} \left(\lambda_{\min}(\vec{Q}) + \eta \lambda_{\min}(\vec{P}^T\vec{P}) \right)\]
\end{theorem}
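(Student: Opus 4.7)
The plan is to follow the template of Lemma~\ref{lem:undamped-bound}, adapted to accommodate the non-antisymmetric diagonal blocks introduced by the regularizer. I would begin by writing a unit eigenvector as $\vec{z} = [\vec{\alpha};\; \vec{\beta}]$ with $\vec{\alpha} = \vec{a}_1 + i\vec{a}_2$ and $\vec{\beta} = \vec{b}_1 + i\vec{b}_2$, and the eigenvalue as $\lambda_1 + i\lambda_2$. Splitting $\vec{J}\vec{z} = (\lambda_1 + i\lambda_2)\vec{z}$ into real and imaginary parts of each block row gives four real equations. Taking the ``symmetric'' linear combination (multiplying these equations by $\vec{a}_1^T, \vec{a}_2^T, \vec{b}_1^T, \vec{b}_2^T$ and summing; equivalently computing $\tfrac{1}{2}\vec{z}^*(\vec{J}+\vec{J}^T)\vec{z}$), the cross $\vec{P}$ terms cancel by transposition, but the new $\eta\vec{P}^T\vec{Q}$ in the bottom-left block and the $-2\eta\vec{P}^T\vec{P}$ in the bottom-right block survive, yielding
\[
\lambda_1 \;=\; -A + \eta B - 2\eta C,
\]
with $A = \vec{a}_1^T\vec{Q}\vec{a}_1 + \vec{a}_2^T\vec{Q}\vec{a}_2$, $B = \vec{b}_1^T\vec{P}^T\vec{Q}\vec{a}_1 + \vec{b}_2^T\vec{P}^T\vec{Q}\vec{a}_2$, and $C = \|\vec{P}\vec{b}_1\|^2 + \|\vec{P}\vec{b}_2\|^2$. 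Cauchy--Schwarz in the $\vec{Q}$-inner product then gives $|B| \leq \sqrt{\lambda_{\max}(\vec{Q})\,A\,C}$.

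In parallel, I would carry out the ``antisymmetric'' combination used in Lemma~\ref{lem:undamped-bound} (multiplying the four equations by $-\vec{a}_2^T, \vec{a}_1^T, -\vec{b}_2^T, \vec{b}_1^T$ and summing pair-wise). In the undamped case this forces $\alpha := \|\vec{a}_1\|^2+\|\vec{a}_2\|^2$ and $\beta := \|\vec{b}_1\|^2+\|\vec{b}_2\|^2$ to both equal $1/2$, but here it produces the pair of identities $D_1 = \lambda_2\alpha$ and $D_1 - \eta D_2 = \lambda_2\beta$, where $D_1 = \vec{a}_1^T\vec{P}\vec{b}_2 - \vec{a}_2^T\vec{P}\vec{b}_1$ and $D_2 = \vec{b}_2^T\vec{P}^T\vec{Q}\vec{a}_1 - \vec{b}_1^T\vec{P}^T\vec{Q}\vec{a}_2$. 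When $\lambda_2 \neq 0$, the imbalance $\alpha - \beta = \eta D_2/\lambda_2$ is an $O(\eta)$ correction, while $\alpha + \beta = 1$ by normalization. The same Cauchy--Schwarz bound $|D_2| \leq \sqrt{\lambda_{\max}(\vec{Q})AC}$ controls this imbalance.

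Finally, I would apply the natural lower bounds $A \geq \lambda_{\min}(\vec{Q})\alpha$ and $C \geq \lambda_{\min}(\vec{P}^T\vec{P})\beta$, and eliminate $B$ via a weighted AM--GM inequality $\eta|B| \leq \tfrac{\eta}{2}(tA + \lambda_{\max}(\vec{Q})C/t)$ with $t>0$ tuned so that the effective coefficients of $A$ and $C$ become proportional to $\lambda_{\min}(\vec{Q})$ and $\eta\lambda_{\min}(\vec{P}^T\vec{P})$ respectively, with ratio $(1-\eta\lambda_{\max}(\vec{Q}))/(1-\eta\lambda_{\min}(\vec{Q}))$. Maximizing the resulting upper bound over admissible $(\alpha,\beta)$ with $\alpha + \beta = 1$ then reproduces the stated expression, and the hypothesis $\eta < 1/\lambda_{\max}(\vec{Q})$ keeps the prefactor strictly positive. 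The main obstacle is precisely this last variational step: unlike the undamped case where $\alpha = \beta = 1/2$ is automatic, one must carry out a careful two-parameter optimization over $(\alpha,\beta,t)$, and must verify that the $\alpha - \beta = \eta D_2/\lambda_2$ constraint inherited from the antisymmetric identity does not degrade the bound---i.e., that the worst-case extremal configuration still lands on the stated closed form.
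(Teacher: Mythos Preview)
Your overall template—symmetric combination for $\lambda_1$, antisymmetric combination to pin down $\alpha,\beta$—matches the paper exactly, but the paper avoids Cauchy--Schwarz and AM--GM altogether by using one further algebraic trick that you miss: it substitutes $\vec{P}\vec{b}_1 = \lambda_1\vec{a}_1 - \lambda_2\vec{a}_2 + \vec{Q}\vec{a}_1$ and $\vec{P}\vec{b}_2 = \lambda_1\vec{a}_2 + \lambda_2\vec{a}_1 + \vec{Q}\vec{a}_2$ (from the top block row of the eigenvector equation) back into the cross terms $B$ and $D_2$. In your symmetric identity this turns $\eta B$ into $\eta\lambda_1 A + \eta(\vec{a}_1^T\vec{Q}^2\vec{a}_1 + \vec{a}_2^T\vec{Q}^2\vec{a}_2)$ (the $\lambda_2$ pieces cancel by symmetry of $\vec{Q}$), so that $\lambda_1$ is isolated as an explicit ratio whose numerator is bounded via Lemma~\ref{lem:sq} ($\vec{a}^T\vec{Q}^2\vec{a}\le\lambda_{\max}(\vec{Q})A$) and whose denominator is $1-\eta A$. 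In your antisymmetric identity the same substitution turns $\eta D_2$ into exactly $\eta\lambda_2 A$, so the $\lambda_2$ \emph{cancels} and one gets the clean equality $\alpha-\beta=\eta A$, hence two-sided bounds $\frac{1}{2-\eta\lambda_{\min}(\vec{Q})}\le\alpha\le\frac{1}{2-\eta\lambda_{\max}(\vec{Q})}$ and the matching bounds on $\beta$. Plugging these directly into the ratio gives the stated constant with no optimization.

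Your route, by contrast, bounds $B$ and $D_2$ via Cauchy--Schwarz in the $\vec{Q}$-inner product. This introduces slack and, more seriously, your antisymmetric constraint $\alpha-\beta=\eta D_2/\lambda_2$ retains an uncontrolled $1/\lambda_2$ factor; Cauchy--Schwarz on $D_2$ alone does not remove it, so for eigenvalues with small imaginary part the constraint becomes vacuous and the variational step you flag as ``the main obstacle'' cannot recover the stated prefactor $\tfrac{1}{2}\,\tfrac{1-\eta\lambda_{\max}(\vec{Q})}{1-\eta\lambda_{\min}(\vec{Q})}$. The back-substitution is the missing idea: it replaces estimates on $B$ and $D_2$ by exact identities and eliminates the two-parameter optimization entirely.
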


\begin{proof}

Consider the following eigenvector equation:

\[
\begin{bmatrix}  -\vec{Q} & \vec{P} \\ -\vec{P}^T (\vec{I}-\eta \vec{Q})& -2\eta  \vec{P}^T \vec{P}  \end{bmatrix} \begin{bmatrix}
\vec{a}_1 + i \vec{a}_2 \\\vec{b}_1 + i \vec{b}_2 
\end{bmatrix} = (\lambda_1 + i \lambda_2) \begin{bmatrix}
\vec{a}_1 + i \vec{a}_2 \\\vec{b}_1 + i \vec{b}_2 
\end{bmatrix},
\]

where $u_i, v_i, \lambda_i$ are all real-valued. 
We want to show that $\lambda_1 < 0$. Let us first rewrite the above equation as follows:
\begin{align*}
\begin{bmatrix}  
-\vec{Q}\vec{a}_1 + \vec{P}\vec{b}_1 + i (-\vec{Q}\vec{a}_2 + \vec{P}\vec{b}_2) \\ 
 -\vec{P}^T (\vec{I}-\eta \vec{Q}) \vec{a}_1 - \eta \vec{P}^T\vec{P}\vec{b}_1 + i ( -\vec{P}^T(\vec{I}-\eta \vec{Q}) \vec{a}_2- 2\eta \vec{P}^T\vec{P}\vec{b}_2) 
 \end{bmatrix}  =  \\
\begin{bmatrix} \lambda_1 \vec{a}_1 - \lambda_2 \vec{a}_2 +i(\lambda_1 \vec{a}_2 + \lambda_2 \vec{a}_1) \\ 
\lambda_1 \vec{b}_1 - \lambda_2 \vec{b}_2 +i(\lambda_1 \vec{b}_2 + \lambda_2 \vec{b}_1) \\ 
\end{bmatrix}
\end{align*}

We can then equate the real and imaginary parts. 
\begin{align}
  -\vec{Q}\vec{a}_1 + \vec{P}\vec{b}_1  =& \lambda_1 \vec{a}_1 - \lambda_2 \vec{a}_2 \\
 -\vec{P}^T(\vec{I}-\eta \vec{Q}) \vec{a}_1 -2 \eta \vec{P}^T\vec{P}\vec{b}_1  = & \lambda_1 \vec{b}_1 - \lambda_2 \vec{b}_2\\
  -\vec{Q}\vec{a}_2 + \vec{P}\vec{b}_2=& \lambda_1 \vec{a}_2 + \lambda_2 \vec{a}_1 \\
-\vec{P}^T (\vec{I}-\eta \vec{Q}) \vec{a}_2- 2\eta \vec{P}^T\vec{P}\vec{b}_2=& \lambda_1 \vec{b}_2 + \lambda_2 \vec{b}_1\\
\end{align}

We now multiply the above equations by $\vec{a}_1^T, \vec{b}_1^T, \vec{a}_2^T, \vec{b}_2^T$ respectively and add them:
\[
\begin{array}{rcl}
\vec{a}_1^T(-\vec{Q}\vec{a}_1 + \vec{P}\vec{b}_1)  -\vec{b}_1^T\vec{P}^T\vec{a}_1 -2\eta \vec{b}_1\vec{P}^T\vec{P}\vec{b}_1& =&  \vec{a}_1^T( \lambda_1 \vec{a}_1 - \lambda_2 \vec{a}_2 ) + \vec{b}_1^T( \lambda_1 \vec{b}_1 - \lambda_2 \vec{b}_2) \\
+ \vec{a}_2^T(-\vec{Q}\vec{a}_2 + \vec{P}\vec{b}_2) -\vec{b}_2^T\vec{P}^T\vec{a}_2 -2\eta\vec{b}_2\vec{P}^T\vec{P}\vec{b}_2 & & + \vec{a}_2^T(\lambda_1 \vec{a}_2 + \lambda_2 \vec{a}_1) + \vec{b}_2^T( \lambda_1 \vec{b}_2 + \lambda_2 \vec{b}_1) \\
+\eta \vec{b}_1^T \vec{P}^T \vec{Q} \vec{a}_1 +\eta \vec{b}_2^T \vec{P}^T \vec{Q} \vec{a}_2 &
\end{array}
\]

As a result,  we get:
\[
-\vec{a}_1^T\vec{Q}\vec{a}_1 - \vec{a}_2^T\vec{Q}\vec{a}_2 -2\eta \vec{b}_1\vec{P}^T\vec{P}\vec{b}_1-2\eta \vec{b}_2\vec{P}^T\vec{P}\vec{b}_2   +\eta \vec{b}_1^T \vec{P}^T \vec{Q} \vec{a}_1 +\eta \vec{b}_2^T \vec{P}^T\vec{Q} \vec{a}_2 = \lambda_1 
\]

Above, we can substitute for $\vec{P}\vec{b}_1$ and $\vec{P}\vec{b}_2$ in $ \eta \vec{b}_1^T \vec{P}^T \vec{Q} \vec{a}_1 +\eta \vec{b}_2^T \vec{P}^T\vec{Q} \vec{a}_2$  using the previous equations.

\begin{align*}
-\vec{a}_1^T\vec{Q}\vec{a}_1 - \vec{a}_2^T\vec{Q}\vec{a}_2 -2\eta \vec{b}_1\vec{P}^T\vec{P}\vec{b}_1 - 2\eta \vec{b}_2\vec{P}^T\vec{P}\vec{b}_2  & \\
  +\eta ( \lambda_1 \vec{a}_1^T\vec{Q} \vec{a}_1  -\lambda_2 \vec{a}_1^T \vec{Q} \vec{a}_2 + \vec{a}_1^T \vec{Q}^T\vec{Q}\vec{a}_1) & \\
  +\eta (\lambda_1 \vec{a}_2^T\vec{Q} \vec{a}_2 + \lambda_2 \vec{a}_2^T \vec{Q} \vec{a}_1  +\vec{a}_2^T \vec{Q}^T\vec{Q}\vec{a}_2) & = \lambda_1\\
\frac{-\vec{a}_1^T\vec{Q}\vec{a}_1 - \vec{a}_2^T\vec{Q}\vec{a}_2 -\eta \vec{b}_1\vec{P}^T\vec{P}\vec{b}_1 - \eta \vec{b}_2\vec{P}^T\vec{P}\vec{b}_2 + \eta  \vec{a}_1^T \vec{Q}^T\vec{Q}\vec{a}_1 +\eta \vec{a}_2^T \vec{Q}^T\vec{Q}\vec{a}_2 }{1-\eta \left( \vec{a}_1^T\vec{Q} \vec{a}_1+ \vec{a}_2^T\vec{Q} \vec{a}_2  \right)}  & = \lambda_1 \\
\end{align*}
 
 We could do the above only because  $\eta < \frac{1}{\lambda_{\max}(\vec{Q})}$ and therefore the denominator $1-\eta ( \vec{a}_1^T\vec{Q} \vec{a}_1+ \vec{a}_2^T\vec{Q} \vec{a}_2 ) \neq 0 $.

 In order to prove our upper bound, we first note the following inequality:
 \begin{align}
 \label{eq:generic-inequality}
 |\lambda_1| \geq \frac{ (1-\eta \lambda_{\max}(\vec{Q})) \lambda_{\min}(\vec{Q})  (\| \vec{a}_1\|^2 + \| \vec{a}_2\|^2 )     + \eta \lambda_{\min}(\vec{P}^T\vec{P}) (\|\vec{b}_1 \| + \|\vec{b}_2 \|^2)
 }{1-\eta \lambda_{\min}(\vec{Q}) (\|\vec{a}_1 \|^2 + \| \vec{a}_2 \|^2)}
 \end{align}

 We now multiply  the first and third equations by $-\vec{a}_2^T$ and $\vec{a}_1^T$ respectively and sum them up, and second and fourth by $-\vec{b}_2^T$ and $\vec{b}_1^T$ respectively and sum them up. Then, we get:

\begin{align*}
- \vec{a}_2^T \vec{P}\vec{b}_1 + \vec{a}_1^T\vec{P}\vec{b}_2 =& \lambda_2(\|\vec{a}_2 \|^2 + \| \vec{a}_1^2\|)  \\
  \vec{b}_2^T \vec{P}^T(\vec{I}-\eta \vec{Q})\vec{a}_1  -\vec{b}_1^T\vec{P}^T(\vec{I}-\eta \vec{Q})\vec{a}_2= &  \lambda_2(\|\vec{b}_2 \|^2 + \|\vec{b}_1 \|^2) \\
\end{align*}

Using the above,
\begin{align*}
 \lambda_2(\|\vec{b}_2 \|^2 + \|\vec{b}_1 \|^2) -\lambda_2(\|\vec{a}_2 \|^2 + \| \vec{a}_1\|^2) &= -\eta \vec{b}_2^T \vec{P}^T \vec{Q}\vec{a}_1  +\eta \vec{b}_1^T\vec{P}^T\vec{Q} \vec{a}_2 \\
 &= -\eta \vec{a}_1^T\vec{Q}^T(\lambda_1 \vec{a}_2 + \lambda_2 \vec{a}_1+ \vec{Q}\vec{a}_2 )  + \\
& \eta \vec{a}_2^T\vec{Q}^T ( \lambda_1 \vec{a}_1 - \lambda_2 \vec{a}_2+\vec{Q}\vec{a}_1  ) \\
 &=  -\eta \lambda_2 \vec{a}_1^T\vec{Q}^T \vec{a}_1 - \eta \lambda_2 \vec{a}_2^T\vec{Q}^T \vec{a}_2
\end{align*}

Then, either $\lambda_2=0$ or when $\lambda_2 \neq 0$, we have $\|\vec{b}_2 \|^2 + \|\vec{b}_1 \|^2 = \|\vec{a}_2 \|^2 + \| \vec{a}_1\|^2  -\eta \lambda_2 \vec{a}_1^T\vec{Q}^T \vec{a}_1 - \eta \lambda_2 \vec{a}_2^T\vec{Q}^T \vec{a}_2$. This translates to the inequality:

\[
(1-\eta \lambda_{\max}(\vec{Q}))(\|\vec{a}_2 \|^2 + \| \vec{a}_1\|^2) \leq \|\vec{b}_2 \|^2 + \|\vec{b}_1 \|^2 \leq (1-\eta \lambda_{\min}(\vec{Q}))(\|\vec{a}_2 \|^2 + \| \vec{a}_1^2\|),
\]

By adding $\|\vec{a}_2 \|^2 + \| \vec{a}_1\|^2$ everywhere and using the fact that $\|\vec{a}_2 \|^2 + \| \vec{a}_1\|^2 + \|\vec{b}_2 \|^2 + \|\vec{b}_1 \|^2 = 1$, the above inequality becomes:

\begin{align*}
\frac{1}{2-\eta \lambda_{\min}(\vec{Q})} & \leq \|\vec{a}_2 \|^2 + \| \vec{a}_1\|^2 \leq  \frac{1}{2-\eta \lambda_{\max}(\vec{Q})}\\
\frac{1-\eta \lambda_{\max}(\vec{Q})}{2-\eta \lambda_{\max}(\vec{Q})} & \leq  \|\vec{b}_2 \|^2 + \|\vec{b}_1 \|^2\leq\frac{1-\eta \lambda_{\min}(\vec{Q})}{2-\eta \lambda_{\min}(\vec{Q})}  \\
\end{align*}

The above inequalities yield an immediate lower bound on the magnitude in the latter case by plugging them in Equation~\ref{eq:generic-inequality}:

\begin{align*}
|\lambda_1| &\geq  \frac{ (1-\eta \lambda_{\max}(\vec{Q})) \lambda_{\min}(\vec{Q}) 
 }{\frac{1}{ (\| \vec{a}_1\|^2 + \| \vec{a}_2\|^2 )   }-\eta \lambda_{\min}(\vec{Q})}  +
 \frac{      \eta \lambda_{\min}(\vec{P}^T\vec{P}) (\|\vec{b}_1 \| + \|\vec{b}_2 \|^2)
 }{1-\eta \lambda_{\min}(\vec{Q}) (\|\vec{a}_1 \|^2 + \| \vec{a}_2 \|^2)}  \\
 & \geq  \frac{ (1-\eta \lambda_{\max}(\vec{Q})) \lambda_{\min}(\vec{Q}) 
 }{ 2(1-\eta \lambda_{\min}(\vec{Q}))}  +
 \frac{      \eta \lambda_{\min}(\vec{P}^T\vec{P}) (1-\eta \lambda_{\max}(\vec{Q}))
 }{2(1-\eta \lambda_{\min}(\vec{Q})) }  \\ 
 & \geq  \frac{1}{2}  \frac{1-\eta \lambda_{\max}(\vec{Q})}{1-\eta \lambda_{\min}(\vec{Q})} \left(\lambda_{\min}(\vec{Q}) + \eta \lambda_{\min}(\vec{P}^T\vec{P}) \right)
\end{align*}
Since, we know $\lambda_1$ is negative, this implies an upper bound on $\lambda_1$.

\end{proof}

\end{document}